\documentclass{article}

\usepackage{microtype}
\usepackage{graphicx}
\usepackage{subcaption}
\usepackage{booktabs} 
\usepackage{pgfplots}
\usepackage{pgfplotstable}
\usepackage{siunitx}
\usepackage{multirow}
\usetikzlibrary{calc}
\usepackage{graphicx}
\usepackage{adjustbox}
\usepackage{booktabs}
\usepackage{tabularx}
\usepackage[table]{xcolor}
\usepgfplotslibrary{groupplots}
\usepgfplotslibrary{fillbetween}
\pgfplotsset{compat=1.18}
\definecolor{royalBlue}{RGB}{65, 105, 225}
\definecolor{brickRed}{RGB}{178, 34, 34}
\definecolor{cRankSub}{RGB}{120,120,120} 

\usepackage{tikz}
\usetikzlibrary{shapes, arrows.meta, positioning, calc, backgrounds, decorations.pathreplacing, shadows}

\definecolor{RoyalBlue}{HTML}{00509D}   
\definecolor{BrightCyan}{HTML}{00C2CB}  
\definecolor{SoftGrey}{HTML}{F0F4F8}    
\definecolor{Slate}{HTML}{485460}       
\definecolor{White}{HTML}{FFFFFF}
\definecolor{RoyalBlue}{HTML}{00509D}   
\definecolor{BrightCyan}{HTML}{00C2CB}  
\definecolor{SoftGrey}{HTML}{F8FAFC}    
\definecolor{Slate}{HTML}{485460}       
\definecolor{AlertRed}{HTML}{E63946}    

\usepackage{hyperref}
\usepackage{doi}

\usepackage[preprint]{icml2026}

\usepackage{amsmath}
\usepackage{amssymb}
\usepackage{mathtools}
\usepackage{amsthm}
\usepackage{thmtools,thm-restate}

\DeclareMathOperator{\R}{\mathbb R}

\let\P\relax
\DeclareMathOperator{\P}{\mathcal P}
\renewcommand{\d}{\mathop{}\!\mathrm{d}}

\usepackage{dsfont}
\DeclareMathOperator{\1}{\mathds{1}}
\DeclareMathOperator{\tT}{\mathsf{T}}

\DeclareMathOperator{\N}{\mathbb N}
\let\phi\varphi

\DeclareMathOperator{\E}{\mathbb E}

\DeclareMathOperator{\D}{\mathrm D}
\DeclareMathOperator{\U}{\mathcal{U}}
\DeclareMathOperator{\TV}{TV}
\DeclareMathOperator{\C}{\mathcal C}
\let\S\relax
\DeclareMathOperator{\S}{\mathbb S}
\DeclareMathOperator{\SD}{SD}

\DeclareMathOperator{\V}{\mathbb V}

\let\H\relax
\DeclareMathOperator{\H}{\mathcal H}

\DeclareMathOperator{\supp}{supp}

\DeclareMathOperator{\Var}{\mathbb V}
\DeclareMathOperator{\Bin}{Bin}
\DeclareMathOperator{\Beta}{Beta}
\DeclareMathOperator{\PP}{\mathbb P}

\DeclareMathOperator{\ran}{ran}
\DeclareMathOperator*{\argmax}{arg\,max}
\DeclareMathOperator*{\argmin}{arg\,min}

\newcommand{\bftablenum}[1]{\multicolumn{1}{>{\bfseries}S}{#1}}

\usepackage[capitalize,noabbrev]{cleveref}
\usepackage{csquotes}

\theoremstyle{plain}

\declaretheorem[name=Theorem,numberwithin=section]{theorem}
\declaretheorem[sibling=theorem]{lemma}

\theoremstyle{definition}
\newtheorem{definition}[theorem]{Definition}

\theoremstyle{remark}
\newtheorem{remark}[theorem]{Remark}
\newtheorem{example}{Example}[section]

\crefname{definition}{Definition}{Definitions}
\Crefname{definition}{Definition}{Definitions}

\usepackage[disable]{todonotes}

\icmltitlerunning{Approximating $f$-Divergences with Rank Statistics}

\begin{document}

\twocolumn[
  \icmltitle{Approximating \texorpdfstring{$f$}{f}-Divergences with Rank Statistics}



  \icmlsetsymbol{equal}{*}

  \begin{icmlauthorlist}
    \icmlauthor{Viktor Stein}{TUM,equal}
    \icmlauthor{José Manuel de Frutos}{uc3m,equal}
  \end{icmlauthorlist}

  \icmlaffiliation{TUM}{Department of Mathematics, Technical University of Munich \& Munich Center for Machine Learning, Germany. The majority of the work was conducted while at the Institute of Mathematics at the Technical University of Berlin, Germany \& the Berlin Mathematical School.}
  \icmlaffiliation{uc3m}{Department of Signal Theory and Communications, Universidad Carlos III, Madrid, Spain}

  \icmlcorrespondingauthor{Viktor Stein}{viktor.stein@tum.de}
  \icmlcorrespondingauthor{José Manuel de Frutos}{jofrutos@ing.uc3m.es}

  \icmlkeywords{f-divergences, rank statistics, divergence estimation, sliced divergences, generative modeling}

  \vskip 0.3in
]



\printAffiliationsAndNotice{\icmlEqualContribution}

\begin{abstract}
We introduce a rank-statistic approximation of $f$-divergences that avoids explicit density-ratio estimation by working directly with the distribution of ranks.
For a resolution parameter $K$, we map the mismatch between two univariate distributions $\mu$ and $\nu$ to a rank histogram on $\{ 0, \ldots, K\}$ and measure its deviation from uniformity via a discrete $f$-divergence, yielding a rank-statistic divergence estimator.
We prove that the resulting estimator of the divergence is monotone in $K$, is always a lower bound of the true $f$-divergence, and we establish quantitative convergence rates for $K\to\infty$  under mild regularity of the quantile-domain density ratio.
To handle high-dimensional data, we define the sliced rank-statistic $f$-divergence by averaging the univariate construction over random projections, and we provide convergence results for the sliced limit as well.
We also derive finite-sample deviation bounds along with asymptotic normality results for the estimator.
Finally, we empirically validate the approach by benchmarking against neural baselines and illustrating its use as a learning objective in generative modeling experiments.
\end{abstract}

\section{Introduction}
\label{sec:intro}
Quantifying discrepancies between probability distributions is fundamental in statistics and machine learning.
A prominent and widely used class of such measures is $f$-divergences, defined in \eqref{eq:cont-f-div}, which include the Kullback-Leibler divergence, total variation, Hellinger, and $\chi^2$-type divergences \citep{ali1966,csiszar1967}.
They arise throughout the field, from hypothesis testing and model comparison to variational objectives for implicit generative modeling \citep{goodfellow2014gan,nowozin2016fgan}.
However, reliably estimating $f$-divergences from samples is challenging: most formulations depend on the density ratio $\frac{\d\mu}{\d\nu}$, so approaches that first estimate the densities (or their ratio) and then substitute can suffer from severe statistical error in moderate-to-high
dimensions \citep{moon2014multivariate, rubenstein2019practical}.

A common workaround is to estimate $f$-divergences via \emph{variational} formulations, which recast
divergence estimation as (regularized) risk minimization and, in many cases, as a classification-style objective
\citep{nguyen2010divergence,nowozin2016fgan,RRGDP2012}.
For KL in particular, the Donsker--Varadhan representation gives another variational objective, which is used by neural mutual-information estimators such as MINE \citep{belghazi2018mine}. Related principles include noise-contrastive estimation for unnormalized models, which learns by discriminating data from artificial noise samples \citep{gutmann2010nce}.
In practice, however, these approaches may require delicate
function-class choices and optimization heuristics, and can inherit the instabilities associated with
adversarial/variational training \citep{arjovsky2017wgan,gulrajani2017wgangp}.

A different line of work mitigates high-dimensional difficulties by comparing \emph{one-dimensional
projections} of distributions and aggregating the resulting discrepancies. The sliced Wasserstein
distance and its extensions are prominent examples in generative modeling, offering favorable
computational scaling by reducing multivariate comparisons to repeated 1D problems
\citep{kolouri2019gsw,wu2019swgm}. More generally, sliced probability divergences have been studied
from statistical and topological viewpoints \citep{nadjahi2020sliced}.
These ideas suggest that if one can build a robust and scalable 1D divergence estimator, then slicing can lift it to higher dimensions.

In this paper, we develop a \emph{rank-statistic} approximation of $f$-divergences. The construction
starts from a fixed reference measure $\nu$ and uses the (univariate) probability integral transform
(PIT): if $X\sim\nu$, then $F_\nu(X)$ is uniform on $[0,1]$ \citep{rosenblatt1952,S1952,K1957}.
Uniformity diagnostics based on PIT/rank histograms are standard tools in forecast calibration and
reliability assessment \citep{hamill2001rankhist,gneiting2007calibration}.
We turn this principle into a general divergence construction: we discretize the PIT into a
\emph{rank histogram} with $K$ bins and measure its deviation from uniformity via an entropic
function $f$.
For a related construction to approximate the CDF of a probability density, see \cite{L2012}.
The resulting rank-statistic divergence is bounded, depends only on \emph{order
information}, and admits simple estimators built from sorting and counting operations.
We then extend this divergence to $\R^d$ by averaging over random 1D projections, yielding \emph{sliced
rank-statistic $f$-divergences} in the spirit of \citep{kolouri2019gsw,nadjahi2020sliced,Beckmann2025a}.

\subsection{Contributions} \label{subsec:contributions}
The main results of this paper are the following:
\begin{itemize}
    \item
    We propose a rank-histogram approximation $\mathbf{D}^{(K)}_{f,\nu}(\mu)$ of the $f$-divergence, parameterized by a resolution $K$, which is an optimization-free estimator of the (sliced) $f$-divergence.
    This generalizes \cite{dFVOM2024,dFVOM2024b,dFVOM2025} to different choices of entropy function $f$, enabling us to choose differentiable functions $f$ that better interact with automatic differentiation schedules used for learning tasks.
    
    \item
    We establish basic regularity properties, and show that $\mathbf{D}^{(K)}_{f,\nu}(\mu)$ is nondecreasing in $K$ and dominated by $\mathbf{D}_{f,\nu}(\mu)$. Under mild assumptions on the density ratio, we prove consistency as $K\to\infty$ and provide
    quantitative approximation rates. We also derive finite-sample deviation bounds for the univariate estimator and prove asymptotic normality.
    
    \item
    We define sliced rank-statistic $f$-divergences in $\R^d$ by averaging the univariate construction over random 1D projections, thereby inheriting the univariate $ f$-divergence's key properties.
    
    \item
    We benchmark against classical and neural baselines on synthetic tasks, showing that rank-statistic $f$-divergences provide stable approximations of the target $f$-divergence that perform well in high dimensions and with few samples.
    Our generative transport experiments show that rank-statistic $f$-divergences can be used as sample-based transport objectives for implicit models, illustrated on two-dimensional toy problems and on image datasets.
\end{itemize}

\paragraph{Notation}
By $\N$ we denote the non-negative integers. 
For $K \in \N$ we set $[K] \coloneqq \{ 0, \ldots, K \}$.
The uniform distribution on $[K]$ is denoted by $U_K$.
The quantile function of a univariate probability measure $\mu \in \P(\R)$ is denoted by $Q_{\mu}$ and its CDF by $R_{\mu}$.
The expectation of a function $f$ under $\mu \in \P(\R^d)$ is denoted by $\E_{\mu}[f] \coloneqq \int_{\R^d} f(x) \d \mu(x)$.
The pushforward is denoted by $\#$.
We denote by $\C^k$ the $k$-times continuously differentiable functions.
 We say that $h$ is $H$-Hölder continuous of order $\alpha$ and write $h \in \C^{0, \alpha}$ if $| h(x) - h(y) | \le H | x - y |^{\alpha}$ holds for all $x, y$.
The range of a function $h$ is denoted by $\ran(h)$.

Given i.i.d. samples $X_1,\ldots,X_N \sim \mu$ and
$Y_1,\ldots,Y_M \sim \nu$, we denote the corresponding empirical
measures by $\hat{\mu}_N$, $\hat{\nu}_M$.
Throughout, unhatted symbols denote population-level objects,
whereas hatted symbols denote empirical/sample-based quantities.

\section{One-Dimensional Rank-Based Approximation of \texorpdfstring{$f$}{f}-Divergences}\label{sec:onedim}
We begin with the one-dimensional setting and introduce a rank-based approximation of $f$-divergences.
The construction relies on the probability integral transform
and a discrete rank histogram that encodes the mismatch between a distribution $\mu$ and a target $\nu$.

In this section, let $\mu,\nu \in \mathcal{P}(\mathbb{R})$ be univariate probability measures and $\nu$ be atomless.
Throughout, let
$f \colon [0,\infty) \to \mathbb{R} \cup \{+\infty\}$ always be a convex, lower
semicontinuous function with $f(1)=0$ and $\lim_{t \to \infty} \frac{1}{t} f(t) > 0$.
We then say that $f$ is an \textit{entropy function}.
Note that due to convexity, $f$ is Lipschitz on any compact set $C \subset (0, \infty)$.

The (continuous)
$f$-divergence of $\mu$ with respect to $\nu$ is
\begin{equation} 
  D_{f,\nu}(\mu)
  \coloneqq 
  \begin{cases}
    \displaystyle \int f\!\left(\frac{\d\mu}{\d\nu}(x)\right)\d\nu(x),
      & \text{if } \mu \ll \nu,\\[0.35em]
    +\infty, & \text{otherwise.}
  \end{cases}
  \label{eq:cont-f-div}
\end{equation}
Directly working with $\frac{\d\mu}{\d\nu}$ is often inconvenient. Instead, we approximate
$D_{f,\nu}(\mu)$ using a rank statistic of $\mu$ relative to $\nu$.

\begin{definition}
\label{def:rank-stat}
Fix $K \in \mathbb{N}$. Let $Y \sim \mu$ and
$(\tilde Y_j)_{j = 1}^{K} \stackrel{\text{i.i.d.}}{\sim} \nu$, independent
of $Y$. The \emph{rank statistic of order $K$} of $\mu$ with respect to $\nu$ is
\begin{equation}
  A^{(K)}_{\mu\mid\nu}
  \coloneqq 
  \#\bigl\{j \in \{1, \ldots, K\} : \tilde Y_j \le Y\bigr\}
  \in [K].
\end{equation}
We denote by $Q^{(K)}_{\mu\mid\nu}$ the probability mass function (PMF) of
$A^{(K)}_{\mu\mid\nu}$ on $[K]$.
\end{definition}

When $\mu\ll\nu$, then $\mu = \nu$ if and only if the rank statistic is
uniform, i.e., $Q^{(K)}_{\nu\mid\nu}(n) \equiv 1/(K+1)$, for all $K \in \N$, see \cref{lemma:properties_rank_statistics}.

The PMF $Q^{(K)}_{\mu\mid\nu}$ can be seen as a discrete \enquote{rank histogram} of
$\mu$ with respect to $\nu$.
It records how often a draw from $\mu$
falls below the $K$ i.i.d.\ draws from $\nu$. Departures of this histogram
from the uniform law signal discrepancies between $\mu$ and $\nu$.

\begin{figure*}[t]
  \centering
\begin{tikzpicture}[
    scale=0.95, transform shape,
    font=\sffamily,
    >=latex,
    execute at begin picture={
        \definecolor{RoyalBlue}{HTML}{00509D}
        \definecolor{BrightCyan}{HTML}{00C2CB}
        \definecolor{SoftGrey}{HTML}{F8FAFC}
        \definecolor{Slate}{HTML}{485460}
        \definecolor{AlertRed}{HTML}{E63946}
    },
    panel/.style={
        rectangle, 
        draw=Slate!15, 
        fill=SoftGrey, 
        line width=0.8pt, 
        rounded corners=8pt, 
        drop shadow={opacity=0.06, shadow xshift=2pt, shadow yshift=-2pt}, 
        inner sep=6pt, 
        align=center
    },
    mu point/.style={circle, ball color=RoyalBlue, shading=ball, inner sep=0pt, minimum size=5pt},
    nu point/.style={circle, draw=BrightCyan, fill=white, thick, inner sep=0pt, minimum size=4.5pt},
    hist bar/.style={top color=RoyalBlue!80, bottom color=RoyalBlue!50, draw=none, rounded corners=1pt},
    bin line/.style={dashed, BrightCyan!50, thin}
]

\node[panel] (p1) at (0,0) {
    \begin{minipage}{6.8cm} 
        \centering
        \vspace{0.05cm}
        \textbf{\textcolor{RoyalBlue}{(a) Ideal Case: $\mu = \nu$}} \\
        \textcolor{Slate}{\scriptsize Samples are uniformly interleaved}
        \vspace{0.2cm} 
        
        \begin{tikzpicture}[scale=0.85, baseline=(current bounding box.center)] 
            \draw[->, thick, Slate] (0, 2.8) -- (6.2, 2.8) node[above, scale=0.8] {Space $\mathcal{X}$};
            
            \foreach \x in {1.0, 2.0, 3.0, 4.0, 5.0} {
                \node[nu point] at (\x, 2.8) {};
                \draw[bin line] (\x, 2.7) -- (\x, 0); 
            }
            \draw[bin line] (0, 2.7) -- (0, 0); 
            \draw[bin line] (6, 2.7) -- (6, 0);
            
            \foreach \x in {0.5, 1.5, 2.5, 3.5, 4.5, 5.5} {
                \node[mu point] at (\x, 2.8) {};
            }
            
            \node[scale=0.8, RoyalBlue, anchor=south] at (0.5, 2.95) {$\mu$};
            \node[scale=0.8, BrightCyan!80!black, anchor=south] at (1.0, 2.95) {$\nu$};

            \draw[->, thick, Slate] (0,0) -- (6.2,0) node[below, scale=0.8] {Rank}; 
            \draw[->, thick, Slate] (0,0) -- (0,2.0) node[left, scale=0.8] {Freq};
            
            \foreach \x in {0, 1, 2, 3, 4, 5} {
                \draw[hist bar] (\x+0.1, 0) rectangle (\x+0.9, 1.2);
            }
            
            \node[scale=0.75, Slate!80, font=\bfseries] at (3.0, 1.5) {Uniform Histogram};
            \draw[dashed, Slate!50] (0, 1.2) -- (6, 1.2);
            
            \path (0,0) -- (6.4, 3.1);
        \end{tikzpicture}
        \vspace{-0.35cm} 
    \end{minipage}
};

\node[panel, right=0.8cm of p1] (p2) { 
    \begin{minipage}{6.8cm} 
        \centering
        \vspace{0.05cm}
        \textbf{\textcolor{AlertRed}{(b) Mismatch: $\mu \neq \nu$}} \\
        \textcolor{Slate}{\scriptsize Real samples cluster in tails}
        \vspace{0.2cm}
        
        \begin{tikzpicture}[scale=0.85, baseline=(current bounding box.center)]
            \draw[->, thick, Slate] (0, 2.8) -- (6.2, 2.8) node[above, scale=0.8] {Space $\mathcal{X}$};
            
            \foreach \x in {1.0, 2.0, 3.0, 4.0, 5.0} {
                \node[nu point] at (\x, 2.8) {};
                \draw[bin line] (\x, 2.7) -- (\x, 0); 
            }
            \draw[bin line] (0, 2.7) -- (0, 0); 
            \draw[bin line] (6, 2.7) -- (6, 0);
            
            \node[mu point] at (2.2, 2.8) {}; \node[mu point] at (2.5, 2.8) {}; \node[mu point] at (2.8, 2.8) {};
            \node[mu point] at (3.2, 2.8) {}; \node[mu point] at (3.5, 2.8) {}; \node[mu point] at (3.8, 2.8) {};

            \draw[->, thick, Slate] (0,0) -- (6.2,0) node[below, scale=0.8] {Rank};
            \draw[->, thick, Slate] (0,0) -- (0,2.0); 
            
            \draw[hist bar, top color=AlertRed!80, bottom color=AlertRed!50] (2.1, 0) rectangle (2.9, 1.8);
            \draw[hist bar, top color=AlertRed!80, bottom color=AlertRed!50] (3.1, 0) rectangle (3.9, 1.8);
            
            \foreach \x in {0, 1, 4, 5} {
                \draw[fill=Slate!10, draw=none] (\x+0.1, 0) rectangle (\x+0.9, 0.15);
            }
            
            \node[scale=0.75, AlertRed, font=\bfseries] at (3.0, 2.1) {High Divergence};
            \draw[dashed, Slate!50] (0, 1.2) -- (6, 1.2);
            
            \path (0,0) -- (6.4, 3.1);
        \end{tikzpicture}
        \vspace{-0.35cm}
    \end{minipage}
};

\end{tikzpicture}
\caption{\small Conceptual illustration of the rank-statistic $f$-divergence. \textbf{(a)} When $\mu = \nu$, samples are uniformly interleaved, resulting in a uniform rank histogram. \textbf{(b)} With a mismatch, samples from $\mu$ cluster in specific rank bins, creating a non-uniform histogram that indicates divergence.}
\label{fig:conceptual_divergence}
\end{figure*}
To quantify the discrepancy of the rank histogram from uniformity, we use a discrete $f$-divergence, see also \cref{remark:f-div_pushforward_reformulation}.

\begin{definition}\label[definition]{defn:rankStatistic_f_Divergence}
Let $U_K$ denote the uniform distribution on $[K]$. The
\emph{rank-statistic $f$-divergence of order $K$} of $\mu$ with respect to $\nu$ is
\begin{align}
  D^{(K)}_{f,\nu}(\mu)
  &\coloneqq 
  \D_f\!\bigl(Q^{(K)}_{\mu\mid\nu} \,\Vert\, U_K\bigr) \nonumber
  \\&=
  \frac{1}{K+1}
  \sum_{n=0}^K
    f\!\Bigl((K+1)\,Q^{(K)}_{\mu\mid\nu}(n)\Bigr),
  \label{eq:rank-f-div}
\end{align}
where $\D_f(\cdot\Vert\cdot)$ is the discrete $f$-divergence on the finite
alphabet $[K]$.
\end{definition}

\begin{example}
    For the entropy function $f_{\TV} \coloneqq | \cdot - 1 |$ of the total variation divergence, \cref{defn:rankStatistic_f_Divergence} recovers the ISL discrepancy $d_K$ from \cite{dFVOM2024} on absolutely continuous measures, up to a prefactor: $D_{f_{\TV}, \nu}^{(K)}(\mu) = (K + 1) d_K(\mu, \nu)$.
\end{example}

Now, we collect basic properties of the
rank-statistic $f$-divergence, in particular that
$D^{(K)}_{f,\nu}$ is monotone in the rank
resolution $K$ and that, similarly to $D_{f, \nu}$, the approximation $D^{(K)}_{f,\nu}$ inherits regularity properties from $f$.
 The second inequality below generalizes \citep[Thm.~2]{dFVOM2024} \citep[Thm.~2.2]{dFVOM2025}.

\begin{restatable}{theorem}{MonotoneK}
\label{thm:monotone-K}
    Let $\mu, \nu \in \P(\R)$ and $K \in \N$.
    The map $D_{f,\nu}^{(K)}$ is convex, and if $R_{\nu}$ is continuous, then it is also weakly lower semicontinuous.
    Furthermore,
    \begin{equation} \label{eq:monotonicity}
      D^{(K)}_{f,\nu}(\mu)
      \le
      D^{(K+1)}_{f,\nu}(\mu)
      \le D_{f, \nu}(\mu).
    \end{equation}
\end{restatable}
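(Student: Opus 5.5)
The plan is to write $Q^{(K)}_{\mu\mid\nu}$ explicitly through the probability integral transform. Conditioning on $Y=y$, the count $A^{(K)}_{\mu\mid\nu}$ is $\Bin(K, R_\nu(y))$. Since $\nu$ is atomless, $R_\nu$ is continuous, so $U\coloneqq R_\nu(Y)$ has law $\rho\coloneqq (R_\nu)_\#\mu$ on $[0,1]$ and
\[
Q^{(K)}_{\mu\mid\nu}(n)=\int_0^1 \binom{K}{n}u^n(1-u)^{K-n}\d\rho(u).
\]
In words, $Q^{(K)}_{\mu\mid\nu}$ is the pushforward of $\rho$ under a Markov kernel $T_K(u,\cdot)=\Bin(K,u)$, and $T_K$ does not depend on $\mu$. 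Uniformity is handled the same way: $(R_\nu)_\#\nu$ is the uniform law $\lambda$ on $[0,1]$, and $\int_0^1\binom Kn u^n(1-u)^{K-n}\d u=1/(K+1)$ (a Beta integral), so $T_K\lambda=U_K$. Hence
\[
D^{(K)}_{f,\nu}(\mu)=\D_f(T_K\rho\,\Vert\,T_K\lambda).
\]

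\textbf{Convexity.} The map $\mu\mapsto\rho\mapsto T_K\rho$ is affine, so $\mu\mapsto Q^{(K)}_{\mu\mid\nu}$ is affine. Each summand $q\mapsto f((K+1)q)$ in \eqref{eq:rank-f-div} is convex, so the composition is convex.

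\textbf{Weak lower semicontinuity.} Suppose $\mu_j\rightharpoonup\mu$. The function $y\mapsto\binom Kn R_\nu(y)^n(1-R_\nu(y))^{K-n}$ is bounded and continuous because $R_\nu$ is continuous, so $Q^{(K)}_{\mu_j\mid\nu}(n)\to Q^{(K)}_{\mu\mid\nu}(n)$ for every $n$. Lower semicontinuity of $f$ then gives $\liminf_j D^{(K)}_{f,\nu}(\mu_j)\ge D^{(K)}_{f,\nu}(\mu)$ by a finite sum of liminfs. Values $+\infty$ cause no trouble since $f>-\infty$.

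\textbf{Upper bound by $D_{f,\nu}$.} Apply the data processing inequality for $f$-divergences twice. First $D_{f,\nu}(\mu)=\D_f(\mu\Vert\nu)\ge\D_f(\rho\Vert\lambda)$, via the deterministic map $R_\nu$. Then $\D_f(\rho\Vert\lambda)\ge\D_f(T_K\rho\Vert T_K\lambda)$, via the kernel $T_K$. If $\mu\not\ll\nu$ there is nothing to prove.

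\textbf{Monotonicity in $K$.} This is the main step. I will exhibit a Markov kernel $S$ from $[K+1]$ to $[K]$ with $S\circ T_{K+1}=T_K$; data processing then gives $D^{(K)}\le D^{(K+1)}$. Take the kernel that drops one of the $K+1$ reference samples uniformly at random. Given rank $m$, the removed sample lies below $Y$ with probability $m/(K+1)$, so
\[
S(m,m-1)=\tfrac{m}{K+1},\qquad S(m,m)=1-\tfrac{m}{K+1}.
\]
Since the $K+1$ samples are exchangeable, deleting one at random leaves $K$ i.i.d.\ draws from $\nu$, independent of $Y$. The resulting rank is therefore exactly $A^{(K)}_{\mu\mid\nu}$, which proves $S\,Q^{(K+1)}_{\mu\mid\nu}=Q^{(K)}_{\mu\mid\nu}$. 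The same identity gives $S\,U_{K+1}=U_K$, either by taking $\mu=\nu$ or by direct computation.

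The obstacle I anticipate is mostly bookkeeping: checking this coupling identity carefully, and, for general possibly non-atomless $\nu$, making sure ties do not break the uniformity of $T_K\lambda$. Under the standing assumption that $\nu$ is atomless, this is fine.
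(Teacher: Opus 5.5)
Your proof is correct. Convexity, weak lower semicontinuity and monotonicity in $K$ essentially coincide with the paper's proof. The paper uses the same delete-one-reference-sample matrix $W$ (your $S$) together with the finite-alphabet data processing inequality. It checks $W^{\top}Q^{(K+1)}_{\mu\mid\nu}=Q^{(K)}_{\mu\mid\nu}$ through the Bernstein recursion $b_{n,K}=\frac{K+1-n}{K+1}b_{n,K+1}+\frac{n+1}{K+1}b_{n+1,K+1}$, offers your exchangeability picture as the interpretation, and verifies $W^{\top}U_{K+1}=U_K$ by the same direct computation you mention.

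The genuinely different step is the bound $D^{(K)}_{f,\nu}(\mu)\le D_{f,\nu}(\mu)$. The paper writes $(K+1)Q^{(K)}_{\mu\mid\nu}(n)=\E[r(U_{n,K})]$, with $U_{n,K}\sim\mathrm{Beta}(n+1,K-n+1)$ and $r=\frac{\d\mu}{\d\nu}\circ Q_\nu$. It applies Jensen termwise and uses $\sum_n b_{n,K}\equiv 1$ to recover $\int_0^1 f(r(u))\d u=D_{f,\nu}(\mu)$. You instead push forward through $R_\nu$ and then through $T_K$, and apply data processing twice; the paper only sketches this route in its Markov-kernel remark.

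The two arguments are the same idea at different levels of abstraction. $\mathrm{Beta}(n+1,K-n+1)$ is exactly the posterior of $u$ given rank $n$ under the uniform prior $\lambda$, so the paper's Jensen step is the standard proof of the data processing inequality unrolled for the channel $T_K$.

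What each buys: the paper's version is elementary, needing only Jensen and the quantile representation of $D_{f,\nu}$, but it works with the explicit density ratio $r$. Yours never touches $r$ and handles both inequalities in the statement by one principle. The cost is that you invoke the general measure-theoretic data processing inequality, whereas the paper states it only for finite alphabets.
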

\begin{proof}
    See \cref{subsec:Proof_of_Monotonicity}.
\end{proof}

\begin{remark}[Markov kernel interpretation of $D_{f}^{(K)}$]
    Let $(b_{n, K})_{n = 0}^{K}$ be the Bernstein polynomials, see \cref{sec:well_known_results}.
    One can also prove \eqref{eq:monotonicity} by noticing that the Markov kernel $\kappa \colon \R \times [K]\to [0,1]$, $(y, \{n\}) \mapsto (b_{n, K} \circ R_{\nu})(y)$ fulfills $D_{f, \nu \kappa}(\mu \kappa) = D_{f, \nu}^{(K)}(\mu)$ and then use the data processing inequality for $f$-divergences (\cref{lemma:discrete_DP}).
    We also have $D_{f, T_\# \nu}^{(K)}(T_{\#} \mu) = D_{f, \nu}^{(K)}(\mu)$ for strictly increasing functions $T \colon \R \to \R$.
\end{remark}

\subsection{Approximation properties}

We are interested in the behavior of the increasing sequence
$\left\{D^{(K)}_{f,\nu}(\mu)\right\}_{K\in \N}$ as the resolution
parameter $K$ grows.
Increasing $K$ refines the rank histogram,
so one expects the discrete quantity $D_{f,\nu}^{(K)}(\mu)$ to approach the
continuous $f$-divergence $D_{f,\nu}(\mu)$. This is indeed the case under
a mild regularity assumption on the rank density ratio $r$, whose regularity determines the convergence rates precisely in the way that it determines the convergence rate of the Bernstein approximation
, see \cref{sec:well_known_results}.

\begin{restatable}[Convergence of the truncated divergence]{theorem}{ConvergenceKtoInfty}
\label{thm:convergence_K_to_Infty}
If $\mu \ll \nu$ and $r \coloneqq r_{\mu \mid \nu} \coloneqq \frac{\d \mu}{\d \nu} \circ Q_{\nu} \in \C([0, 1])$ and $f$ is $L_f$-Lipschitz on $\ran(r)$, then for $K \to \infty$,
\begin{equation*}
    D_{f, \nu}(\mu) - D_{f, \nu}^{(K)}(\mu)
    \begin{cases}
        \to 0, & \text{if } r \in \C([0, 1]), \\
        \in O(K^{-\frac{\alpha}{2}}), & \text{if } r \in C^{0, \alpha}([0, 1]), \\
        \in  O(K^{-1}), & \begin{array}{c}\text{if } r \in \C^2([0, 1]), \text{ or} \\
         \text{if } r \text{ is Lipschitz and} \\ f \in \C^2([0, \infty)).\end{array}
    \end{cases}
\end{equation*}
\end{restatable}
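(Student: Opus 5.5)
We first rewrite both quantities in the quantile domain. Since $\nu$ is atomless, $U \coloneqq R_\nu(Y)$ with $Y\sim\mu$ has density $r = \frac{\d\mu}{\d\nu}\circ Q_\nu$ on $[0,1]$. Conditional on $Y$, the count $A^{(K)}_{\mu\mid\nu}$ is $\Bin(K, R_\nu(Y))$, so
\[
Q^{(K)}_{\mu\mid\nu}(n)=\int_0^1 b_{n,K}(u)\, r(u)\d u .
\]
Using $\int_0^1 b_{n,K}=\frac{1}{K+1}$, the weights $w_{n}(u)\coloneqq(K+1)b_{n,K}(u)$ are probability densities on $[0,1]$, and
\[
(K+1)\,Q^{(K)}_{\mu\mid\nu}(n)=\int_0^1 r\, w_n \d u \eqqcolon \bar r_n ,
\]
which is a local average of $r$ around $n/K$. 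On the other side, $D_{f,\nu}(\mu)=\int_0^1 f(r(u))\d u$. Since $\frac{1}{K+1}\sum_n w_n\equiv 1$, we also have $D_{f,\nu}(\mu)=\frac1{K+1}\sum_n\int_0^1 f(r)\,w_n\d u$. Hence
\[
\Delta_K\coloneqq D_{f,\nu}(\mu)-D_{f,\nu}^{(K)}(\mu)=\frac1{K+1}\sum_{n=0}^K\Bigl(\int f(r)w_n\d u - f(\bar r_n)\Bigr)\ge 0,
\]
where nonnegativity follows from Jensen and agrees with \cref{thm:monotone-K}.

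For the general and Hölder cases we bound each Jensen gap by the local oscillation. Because $\bar r_n\in\ran(r)$ by continuity and $f$ is $L_f$-Lipschitz on $\ran(r)$,
\[
\int f(r)w_n - f(\bar r_n)\le L_f\int |r(u)-\bar r_n|\,w_n(u)\d u\le 2L_f\int |r(u)-r(n/K)|\,w_n(u)\d u .
\]
The right side, averaged over $n$, equals $\frac{2L_f}{K+1}\sum_n\int|r(u)-r(n/K)|(K+1)b_{n,K}(u)\d u = 2L_f\int_0^1\sum_n b_{n,K}(u)|r(u)-r(n/K)|\d u$. This is the classical quantity controlling Bernstein approximation, namely $\E|r(u)-r(B/K)|$ with $B\sim\Bin(K,u)$.
\begin{itemize}
\item If $r$ is merely continuous, uniform continuity together with $\Var(B/K)=u(1-u)/K\to0$ gives convergence to $0$.
\item If $r\in\C^{0,\alpha}$, we use $\E|u-B/K|^\alpha\le (u(1-u)/K)^{\alpha/2}\le (4K)^{-\alpha/2}$, which yields $O(K^{-\alpha/2})$.
\end{itemize}
These are the standard Bernstein estimates recalled in \cref{sec:well_known_results}.

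For the $O(K^{-1})$ cases a first-order oscillation bound only gives $K^{-1/2}$, so we need second-order cancellation; this is the main obstacle.

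\textbf{Case $f\in\C^2$, $r$ Lipschitz.} We Taylor-expand $f$ around $\bar r_n$. The linear term integrates to zero against $w_n$, leaving
\[
\int f(r)w_n-f(\bar r_n)\le \tfrac12\|f''\|_{\infty,\ran(r)}\int (r-\bar r_n)^2w_n .
\]
Then $\int(r-\bar r_n)^2w_n\le\int (r-r(n/K))^2 w_n\le \mathrm{Lip}(r)^2\int(u-n/K)^2w_n\d u$. The density $w_n$ is that of a $\Beta(n+1,K-n+1)$ variable, whose mean is $(n+1)/(K+2)$ and whose variance is $O(1/K)$ uniformly in $n$. Hence its second moment about $n/K$ is $O(1/K)$, and averaging over $n$ gives $O(K^{-1})$.

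\textbf{Case $r\in\C^2$, $f$ only Lipschitz.} Here we cannot expand $f$, so we avoid variances and use the exact first-order structure of $r$ instead. We compare $\bar r_n$ with $r(n/K)$ and also $\int f(r)w_n$ with $f(r(n/K))$. The latter gives
\[
\Delta_K=\Bigl(\int_0^1 f(r)\d u-\tfrac{1}{K+1}\textstyle\sum_n f(r(n/K))\Bigr)+\tfrac1{K+1}\textstyle\sum_n\bigl(f(r(n/K))-f(\bar r_n)\bigr).
\]
For the second term, Taylor expansion of $r$ to second order against the Beta density gives $\bar r_n-r(n/K)=r'(n/K)\bigl(\tfrac{n+1}{K+2}-\tfrac nK\bigr)+O(1/K)$, and $\tfrac{n+1}{K+2}-\tfrac nK=O(1/K)$. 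So this term is $O(L_f/K)$.

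The first term is a Riemann-sum error for the function $f\circ r$, which is Lipschitz. A naive bound gives only $O(1/K)$ if handled carefully: we compare each node value with the integral over a cell of length $1/(K+1)$ containing $n/K$, which is possible since the nodes $n/K$ and the cells $[n/(K+1),(n+1)/(K+1)]$ are offset by at most $1/(K+1)$. That bound is $O(\mathrm{Lip}(f\circ r)/K)$, which suffices.

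The delicate point is making sure the first-order bias terms in the second term are genuinely $O(1/K)$ uniformly in $n$, including near the endpoints $n\in\{0,K\}$. I expect this to follow from the explicit Beta moments.
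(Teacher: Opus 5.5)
Your proposal is correct. In the two $O(K^{-1})$ cases it follows the paper's argument. Your Riemann-sum term plus bias term is the paper's bound $\|r_K-r\|_\infty\le\max_n|c_n^{(K)}-r(n/K)|+\omega_r(\tfrac{1}{K+1})$, written in integrated form instead of sup-norm form. The case $f\in\C^2$, $r$ Lipschitz is the same Jensen-gap/variance bound, except that the paper centers at $r(\E[U_{n,K}])$ rather than $r(n/K)$. The endpoint uniformity you flag is immediate: $|\tfrac{n+1}{K+2}-\tfrac{n}{K}|=\tfrac{|K-2n|}{K(K+2)}\le\tfrac{1}{K+2}$ and $\V[U_{n,K}]\le\tfrac{1}{4(K+3)}$ hold for every $n\in[K]$.

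For the continuous and H\"older cases your route differs from the paper's. The paper proves $\|r_K-r\|_\infty\to 0$ for the piecewise-constant histogram density and uses $\Delta_K\le L_f\|r_K-r\|_\infty$. It controls $\E|r(U_{n,K})-r(n/K)|$ by applying Jensen to $\omega_r$, which needs a concave modulus. That is automatic for H\"older $r$, but for merely continuous $r$ one must pass to the least concave majorant. The paper's bound also carries a separate binning term $\omega_r(\tfrac{1}{K+1})$ and a Beta-mean bias.

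You instead bound each Jensen gap by twice the Beta-weighted oscillation around $r(n/K)$, then exchange the sum over $n$ with the integral over $u$. This lands on the classical quantity $\int_0^1\E|r(u)-r(B/K)|\,\d u$ with $B\sim\Bin(K,u)$. Since $\E[B/K]=u$ exactly, there is no bias or binning term and no concavity is needed. The continuous case follows by $\delta$-splitting plus Chebyshev, and the H\"older constant comes out cleanly as $2^{1-\alpha}L_fHK^{-\alpha/2}$.

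The paper's route buys a stronger intermediate fact: the rescaled rank histogram converges uniformly to the quantile-domain density ratio. That statement then extends directly to the $\C^2$ rate. Your route gives a shorter, assumption-free proof of the first two cases but only controls the divergence gap, not $r_K$ itself.
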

\begin{proof}
    Consider the piecewise-constant function
    \[
        r_K(u)
        \coloneqq (K + 1) Q_{\mu \mid \nu}^{(K)}(n)
    \]
    for $u\in\left[\frac{n}{K+1},\frac{n+1}{K+1}\right)$, $n \in [K]$.
    Then,
    \begin{align*}
        D^{(K)}_{f, \nu}(\mu)
        & = \frac{1}{K+1}\sum_{n=0}^K f\big((K + 1) Q_{\mu \mid \nu}^{(K)}(n)\big) \\
        & = \int_0^1 f\!\big(r_K(u)\big) \d{u}.
    \end{align*}
    In \cref{proof:convergenceKtoInfty}, we prove that $r_K$ converges uniformly to $r$ and prove the rates.
    The result then follows from
    \begin{align*}
        D_{f, \nu}(\mu) - D_{f, \nu}^{(K)}(\mu)
        & \le \int_{0}^{1} | f(r_K(u)) - f(r(u)) | \d{u} \\
        & \le L_f \| r_K - r \|_{\infty}. \qedhere
    \end{align*}
\end{proof}
We examine the applicability of \Cref{thm:convergence_K_to_Infty} to standard $f$-divergences.
\begin{example}[Applicability of Convergence Rates]
\label{ex:convergence_applicability}
Among the standard entropy functions considered here, $f_{\TV}$ is the only globally Lipschitz one (up to scalar prefactors).
\begin{itemize}
    \item 
    If $0 \not\in \ran(r)$, then the $O(K^{-\frac{\alpha}{2}})$ rate is achieved for most divergences (including KL, Jensen–Shannon, squared Hellinger, and Jeffreys) since they are Lipschitz away from zero.

    \item
    The fast rate $O(K^{-1})$ is obtained if $f \in \C^2([0, \infty))$ which excludes KL and Hellinger and $| \cdot - 1 |^{\alpha}$, for $\alpha \in (1, 2)$, but holds for the $\chi^2$-divergence (with $f_{\chi^2}(t) = \frac{1}{2}(t - 1)^2$) and other polynomial (\enquote{Tsallis})-entropy functions, and the triangular discrimination generator $f(t)=\frac{(t-1)^2}{t+1}$ \cite{L94}.
\end{itemize}
For a long list of choices of $f$, see \citep[Tab.~1]{SNRS2025}.
\end{example}

\subsection{Empirical estimation and finite-sample bounds}

We now turn to the empirical estimation of the rank-statistic
$f$-divergence. Given sample sizes $N,M \in \N$, let
$X_1,\ldots,X_N \stackrel{\mathrm{i.i.d.}}{\sim} \mu$ and
$Y_1,\ldots,Y_M \stackrel{\mathrm{i.i.d.}}{\sim} \nu$. We denote the
corresponding empirical measures by
\begin{equation}
\label{eq:empirical_measure}
  \hat\mu_N \coloneqq \frac{1}{N}\sum_{i=1}^N \delta_{X_i},
  \qquad
  \hat\nu_M \coloneqq \frac{1}{M}\sum_{j=1}^M \delta_{Y_j}.
\end{equation}

The empirical estimator is obtained by applying the rank construction to
$\hat\mu_N$ and $\hat\nu_M$.

The exact plug-in rank law is
$
\bar Q^{(K)}_{N,M}
\coloneqq 
Q^{(K)}_{\hat\mu_N\mid\hat\nu_M}$.
The corresponding plug-in estimator is
\begin{equation*}
D^{(K)}_{f,\hat\nu_M}(\hat\mu_N)
\coloneqq 
\frac{1}{K+1}
\sum_{n=0}^K
f\left((K+1)\bar Q^{(K)}_{N,M}(n)\right).
\end{equation*}
Equivalently, one may approximate $\bar Q^{(K)}_{N,M}$ by Monte Carlo resampling $K$ reference points from $\hat\nu_M$.

The following results bound the finite-sample error between the empirical
estimator and the true rank-statistic divergence for fixed $K$ and also give a high-probability concentration bound around the expectation
of the empirical estimator.

\begin{restatable}[Univariate finite sample complexity and concentration bound]{theorem}{FiniteSampleUnivariate}
\label{thm:finite_sample_univariate}
    Let $K \in \N$ be fixed, and let $\mu, \nu \in \mathcal{P}(\mathbb{R})$ and $\hat{\mu}_N$ and $\hat{\nu}_M$ be their corresponding empirical measures with sample sizes $N$ and $M$.
    Suppose that $f$ is $L_f$-Lipschitz on $[0, K + 1]$.
    The expected estimation error satisfies
    \begin{align*}
        \E &\left[ \left| D^{(K)}_{f, \hat{\nu}_M}(\hat{\mu}_N) - D^{(K)}_{f, \nu}(\mu) \right| \right] \\&\le L_f (K+1) \sqrt{2 \pi} \left( \frac{1}{\sqrt{N}} + \frac{1}{\sqrt{M}} \right).
    \end{align*}
    For any $\delta > 0$, with probability at least $1 - \delta$, we have
    \begin{align*}
        \bigl|
          D^{(K)}_{f, \hat{\nu}_M}(\hat{\mu}_N)
          &- \E\bigl[D^{(K)}_{f, \hat{\nu}_M}(\hat{\mu}_N)\bigr]
        \bigr|
        \\
        &\le\;
        L_f (K+1)
        \sqrt{
          2\log(2/\delta)
          \left( \frac{1}{N} + \frac{1}{M} \right)
        }.
    \end{align*}
\end{restatable}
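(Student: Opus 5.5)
The plan is to work throughout with the Bernstein representation of the rank law. Conditioning on $Y$ in \cref{def:rank-stat}, the count $A^{(K)}_{\mu\mid\nu}$ is $\Bin(K, R_\nu(Y))$, where $R_\nu(y)=\nu((-\infty,y])$. Hence
\[
Q^{(K)}_{\mu\mid\nu}(n)=\E_{Y\sim\mu}\bigl[b_{n,K}(R_\nu(Y))\bigr].
\]
Applying the same identity to the empirical measures gives
\[
\bar Q^{(K)}_{N,M}(n)=\frac1N\sum_{i=1}^N b_{n,K}\bigl(\hat R_M(X_i)\bigr),
\]
with $\hat R_M$ the empirical CDF of $Y_1,\dots,Y_M$, using the same ``$\le$'' convention so that ties are handled consistently.

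Since $(K+1)Q(n)\in[0,K+1]$, the Lipschitz assumption on $f$ yields
\[
|D^{(K)}_{f,\hat\nu_M}(\hat\mu_N)-D^{(K)}_{f,\nu}(\mu)|\le L_f\sum_{n=0}^K\bigl|\bar Q^{(K)}_{N,M}(n)-Q^{(K)}_{\mu\mid\nu}(n)\bigr|.
\]
I split the right-hand side by the triangle inequality through the intermediate law $Q^{(K)}_{\hat\mu_N\mid\nu}$, which has entries $\frac1N\sum_i b_{n,K}(R_\nu(X_i))$.

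\textbf{Reference-sample term.} I use the elementary bound $\sum_n |b_{n,K}(s)-b_{n,K}(t)|\le \sup_u\sum_n|b_{n,K}'(u)|\,|s-t|\le 2K|s-t|$. This follows from $b_{n,K}'=K(b_{n-1,K-1}-b_{n,K-1})$ together with the fact that the Bernstein polynomials of degree $K-1$ sum to one. It bounds the term by $2K\|\hat R_M-R_\nu\|_\infty$. By the Dvoretzky--Kiefer--Wolfowitz inequality with Massart's constant,
\[
\E\|\hat R_M-R_\nu\|_\infty\le\int_0^\infty 2e^{-2M\varepsilon^2}\d\varepsilon=\sqrt{\pi/(2M)},
\]
so this term contributes at most $K\sqrt{2\pi/M}$.

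\textbf{Sample term from $\mu$.} Each coordinate is an average of i.i.d.\ variables in $[0,1]$ with mean $q_n=Q^{(K)}_{\mu\mid\nu}(n)$. Jensen gives $\E|\cdot|\le\sqrt{q_n/N}$, and Cauchy--Schwarz over the $K+1$ coordinates then gives $\sqrt{(K+1)/N}\le (K+1)\sqrt{2\pi/N}$. Adding the two terms and multiplying by $L_f$ yields the claimed expectation bound, with room to spare.

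\textbf{Concentration.} I apply McDiarmid's inequality to the estimator viewed as a function of the $N+M$ independent samples. Replacing one $X_i$ changes one summand of $\bar Q$, so the $\ell^1$ change is at most $2/N$ and the estimator moves by at most $c_i=2L_f/N$. Replacing one $Y_j$ changes $\hat R_M$ by at most $1/M$ uniformly, so by the Bernstein derivative bound the $\ell^1$ change of $\bar Q$ is at most $2K/M$, giving $c_j=2L_fK/M$. Then
\[
\sum c^2\le 4L_f^2(K+1)^2\Bigl(\tfrac1N+\tfrac1M\Bigr),
\]
and McDiarmid's two-sided bound with $t=\sqrt{\tfrac12\sum c^2\log(2/\delta)}$ gives exactly the stated deviation.

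The main obstacle is the bounded-difference estimate for the $Y_j$. One must check that moving a single reference point changes $\hat R_M(x)$ by at most $1/M$ at every $x$, including the tie conventions with the $X_i$. One must also check that the Bernstein derivative bound holds uniformly on $[0,1]$, since this is what converts a sup-norm perturbation into an $\ell^1$ perturbation of the histogram with only a factor $2K$.
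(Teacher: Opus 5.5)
Your proposal is correct and follows essentially the same route as the paper. It uses the $L_f$-Lipschitz reduction to the $\ell^1$ error of the rank PMF, the triangle inequality through $Q^{(K)}_{\hat\mu_N\mid\nu}$, the bound $\sum_n |b_{n,K}(s)-b_{n,K}(t)|\le 2K|s-t|$ together with DKW for the reference-sample term, and McDiarmid with the same $2L_fK/M$ differences for the $Y_j$. The differences are cosmetic. You prove the Bernstein bound via $b_{n,K}'=K(b_{n-1,K-1}-b_{n,K-1})$ instead of the coupling/TV argument, and your $\mu$-side constants differ slightly but are absorbed by the stated bounds: $\sqrt{(K+1)/N}$ via $\Var\le q_n$ and Cauchy--Schwarz, and $c_i=2L_f/N$ because the histograms are probability vectors, versus the paper's $(K+1)/(2\sqrt N)$ and $L_f(K+1)/N$.
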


\begin{proof}
    See \cref{subsec:proof_univ_finite_sample_complexity_bound} and \cref{subsec:proof_concentration_bound}.
\end{proof}

\section{Sliced Rank-Based $f$-Divergences in Higher Dimensions}
\label{sec:sliced}

We now extend  the rank-statistic $f$-divergence from
\cref{defn:rankStatistic_f_Divergence} to the $d$-dimensional setting via
\emph{slicing}. The idea is to reduce the
high-dimensional discrepancy between $\mu$ and $\nu$ to a collection of
one-dimensional discrepancies along suitably chosen projections, in the spirit
of sliced Wasserstein distances and related constructions. Throughout this
section, we work with one-dimensional projections along unit directions on the
sphere.

For $s \in \S^{d-1}$, let $\mu_s \coloneqq s_{\#}\mu$ be the one-dimensional pushforward of $\mu$ by $x \mapsto s^\top x$.
For
fixed $K \in \N$, \cref{defn:rankStatistic_f_Divergence} yields a univariate
rank-statistic $f$-divergence $D^{(K)}_{f}(\mu_s \mid \nu_s)$ describing
the mismatch between $\mu_s$ and $\nu_s$.
We assume that $\nu_s$ is atomless for almost every $s \in \S^{d - 1}$.

\begin{definition}[Sliced rank-statistic $f$-divergence]
\label[definition]{defn:sliced_rank_f_divergence}
Let $\mu,\nu \in \P(\R^d)$ with $\mu \ll \nu$. Let $\sigma$ denote the uniform probability measure on $\S^{d-1}$.
The \emph{sliced rank-statistic $f$-divergence} of order $K$ and the
\emph{sliced $f$-divergence} are, respectively, 
\begin{align*}
  \mathbf{D}^{(K)}_{f,\nu}(\mu)
  &\coloneqq
  \int_{\S^{d-1}} D^{(K)}_{f, \nu_s}(\mu_s)\d\sigma(s),\\
  \SD_{f,\nu}(\mu)
  &\coloneqq
  \int_{\S^{d-1}} D_{f, \nu_s}(\mu_s)\d\sigma(s).
\end{align*}
\end{definition}
 Note that in one dimension, $\mathbf{D}_{f, \nu}^{(K)}$ coincides with $D_{f, \nu}^{(K)}$.

The next result states that the results from the previous section carry over to the sliced construction.

\begin{restatable}{theorem}{SlicedRankFConvergence}
\label{thm:sliced-rank-f-convergence}
The map $\mathbf{D}_{f, \nu}^{(K)}$ is convex.
Let $\mu,\nu \in \P(\R^d)$ with $\mu\ll\nu$. 
Then,
\begin{equation} \label{eq:SR_inequalities}
  \mathbf{D}^{(K)}_{f,\nu}(\mu)
  \le 
  \SD_{f,\nu}(\mu)
  \le 
  D_{f, \nu}(\mu), \qquad K \in \N.
\end{equation}
If $r_{\mu_s \mid \nu_s} \in \C([0, 1])$ for almost all $s \in \S^{d - 1}$, then
\begin{align*}
  \lim_{K\to\infty} \mathbf{D}^{(K)}_{f,\nu}(\mu)
  = \SD_{f,\nu}(\mu),
\end{align*}
\end{restatable}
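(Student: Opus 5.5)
The plan is to reduce each claim to its univariate counterpart, applied slice by slice, and then integrate over $\sigma$.

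\textbf{Convexity.} For fixed $s$, the pushforward $\mu\mapsto\mu_s=s_\#\mu$ is linear in $\mu$, i.e.\ $(\lambda\mu+(1-\lambda)\mu')_s=\lambda\mu_s+(1-\lambda)\mu'_s$. By \cref{thm:monotone-K}, $D^{(K)}_{f,\nu_s}$ is convex, so $\mu\mapsto D^{(K)}_{f,\nu_s}(\mu_s)$ is convex for every $s$. Integrating this pointwise convexity inequality against $\sigma$ preserves it, so $\mathbf{D}^{(K)}_{f,\nu}$ is convex. Measurability in $s$ and well-definedness of the integral need a brief check. Since $\mu_s$ and $\nu_s$ depend weakly continuously on $s$, the map $s\mapsto Q^{(K)}_{\mu_s\mid\nu_s}$ is measurable. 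The integrand is bounded below by $f$'s affine minorant, because $D_f\ge 0$ by Jensen's inequality. Hence the integrals exist in $[0,\infty]$.

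\textbf{Inequalities \eqref{eq:SR_inequalities}.} For the first inequality, \cref{thm:monotone-K} gives $D^{(K)}_{f,\nu_s}(\mu_s)\le D_{f,\nu_s}(\mu_s)$ for $\sigma$-a.e.\ $s$, namely those $s$ for which $\nu_s$ is atomless. Integrating over $s$ yields $\mathbf{D}^{(K)}_{f,\nu}(\mu)\le\SD_{f,\nu}(\mu)$.

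For the second inequality, I would use the data processing inequality (\cref{lemma:discrete_DP}, or its continuous version) for the deterministic Markov kernel $x\mapsto\delta_{s^\top x}$. Note that $\mu\ll\nu$ implies $\mu_s\ll\nu_s$, with density $\frac{\d\mu_s}{\d\nu_s}(t)=\E_\nu\bigl[\frac{\d\mu}{\d\nu}\mid s^\top X=t\bigr]$. Conditional Jensen then gives $D_{f,\nu_s}(\mu_s)\le D_{f,\nu}(\mu)$ for every $s$. Averaging over $\sigma$, which is a probability measure, gives $\SD_{f,\nu}(\mu)\le D_{f,\nu}(\mu)$.

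\textbf{Limit $K\to\infty$.} For a.e.\ $s$, the hypothesis $r_{\mu_s\mid\nu_s}\in\C([0,1])$ makes \cref{thm:convergence_K_to_Infty} applicable, so $D^{(K)}_{f,\nu_s}(\mu_s)\to D_{f,\nu_s}(\mu_s)$. The theorem also asks for a Lipschitz condition on $f$. Since $\ran(r)$ is compact, $f$ is Lipschitz there whenever $\ran(r)\subset(0,\infty)$. In general, I would instead use the plain convergence $r_K\to r$ uniformly together with continuity of $f$ on the compact range, which gives $f(r_K)\to f(r)$ uniformly. By \cref{thm:monotone-K}, the sequence $K\mapsto D^{(K)}_{f,\nu_s}(\mu_s)$ is nondecreasing and nonnegative. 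The monotone convergence theorem therefore lets us interchange the limit and the integral over $\S^{d-1}$, giving $\lim_K\mathbf{D}^{(K)}_{f,\nu}(\mu)=\SD_{f,\nu}(\mu)$. This also holds when the right-hand side is infinite. Monotone convergence is preferable to dominated convergence here, since it requires no integrable bound uniform in $s$.

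\textbf{Main obstacle.} The delicate step is the pointwise convergence per slice when $0\in\ran(r)$ and $f$ is not Lipschitz near $0$ (e.g.\ KL). I would handle it via continuity of $f$ on $[0,\max r]$ with values in $\R\cup\{+\infty\}$, which is lower semicontinuous. This gives $\liminf_K D^{(K)}\ge D$ by Fatou's lemma, and combined with the upper bound $D^{(K)}\le D$ from \cref{thm:monotone-K} it yields the limit.
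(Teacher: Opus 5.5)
Your proposal is correct and follows the paper's route: apply \cref{thm:monotone-K} and \cref{thm:convergence_K_to_Infty} slice by slice, use the data processing inequality for $x\mapsto s^\top x$ to get $D_{f,\nu_s}(\mu_s)\le D_{f,\nu}(\mu)$, and integrate over $\sigma$. The differences are refinements that close two tacit gaps in the paper's argument: you interchange limit and integral by monotone convergence instead of dominated convergence with the constant bound $D_{f,\nu}(\mu)$, so the case $D_{f,\nu}(\mu)=\infty$ is also covered, and your Fatou/lower-semicontinuity argument for the per-slice limit avoids the Lipschitz hypothesis of \cref{thm:convergence_K_to_Infty}, which the paper invokes although the sliced statement never assumes it.
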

\begin{proof}
    See \cref{subsec:proof_sliced-rank-f-convergence}.
\end{proof}

 Now, we examine the variance of the estimator $D_{f, \nu}^{(K)}$
when estimating its input by samples.
\begin{restatable}[Asymptotic normality, sliced one-sample case]{theorem}{AsymptoticNormalitySliced}
\label{thm:asymptotic_normality_sliced}
Fix $K \in \N$ and $\mu,\nu \in \P(\R^d)$ with $\mu \neq \nu$, and let
$\hat\mu_N$ be the empirical approximation from \eqref{eq:empirical_measure}.
If $f \in \C^1([0, K + 1])$ and $f'$ is Lipschitz, then there exists a constant $\tau_{K}^2 \ge 0$ such that, in distribution,
\[
  \sqrt{N}\Bigl(
    \mathbf{D}^{(K)}_{f,\nu}(\hat\mu_N)
    - \mathbf{D}^{(K)}_{f,\nu}(\mu)
  \Bigr)
  \xrightarrow[N \to \infty]{d} \mathcal{N}(0,\tau_{K}^2).
\]
\end{restatable}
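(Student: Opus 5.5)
The plan is a delta-method argument: the rank PMF is linear in $\mu$, so a CLT for the PMF plus a first-order expansion of $f$ will do. Only $\mu$ is replaced by $\hat\mu_N$, and $\nu$ stays fixed. For each direction $s$ and each $n\in[K]$ we have
\[
Q^{(K)}_{\mu_s\mid\nu_s}(n)=\int_{\R^d} b_{n,K}\bigl(R_{\nu_s}(s^\top x)\bigr)\d\mu(x),
\]
where $b_{n,K}$ are the Bernstein polynomials and $R_{\nu_s}$ is continuous because $\nu_s$ is atomless for almost every $s$. Hence $Q^{(K)}_{(\hat\mu_N)_s\mid\nu_s}(n)=\frac1N\sum_{i=1}^N g_{n}(s,X_i)$ with $g_n(s,x)\coloneqq b_{n,K}(R_{\nu_s}(s^\top x))\in[0,1]$. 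This is an empirical average of bounded functions, and its mean is the population PMF.

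The next step is to linearize. Write $q_s(n)\coloneqq (K+1)Q^{(K)}_{\mu_s\mid\nu_s}(n)$ and $\hat q_s(n)$ for its empirical counterpart; both lie in $[0,K+1]$. Since $f\in\C^1([0,K+1])$ and $f'$ is $L$-Lipschitz, Taylor's theorem gives
\[
f(\hat q_s(n))-f(q_s(n))=f'(q_s(n))\bigl(\hat q_s(n)-q_s(n)\bigr)+\rho_{s,n},\qquad |\rho_{s,n}|\le \tfrac L2 |\hat q_s(n)-q_s(n)|^2 .
\]
Averaging over $n$ and integrating over $s$, we obtain
\[
\mathbf D^{(K)}_{f,\nu}(\hat\mu_N)-\mathbf D^{(K)}_{f,\nu}(\mu)=\frac1N\sum_{i=1}^N \bigl(\phi(X_i)-\E_\mu[\phi]\bigr)+R_N,
\]
with influence function
\[
\phi(x)\coloneqq\int_{\S^{d-1}}\sum_{n=0}^K f'(q_s(n))\,g_n(s,x)\d\sigma(s).
\]
The influence function is measurable, being an integral of jointly measurable functions. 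It is also bounded by $(K+1)\|f'\|_{\infty,[0,K+1]}$, since $\sum_n g_n=1$. The classical CLT then gives $\frac1{\sqrt N}\sum_i(\phi(X_i)-\E_\mu\phi)\to\mathcal N(0,\tau_K^2)$ with $\tau_K^2\coloneqq\Var_\mu(\phi(X))$.

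The remaining step, and the only real obstacle, is to show $\sqrt N R_N\to0$ in probability. The remainder is controlled by
\[
|R_N|\le \frac{L}{2(K+1)}\int_{\S^{d-1}}\sum_{n}\bigl|\hat q_s(n)-q_s(n)\bigr|^2\d\sigma(s).
\]
Taking expectations and using Fubini, for each fixed $(s,n)$ the quantity $\E|\hat q_s(n)-q_s(n)|^2=(K+1)^2\Var(g_n(s,X))/N\le (K+1)^2/(4N)$. Hence $\E|R_N|\le L(K+1)^2/(8N)$, so $\sqrt N\,\E|R_N|\to0$, and Markov's inequality gives $\sqrt N R_N\to 0$ in probability. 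Slutsky's lemma then finishes the proof. The hypothesis $\mu\ne\nu$ plays no essential role: it only indicates when $\tau_K^2$ may be positive. When it vanishes, the limit is the degenerate Gaussian, which is allowed since $\tau_K^2\ge0$. The care needed is mainly in the measurability and Fubini interchange in $s$, which hold because $(s,x)\mapsto R_{\nu_s}(s^\top x)$ is jointly measurable.
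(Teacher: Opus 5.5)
Your proof is correct and reaches the same limit as the paper, but by a more elementary route. The paper works in the Hilbert space $\H=L^2(\S^{d-1}\times[K],\sigma\otimes U_K)$. It applies a Hilbert-space CLT to $T_N=\frac1N\sum_i\Phi(X_i)$, asserts Fréchet differentiability of $\Psi(h)=\int_{\S^{d-1}}\frac{1}{K+1}\sum_n f((K+1)h(s,n))\d\sigma(s)$, and then cites the functional delta method. Its variance $\tau_K^2=\Var(D\Psi(T)[\Phi(X_1)-T])$ equals your $\Var_\mu(\phi(X))$, because $D\Psi(T)[\Phi(x)]$ is exactly your influence function $\phi(x)$. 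You instead Taylor-expand pointwise in $(s,n)$ and use only the scalar CLT. You then control the nonlinear part by the explicit bound $\E|R_N|\le L(K+1)^2/(8N)$, together with Markov and Slutsky.

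Your route has two advantages. First, you only ever evaluate $f$ on $[0,K+1]$, where the hypotheses hold. The paper's $\Psi$ is written on all of $\H$, where $(K+1)h(s,n)$ may leave $[0,K+1]$ (even $[0,\infty)$). Its differentiability claim therefore tacitly needs an extension of $f$, or a restriction to $\{0\le h\le1\}$ together with the tangential delta method. Second, you get an $O_p(N^{-1})$ remainder with explicit $K$-dependence. This is exactly the form the paper later relies on in the proof of \cref{thm:asymptotic-choice-K}, where $K=K_N\to\infty$.

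The paper's route gives a stronger intermediate result in return: the whole sliced rank-histogram field $\sqrt N(T_N-T)$ converges to a Gaussian element of $\H$. Any other differentiable functional of the histograms then inherits asymptotic normality at once. Your observations that $\mu\neq\nu$ plays no role and that joint measurability follows from Tonelli are also correct.
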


\begin{proof}
    See \cref{subsec:proof_asymptotic_normality}, where we also give a criterion ensuring $\tau_K^2 > 0$.
\end{proof}

The next result gives a simple asymptotic rule for \textit{choosing $K$} in the one-dimensional oracle-reference setting.

\begin{restatable}[Asymptotic oracle choice of the rank resolution]{theorem}{AsymptoticChoiceK}
\label{thm:asymptotic-choice-K}
Let $\mu,\nu\in\P(\R)$, assume that $\nu$ is atomless and $\mu\ll\nu$.
Assume that $r \coloneqq r_{\mu \mid \nu}\in\C^2([0,1])$ and that $f\in\C^2([0,\infty))$ with bounded second derivative. If $K=K_N\to\infty$ satisfies $\frac{\sqrt N}{K_N}\to0$ and $\frac{K_N}{N}\to0$, e.g., if $K_N=N^\beta$ with $1/2<\beta<1$,
then, for $X\sim\mu$ and $\widetilde D_N^{(K)} \coloneqq D^{(K)}_{f,\nu}(\hat\mu_N)$,
\[
  \frac{\widetilde D_N^{(K_N)} - D_{f,\nu}(\mu)}{N^{-\frac{1}{2}}}
  \xrightarrow[N\to\infty]{d}
  \mathcal N\left(
    0,
    \Var_\mu\!\left(f'(r(R_\nu(X)))\right)
  \right).
\]
\end{restatable}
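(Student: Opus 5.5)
We split the error as
\[
\widetilde D_N^{(K_N)} - D_{f,\nu}(\mu) = \underbrace{\bigl(D^{(K_N)}_{f,\nu}(\hat\mu_N) - D^{(K_N)}_{f,\nu}(\mu)\bigr)}_{\text{stochastic}} + \underbrace{\bigl(D^{(K_N)}_{f,\nu}(\mu) - D_{f,\nu}(\mu)\bigr)}_{\text{bias}} .
\]
The bias term is handled by \cref{thm:convergence_K_to_Infty}. Since $r\in\C^2([0,1])$ and $f\in\C^2$, $r$ is Lipschitz, so the bias is $O(K_N^{-1})$. Multiplying by $\sqrt N$ gives $O(\sqrt N/K_N)\to 0$. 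It therefore suffices to show that $\sqrt N$ times the stochastic term converges to $\mathcal N(0,\Var_\mu(f'(r(R_\nu(X)))))$.

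\textbf{Linearization.} Write $U_i\coloneqq R_\nu(X_i)$, which are i.i.d.\ with density $r$ on $[0,1]$. Since $\nu$ is fixed, the rank PMF of $\hat\mu_N$ is the empirical mean of Bernstein weights:
\[
\hat Q(n)=\frac1N\sum_{i} b_{n,K}(U_i), \qquad Q(n)=\E[b_{n,K}(U)].
\]
A second-order Taylor expansion of $f$ around $p_n\coloneqq(K+1)Q(n)$, with $|f''|\le C$, gives
\[
D^{(K)}_{f,\nu}(\hat\mu_N)-D^{(K)}_{f,\nu}(\mu) = \frac1N\sum_i\bigl(g_K(U_i)-\E g_K(U)\bigr) + \mathcal R_N,
\]
where
\[
g_K(u)\coloneqq\sum_{n=0}^{K} f'(p_n)\,b_{n,K}(u), \qquad |\mathcal R_N|\le \frac{C}{2}(K+1)\sum_n(\hat Q(n)-Q(n))^2 .
\]
Taking expectations, $\E|\mathcal R_N|\le \frac{C}{2}(K+1)\frac1N\sum_n \E[b_{n,K}(U)^2]\le \frac{C(K+1)}{2N}$, using $\sum_n b_{n,K}^2\le 1$. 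Hence $\sqrt N\,\mathcal R_N=O_P(K_N/\sqrt N)$.

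\textbf{Main obstacle.} The bound $O_P(K_N/\sqrt N)$ does \emph{not} vanish when $K_N\gg\sqrt N$, so the crude bound must be sharpened. I would first try the finer estimate
\[
\sum_n \E[b_{n,K}(U)^2]\le \|r\|_\infty\int_0^1\sum_n b_{n,K}^2\,\d u = \frac{\|r\|_\infty}{K+1},
\]
using $\int_0^1 b_{n,K}^2\,\d u=\binom{2n}{n}\binom{2K-2n}{K-n}/\bigl((2K+1)\binom{2K}{K}\bigr)$, whose sum is $1/(K+1)\cdot$const by a Vandermonde-type identity. This improves the remainder to $\E|\mathcal R_N|\lesssim 1/N$, so $\sqrt N\,\mathcal R_N\to0$ in probability. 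This is where the condition $K_N/N\to 0$ should also enter, and where the constant must be checked carefully.

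\textbf{Triangular-array CLT.} The linear term is a triangular-array average of $g_{K_N}(U_i)$. Since $g_K=B_K[f'\circ r_K^{\rm d}]$ is a Bernstein-type smoothing of $f'\circ r$, with $p_n=(K+1)\E b_{n,K}(U)\to r(n/K)$ uniformly, we get $g_{K_N}\to f'\circ r$ uniformly on $[0,1]$, using the Lipschitz continuity of $f'$ on the compact range. Hence
\[
\frac{1}{\sqrt N}\sum_i \bigl(g_{K_N}(U_i)-f'(r(U_i))\bigr)\text{ centered}
\]
has variance at most $\|g_{K_N}-f'\circ r\|_\infty^2\to0$. The classical CLT applied to $f'(r(U_i))$ then gives the stated limit, and Slutsky's lemma combines the three pieces.
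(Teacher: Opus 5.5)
\textbf{There is a genuine error in the step you flagged.} Your per-term formula $\int_0^1 b_{n,K}^2\,\d u=\binom{2n}{n}\binom{2K-2n}{K-n}/\bigl((2K+1)\binom{2K}{K}\bigr)$ is correct. The sum, however, is not $\mathrm{const}/(K+1)$. The identity $\sum_{n=0}^{K}\binom{2n}{n}\binom{2K-2n}{K-n}=4^K$ gives
\[
  \int_0^1\sum_{n=0}^{K} b_{n,K}(u)^2\,\d u=\frac{4^K}{(2K+1)\binom{2K}{K}}\sim\frac{\sqrt{\pi}}{2\sqrt{K}}.
\]
Already for $K=1$ this equals $2/3$, not $1/2$. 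Probabilistically, $\sum_n b_{n,K}(u)^2$ is the probability that two independent $\Bin(K,u)$ variables coincide, which in the bulk is of order $(Ku(1-u))^{-1/2}$, not $K^{-1}$.

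So your refined estimate only gives $\E|\mathcal R_N|\lesssim \|r\|_\infty\sqrt{K}/N$, not $1/N$. This is not just a lossy bound. For $f=f_{\chi^2}$ the remainder is exactly $\mathcal R_N=\frac{K+1}{2}\sum_n(\hat Q(n)-Q(n))^2\ge0$. If $r$ is bounded away from zero, its mean is of exact order $\sqrt{K}/N$, so $\sqrt N\,\E\mathcal R_N\asymp\sqrt{K_N/N}$ and no $O(1/N)$ bound is possible. The warning sign was that your computation made the hypothesis $K_N/N\to0$ superfluous.

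\textbf{The repair is immediate and is precisely the paper's argument.} Using $\binom{2K}{K}\ge 4^K/(2\sqrt K)$, one gets $\int_0^1\sum_n b_{n,K}^2\,\d u\le K^{-1/2}$. Hence $\sqrt N\,\mathcal R_N=O_P\bigl(\sqrt{K_N/N}\bigr)=o_P(1)$, and this is exactly where $K_N/N\to0$ enters.

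With that correction, your proof coincides with the paper's. It uses the same bias/stochastic split, with the $O(K^{-1})$ bias from \cref{thm:convergence_K_to_Infty}. It uses the same linearization with $g_K=\sum_n f'(p_n)\,b_{n,K}$, and the same uniform convergence $g_{K_N}\to f'\circ r$. Your last step compares $g_{K_N}(U_i)$ with $f'(r(U_i))$ in $L^2$ and then invokes the ordinary i.i.d.\ CLT plus Slutsky. That is a slightly more elementary replacement for the paper's appeal to a triangular-array CLT.
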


\begin{proof}
    See \cref{subsec:proof_asymptotic_choice_K}.
\end{proof}

\section{Experiments}
\label{sec:experiments}

We evaluate the proposed rank-statistic $f$-divergence estimator across synthetic and high-dimensional settings.
Our experiments quantify estimation accuracy, sensitivity to the resolution parameter $K$, and the benefits of the sliced extension. We also demonstrate its practical behavior when used as a fully sample-based objective in downstream learning  on the CIFAR-10 dataset \cite{krizhevsky2009learning}.
We defer additional experiments to \cref{sec:experiments_Appendix}.

\subsection{Neural vs.\ rank-statistic divergence estimation across dimensions}

We benchmark the proposed rank-statistic $f$-divergence estimator on the same synthetic setup as the neural KL-divergence estimator of \citet{sreekumar2021non}, using their training protocol for all sample sizes (optimizer, architecture scaling, and training schedule), taking $\mu$ and $\nu$ as (suitably truncated) a standard Gaussian and a uniform distribution, respectively (details are deferred to \cref{app:Neural vs. rank-statistic divergence estimation across dimensions}).

In contrast to the neural baseline, our rank-statistic estimator involves no iterative optimization: once the samples are fixed, it is fully determined by the rank resolution $K$.

Since both $\mu$ and $\nu$ factorize over coordinates (independent Gaussian components and a product-box truncation), 
\[
\mathrm{KL}(\mu\|\nu)
=\sum_{j=1}^d \mathrm{KL}(\mu_j\|\nu_j).
\]
Accordingly, an axis-corrected rank estimator is used: compute the 1D degree-$K$ terms
$D^{(K)}_{f_\mathrm{KL},\nu_j}(\mu_j)$ and sum them up:
\[
D^{(K)}_{\mathrm{KL,axis}}(\mu\|\nu)
\coloneqq
\sum_{j=1}^d D^{(K)}_{f_\mathrm{KL},\nu_j}(\mu_j).
\]
This leverages the fact that the coordinate axes already capture the discrepancy, without averaging over random projections.

\paragraph{Evaluation and plots.}

For each dimension, Figure~\ref{fig:kl_vs_n_d2_d5_d10} reports the estimated $\mathrm{KL}(\mu\|\nu)$ as a function of the sample size $n$, for both the neural baseline and the rank-statistic estimator. The ground-truth $\mathrm{KL}(\mu\|\nu)$ (dashed horizontal line) is computed analytically (implementation details are deferred to \cref{app:Neural vs. rank-statistic divergence estimation across dimensions}).

Across all $d\in\{2,5,10\}$, the rank-statistic estimator tracks the analytic reference closely and becomes increasingly stable as $n$ grows, with the uncertainty band contracting rapidly; in particular, it is already accurate in the smaller-$n$ regime (most noticeably for $d=5$ and $d=10$), where it provides a useful signal before the neural baseline has stabilized. The neural estimator exhibits larger variability and more noticeable deviations from the reference, especially in higher dimensions, suggesting that neural $f$-divergence estimation requires a larger sample budget to become reliable in this setting. Overall, these results indicate that the rank-statistic approach is competitive on this benchmark, often offering smoother and more data-efficient estimates while avoiding iterative training.


\definecolor{niceblue}{RGB}{31,119,180}
\definecolor{niceorange}{RGB}{255,127,14}
\definecolor{nicegray}{RGB}{80,80,80}

\begin{figure*}[!t]
\centering

\begin{subfigure}[b]{0.32\linewidth}
\centering
\begin{tikzpicture}
\begin{axis}[
    width=\linewidth, height=5.0cm,
    xmode=log, log basis x=10,
    xmin=1e4, xmax=6.4e6,
    ymin=0.130, ymax=0.145,
    xtick={1e4,1e5,1e6,5e6},
    xticklabels={$10^4$,$10^5$,$10^6$,$5\cdot10^6$},
    ytick={0.130,0.133,0.136,0.139,0.142,0.145},
    yticklabel style={/pgf/number format/fixed, /pgf/number format/precision=3},
    ylabel={Estimated $\mathrm{KL}(\mu\|\nu)$},
    grid=major, grid style={dashed, gray!40},
    axis line style={black!70},
    tick label style={font=\footnotesize},
    ylabel style={font=\small, yshift=-2pt},
]
    \addplot[name path=neural_up, draw=none, forget plot] table[row sep=\\] {
    10000 0.1382\\ 20000 0.1381\\ 40000 0.1385\\ 80000 0.1386\\ 100000 0.1447\\ 200000 0.1405\\ 400000 0.1407\\ 800000 0.1402\\ 1600000 0.1406\\ 3200000 0.1390\\ 6400000 0.1382\\ };
    \addplot[name path=neural_lo, draw=none, forget plot] table[row sep=\\] {
    10000 0.1340\\ 20000 0.1361\\ 40000 0.1345\\ 80000 0.1359\\ 100000 0.1323\\ 200000 0.1354\\ 400000 0.1368\\ 800000 0.1369\\ 1600000 0.1370\\ 3200000 0.1379\\ 6400000 0.1375\\ };
    \addplot[niceblue, fill opacity=0.15, forget plot] fill between[of=neural_up and neural_lo];
    \addplot[niceblue, thick, mark=*, mark size=2pt] table[row sep=\\] {
    10000 0.1361\\ 20000 0.1371\\ 40000 0.1365\\ 80000 0.1373\\ 100000 0.1395\\ 200000 0.1380\\ 400000 0.1387\\ 800000 0.1386\\ 1600000 0.1388\\ 3200000 0.1384\\ 6400000 0.1379\\ };

    \addplot[name path=rank_up, draw=none, forget plot] table[x expr=\thisrowno{0}, y expr=\thisrowno{1}+\thisrowno{2}, row sep=\\] {
    10000 0.1379 0.0048\\ 20000 0.1371 0.0046\\ 40000 0.1372 0.0029\\ 80000 0.1377 0.0022\\ 160000 0.1365 0.0019\\ 320000 0.1369 0.0014\\ 640000 0.1372 0.0006\\ 1280000 0.1372 0.0003\\ 2560000 0.1371 0.0002\\ 5120000 0.1371 0.0004\\ };
    
    \addplot[name path=rank_lo, draw=none, forget plot] table[x expr=\thisrowno{0}, y expr=\thisrowno{1}-\thisrowno{2}, row sep=\\] {
    10000 0.1379 0.0048\\ 20000 0.1371 0.0046\\ 40000 0.1372 0.0029\\ 80000 0.1377 0.0022\\ 160000 0.1365 0.0019\\ 320000 0.1369 0.0014\\ 640000 0.1372 0.0006\\ 1280000 0.1372 0.0003\\ 2560000 0.1371 0.0002\\ 5120000 0.1371 0.0004\\ };
    
    \addplot[niceorange, fill opacity=0.15, forget plot] fill between[of=rank_up and rank_lo];
    
    \addplot[niceorange, thick, mark=square*, mark size=2pt] table[x expr=\thisrowno{0}, y expr=\thisrowno{1}, row sep=\\] {
    10000 0.1379 0.0048\\ 20000 0.1371 0.0046\\ 40000 0.1372 0.0029\\ 80000 0.1377 0.0022\\ 160000 0.1365 0.0019\\ 320000 0.1369 0.0014\\ 640000 0.1372 0.0006\\ 1280000 0.1372 0.0003\\ 2560000 0.1371 0.0002\\ 5120000 0.1371 0.0004\\ };

    \addplot[nicegray, very thick, dashed] coordinates {(1e4,0.138189) (6.4e6,0.138189)};
\end{axis}
\end{tikzpicture}
\caption{$d=2$}
\end{subfigure}\hfill%
\begin{subfigure}[b]{0.32\linewidth}
\centering
\begin{tikzpicture}
\begin{axis}[
    width=\linewidth, height=5.0cm,
    xmode=log, log basis x=10,
    xmin=1e4, xmax=6.4e6,
    ymin=0.34, ymax=0.42,
    xtick={1e4,1e5,1e6,5e6},
    xticklabels={$10^4$,$10^5$,$10^6$,$5\cdot10^6$},
    ytick={0.34, 0.36, 0.38, 0.40, 0.42},
    yticklabel style={/pgf/number format/fixed, /pgf/number format/precision=2},
    grid=major, grid style={dashed, gray!40},
    axis line style={black!70},
    tick label style={font=\footnotesize},
]
    \addplot[name path=neural_up, draw=none, forget plot] table[row sep=\\] {
    10000 0.3813\\ 20000 0.3810\\ 40000 0.3794\\ 80000 0.3813\\ 160000 0.3889\\ 320000 0.3916\\ 640000 0.3940\\ 1280000 0.3935\\ };
    \addplot[name path=neural_lo, draw=none, forget plot] table[row sep=\\] {
    10000 0.3699\\ 20000 0.3698\\ 40000 0.3714\\ 80000 0.3730\\ 160000 0.3785\\ 320000 0.3830\\ 640000 0.3809\\ 1280000 0.3850\\ };
    \addplot[niceblue, fill opacity=0.15, forget plot] fill between[of=neural_up and neural_lo];
    \addplot[niceblue, thick, mark=*, mark size=2pt] table[row sep=\\] {
    10000 0.3756\\ 20000 0.3754\\ 40000 0.3754\\ 80000 0.3772\\ 160000 0.3837\\ 320000 0.3873\\ 640000 0.3875\\ 1280000 0.3892\\ 2560000 0.3912\\ 5120000 0.3923\\ };

    \addplot[name path=rank_up, draw=none, forget plot] table[row sep=\\] {
    10000 0.3924\\ 20000 0.3927\\ 40000 0.3949\\ 80000 0.3949\\ 160000 0.3925\\ 320000 0.3915\\ 640000 0.3910\\ 1280000 0.3904\\ 2560000 0.3903\\ 5120000 0.3900\\ };
    \addplot[name path=rank_lo, draw=none, forget plot] table[row sep=\\] {
    10000 0.3850\\ 20000 0.3808\\ 40000 0.3816\\ 80000 0.3859\\ 160000 0.3854\\ 320000 0.3869\\ 640000 0.3874\\ 1280000 0.3886\\ 2560000 0.3892\\ 5120000 0.3891\\ };
    \addplot[niceorange, fill opacity=0.15, forget plot] fill between[of=rank_up and rank_lo];
    \addplot[niceorange, thick, mark=square*, mark size=2pt] table[row sep=\\] {
    10000 0.3920\\ 20000 0.3868\\ 40000 0.3882\\ 80000 0.3904\\ 160000 0.3890\\ 320000 0.3892\\ 640000 0.3892\\ 1280000 0.3895\\ 2560000 0.3897\\ 5120000 0.3896\\ };

    \addplot[nicegray, very thick, dashed] coordinates {(1e4,0.392526) (6.4e6,0.392526)};
\end{axis}
\end{tikzpicture}
\caption{$d=5$}
\end{subfigure}\hfill%
\begin{subfigure}[b]{0.32\linewidth}
\centering
\begin{tikzpicture}
\begin{axis}[
    width=\linewidth, height=5.0cm,
    xmode=log, log basis x=10,
    xmin=1e4, xmax=6.4e6,
    ymin=0.73, ymax=0.81,
    xtick={1e4,1e5,1e6,5e6},
    xticklabels={$10^4$,$10^5$,$10^6$,$5\cdot10^6$},
    ytick={0.73,0.75,0.77,0.79,0.81},
    yticklabel style={/pgf/number format/fixed, /pgf/number format/precision=2},
    grid=major, grid style={dashed, gray!40},
    axis line style={black!70},
    tick label style={font=\footnotesize},
]
    \addplot[name path=neural_up, draw=none, forget plot] table[x expr=\thisrowno{0}, y expr=\thisrowno{1}+\thisrowno{2}, row sep=\\] {
    10000 0.7435 0.0062\\ 20000 0.7476 0.0040\\ 40000 0.7432 0.0073\\ 80000 0.7494 0.0067\\ 160000 0.7524 0.0075\\ 320000 0.7539 0.0087\\ 640000 0.7602 0.0085\\ 1280000 0.7630 0.0079\\ 2560000 0.7607 0.0050\\ 5120000 0.7669 0.0098\\ };
    \addplot[name path=neural_lo, draw=none, forget plot] table[x expr=\thisrowno{0}, y expr=\thisrowno{1}-\thisrowno{2}, row sep=\\] {
    10000 0.7435 0.0062\\ 20000 0.7476 0.0040\\ 40000 0.7432 0.0073\\ 80000 0.7494 0.0067\\ 160000 0.7524 0.0075\\ 320000 0.7539 0.0087\\ 640000 0.7602 0.0085\\ 1280000 0.7630 0.0079\\ 2560000 0.7607 0.0050\\ 5120000 0.7669 0.0098\\ };
    \addplot[niceblue, fill opacity=0.15, forget plot] fill between[of=neural_up and neural_lo];
    \addplot[niceblue, thick, mark=*, mark size=2pt] table[row sep=\\] {
    10000 0.7435\\ 20000 0.7476\\ 40000 0.7432\\ 80000 0.7494\\ 160000 0.7524\\ 320000 0.7539\\ 640000 0.7602\\ 1280000 0.7630\\ 2560000 0.7607\\ 5120000 0.7669\\ };

    \addplot[name path=rank_up, draw=none, forget plot] table[x expr=\thisrowno{0}, y expr=\thisrowno{1}+\thisrowno{2}, row sep=\\] {
    10000 0.7880 0.0214\\ 20000 0.7841 0.0133\\ 40000 0.7817 0.0081\\ 80000 0.7764 0.0078\\ 160000 0.7780 0.0039\\ 320000 0.7781 0.0032\\ 640000 0.7783 0.0027\\ 1280000 0.7785 0.0014\\ 2560000 0.7781 0.0005\\ 5120000 0.7784 0.0007\\ };
    \addplot[name path=rank_lo, draw=none, forget plot] table[x expr=\thisrowno{0}, y expr=\thisrowno{1}-\thisrowno{2}, row sep=\\] {
    10000 0.7880 0.0214\\ 20000 0.7841 0.0133\\ 40000 0.7817 0.0081\\ 80000 0.7764 0.0078\\ 160000 0.7780 0.0039\\ 320000 0.7781 0.0032\\ 640000 0.7783 0.0027\\ 1280000 0.7785 0.0014\\ 2560000 0.7781 0.0005\\ 5120000 0.7784 0.0007\\ };
    \addplot[niceorange, fill opacity=0.15, forget plot] fill between[of=rank_up and rank_lo];
    \addplot[niceorange, thick, mark=square*, mark size=2pt] table[row sep=\\] {
    10000 0.7880\\ 20000 0.7841\\ 40000 0.7817\\ 80000 0.7764\\ 160000 0.7780\\ 320000 0.7781\\ 640000 0.7783\\ 1280000 0.7785\\ 2560000 0.7781\\ 5120000 0.7784\\ };

    \addplot[nicegray, very thick, dashed] coordinates {(1e4,0.785051) (6.4e6,0.785051)};
\end{axis}
\end{tikzpicture}
\caption{$d=10$}
\end{subfigure}

\vspace{0.5em}
\begin{tikzpicture}
    \node[anchor=west] at (0,0) (legend1) {
        \textcolor{niceblue}{\large$\bullet$} \small Neural estimation
    };
    \node[anchor=west] at (2.8,0) (legend2) {
        \textcolor{niceorange}{\rule{1.2ex}{1.2ex}} \small rank-statistic KL (ours)}
    ;
    \draw[nicegray, very thick, dashed] (6.4,0) -- (7,0);
    \node[anchor=west] at (7,0) (legend3) {\small True KL};
\end{tikzpicture}

\caption{\small Convergence of Kullback-Leibler divergence estimates for increasing sample size~$n$, averaged over 10 independent runs. Shaded bands denote the $\pm 1$ standard deviation interval. Results are shown for $K = 64$ across all samples.}
\label{fig:kl_vs_n_d2_d5_d10}
\end{figure*}

\subsection{Univariate empirical convergence and the influence of the resolution parameter $K$}
\label{subsec:onedim-exp}

This subsection benchmarks the one-dimensional rank-statistic estimator against standard $f$-divergences in settings where accurate reference values are available, and studies how the finite resolution parameter $K$ controls the approximation gap. We focus on three widely used discrepancies, Kullback-Leibler (KL), Jensen-Shannon (JS) \cite{L1991}, and the squared Hellinger divergence, and consider four representative mismatch families: (i) Gaussian mean shifts: $\mathcal N(0,1)$ vs.\ $\mathcal N(\Delta,1)$ with $\Delta\in\{0,0.5,1,2\}$ (JS and KL); (ii) Gaussian scale changes: $\mathcal N(0,1)$ vs.\ $\mathcal N(0,\sigma)$ with $\sigma\in\{1,1.2,1.5,2\}$ (KL and squared Hellinger); (iii) a symmetric Gaussian mixture: $\tfrac12\mathcal N(-\Delta,1)+\tfrac12\mathcal N(+\Delta,1)$ vs.\ $\mathcal N(0,1)$ (JS); and (iv) a tail-mismatch case of $\mathrm{Laplace}(0,1)$ vs.\ $\mathcal N(0,1)$ (JS).

Reference values use closed forms when available (Gaussian-Gaussian KL and squared Hellinger), and otherwise a high-accuracy one-dimensional numerical/Monte Carlo reference; full details are deferred to Appendix~\ref{app:One-Dimensional Approximation Accuracy}. Unless stated otherwise, the number of samples is $n_\mu=n_\nu=10{,}000$ and results are averaged over $10$ seeds. Table~\ref{tab:1d_results_refined_1} reports the ratio $D^{(K)}_{f,\nu}(\mu)/D_{f,\nu}(\mu)$ for $K\in\{32,64,128,256,512\}$. 
For the mean shift experiments, the Jensen-Shannon (JS) divergence outperforms the KL, while for the scale change experiments, the KL outperforms the squared Hellinger divergence.
Additional figures and sweeps over $(n,K)$ are provided in \cref{app:One-Dimensional Approximation Accuracy}.

\sisetup{
  separate-uncertainty = true,
  uncertainty-separator = {\,\pm\,},
  table-format = 1.3(3), 
  detect-all
}

\begin{table*}[!t]
\centering
\footnotesize
\renewcommand{\arraystretch}{1.00} 
\setlength{\tabcolsep}{6pt}

\begin{tabular}{@{} l l l S S S S S @{}}
\toprule
& & & \multicolumn{5}{c}{Ratio $D_{f,\nu}^{(K)}(\mu)  / D_{f, \nu}(\mu)$ for $K=$} \\
\cmidrule(lr){4-8}
Family & Scen. & Param. & {32} & {64} & {128} & {256} & {512} \\
\midrule

\multirow{6}{*}{Mean shift} 
  & JS & $\Delta=0.5$ & 0.933(40) & 0.968(41) & 0.989(42) & \bftablenum{1.003(42)} & 1.013(42) \\
  & JS & $\Delta=1.0$ & 0.928(33) & 0.961(34) & 0.981(35) & 0.992(35) & \bftablenum{0.999(35)} \\
  & JS & $\Delta=2.0$ & 0.930(08) & 0.962(08) & 0.981(09) & 0.991(09) & \bftablenum{0.997(09)} \\
\addlinespace[4pt]
  & KL & $\Delta=0.5$ & 0.946(60) & 0.987(63) & \bftablenum{1.013(65)} & 1.030(66) & 1.044(68) \\
  & KL & $\Delta=1.0$ & 0.880(24) & 0.924(25) & 0.952(26) & 0.969(27) & \bftablenum{0.980(27)} \\
  & KL & $\Delta=2.0$ & 0.775(10) & 0.844(12) & 0.895(13) & 0.933(15) & \bftablenum{0.959(16)} \\

\midrule

\multirow{6}{*}{Scale change} 
  & KL & $\sigma=1.2$ & 0.743(63) & 0.841(70) & 0.908(72) & 0.954(72) & \bftablenum{0.991(72)} \\
  & KL & $\sigma=1.5$ & 0.779(27) & 0.872(29) & 0.927(30) & 0.958(31) & \bftablenum{0.977(31)} \\
  & KL & $\sigma=2.0$ & 0.803(18) & 0.898(20) & 0.953(21) & 0.982(22) & \bftablenum{0.998(22)} \\
\addlinespace[4pt]
  & Hell$^2$ & $\sigma=1.2$ & 0.741(77) & 0.853(89) & 0.931(98) & 0.986(106) & \bftablenum{1.029(111)} \\
  & Hell$^2$ & $\sigma=1.5$ & 0.735(35) & 0.842(39) & 0.908(41) & 0.948(42) & \bftablenum{0.973(42)} \\
  & Hell$^2$ & $\sigma=2.0$ & 0.744(14) & 0.858(14) & 0.926(14) & 0.965(13) & \bftablenum{0.987(12)} \\

\midrule

\multirow{3}{*}{Multimodal} 
  & JS & $\Delta=0.5$ & 0.746(157) & 0.849(176) & 0.926(189) & \bftablenum{0.994(196)} & 1.068(199) \\
  & JS & $\Delta=1.0$ & 0.769(38) & 0.849(40) & 0.898(41) & 0.929(41) & \bftablenum{0.948(42)} \\
  & JS & $\Delta=2.0$ & 0.846(19) & 0.912(20) & 0.950(21) & 0.972(21) & \bftablenum{0.985(21)} \\

\midrule

Heavy tails & JS & -- & 0.488(28) & 0.651(36) & 0.778(41) & 0.869(44) & \bftablenum{0.933(46)} \\
\bottomrule
\end{tabular}
\caption{\small 1D divergence estimation benchmarks (10 runs). We report the ratio estimate/reference (mean $\pm$ std) for various $K$ values. Boldface highlights, for each row, the $K$ whose mean ratio is closest to $1$ (i.e., the most accurate approximation). Ratios above one arise from finite-sample Monte Carlo/numerical-reference error and do not contradict the population lower-bound property \eqref{eq:monotonicity}.}
\label{tab:1d_results_refined_1}
\end{table*}

\subsection{Sliced rank-statistic $f$-divergences: Empirical convergence}
\label{sec:exp_sliced_nd}

We consider $d$-dimensional benchmarks using the sliced estimator $D^{(K)}_{f,\nu}(\mu)$, obtained by averaging the one-dimensional rank divergence over $L$ random projections. Unless stated otherwise, $K=64$, $L=128$, $n_\mu=n_\nu=10{,}000$, and results are reported as mean$\pm$std over $R=10$ runs.
Implementation details are deferred to Appendix~\ref{app:Multivariate benchmarks via random projections}.

Three settings are considered: (i) Gaussian-Gaussian pairs, (ii) covariance mismatches (isotropic and anisotropic), and (iii) non-Gaussian pairs.
For (i), $\mathrm{KL}$ and squared Hellinger have closed-form references, while $\mathrm{JS}$ is approximated by a moment-matched Gaussian proxy.
For (iii), reference values are obtained by Monte Carlo evaluation of the divergence formula using closed-form log-densities.

Figure~\ref{fig:three_plots_wide} reports the ratio $d\,D^{(K)}_{f,\nu}(\mu)/D_{f,\nu}(\mu)$ for mean-shift benchmarks across several dimensions. Overall, the ratio stays close to one with moderate variability, indicating that
the simple $d$-scaling provides a reasonable normalization in these settings. Deviations become more noticeable in higher
dimension, especially for JS and KL, suggesting that a fixed number of projections $L$ can lead to mild under/over-estimation
as $d$ grows, while squared Hellinger remains comparatively stable. Additional benchmarks and ablations are reported in Appendix~\ref{app:Multivariate benchmarks via random projections}.

\definecolor{myBlue}{RGB}{0, 114, 178}
\definecolor{myRed}{RGB}{213, 94, 0}
\definecolor{myGreen}{RGB}{0, 158, 115}
\definecolor{myBlack}{RGB}{0, 0, 0}
\definecolor{myGray}{RGB}{128, 128, 128}

\pgfplotstableread{
d mean std
2  1.015 0.067
5  1.098 0.052
10 1.188 0.072
20 0.977 0.060
50 1.304 0.044
}\KLdHalf
\pgfplotstableread{
d mean std
2  0.991 0.030
5  1.087 0.032
10 1.170 0.031
20 0.899 0.028
50 1.113 0.042
}\KLdOne
\pgfplotstableread{
d mean std
2  1.005 0.062
5  0.935 0.048
10 0.973 0.049
20 1.002 0.037
50 1.035 0.060
}\HelldHalf
\pgfplotstableread{
d mean std
2  0.972 0.033
5  0.931 0.022
10 0.965 0.048
20 0.943 0.042
50 0.851 0.021
}\HelldOne
\pgfplotstableread{
d mean std
2  1.007 0.076
5  0.899 0.040
10 0.887 0.031
20 1.006 0.060
50 1.234 0.038
}\JSdHalf
\pgfplotstableread{
d mean std
2  0.979 0.043
5  0.892 0.013
10 0.895 0.035
20 0.951 0.041
50 1.144 0.031
}\JSdOne

\begin{figure*}[t]
\centering
\begin{tikzpicture}
    \begin{groupplot}[
      group style={
          group size=3 by 1,
          horizontal sep=1.1cm, 
      },
      width=0.32\textwidth,  
      height=5.0cm,
      xmode=log, log ticks with fixed point,
      xmin=1.5, xmax=60,
      xtick={2,5,10,20,50},
      xlabel={Dimension $d$},
      grid=major, grid style={dashed, gray!30},
      axis line style={black!70}, tick style={black!70},
      tick label style={font=\footnotesize},
      label style={font=\small},
      title style={at={(0.5,-0.35)}, anchor=north, font=\small},
    ]

    \nextgroupplot[title={(a) Mean (KL)}, ylabel={Ratio}]
    \addplot[myGray, dashed, thick, domain=1.5:60] {1};
    \addplot+[myBlue, thick, mark=*, mark size=2pt, error bars/.cd, y dir=both, y explicit] table[x=d, y=mean, y error=std] {\KLdHalf};
    \addplot+[myRed, thick, mark=square*, mark size=2pt, error bars/.cd, y dir=both, y explicit] table[x=d, y=mean, y error=std] {\KLdOne};

    \nextgroupplot[
        title={(b) Mean (Hell$^2$)}, 
        legend columns=-1,
        legend style={
            draw=none, font=\small,
            at={(0.5,-0.5)}, anchor=north, 
            /tikz/every even column/.append style={column sep=0.5cm}
        }
    ]
    \addplot[myGray, dashed, thick, domain=1.5:60] {1}; \addlegendentry{Ideal}
    \addplot+[myBlue, thick, mark=*, mark size=2pt, error bars/.cd, y dir=both, y explicit] table[x=d, y=mean, y error=std] {\HelldHalf}; \addlegendentry{$\Delta=0.5$}
    \addplot+[myRed, thick, mark=square*, mark size=2pt, error bars/.cd, y dir=both, y explicit] table[x=d, y=mean, y error=std] {\HelldOne}; \addlegendentry{$\Delta=1.0$}

    \nextgroupplot[title={(c) Mean (JS)}] 
    \addplot[myGray, dashed, thick, domain=1.5:60] {1};
    \addplot+[myBlue, thick, mark=*, mark size=2pt, error bars/.cd, y dir=both, y explicit] table[x=d, y=mean, y error=std] {\JSdHalf};
    \addplot+[myRed, thick, mark=square*, mark size=2pt, error bars/.cd, y dir=both, y explicit] table[x=d, y=mean, y error=std] {\JSdOne};

    \end{groupplot}
\end{tikzpicture}
\caption{\small Comparison of mean shift metrics across dimensions for KL, Hellinger, and JS divergences.}
\label{fig:three_plots_wide}
\end{figure*}

\paragraph{Practical choice of $K$ and $L$.}

The rank resolution $K$ controls the one-dimensional approximation--estimation trade-off. Small values of $K$ lead to a coarse but statistically stable rank histogram, while larger values provide a finer approximation at the cost of increased finite-sample variability. This mirrors the classical binning trade-off in calibration-error estimation: coarse binning can average out structured departures from uniformity, whereas overly fine binning leaves too few samples per bin and increases statistical
noise \citep{roelofs2022mitigating}. In practice, this motivates using moderate values of $K$, or progressive schedules that start from a coarse resolution and increase $K$ as the coarse rank histogram stabilizes during training.

In the oracle-reference one-dimensional setting, \cref{thm:asymptotic-choice-K} formalizes this trade-off asymptotically: under $C^2$ regularity, any $K_N=N^\beta$ with $1/2<\beta<1$ removes the Bernstein bias at the $N^{-1/2}$ scale while keeping the nonlinear plug-in remainder negligible.

The number of projections $L$ controls the Monte Carlo approximation of the
spherical average. For fixed $K$, this projection error behaves like a
standard Monte Carlo error of order $O(L^{-1/2})$ under bounded directional
discrepancies, as in general sliced probability divergences
\citep{nadjahi2020sliced}. Moderate values of $L$ can be effective when
the discrepancy is diffuse across directions, whereas localized or highly
anisotropic discrepancies may require substantially more projections. Similar
projection-complexity effects have also been observed in sliced Wasserstein
generative models and sliced-Wasserstein flows
\citep{wu2019swgm,liutkus2019sliced}. In such cases, structured or
variance-reduced directions, such as orthogonal or quasi-Monte Carlo
projections, may be preferable.

\subsection{Generative Transport Dynamics for Rank $f$-Divergences}\label{subsec:flows}

A useful way to turn a discrepancy into a learning principle is to interpret it as an
energy and derive an update rule that transports samples in data space toward a target
distribution. In our setting, the energy is the (sliced) rank $f$-divergence, and we
implement its minimization through particle transport dynamics based on one-dimensional
quantile matching.

Given particles $\{x_i\}_{i=1}^N\sim \mu$ and reference samples
$\{y_j\}_{j=1}^M\sim \nu$, we draw directions
$s_1,\dots,s_L\in \S^{d-1}$ and form the one-dimensional projections
\[
x_i^{(\ell)} \coloneqq \langle x_i,s_\ell\rangle,
\qquad
y_j^{(\ell)} \coloneqq \langle y_j,s_\ell\rangle.
\]
For each slice $\ell$, let
$\nu^{(\ell)}\coloneqq (s_\ell)_{\#}\nu$ denote the projected reference law, and let $\widehat\nu^{(\ell)} \coloneqq \frac{1}{M}\sum_{j=1}^M \delta_{y_j^{(\ell)}}$
be its empirical counterpart. We compute the initial soft rank coordinates
in slice $\ell$, where the superscript $(\ell)$ indexes the projection
direction and the subscript $0$ denotes the initial rank-space state:
\[
v^{(\ell)}_{0,i}
\approx
\widehat R_{\nu^{(\ell)},\tau}
\bigl(x_i^{(\ell)}\bigr)
\in [0,1],
\qquad i=1,\dots,N.
\]
Equivalently, we write $\mathbf v^{(\ell)}_0
=
\bigl(v^{(\ell)}_{0,1},\dots,v^{(\ell)}_{0,N}\bigr)
\in [0,1]^N$.
Here, $\widehat R_{\nu^{(\ell)},\tau}$ denotes the smoothed empirical CDF
constructed from the projected reference samples
$\{y_j^{(\ell)}\}_{j=1}^M$, obtained by replacing the indicator functions in
the empirical CDF by logistic soft indicators, as in standard differentiable
rank approximations.

To refine these rank coordinates, we associate with any vector
$\mathbf v=(v_1,\dots,v_N)\in [0,1]^N$ a discrete PMF on
$\{0,\dots,K\}$ via a Bernstein-smoothed histogram, which can be viewed as a
smooth sample-based analogue of $Q_{\mu \mid \nu}^{(K)}$:
\begin{equation}
\label{eq:bern_hist_PMF}
\widehat Q^{(K)}(\mathbf v)(n)
\;\coloneqq\;
\frac{1}{N}\sum_{i=1}^N b_{n,K}(v_i),
\qquad n=0,\dots,K.
\end{equation}
We then measure its deviation from the discrete uniform law $U_K$, as in
Definition~\ref{defn:rankStatistic_f_Divergence}:
\begin{equation}
\label{eq:bern_hist_divergence}
\D_f\bigl(\widehat Q^{(K)}(\mathbf v)\parallel U_K\bigr)
=
\frac{1}{K+1}
\sum_{n=0}^K
f\Bigl((K+1)\widehat Q^{(K)}(\mathbf v)(n)\Bigr).
\end{equation}
The rank-space refinement in slice $\ell$ is then defined by the proximal step
\begin{equation}
\label{eq:rank_prox}
\begin{split}
\mathbf v^{(\ell)}_{1}
\in
\argmin_{\mathbf v\in[0,1]^N}
\Bigg\{
&\D_f\!\bigl(\widehat Q^{(K)}(\mathbf v)\,\Vert\,U_K\bigr)
\\
&+
\frac{1}{2\eta}
\bigl\|\mathbf v-\mathbf v^{(\ell)}_{0}\bigr\|_2^2
\Bigg\},
\end{split}
\end{equation}
where $\eta>0$ controls the trust in the current rank coordinates. In
practice, \eqref{eq:rank_prox} can be approximated by deterministic updates
(e.g.\ SGD) or by Langevin-type inner samplers such as ULA or MALA.

The updated rank coordinates are mapped back to the projection axis through
the empirical quantile map of the reference slice:
\[
z_i^{(\ell)}
\approx
Q_{\widehat\nu^{(\ell)}}
\bigl(v^{(\ell)}_{1,i}\bigr),
\qquad
\Delta_i^{(\ell)}
\coloneqq
z_i^{(\ell)} - x_i^{(\ell)}.
\]
Here, $Q_{\widehat\nu^{(\ell)}}$ is evaluated by sorting the projected
reference samples and linearly interpolating between adjacent order statistics,
following the sliced-Wasserstein flow construction
\citep[Sec.~3.3 and Alg.~1]{liutkus2019sliced}. The scalar correction
$\Delta_i^{(\ell)}$ is therefore a one-dimensional monotone transport
correction in slice $\ell$. Finally, we lift these corrections back to
$\R^d$ by aggregating over slices:
\begin{equation}
\label{eq:particle_transport_update}
x_i
\;\leftarrow\;
x_i
+
\varepsilon\,\frac{d}{L}
\sum_{\ell=1}^L
\Delta_i^{(\ell)}\,s_\ell,
\end{equation}
with step size $\varepsilon>0$ and, optionally, per-slice clipping for
stability. Iterating \eqref{eq:particle_transport_update} yields a practical
transport dynamics that moves the particle cloud toward $\nu$ while being
driven by a bounded, rank-based energy. The full pseudocode is given in
Appendix~\ref{app:Generative Transport Dynamics for Rank $f$-divergence}.

\begin{remark}
    This particle algorithm resembles an explicit time discretization of a Wasserstein gradient flow of a Moreau envelope of $D_{f, \nu}^{(K)}$ (similar to \cite{SNRS2025}), the difference being that here the Moreau envelope is taken in quantile space.
\end{remark}

\subsubsection{Two-dimensional toy examples}
\label{subsubsec:2d-toys}

We illustrate the induced particle dynamics on four $2$D toy targets:
(i) a checkerboard distribution, (ii) a noisy ring, (iii) a two-spirals dataset, and
(iv) a two-component Gaussian mixture (two blobs). In each case, we draw $M$ reference samples
from the target $\nu$ and initialize $N$ particles from an isotropic Gaussian. We then iterate
\eqref{eq:particle_transport_update} for a fixed number of outer steps and report snapshots at
$t\in\{0,1,5,10,20,40,100,200,400\}$.

Figure~\ref{fig:mosaic_nice} uses an SGD approximation of the rank-proximal refinement \eqref{eq:rank_prox} with the KL generator, $L=10$ projection directions, and trust-region parameter
$\eta=0.5$. We use a moderate outer step size by starting from
$\varepsilon=0.20$ and linearly annealing it to $0.15$ over training; in parallel we anneal the
rank smoothing temperature (cf. \cref{alg:rank-prox-transport-compact}) from $\tau=0.30$ to $0.10$ and increase the rank resolution from
$K=80$ to $K=128$.

Qualitatively, the dynamics rapidly match the target geometry across very different structures.
On the checkerboard, particles populate multiple disconnected cells without degenerating to a
single region; on the ring, they expand radially and then redistribute along the angular
directions; on spirals, the cloud progressively aligns with the nonlinear manifold; and on the
two-blobs mixture, it splits and concentrates around both modes.

\graphicspath{{./}{./imgs/}{./imgs/2d_toy_examples/chess_board/}{./imgs/2d_toy_examples/two_blobs/}}

\newcommand{\imgcell}[2][]{\fbox{\includegraphics[#1]{#2}}}

\begin{figure}[t]
  \centering

  \setlength{\fboxsep}{0pt}
  \setlength{\fboxrule}{0.35pt}
  \setlength{\tabcolsep}{0pt}
  \renewcommand{\arraystretch}{0}

  \newcommand{\axiscell}[1]{%
    \begin{minipage}{1.55cm}%
      \centering
      \hrule width 1.55cm height 0.4pt depth 0pt
      \hbox to 1.55cm{\hss\vrule width 0.5pt height 3pt\hss}
      \vspace{2pt}
      {\sffamily\tiny #1}
    \end{minipage}%
  }

  \begin{adjustbox}{max width=\columnwidth,center}
    \begin{tabular}{@{} *{9}{c} @{}}

    \imgcell[width=1.55cm]{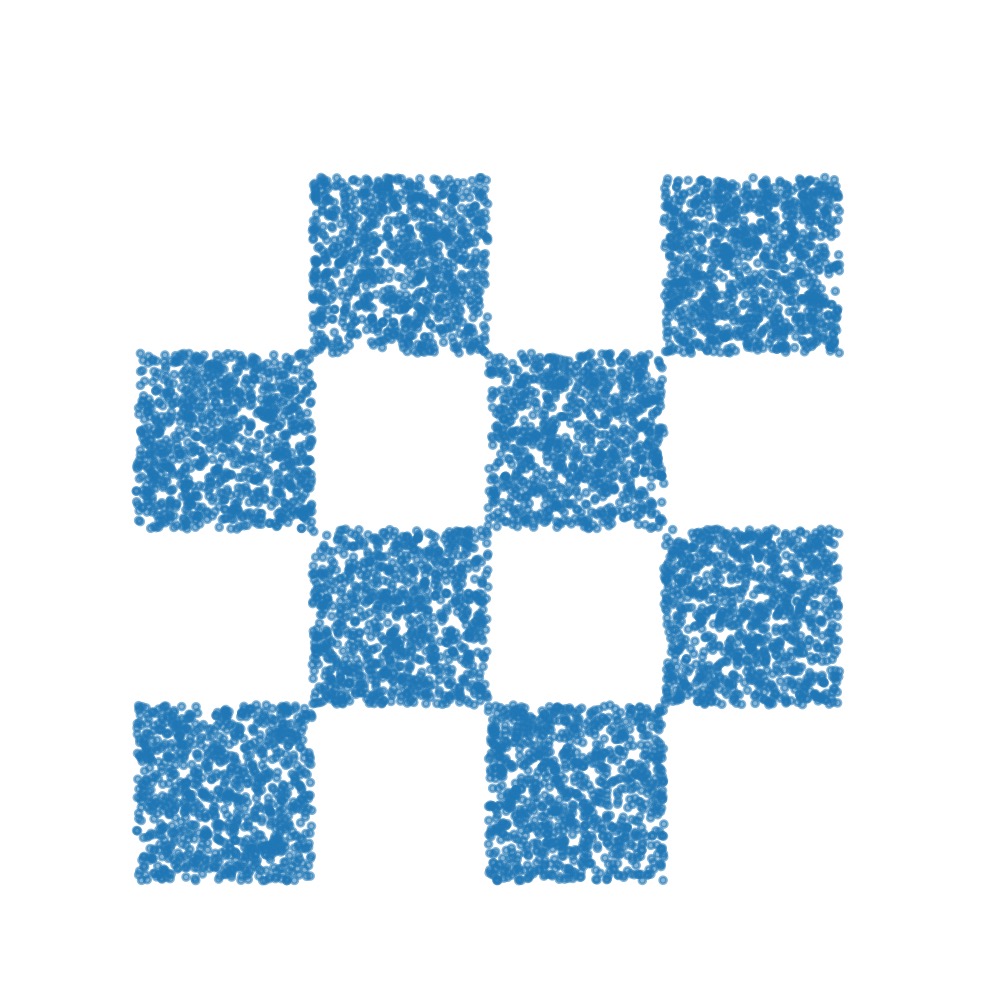} &
    \imgcell[width=1.55cm]{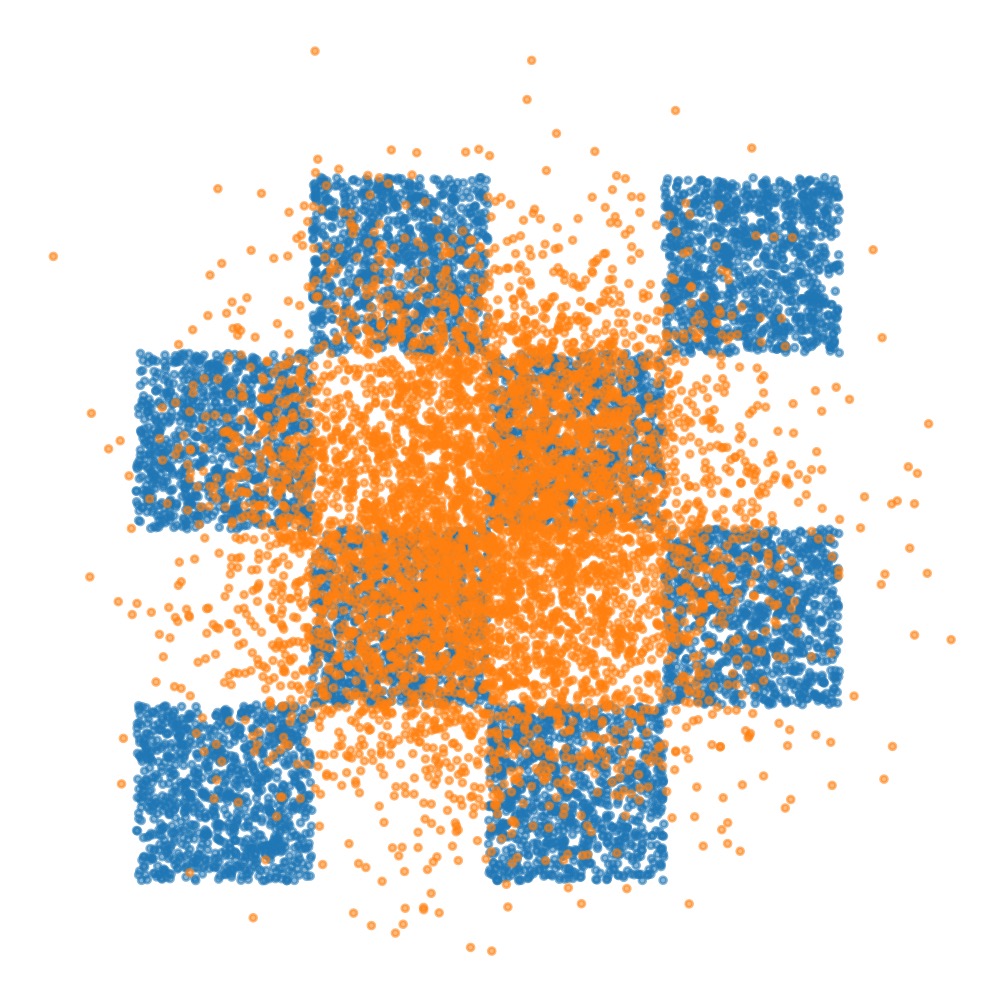} &
    \imgcell[width=1.55cm]{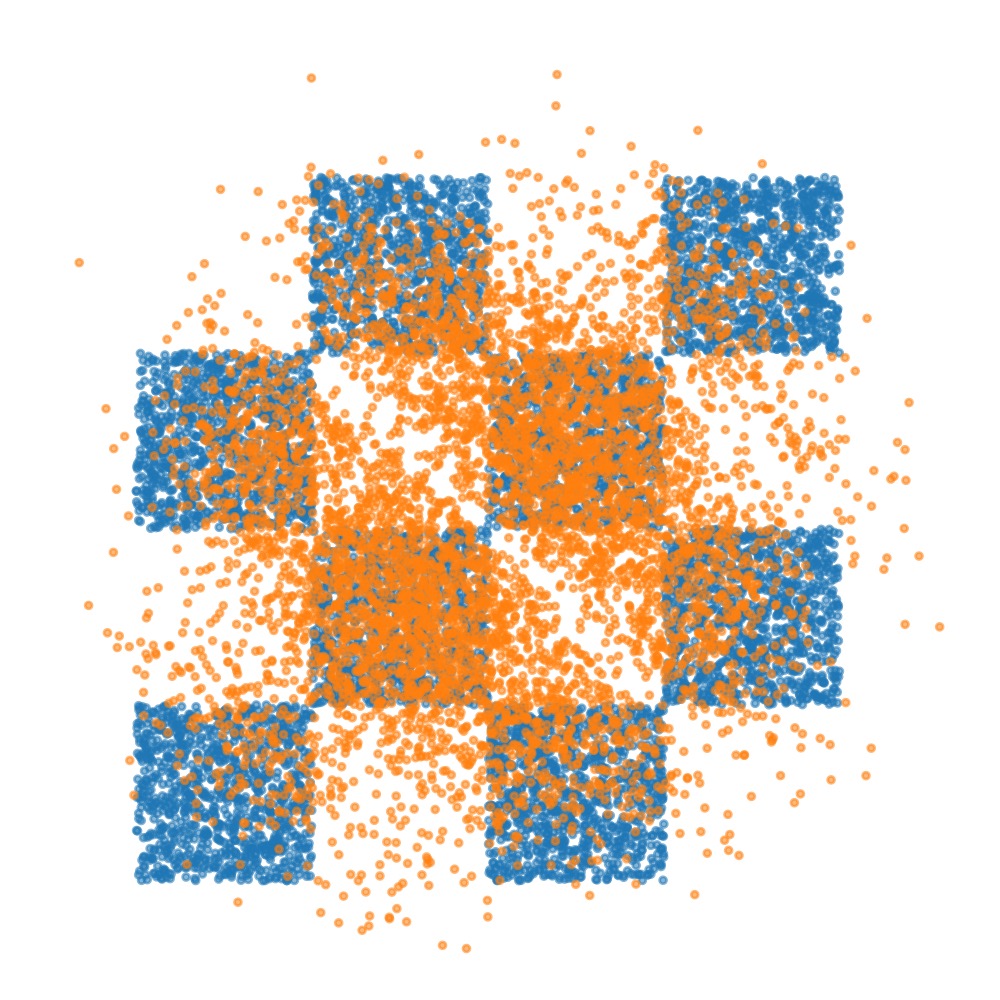} &
    \imgcell[width=1.55cm]{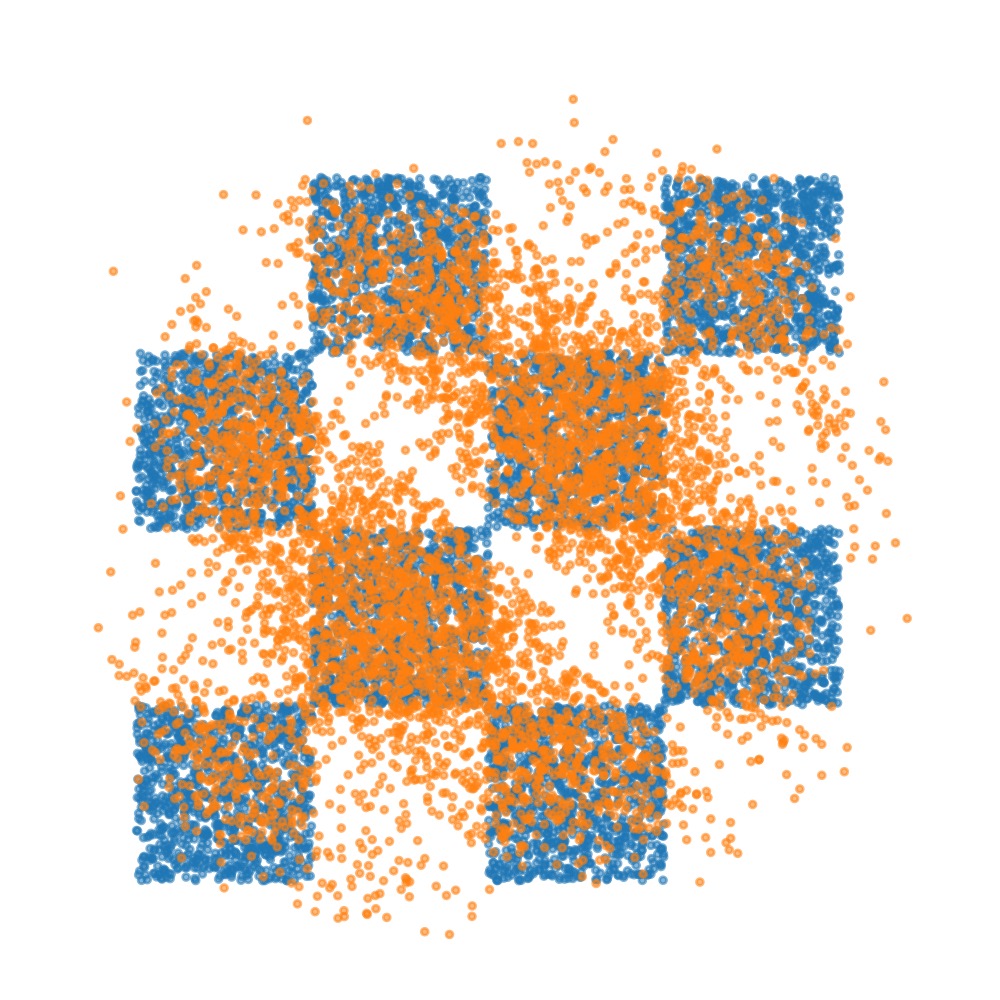} &
    \imgcell[width=1.55cm]{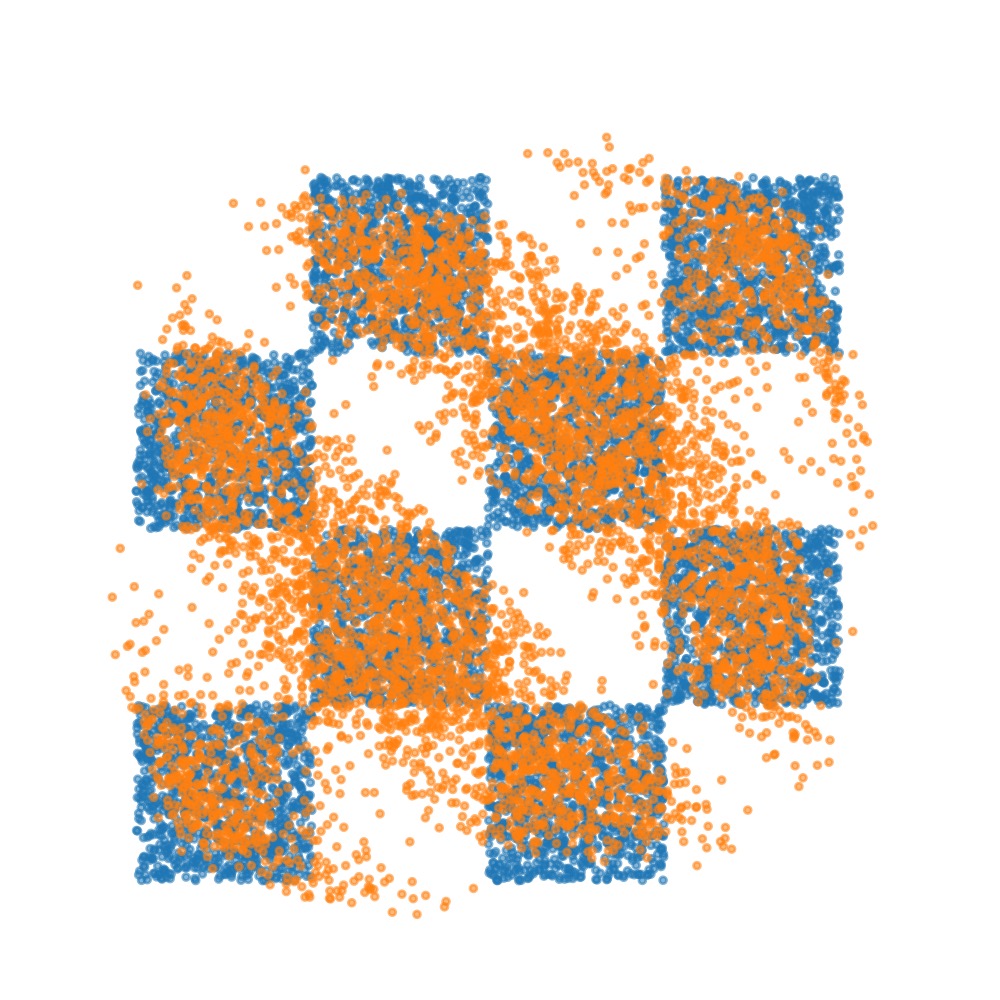} &
    \imgcell[width=1.55cm]{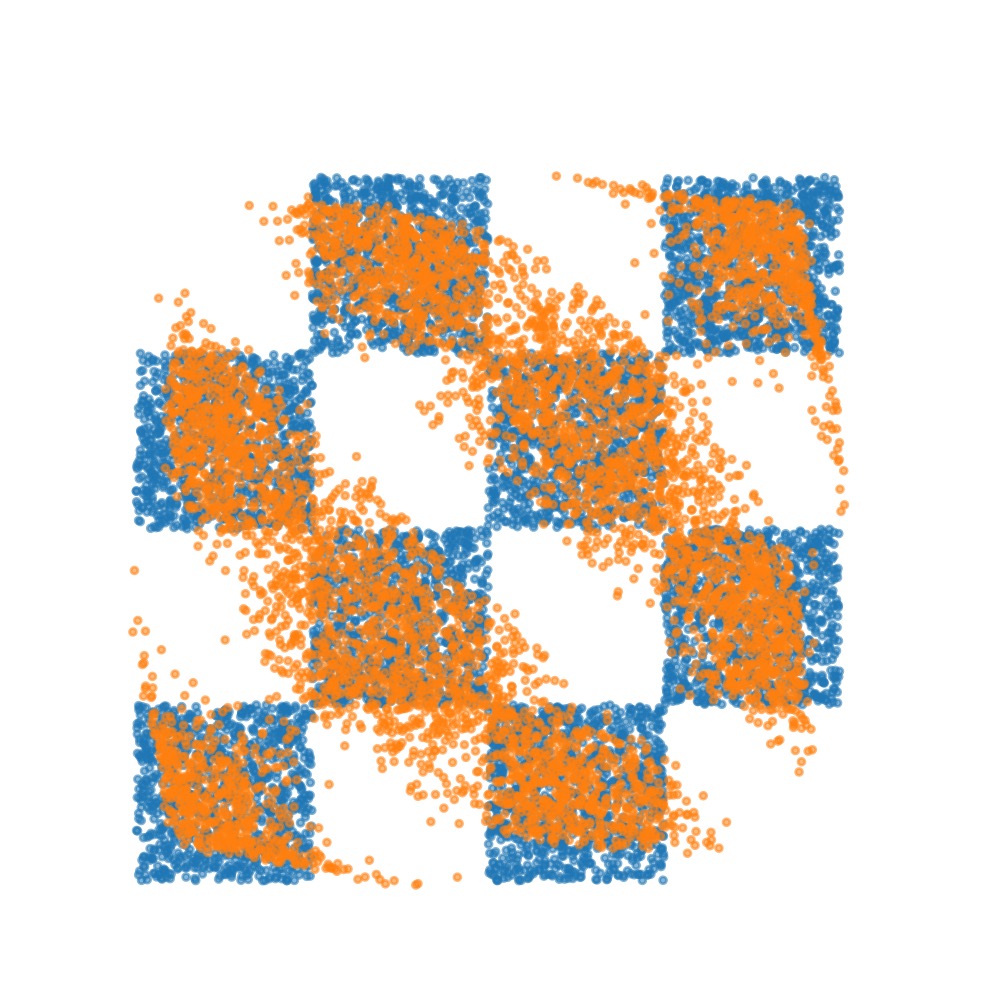} &
    \imgcell[width=1.55cm]{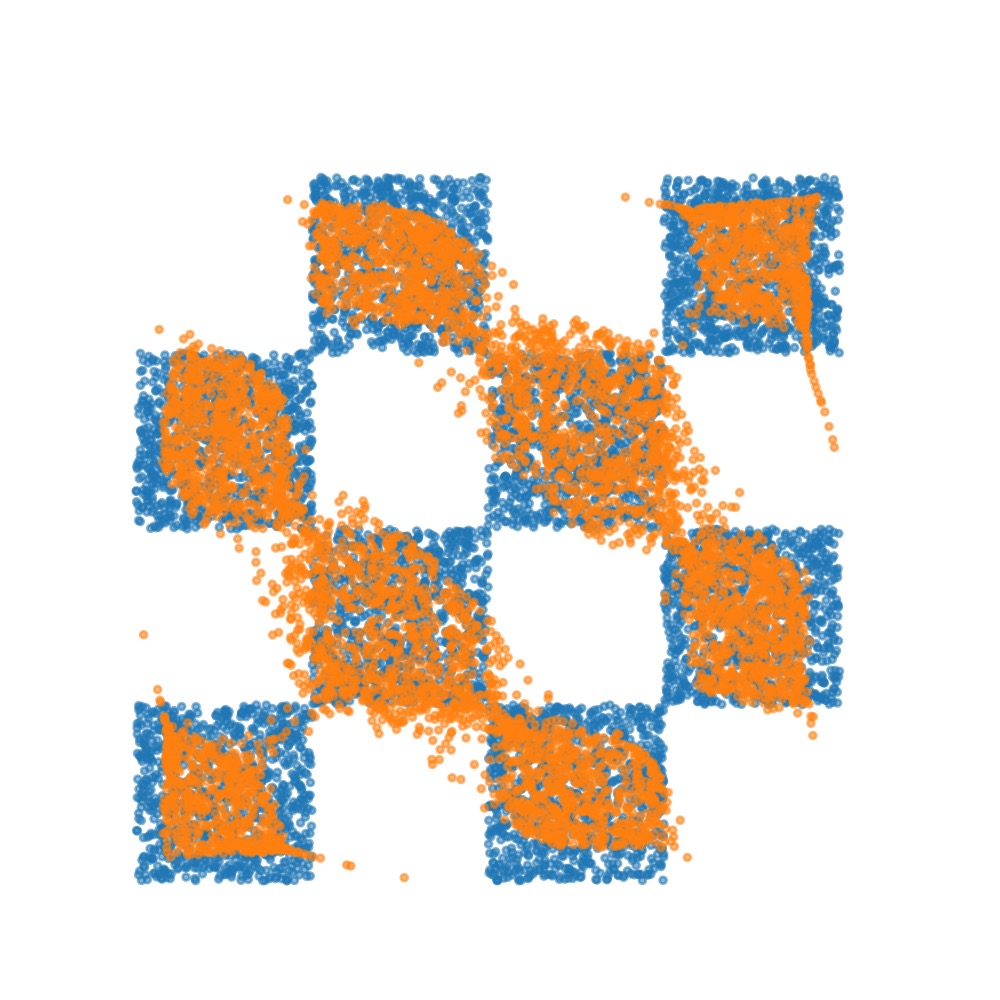} &
    \imgcell[width=1.55cm]{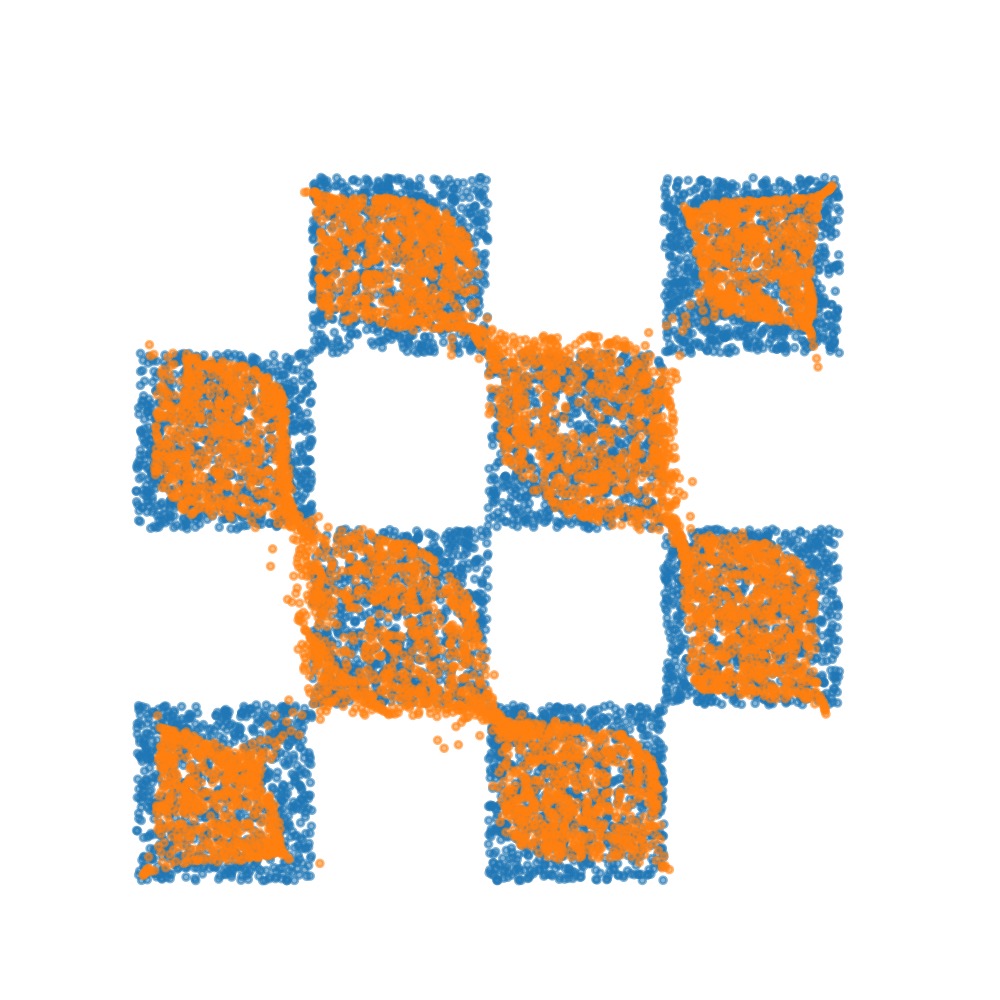} &
    \imgcell[width=1.55cm]{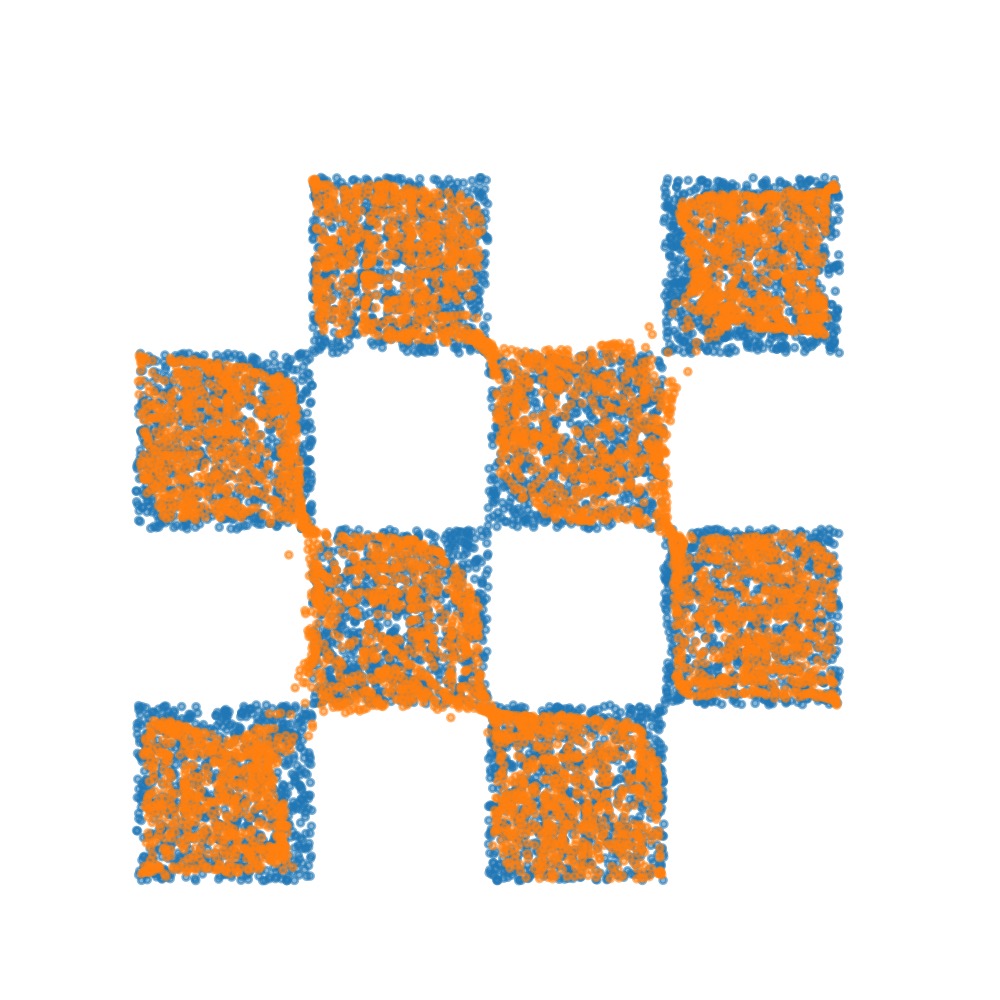} \\

    \imgcell[width=1.55cm]{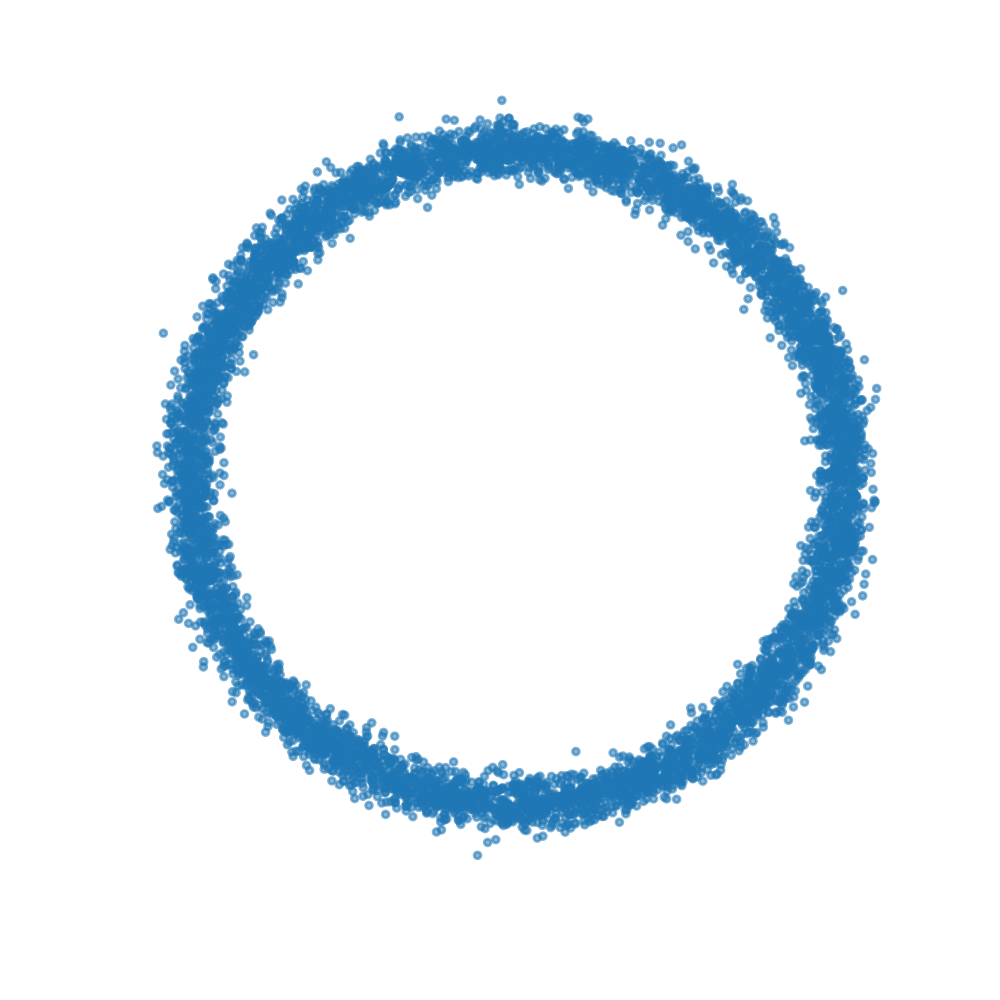} &
    \imgcell[width=1.55cm]{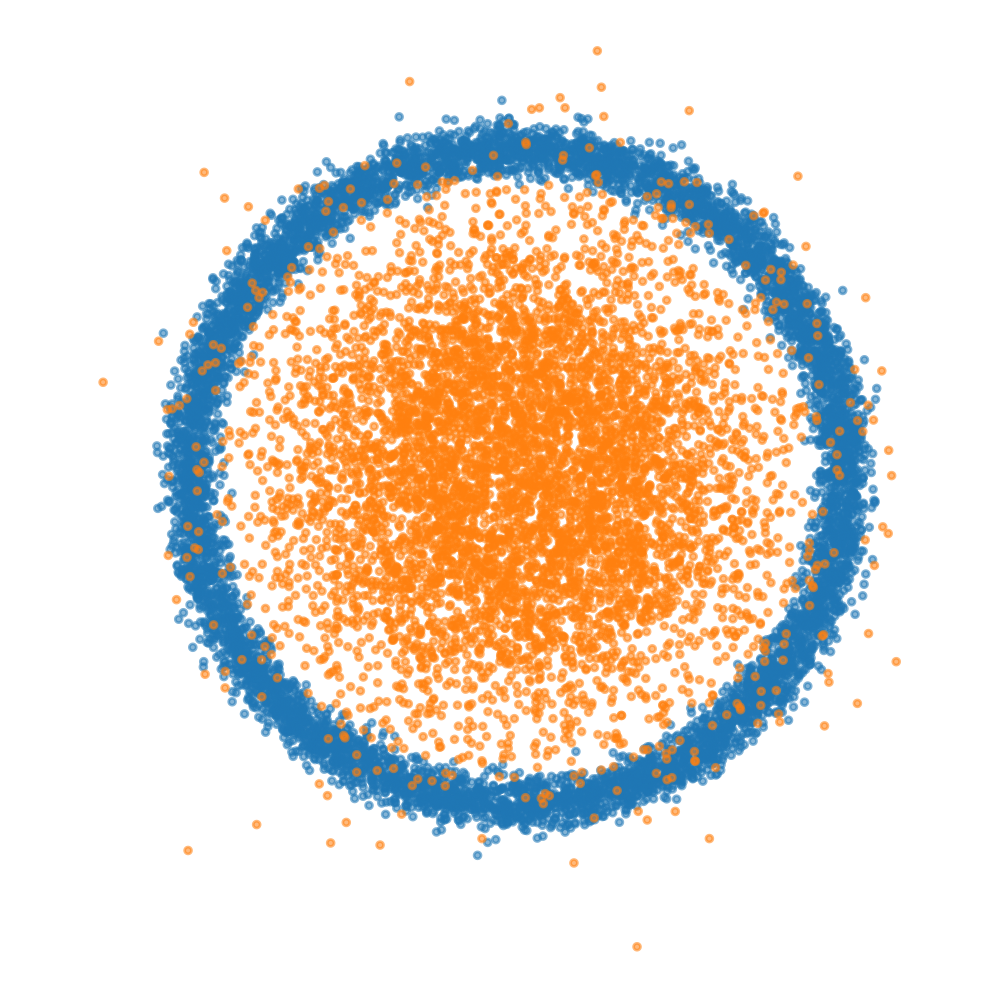} &
    \imgcell[width=1.55cm]{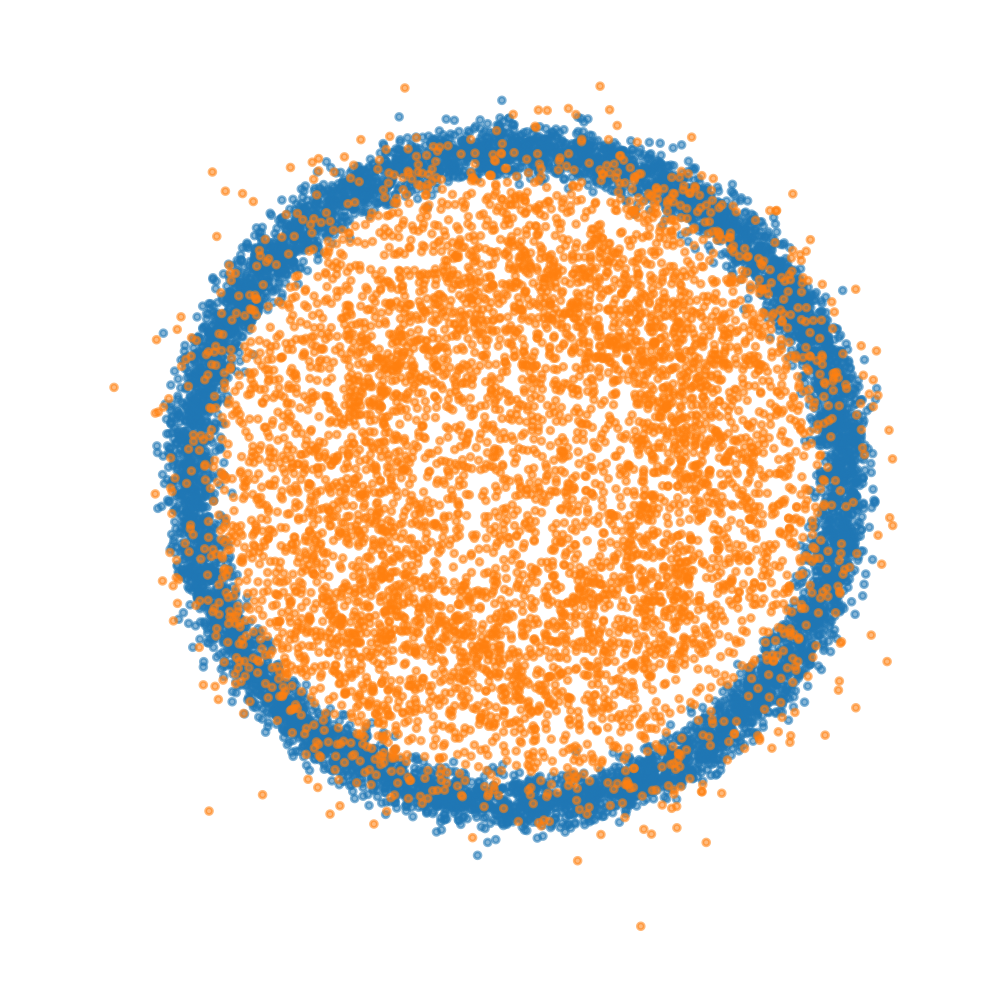} &
    \imgcell[width=1.55cm]{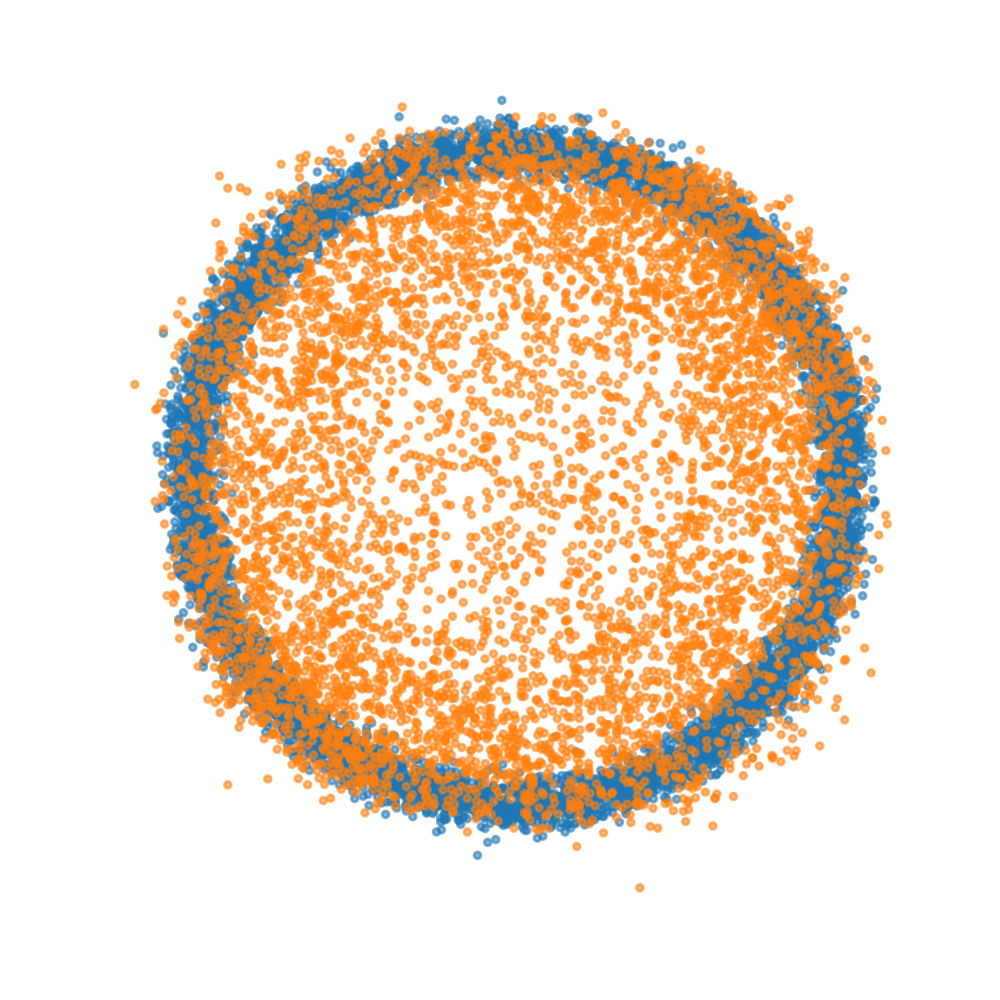} &
    \imgcell[width=1.55cm]{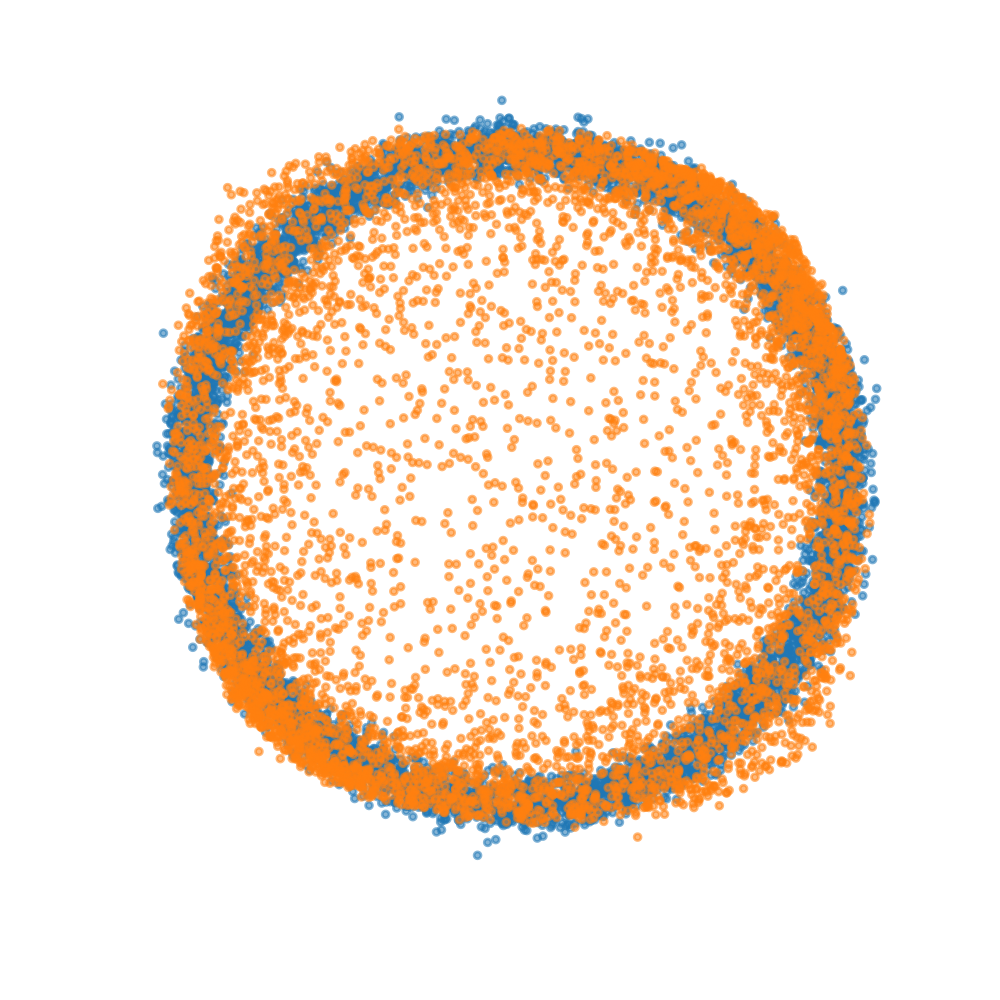} &
    \imgcell[width=1.55cm]{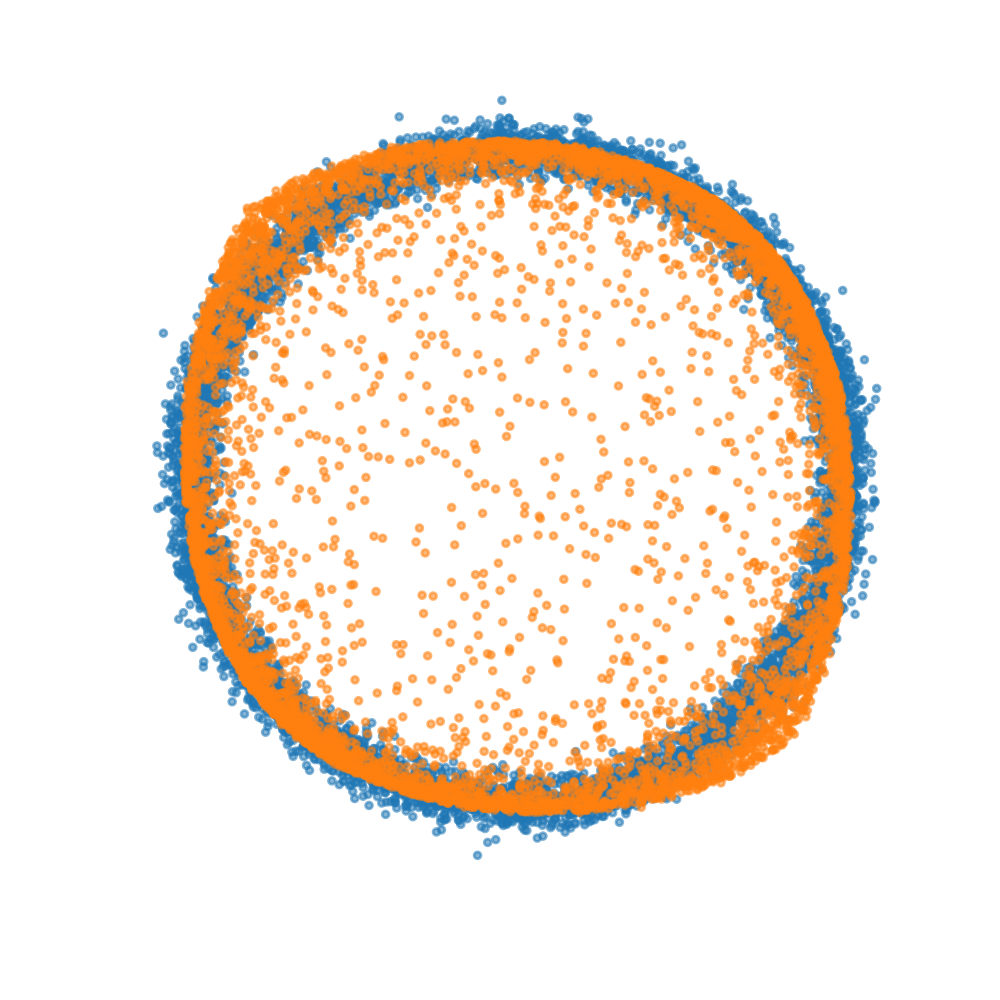} &
    \imgcell[width=1.55cm]{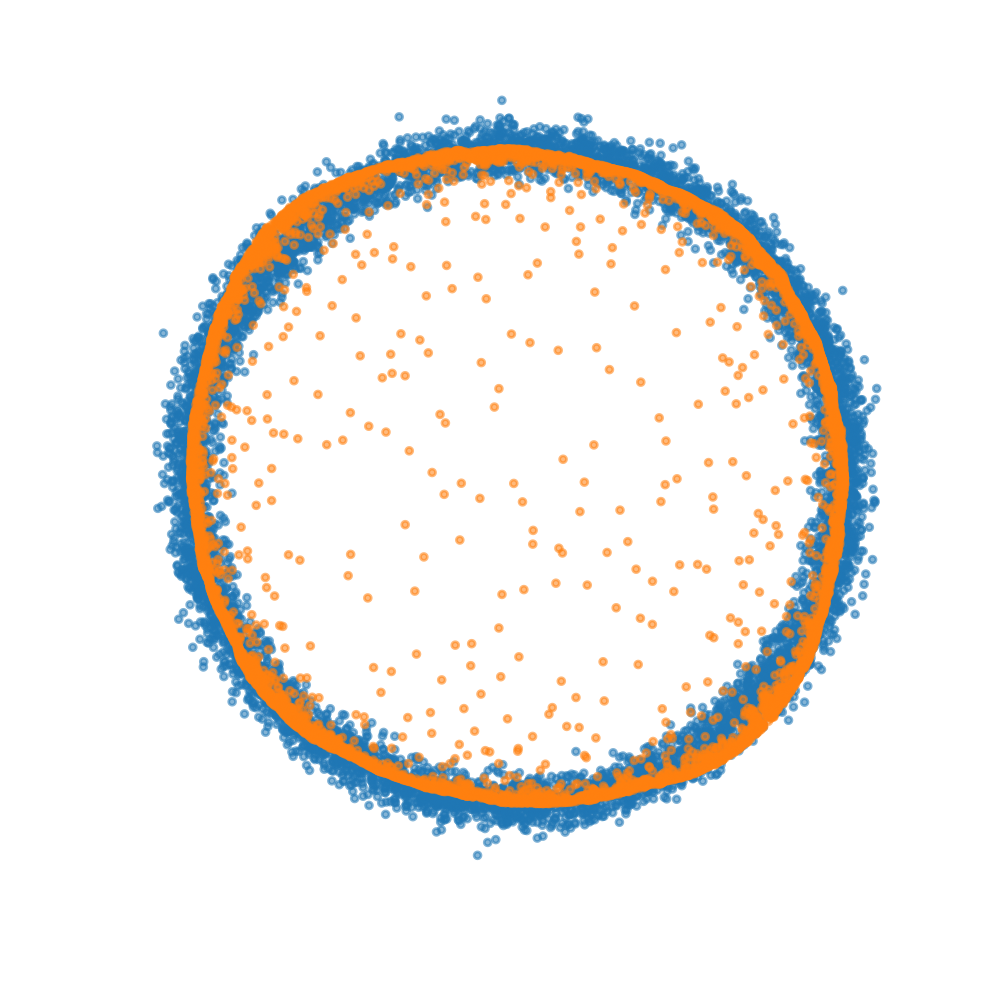} &
    \imgcell[width=1.55cm]{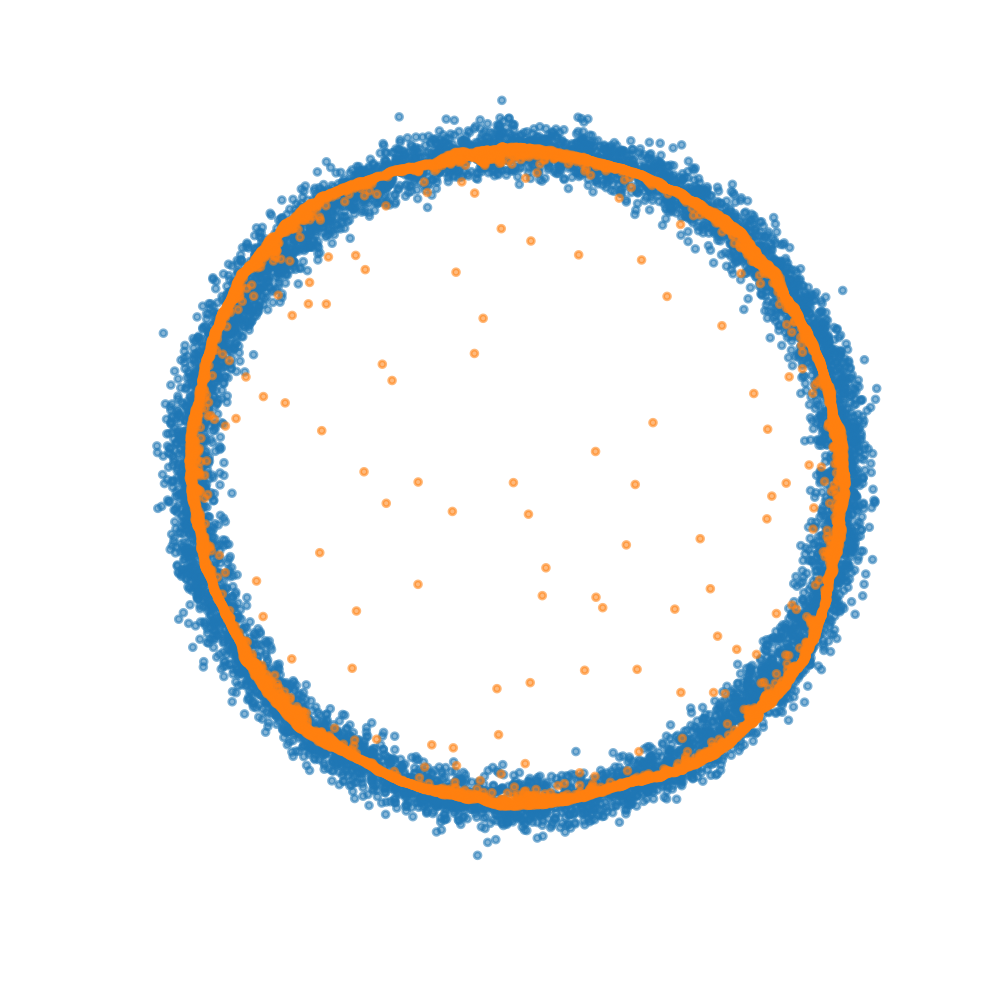} & 
    \imgcell[width=1.55cm]{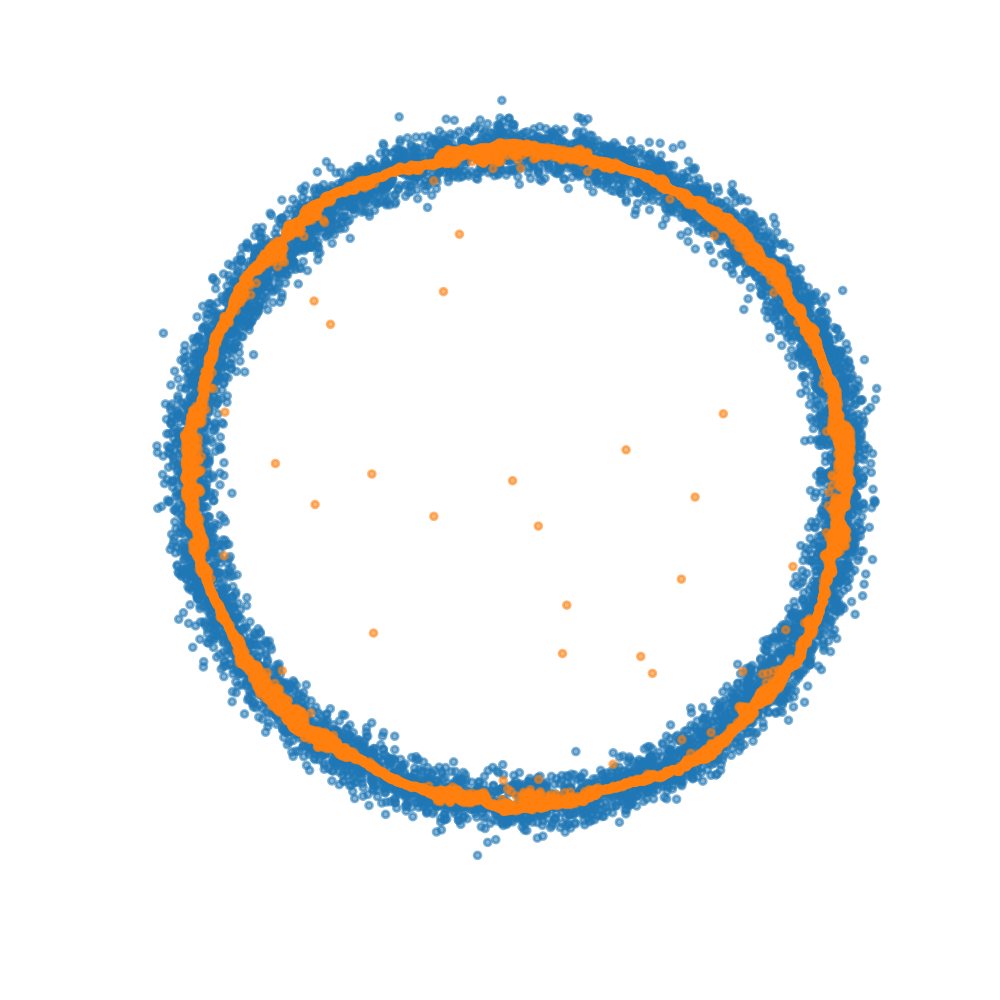} \\

    \imgcell[width=1.55cm]{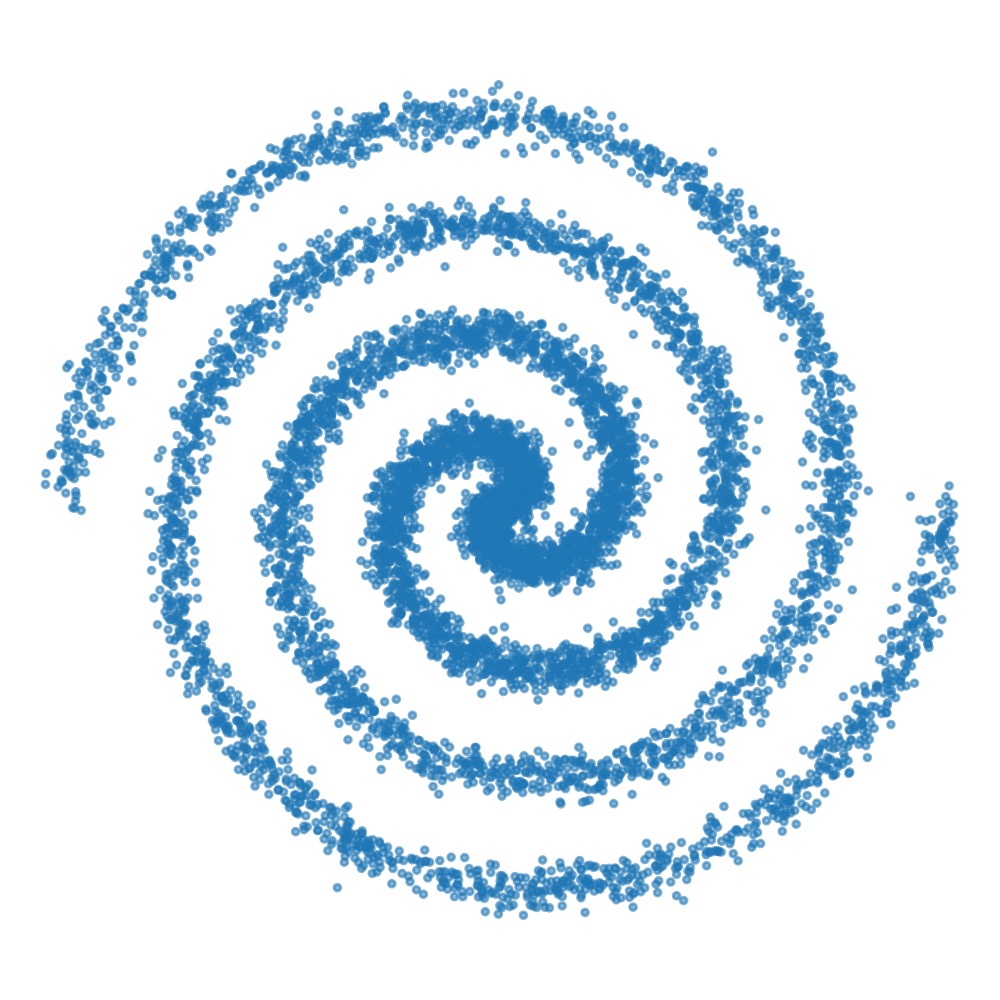} &
    \imgcell[width=1.55cm]{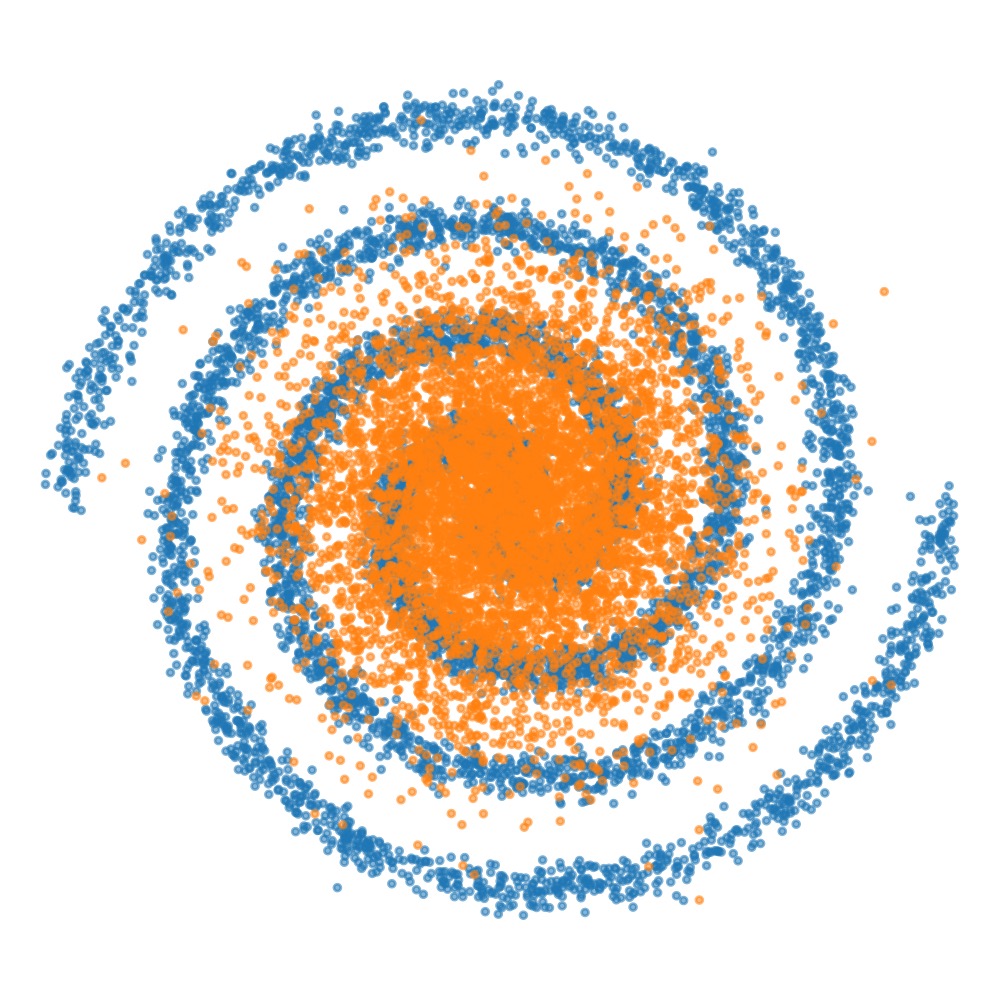} &
    \imgcell[width=1.55cm]{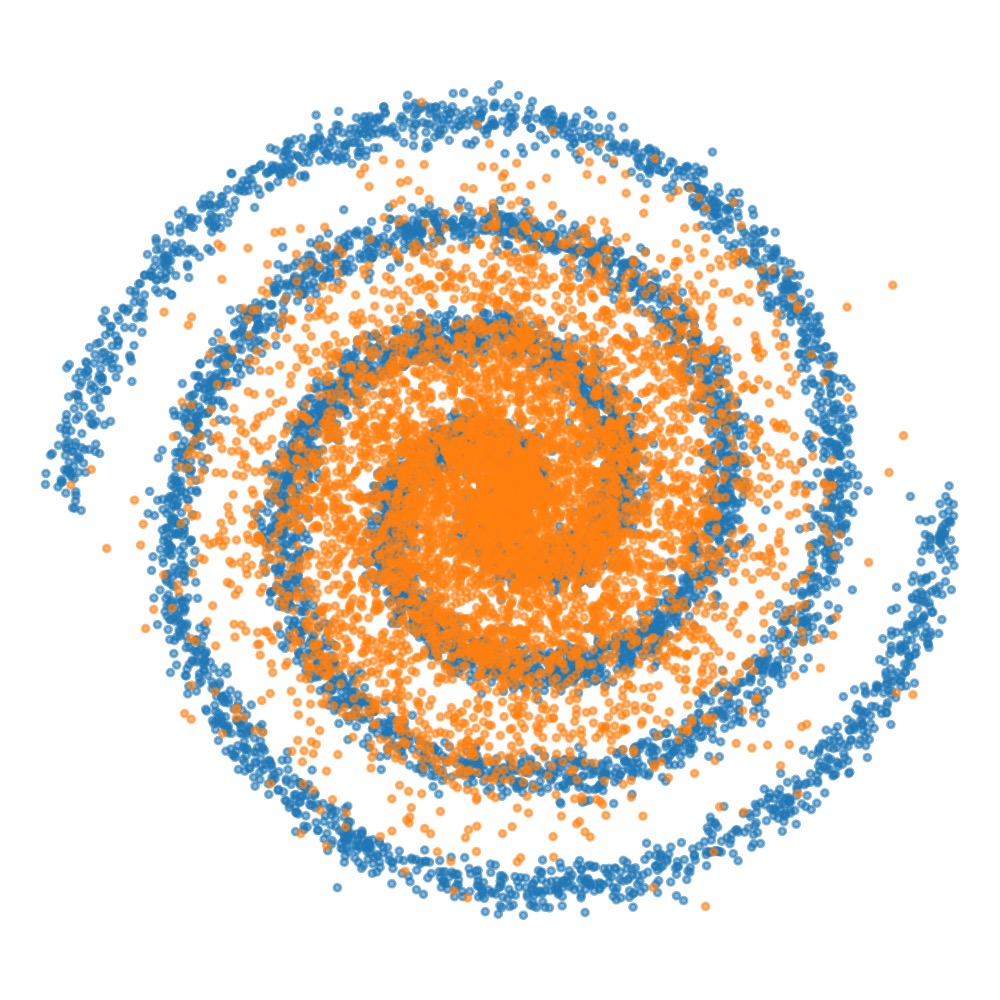} &
    \imgcell[width=1.55cm]{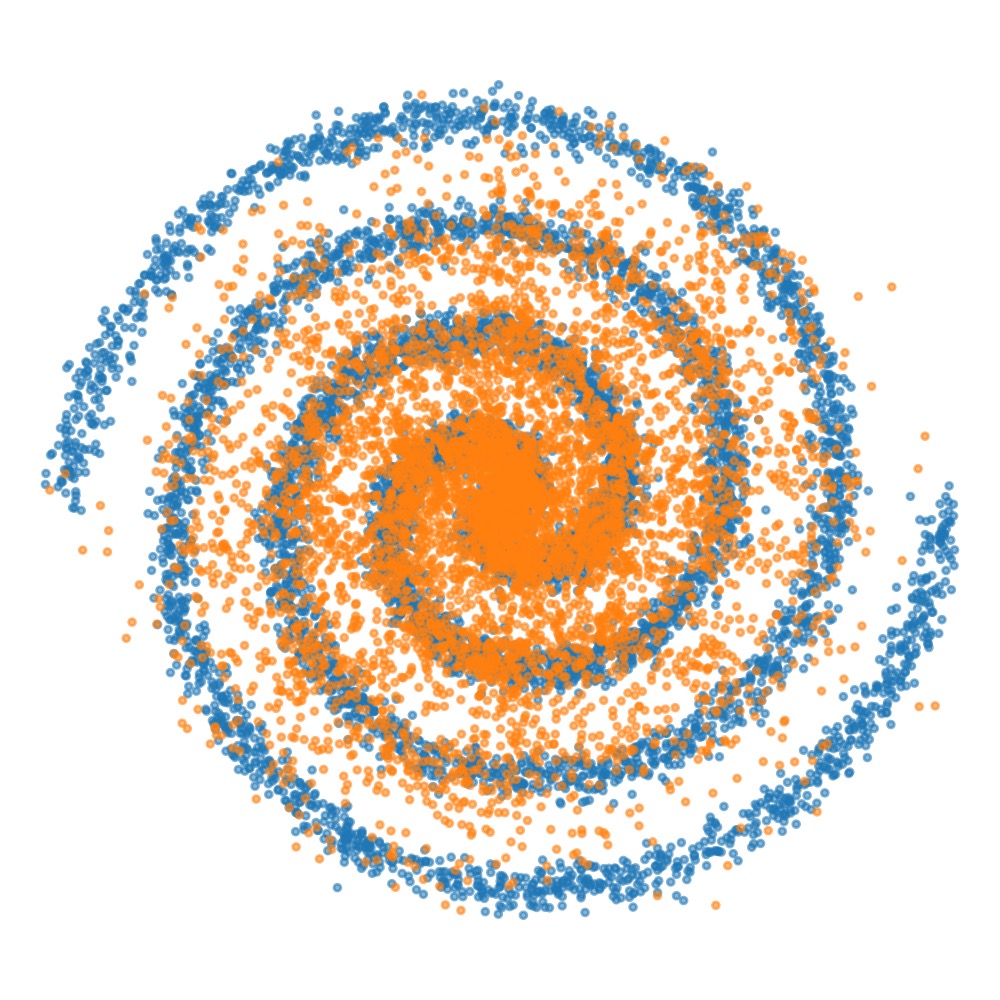} &
    \imgcell[width=1.55cm]{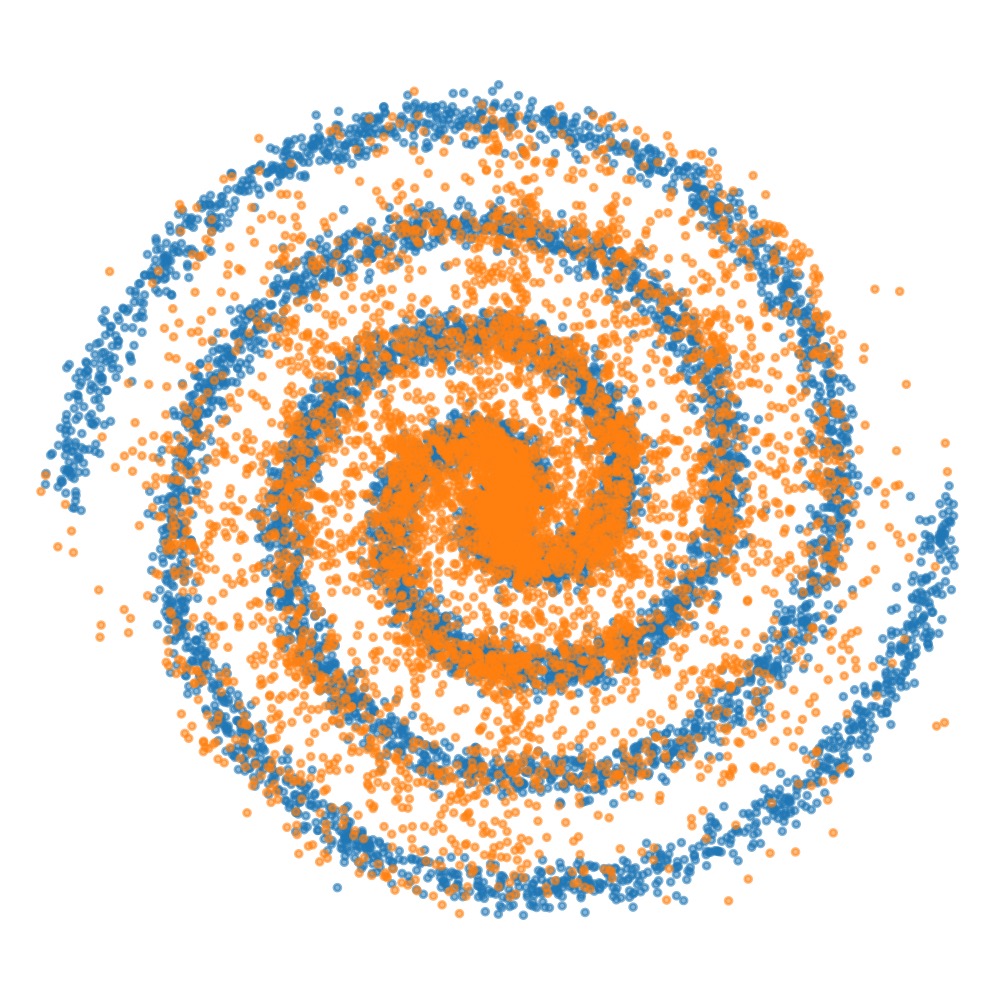} &
    \imgcell[width=1.55cm]{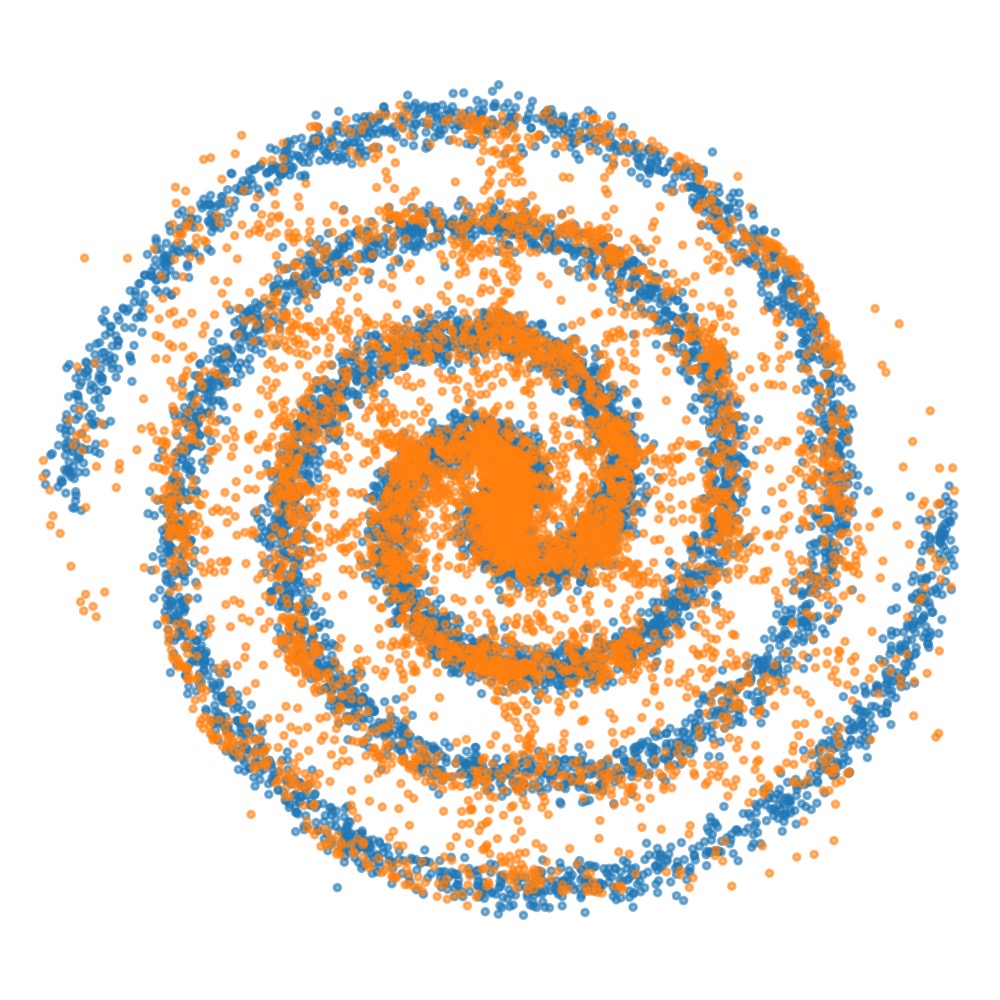} &
    \imgcell[width=1.55cm]{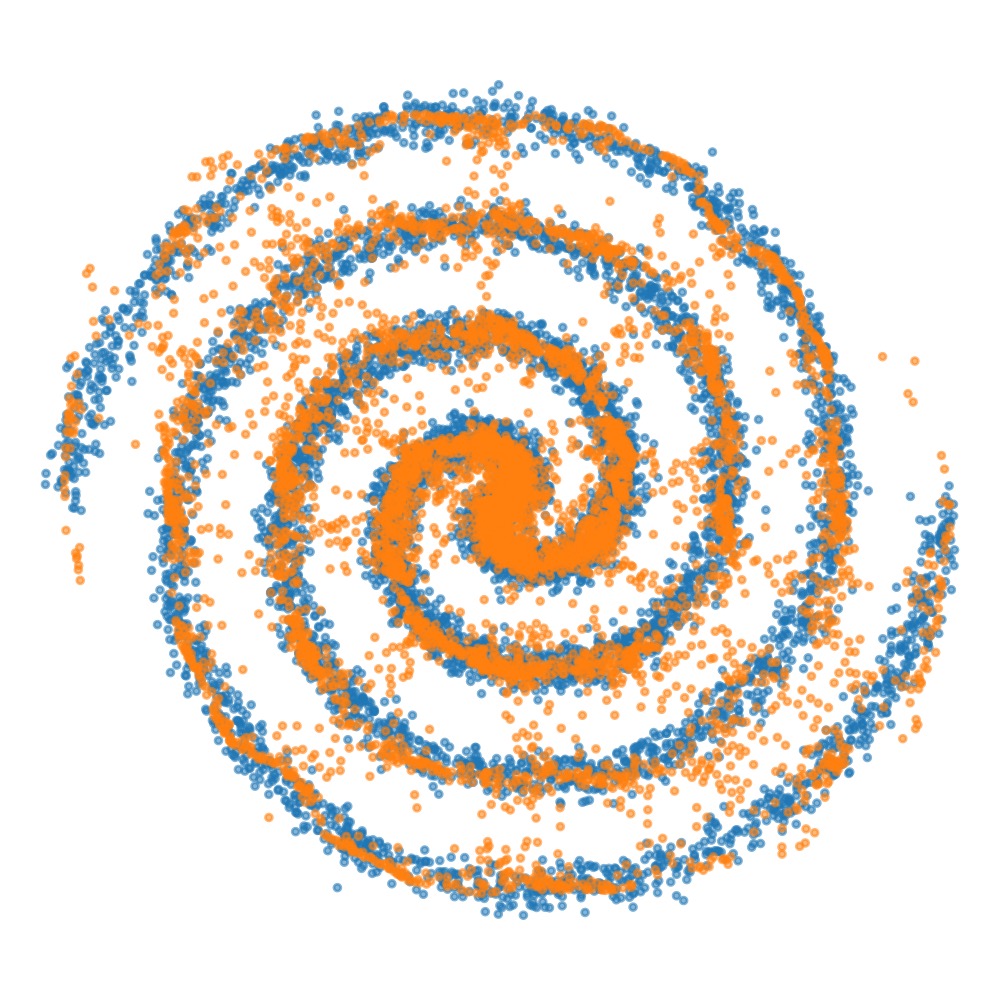} &
    \imgcell[width=1.55cm]{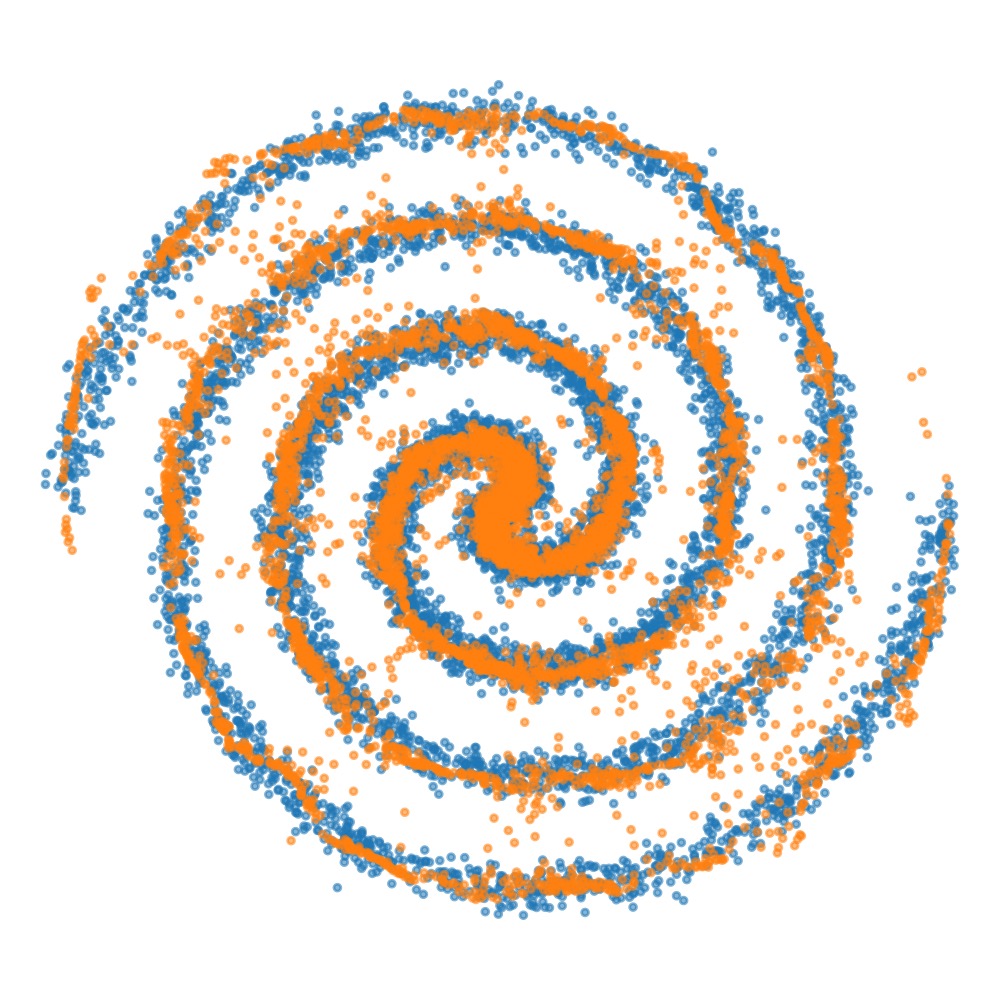} &
    \imgcell[width=1.55cm]{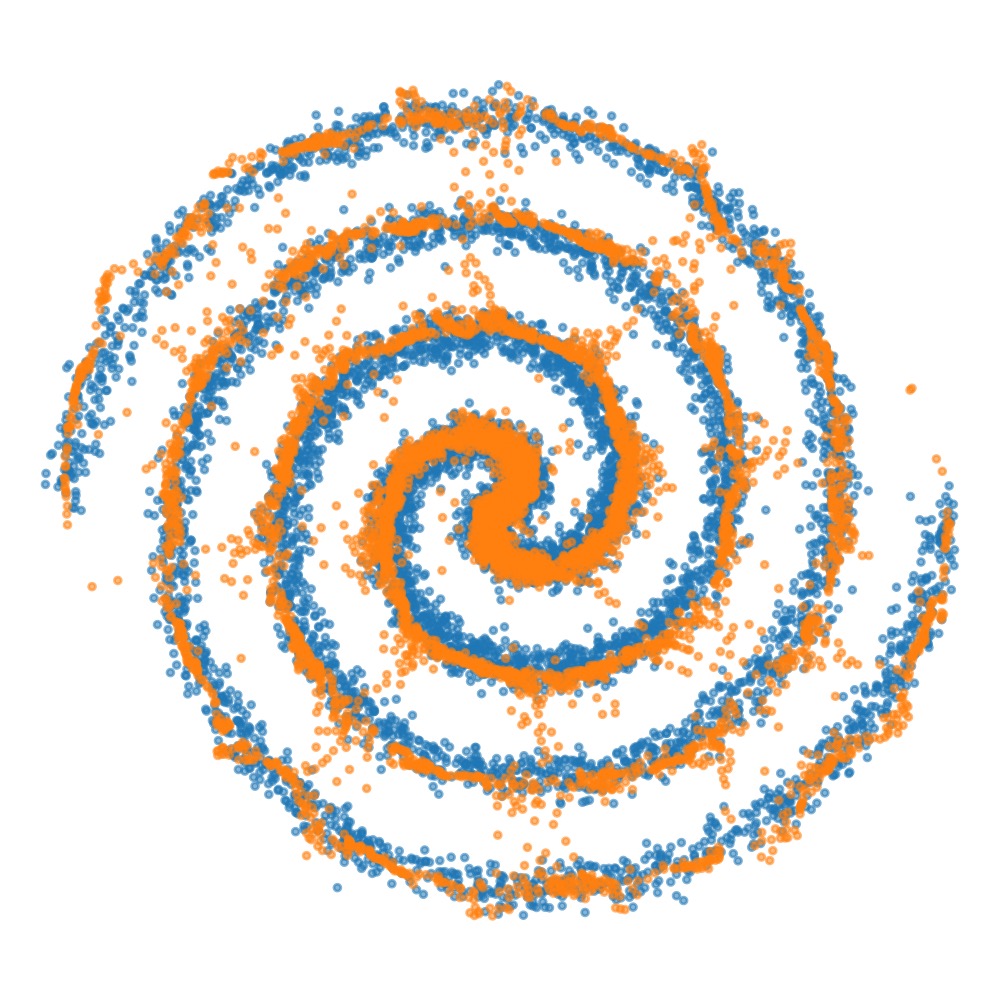} \\

    \imgcell[width=1.55cm]{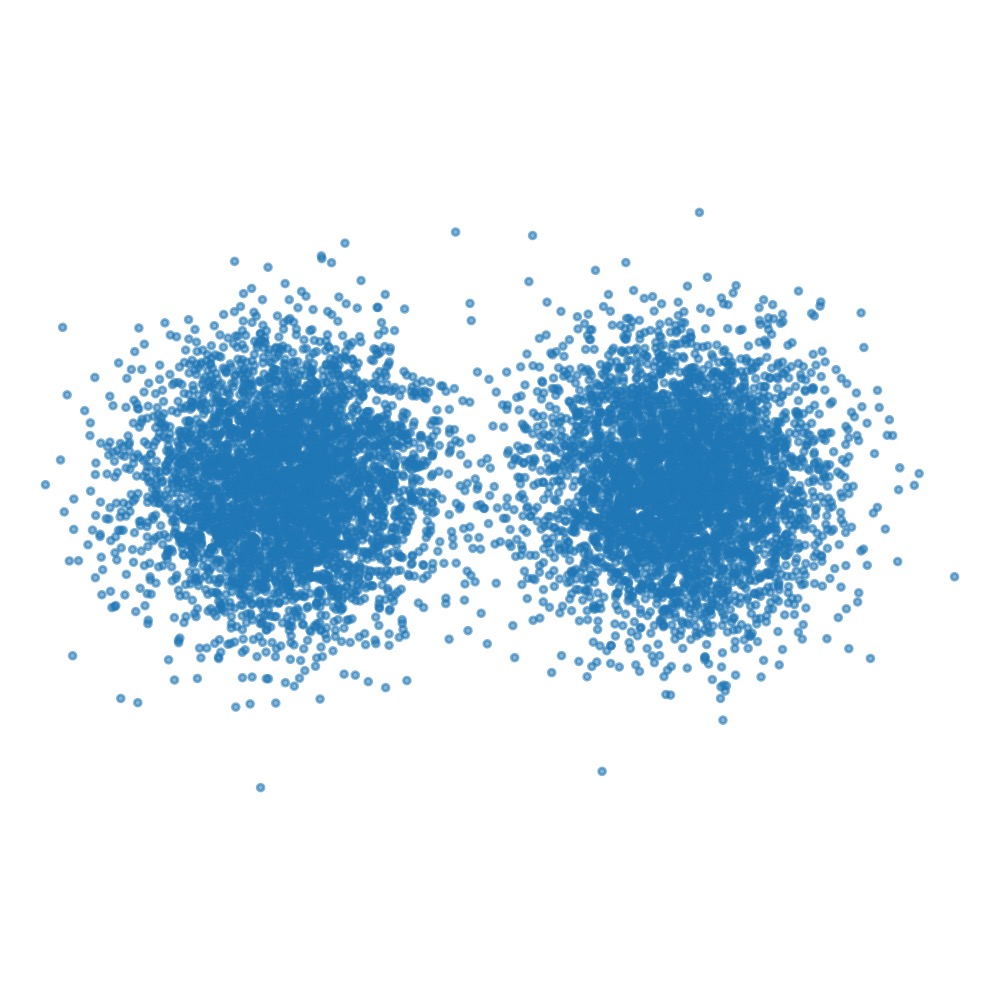} &
    \imgcell[width=1.55cm]{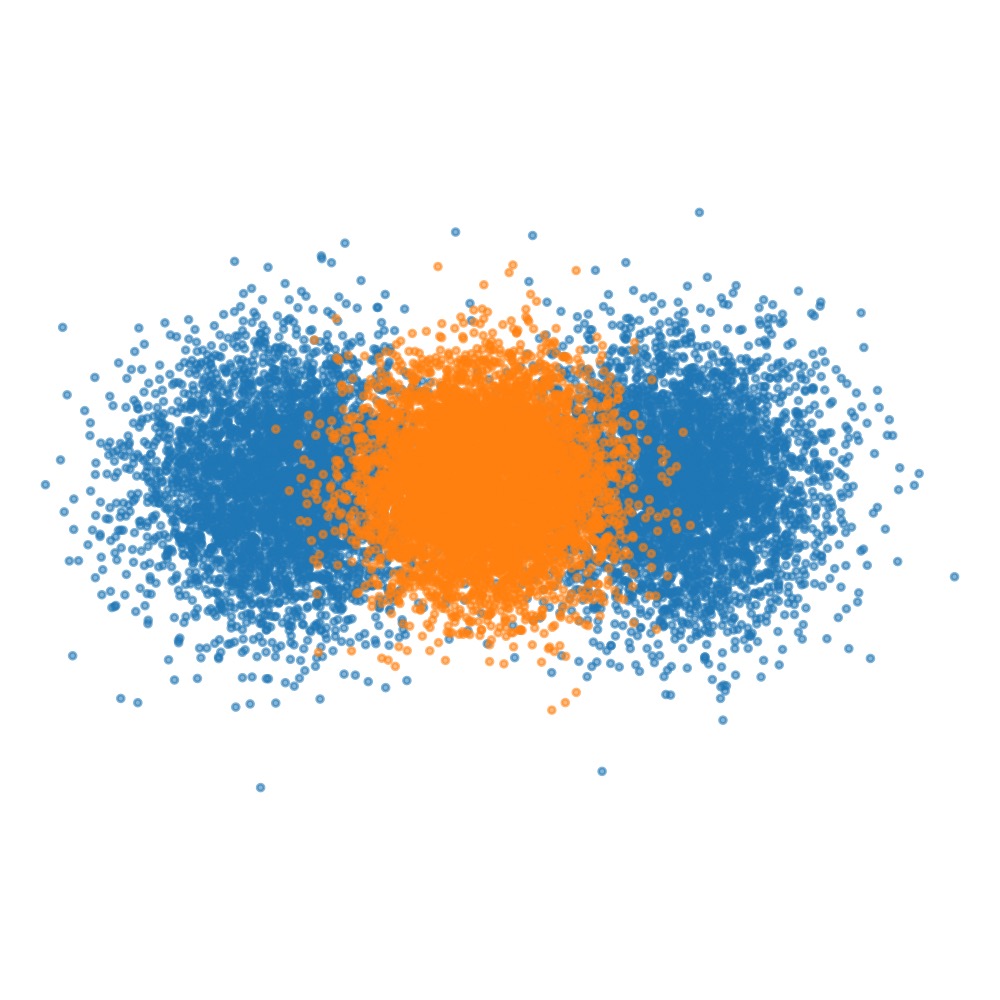} &
    \imgcell[width=1.55cm]{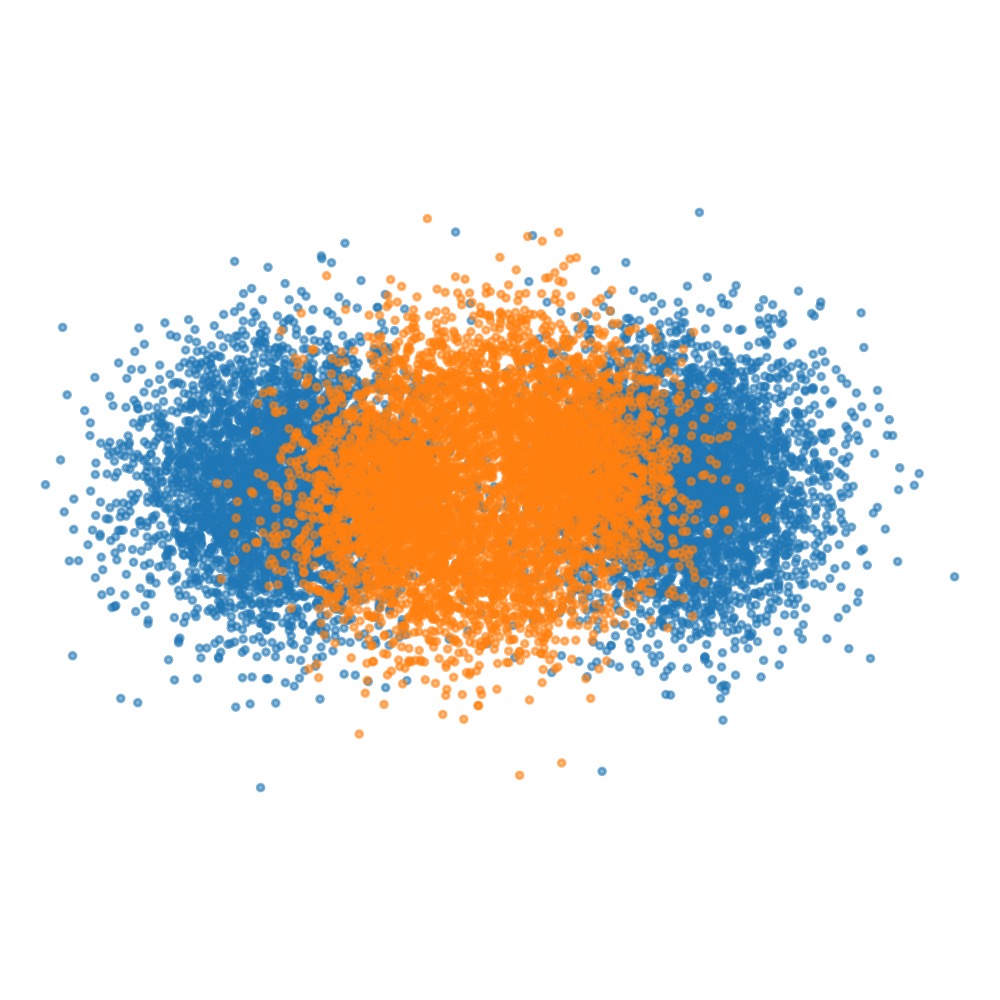} &
    \imgcell[width=1.55cm]{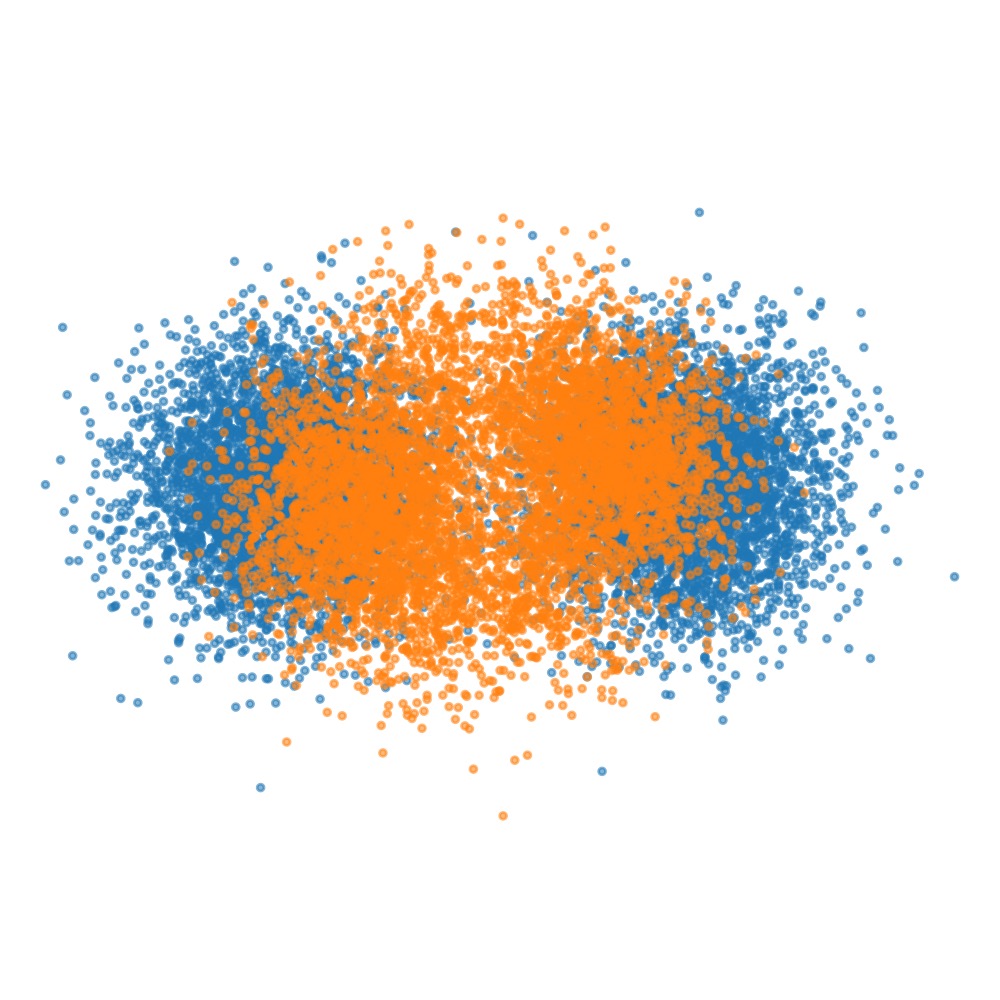} &
    \imgcell[width=1.55cm]{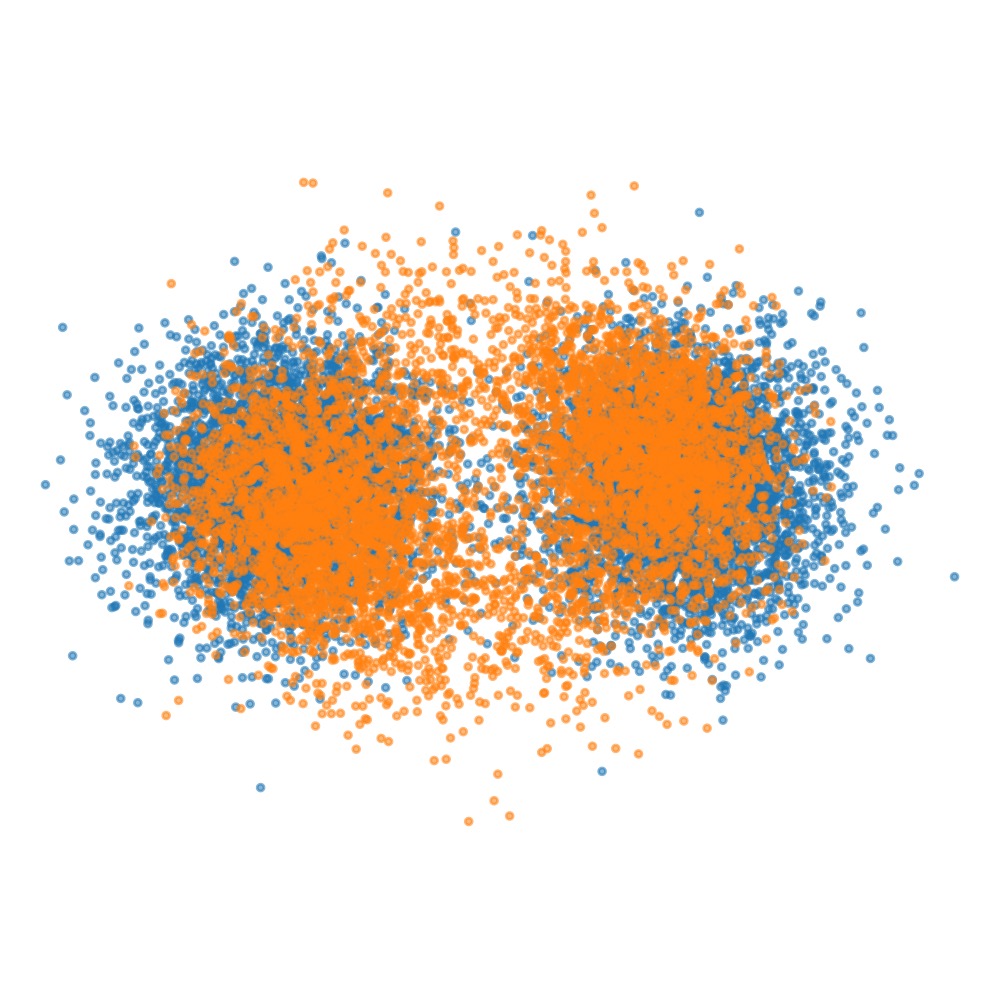} &
    \imgcell[width=1.55cm]{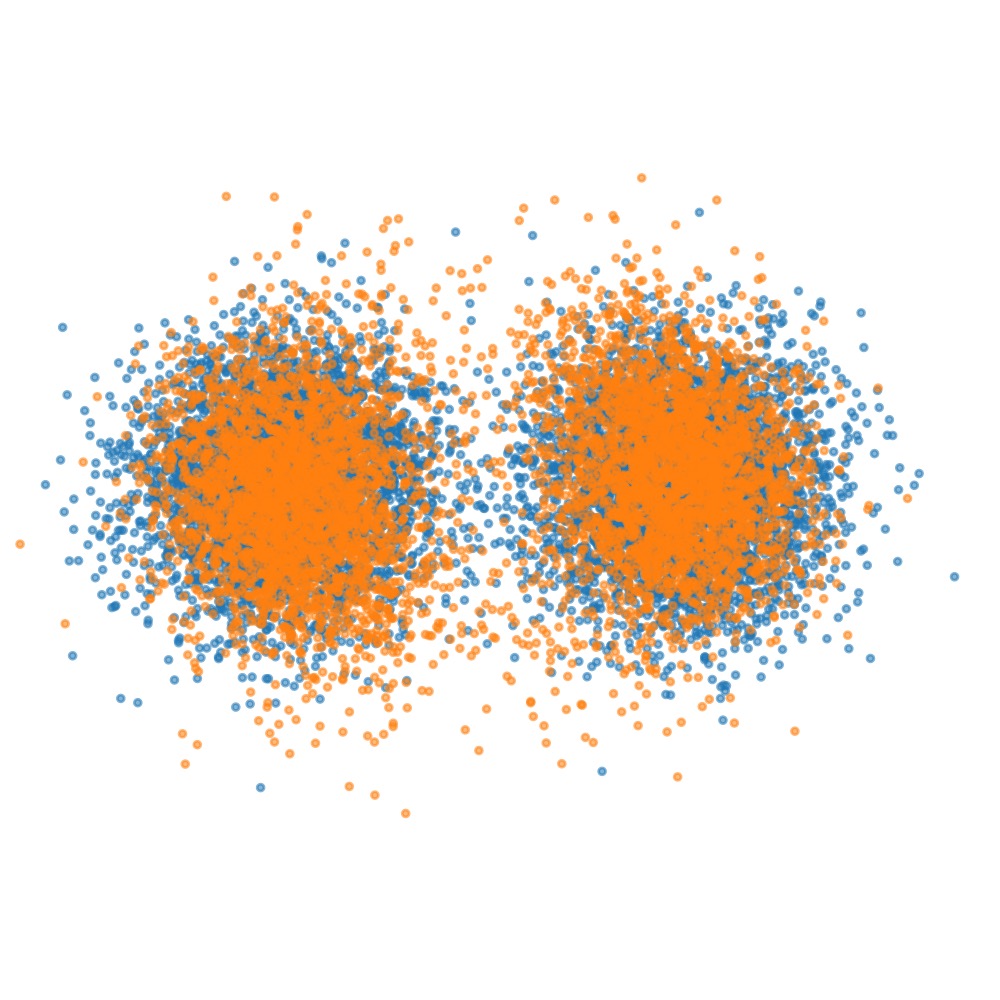} &
    \imgcell[width=1.55cm]{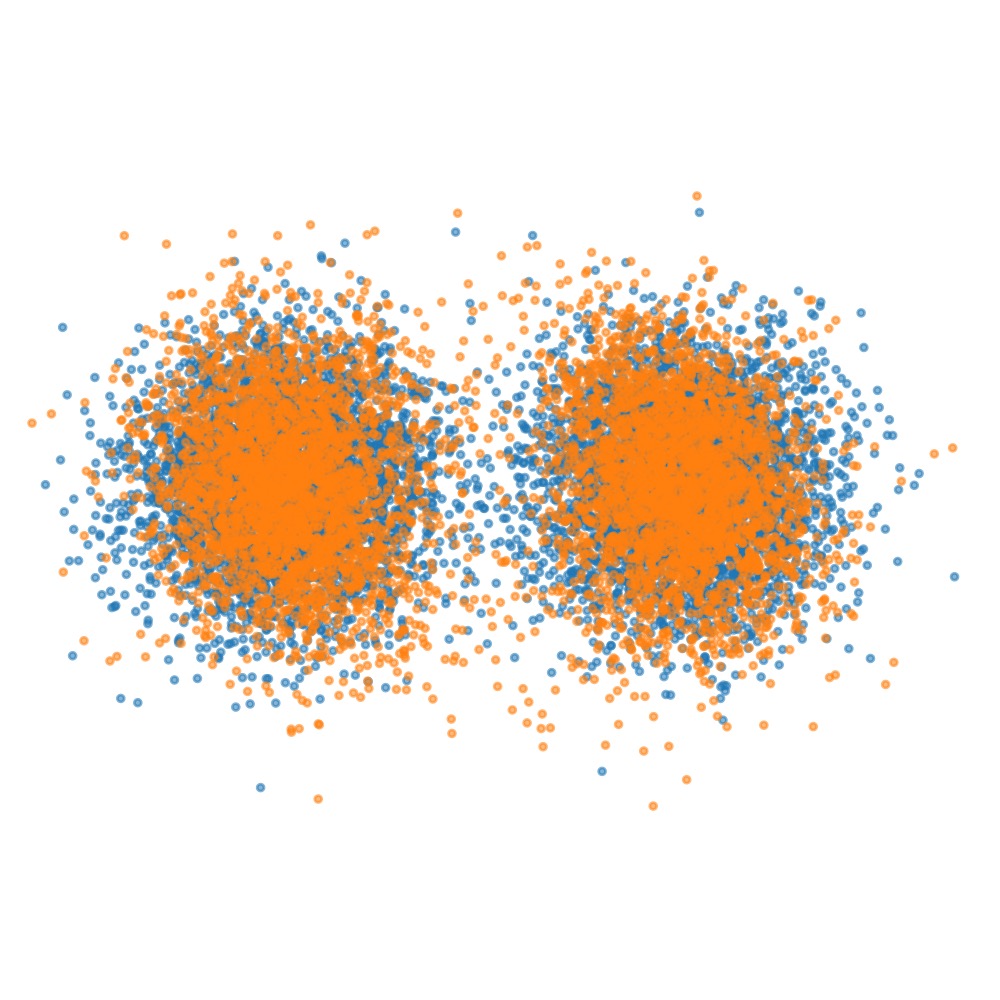} &
    \imgcell[width=1.55cm]{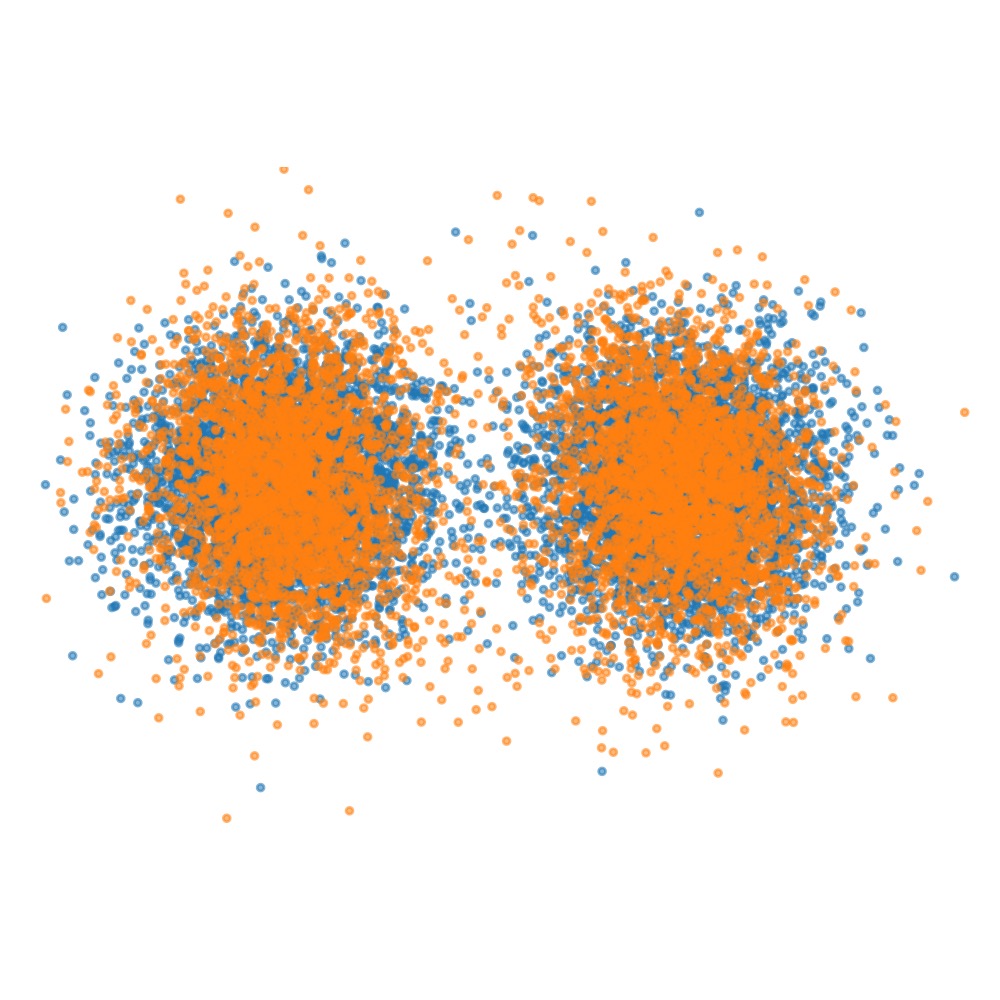} &
    \imgcell[width=1.55cm]{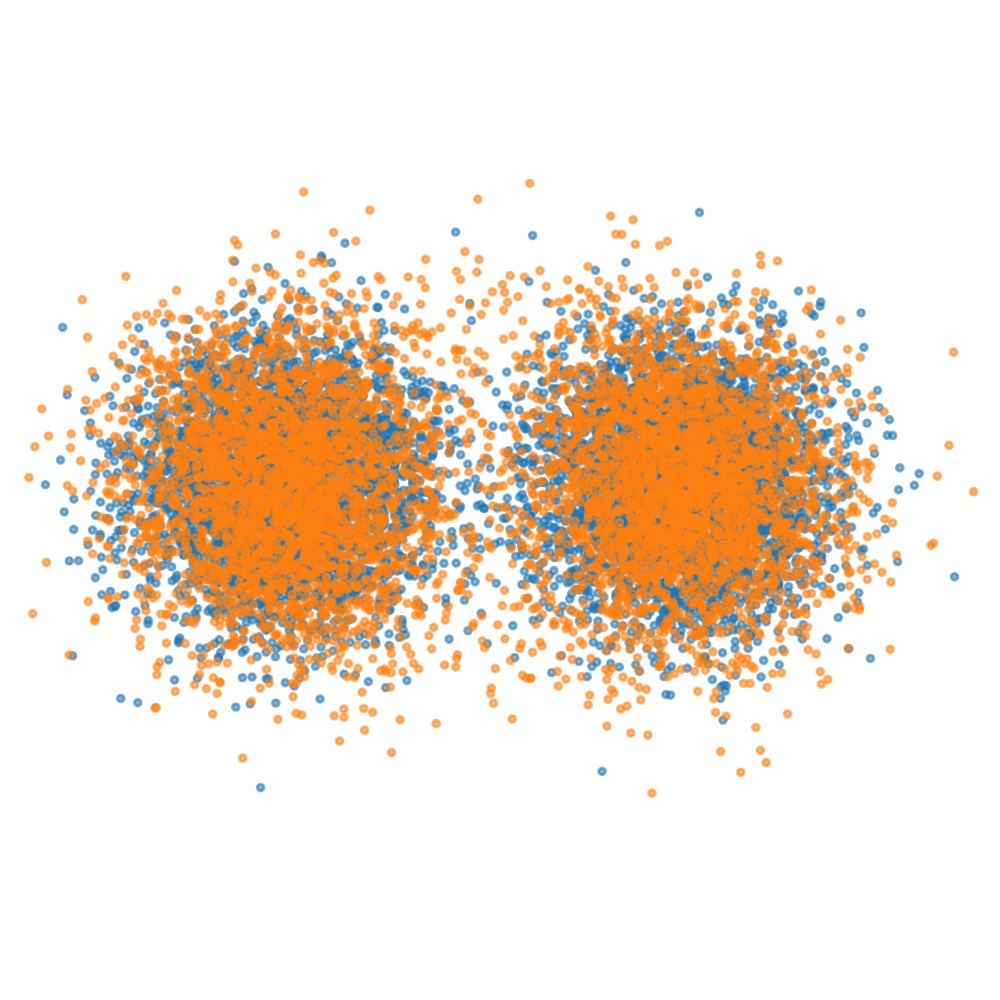} \\

    \axiscell{0} & \axiscell{1} & \axiscell{5} & \axiscell{10} & \axiscell{20} & \axiscell{40} & \axiscell{100} & \axiscell{200} & \axiscell{400} \\

    \end{tabular}
  \end{adjustbox}

    \caption{\small Rank-Proximal Transport on 2D toy targets. Using SGD to minimize the rank-statistic KL with $L=10$ random projections, particles (orange) evolve from a Gaussian start ($t=0$) to match the target support (blue).}
  \label{fig:mosaic_nice}
\end{figure}

\subsubsection{CIFAR-10 experiments}
\label{subsubsec:celeba}

We next illustrate the induced particle dynamics on \emph{CIFAR-10}
\citep{krizhevsky2009learning} using a center-outward rank-proximal
transport (CO-RPT) update (see details of the algorithm in Appendix
\ref{app:cifar10-corpt}).
Although CIFAR-10 is natively $32\times32$, we use a high-quality
bicubic upsampled $64\times64$ version with antialiasing, and treat
each particle as an RGB image flattened to $\mathbb{R}^{3\cdot 4096}$.
We use $M$ real CIFAR-10 training images as reference samples from the
target distribution $\nu$. Starting from $N$ i.i.d.\ Gaussian particles,
we iterate the center-outward transport update for a fixed number of
outer steps (see \cref{app:CIFAR10} for full schedules).

In our runs, we use the Jensen--Shannon generator with trust parameter
$\eta=0.5$ and a small number of inner prox steps per iteration. Since
the transport is performed directly in a high-dimensional pixel space,
we use a moderate outer step size, linearly annealing $\varepsilon$
from $0.16$ to $0.10$ over training. In parallel, we anneal the
rank-smoothing temperature from $\tau=0.30$ to $\tau=0.07$ and increase
the rank resolution from $K=64$ to $K=160$. To avoid overly large
updates in the ambient pixel space, we clip the per-particle correction
with a cap of $0.30$. We additionally use a small angular blending
parameter $\beta_{\mathrm{angle}}=0.01$.

Qualitatively, the dynamics progressively transforms the initial noise
cloud into structured images that match low- and mid-level statistics
of CIFAR-10, such as global color balance, coarse spatial layout, and
local texture; see Figure~\ref{fig:cifar10_corpt_qual}. Further
implementation details, and qualitative results on MNIST, are deferred
to \cref{app:CIFAR10}.

\begin{figure}[!ht]
    \centering
    \includegraphics[width=0.75\linewidth]{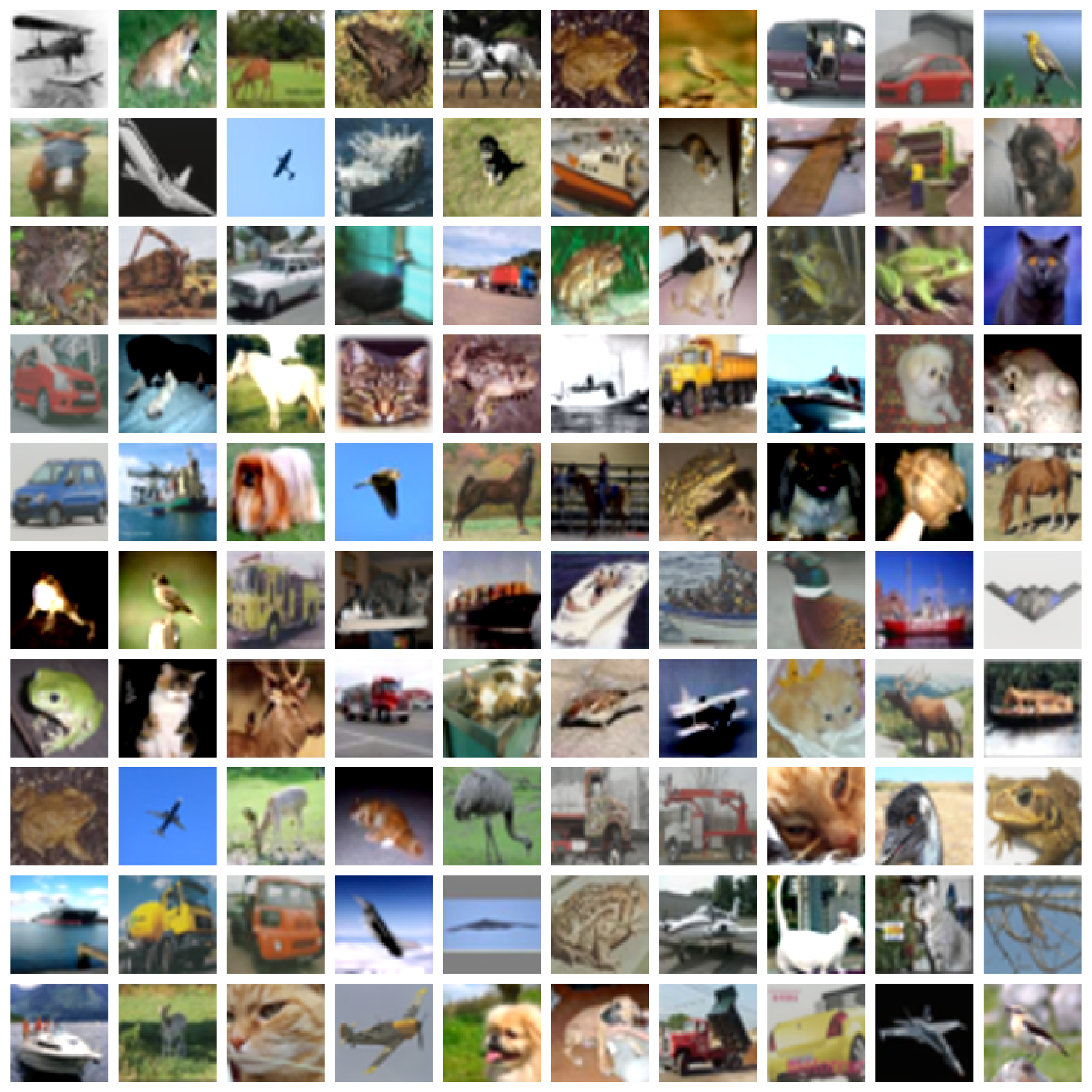}
    \caption{\small CO-RPT samples on CIFAR-10 ($64\times64$) after $T=20{,}000$ outer steps.} \label{fig:cifar10_corpt_qual}
\end{figure}

As an additional pretraining experiment, \cref{subsec:MNIST_mode_collapse} shows that rank-statistic pretraining improves precision/recall trade-offs for DCGAN on MNIST; Table 5 reports that TV/KL pretraining improves precision after DCGAN fine-tuning, while JS and Hellinger emphasize recall.

\section{Conclusions, Future Work, and Limitations}
\label{sec:conclusion}

We proposed \emph{rank-statistic} approximations of $f$-divergences that replace density-ratio estimation with simple rank counting in a discrete histogram. The resulting surrogate $D^{(K)}_{f,\nu}(\mu)$ has a clean variational structure (convexity and weak lower semicontinuity), is nondecreasing in the resolution $K$, and remains dominated by the target divergence; under mild regularity of the density ratio we proved consistency as $K\!\to\!\infty$ with quantitative rates, and established finite-sample deviation guarantees for practical estimators.
For multivariate data, we introduced sliced rank-statistic divergences by averaging the univariate construction over random 1D projections, inheriting its key properties, and we validated the approach empirically on synthetic benchmarks and as a sample-based transport and pretraining objective for implicit generative modeling.

We generalized the rank-statistic approximation of the TV-divergence from \cite{dFVOM2025}. Since the TV-divergence is the only $f$-divergence which is also an integral probability metric (IPM) \cite{M1997}, it would be interesting to see if IPMs like maximum mean discrepancy \cite{BGRKSS06} or the Wasserstein-1 metric can be approximated by rank statistics as well.
It could also be promising to replace the Bernstein polynomials by another family, like B-splines or general non-linear filters.
Investigating the geodesic convexity properties of $D_{f, \nu}^{(K)}$ in the Wasserstein geometry could yield convergence rates of the generative transport dynamics.
Lastly, it would be interesting to identify a joint regime for $(K, N) \to \infty$ yielding the best convergence rate. We leave these questions for future work.

Our results also highlight limitations: projection complexity in high dimensions when discrepancies are strongly anisotropic or concentrated in dependencies that are hard to detect from 1D views, and the reliance on random directions for capturing such effects efficiently. Future work includes variance-reduced and structured projection schemes (e.g., orthogonal or quasi-Monte Carlo directions), improved anisotropy calibration beyond simple $d\times$ normalizations, tighter dimension-dependent guarantees, and scaling the objective inside modern large-scale generative pipelines. We expand the discussion of limitations and future directions in Appendix~\ref{app:limitations_future_work}.

\clearpage
\section*{Impact Statement}

This paper presents work whose goal is to advance the field of Machine Learning, in particular through rank-based approximations of $f$-divergences and sample-based objectives for generative modeling. The proposed methods are generic and are not designed for identification, surveillance, or applications involving sensitive personal data.

Our empirical evaluation focuses on standard natural-image benchmarks and avoids datasets centered on identifiable facial images. This choice keeps the experiments focused on the methodological properties of the proposed objective while reducing privacy-related concerns. We do not identify any additional societal consequences that require specific highlighting beyond these general considerations.

\section*{Acknowledgements}

The authors thank the reviewers for their comments, which improved this submission.
VS thanks his advisor, Gabriele Steidl, for her support and guidance throughout, as well as the organizers of the \href{https://sites.google.com/view/msml2025/home?authuser=0}{\textit{MSML 2025} conference} in Naples.

This project has received funding from the European Research Council (ERC) under the European Union’s Horizon Europe research and innovation programme (grant agreement No. 101198055, project acronym NEITALG). This work has also been partially supported by the Office of Naval Research (award N00014-22-1-2647) and Spain’s Agencia Estatal de Investigación (ref. PID2024-158181NB-I00 NISA and PID2021-123182OB-I00 EPiCENTER) funded by MCIN/AEI/10.13039/501100011033 and by “ERDF A way of making Europe."

Views and opinions expressed are, however, those of the author(s) only and do not necessarily reflect those of the European Union, the European Research Council Executive Agency, the U.S. Office of Naval Research, or the Spanish Agencia Estatal de Investigación. Neither the European Union nor any of the aforementioned granting authorities can be held responsible for them.

\bibliography{Bibliography}

@article{SNRS2025,
author = {Stein, Viktor and Neumayer, Sebastian and Rux, Nicolaj and Steidl, Gabriele},
title = {Wasserstein gradient flows for {M}oreau envelopes of $f$-divergences in reproducing kernel {H}ilbert spaces},
journal = {Analysis and Applications},
volume = {24},
number = {01},
pages = {21-65},
year = {2026},
doi = {10.1142/S0219530525500162},
publisher={World Scientific}
}

@inproceedings{dFVOM2025,
  title     = {Explicit Density Approximation for Neural Implicit Samplers Using a {B}ernstein-Based Convex Divergence},
  author    = {de Frutos, Jos{\'e} Manuel and V{\'a}zquez, Manuel Alberto and Olmos, Pablo M. and M{\'\i}guez, Joaqu{\'\i}n},
  booktitle = {Proceedings of The 29th International Conference on Artificial Intelligence and Statistics},
  series    = {Proceedings of Machine Learning Research},
  publisher = {PMLR},
  year      = {2026},
  note      = {To appear}
}

@inproceedings{dFVOM2024,
  title={Training implicit generative models via an invariant statistical loss},
  author={de Frutos, Jos{\'e} Manuel and Olmos, Pablo M and Lopez, Manuel Alberto Vazquez and M{\'\i}guez, Joaqu{\'\i}n},
  booktitle={International Conference on Artificial Intelligence and Statistics},
  pages={2026--2034},
  year={2024},
  organization={PMLR}
}

@unpublished{dFVOM2024b,
  title={Robust training of implicit generative models for multivariate and heavy-tailed distributions with an invariant statistical loss},
  author={de Frutos, Jos{\'e} Manuel and V{\'a}zquez, Manuel A and Olmos, Pablo and M{\'\i}guez, Joaqu{\'\i}n},
  note={arXiv preprint arXiv:2410.22381},
  year={2024}
}

@unpublished{RRGDP2012,
  title={Tighter variational representations of {$f$}-divergences via restriction to probability measures},
  author={Ruderman, Avraham and Reid, Mark and Garc{\'\i}a-Garc{\'\i}a, Dar{\'\i}o and Petterson, James},
  note={arXiv preprint arXiv:1206.4664},
  year={2012}
}

@article{B1912,
  title={Démonstration du théorème de {W}eierstrass fondée sur le calcul des probabilités},
  author={Bernstein, Serge},
  journal={Communications of the Kharkiv Mathematical Society},
  volume={13},
  number={1},
  pages={1--2},
  year={1912},
  publisher={Kharkiv Imperial University},
  language = {French}
}

@article{EMD2021,
  title={On the performance of particle filters with adaptive number of particles},
  author={Elvira, V{\'\i}ctor and Miguez, Joaqu{\'\i}n and Djuri{\'c}, Petar M},
  journal={Statistics and Computing},
  volume={31},
  number={6},
  pages={81},
  year={2021},
  publisher={Springer}
}

@article{massart1990tight,
  title={The tight constant in the {D}voretzky-{K}iefer-{W}olfowitz inequality},
  author={Massart, Pascal},
  journal={The Annals of Probability},
  pages={1269--1283},
  year={1990},
  publisher={JSTOR}
}

@book{boucheron2013concentration,
  author    = {St{\'e}phane Boucheron and G{\'a}bor Lugosi and Pascal Massart},
  title     = {Concentration Inequalities: {A} Nonasymptotic Theory of Independence},
  publisher = {Oxford University Press},
  series    = {Oxford Series in Probability and Its Applications},
  address   = {Oxford, UK},
  year = {2013},
  month = {02},
  isbn = {9780199535255},
  doi = {10.1093/acprof:oso/9780199535255.001.0001},
}

@book{vandervaart_wellner_1996,
  author    = {van der Vaart, Aad W. and Wellner, Jon A.},
  title     = {Weak Convergence and Empirical Processes: With Applications to Statistics},
  year      = {1996},
  publisher = {Springer},
  series    = {Springer Series in Statistics},
  address   = {New York}
}

@inproceedings{sreekumar2021non,
  title={Non-asymptotic performance guarantees for neural estimation of {$f$}-divergences},
  author={Sreekumar, Sreejith and Zhang, Zhengxin and Goldfeld, Ziv},
  booktitle={International Conference on Artificial Intelligence and Statistics},
  pages={3322--3330},
  year={2021},
  organization={PMLR}
}

@article{ali1966,
  title={A general class of coefficients of divergence of one distribution from another},
  author={Ali, Syed Mumtaz and Silvey, Samuel D},
  journal={Journal of the Royal Statistical Society: Series B (Methodological)},
  volume={28},
  number={1},
  pages={131--142},
  year={1966},
  publisher={Wiley Online Library}
}

@article{L94,
author = {Bruce G. Lindsay},
title = {Efficiency Versus Robustness: The Case for Minimum {H}ellinger Distance and Related Methods},
volume = {22},
journal = {The Annals of Statistics},
number = {2},
publisher = {Institute of Mathematical Statistics},
pages = {1081 -- 1114},
year = {1994},
doi = {10.1214/aos/1176325512},
}

@article{L1991,
  author={Lin, J.},
  journal = {IEEE Transactions on Information Theory}, 
  title={Divergence measures based on the {S}hannon entropy}, 
  year={1991},
  volume={37},
  number={1},
  pages={145-151},
  doi={10.1109/18.61115}
}

@article{K2025,
  title={On the tensorization of the variational distance},
  author={Kontorovich, Aryeh},
  journal={Electronic Communications in Probability},
  volume={30},
  pages={1--10},
  year={2025},
  publisher={The Institute of Mathematical Statistics and the Bernoulli Society}
}

@unpublished{K2026,
  title={{TV} homogenization inequalities},
  author={Kontorovich, Aryeh},
  note={arXiv preprint arXiv:2601.04079},
  year={2026}
}

@article{csiszar1967,
  author    = {Csisz{\'a}r, Imre},
  title     = {Information-Type Measures of Difference of Probability Distributions and Indirect Observations},
  journal   = {Studia Scientiarum Mathematicarum Hungarica},
  volume    = {2},
  pages     = {299--318},
  year      = {1967},
  mrnumber  = {0219345}
}

@book{L1968,
  author    = {Lorentz, George G.},
  title     = {Bernstein Polynomials},
  series    = {AMS Chelsea Publishing},
  volume    = {323},
  edition   = {2},
  publisher = {Chelsea Publishing Company},
  address   = {New York, NY},
  year      = {1986},
  pages     = {134},
  isbn      = {9780828403238}
}

@book{PW2025,
  author    = {Polyanskiy, Yury and Wu, Yihong},
  title     = {Information Theory: From Coding to Learning},
  publisher = {Cambridge University Press},
  year      = {2025},
  edition   = {1st},
  doi       = {10.1017/9781108966351},
  isbn      = {9781108832908}
}

@techreport{S1952,
  title={A note on non-parametric methods},
  author={Savage, I Richard},
  year={1952},
  institution={U.S. Department of Commerce National Bureau of Standards},
  note = {NBS project 1103-11-1107, NBS report 1699}
}

@article{BGRKSS06,
    author = {Borgwardt, Karsten M. and Gretton, Arthur and Rasch, Malte J. and Kriegel, Hans-Peter and Schölkopf, Bernhard and Smola, Alex J.},
    title = {Integrating structured biological data by Kernel Maximum Mean Discrepancy},
    journal = {Bioinformatics},
    fjournal = {Bioinformatics},
    volume = {22},
    number = {14},
    pages = {e49-e57},
    year = {2006},
    month = {07},
    issn = {1367-4803},
    doi = {10.1093/bioinformatics/btl242},
}

@InProceedings{Beckmann2025a,
  author = {Beckmann, Matthias and Beinert, Robert and Bresch, Jonas},
  title     = {Max-normalized {R}adon cumulative distribution transform for limited data classification},
  booktitle = {International Conference on Scale Space and Variational Methods in Computer Vision (SSVM)},
  location = {Dartington, UK},
  series    = {Lect. Notes Comput. Sci.},
  year      = {2025},
  volume= {{15667}},
  pages = {241--254},
  publisher = {Springer},
  address   = {Cham, Switzerland},
  doi    = {10.1007/978-3-031-92366-1\_19},
}

@article{M1997,
  title={Integral probability metrics and their generating classes of functions},
  author={M{\"u}ller, Alfred},
  journal={Advances in applied probability},
  volume={29},
  number={2},
  pages={429--443},
  year={1997},
  publisher={Cambridge University Press}
}

@article{L2012,
  title={On estimating distribution functions using {Bernstein} polynomials},
  author={Leblanc, Alexandre},
  journal={Annals of the Institute of Statistical Mathematics},
  volume={64},
  number={5},
  pages={919--943},
  year={2012},
  publisher={Springer}
}

@article{K1957,
  title={Contributions to the theory of convex bodies},
  author={Knothe, Herbert},
  journal={Michigan Mathematical Journal},
  volume={4},
  number={1},
  pages={39--52},
  year={1957},
  publisher={University of Michigan, Department of Mathematics}
}

@article{nguyen2010divergence,
  title={Estimating divergence functionals and the likelihood ratio by convex risk minimization},
  author={Nguyen, XuanLong and Wainwright, Martin J and Jordan, Michael I},
  journal={IEEE Transactions on Information Theory},
  volume={56},
  number={11},
  pages={5847--5861},
  year={2010},
  publisher={IEEE}
}

@inproceedings{goodfellow2014gan,
  author    = {Goodfellow, Ian J. and Pouget-Abadie, Jean and Mirza, Mehdi and Xu, Bing and Warde-Farley, David and Ozair, Sherjil and Courville, Aaron and Bengio, Yoshua},
  title     = {Generative Adversarial Nets},
  booktitle = {Advances in Neural Information Processing Systems},
  volume    = {27},
  pages     = {2672--2680},
  year      = {2014}
}

@article{nowozin2016fgan,
  title={{$f$}-{GAN}: {T}raining generative neural samplers using variational divergence minimization},
  author={Nowozin, Sebastian and Cseke, Botond and Tomioka, Ryota},
  journal={Advances in Neural Information Processing Systems},
  volume={29},
  year={2016}
}

@inproceedings{gutmann2010nce,
  title={Noise-contrastive estimation: {A} new estimation principle for unnormalized statistical models},
  author={Gutmann, Michael and Hyv{\"a}rinen, Aapo},
  booktitle={Proceedings of the Thirteenth International Conference on Artificial Intelligence and Statistics},
  pages={297--304},
  year={2010},
  organization={JMLR Workshop and Conference Proceedings}
}

@inproceedings{belghazi2018mine,
  title={Mutual information neural estimation},
  author={Belghazi, Mohamed Ishmael and Baratin, Aristide and Rajeshwar, Sai and Ozair, Sherjil and Bengio, Yoshua and Courville, Aaron and Hjelm, Devon},
  booktitle={International Conference on Machine Learning},
  pages={531--540},
  year={2018},
  organization={PMLR}
}

@inproceedings{arjovsky2017wgan,
  title={Wasserstein generative adversarial networks},
  author={Arjovsky, Martin and Chintala, Soumith and Bottou, L{\'e}on},
  booktitle={International Conference on Machine Learning},
  pages={214--223},
  year={2017},
  organization={PMLR}
}

@article{gulrajani2017wgangp,
  title={Improved training of {W}asserstein {GAN}s},
  author={Gulrajani, Ishaan and Ahmed, Faruk and Arjovsky, Martin and Dumoulin, Vincent and Courville, Aaron C},
  journal={Advances in Neural Information Processing Systems},
  volume={30},
  year={2017}
}

@article{rosenblatt1952,
  title={Remarks on a multivariate transformation},
  author={Rosenblatt, Murray},
  journal={The Annals of Mathematical Statistics},
  volume={23},
  number={3},
  pages={470--472},
  year={1952},
  publisher={JSTOR}
}

@article{hamill2001rankhist,
  title={Interpretation of rank histograms for verifying ensemble forecasts},
  author={Hamill, Thomas M},
  journal={Monthly Weather Review},
  volume={129},
  number={3},
  pages={550--560},
  year={2001}
}

@article{gneiting2007calibration,
  title={Probabilistic forecasts, calibration and sharpness},
  author={Gneiting, Tilmann and Balabdaoui, Fadoua and Raftery, Adrian E},
  journal={Journal of the Royal Statistical Society Series B: Statistical Methodology},
  volume={69},
  number={2},
  pages={243--268},
  year={2007},
  publisher={Oxford University Press}
}

@article{kolouri2019gsw,
  title={Generalized sliced {W}asserstein distances},
  author={Kolouri, Soheil and Nadjahi, Kimia and Simsekli, Umut and Badeau, Roland and Rohde, Gustavo},
  journal={Advances in Neural Information Processing Systems},
  volume={32},
  year={2019}
}

@article{nadjahi2020sliced,
  title={Statistical and topological properties of sliced probability divergences},
  author={Nadjahi, Kimia and Durmus, Alain and Chizat, L{\'e}na{\"\i}c and Kolouri, Soheil and Shahrampour, Shahin and Simsekli, Umut},
  journal={Advances in Neural Information Processing Systems},
  volume={33},
  pages={20802--20812},
  year={2020}
}

@inproceedings{wu2019swgm,
  title={Sliced {W}asserstein generative models},
  author={Wu, Jiqing and Huang, Zhiwu and Acharya, Dinesh and Li, Wen and Thoma, Janine and Paudel, Danda Pani and Gool, Luc Van},
  booktitle={Proceedings of the IEEE/CVF Conference on Computer Vision and Pattern Recognition},
  pages={3713--3722},
  year={2019}
}

@article{rubenstein2019practical,
  title={Practical and consistent estimation of {$f$}-divergences},
  author={Rubenstein, Paul and Bousquet, Olivier and Djolonga, Josip and Riquelme, Carlos and Tolstikhin, Ilya O},
  journal={Advances in Neural Information Processing Systems},
  volume={32},
  year={2019}
}

@article{moon2014multivariate,
  title={Multivariate {$f$}-divergence estimation with confidence},
  author={Moon, Kevin R and Hero, Alfred O},
  journal={Advances in Neural Information Processing Systems},
  volume={27},
  year={2014}
}

@article{sobol1967distribution,
  title={Distribution of points in a cube and approximate evaluation of integrals},
  author={Sobol, Ilya M},
  journal={USSR Computational Mathematics and Mathematical Physics},
  volume={7},
  pages={86--112},
  year={1967}
}

@book{dick2010digital,
  title={Digital nets and sequences: discrepancy theory and quasi--{M}onte {C}arlo integration},
  author={Dick, Josef and Pillichshammer, Friedrich},
  year={2010},
  publisher={Cambridge University Press}
}

@inproceedings{deshpande2019max,
  title={Max-sliced {W}asserstein distance and its use for {GAN}s},
  author={Deshpande, Ishan and Hu, Yuan-Ting and Sun, Ruoyu and Pyrros, Ayis and Siddiqui, Nasir and Koyejo, Sanmi and Zhao, Zhizhen and Forsyth, David and Schwing, Alexander G},
  booktitle={Proceedings of the IEEE/CVF Conference on Computer Vision and Pattern Recognition},
  pages={10648--10656},
  year={2019}
}

@inproceedings{paty2019subspace,
  title={Subspace robust {Wasserstein} distances},
  author={Paty, Fran{\c{c}}ois-Pierre and Cuturi, Marco},
  booktitle={International conference on machine learning},
  pages={5072--5081},
  year={2019},
  organization={PMLR}
}

@inproceedings{nietert2022statistical,
  title={Statistical, robustness, and computational guarantees for sliced {Wasserstein} distances},
  author={Nietert, Sloan and Goldfeld, Ziv and Sadhu, Ritwik and Kato, Kengo},
  booktitle={Advances in Neural Information Processing Systems},
  volume={35},
  pages={28179--28193},
  year={2022}
}

@inproceedings{liutkus2019sliced,
  title={Sliced-{W}asserstein flows: {N}onparametric generative modeling via optimal transport and diffusions},
  author={Liutkus, Antoine and Simsekli, Umut and Majewski, Szymon and Durmus, Alain and St{\"o}ter, Fabian-Robert},
  booktitle={International Conference on Machine Learning},
  pages={4104--4113},
  year={2019},
  organization={PMLR}
}

@article{sajjadi2018assessing,
  title={Assessing generative models via precision and recall},
  author={Sajjadi, Mehdi SM and Bachem, Olivier and Lucic, Mario and Bousquet, Olivier and Gelly, Sylvain},
  journal={Advances in Neural Information Processing Systems},
  volume={31},
  year={2018}
}

@article{radford2015unsupervised,
  title={Unsupervised Representation Learning with Deep Convolutional Generative Adversarial Networks},
  author={Radford, Alec and Metz, Luke and Chintala, Soumith},
  journal={arXiv preprint arXiv:1511.06434},
  year={2015}
}

@article{choi2022mcl,
  title={{MCL-GAN}: {G}enerative adversarial networks with multiple specialized discriminators},
  author={Choi, Jinyoung and Han, Bohyung},
  journal={Advances in Neural Information Processing Systems},
  volume={35},
  pages={29597--29609},
  year={2022}
}

@article{durugkar2016generative,
  title={Generative multi-adversarial networks},
  author={Durugkar, Ishan and Gemp, Ian and Mahadevan, Sridhar},
  journal={arXiv preprint arXiv:1611.01673},
  year={2016}
}

@article{hendrycks2019benchmarking,
  title={Benchmarking neural network robustness to common corruptions and perturbations},
  author={Hendrycks, Dan and Dietterich, Thomas},
  journal={arXiv preprint arXiv:1903.12261},
  year={2019}
}

@techreport{krizhevsky2009learning,
  title={Learning multiple layers of features from tiny images},
  author={Krizhevsky, Alex},
  institution={University of Toronto},
  year={2009}
}

@inproceedings{roelofs2022mitigating,
  title     = {Mitigating Bias in Calibration Error Estimation},
  author    = {Roelofs, Rebecca and Cain, Nicholas and Shlens, Jonathon and Mozer, Michael C.},
  booktitle = {International Conference on Artificial Intelligence and Statistics},
  pages     = {4036--4054},
  year      = {2022},
  organization = {PMLR}
}

@inproceedings{VNPC2023,
  title={Precision-Recall Divergence Optimization for Generative Modeling with GANs and Normalizing Flows},
  author={V{\'e}rine, Alexandre and N{\'e}grevergne, Benjamin and Pydi, Muni Sreenivas and Chevaleyre, Yann},
  booktitle={Advances in Neural Information Processing Systems},
  volume={36},
  pages={32539--32573},
  year={2023}
}

@article{SWSR2023,
  title={On the Theoretical Equivalence of Several Trade-Off Curves Assessing Statistical Proximity},
  author={Siry, Rodrigue and Webster, Ryan and Simon, Lo{\"i}c and Rabin, Julien},
  journal={Journal of Machine Learning Research},
  volume={24},
  number={185},
  pages={1--34},
  year={2023}
}

@inproceedings{SWR2019,
  title={Revisiting Precision and Recall Definition for Generative Model Evaluation},
  author={Simon, Lo{\"i}c and Webster, Ryan and Rabin, Julien},
  booktitle={Proceedings of the 36th International Conference on Machine Learning},
  pages={5799--5808},
  year={2019},
  publisher={PMLR}
}

@article{gretton2012kernel,
  title={A Kernel Two-Sample Test},
  author={Gretton, Arthur and Borgwardt, Karsten M. and Rasch, Malte J. and Sch{\"o}lkopf, Bernhard and Smola, Alexander},
  journal={Journal of Machine Learning Research},
  volume={13},
  pages={723--773},
  year={2012}
}

@inproceedings{lopezpaz2017revisiting,
  title={Revisiting Classifier Two-Sample Tests},
  author={Lopez-Paz, David and Oquab, Maxime},
  booktitle={International Conference on Learning Representations},
  year={2017}
}

@article{hotelling1931generalization,
  title={The Generalization of Student's Ratio},
  author={Hotelling, Harold},
  journal={The Annals of Mathematical Statistics},
  volume={2},
  number={3},
  pages={360--378},
  year={1931}
}

@article{cressie1984multinomial,
  title={Multinomial Goodness-of-Fit Tests},
  author={Cressie, Noel A. C. and Read, Timothy R. C.},
  journal={Journal of the Royal Statistical Society: Series B (Methodological)},
  volume={46},
  number={3},
  pages={440--464},
  year={1984}
}
\bibliographystyle{icml2026}

\newpage
\appendix
\onecolumn

\section*{Appendix}
In this appendix, we first recall well-known results about Bernstein polynomials and $f$-divergences in \cref{sec:well_known_results}.
In \cref{sec:proofs} we prove the theorems from the main text, and in \cref{sec:experiments_Appendix} we provide supplementary explanations and experiments.
Finally, in \cref{app:limitations_future_work}, we elaborate on limitations and future work.

\begin{table*}[ht]
\centering

\label{tab:notation}
\renewcommand{\arraystretch}{1.12}
\begin{tabularx}{\textwidth}{@{}lX@{}}
\toprule
Symbol & Meaning \\
\midrule
\(\mathbb{N}\) & Set of non-negative integers. \\
\([K]\) & Discrete set \(\{0,\ldots,K\}\). \\
\(\mathcal{P}(\mathbb{R}^d)\) & Set of probability measures on \(\mathbb{R}^d\). \\
\(\mu,\nu\) & Probability measures to be compared; typically \(\nu\) is the reference distribution. \\
\(\hat\mu_N,\hat\nu_M\) & Empirical measures built from \(N\) samples from \(\mu\) and \(M\) samples from \(\nu\). \\
\(Q_\mu\), \(R_\mu\) & Quantile function and cumulative distribution function of a univariate measure \(\mu\). \\
\(T_\#\mu\) & Pushforward of \(\mu\) by a map \(T\). \\
$\mu \kappa$ & Pushforward of $\mu\in \P(\R)$ by a Markov kernel $\kappa \colon [K] \times \R \to [0, 1]$, i.e. $(\mu \kappa)(n) = \int_{\R} \kappa(n, x) \d \mu(x)$ for $n \in [K]$. \\
\(f\) & Entropy function/generator of an \(f\)-divergence. \\
\(D_{f,\nu}(\mu)\) & Continuous \(f\)-divergence of \(\mu\) with respect to \(\nu\). \\
\(K\) & Rank resolution parameter. \\
\(U_K\) & Uniform distribution on \([K]\). \\
\(A^{(K)}_{\mu|\nu}\) & Rank statistic of order \(K\) of \(\mu\) with respect to \(\nu\). \\
\(Q^{(K)}_{\mu|\nu}\) & Probability mass function of \(A^{(K)}_{\mu|\nu}\), interpreted as a rank histogram. \\
\(D^{(K)}_{f,\nu}(\mu)\) & Rank-statistic approximation of \(D_{f,\nu}(\mu)\) at resolution \(K\). \\
\(r_{\mu|\nu}\) & Quantile-domain density ratio \(r_{\mu|\nu} = \frac{d\mu}{d\nu}\circ Q_\nu\), when \(\mu\ll\nu\). \\
\(r_K\) & Piecewise-constant approximation of \(r_{\mu|\nu}\) induced by the rank histogram. \\
\(b_{n,K}\) & Bernstein basis polynomial of degree \(K\). \\
\(s\in\mathbb{S}^{d-1}\) & One-dimensional projection direction. \\
\(\mu_s,\nu_s\) & One-dimensional projected measures along direction \(s\). \\
\(\sigma\) & Uniform probability measure on the sphere \(\mathbb{S}^{d-1}\). \\
\(\mathbf{SD}_{f,\nu}(\mu)\) & Sliced \(f\)-divergence obtained by averaging one-dimensional divergences over projections. \\
\(\mathbf{D}^{(K)}_{f,\nu}(\mu)\) & Sliced rank-statistic \(f\)-divergence of order \(K\). \\
\(L\) & Number of random projection directions used in empirical sliced estimators. \\
\(\tau\) & Smoothing temperature used in differentiable rank/CDF approximations. \\
\bottomrule
\end{tabularx}
\caption{Frequently used notation.}
\end{table*}

\section{Well-known results} \label{sec:well_known_results}

Here, we recall results about Bernstein polynomials and $f$-divergences.

Bernstein polynomials were introduced in \cite{B1912} to prove the Weierstraß approximation theorem in a simple way.

    \begin{lemma}[Properties of Bernstein polynomials] \label{lemma:Bernstein_Facts}
        The Bernstein polynomials $b_{n, K} \colon [0, 1] \to [0, \infty)$, $u \mapsto \binom{K}{n} u^{n} (1 - u)^{K - n}$ have the following properties.
        \begin{enumerate}
            \item
            We have $\sum_{n = 0}^{K} b_{n, K}(s) = 1$ for all $s \in [0, 1]$, and  $\int_{0}^{1} b_{n, K}(s) \d{s} = \frac{1}{K + 1}$ for $n \in [K]$. 

            \item 
            For $K \in \N$ and $n \in [K - 1]$ we have
            \begin{equation*}
                b_{n, K - 1}
                = \frac{K - n}{K} b_{n, K}
                + \frac{n + 1}{K} b_{n + 1, K}.
            \end{equation*}

            \item 
            The function $(K + 1) b_{n, K}$ is the probability density function of the $\Beta(n + 1, K - n + 1)$ distribution, whose mean and variance are $\frac{n + 1}{K + 2}$ and $\frac{(n + 1)(K - n + 1)}{(K + 2)^2 (K + 3)}$, respectively.

            \item 
            Let $f \in \C([0, 1])$. 
            For the Bernstein operator $B_K \colon \C([0, 1]; \R) \to \Pi_K$, $f \mapsto \sum_{n = 0}^{K} f\left(\frac{n}{K}\right) b_{n, K}$ we have $\| B_{K}[f] - f \|_{\infty} \to 0$ for $K \to \infty$.

            \item 
            If $g$ is Lipschitz, then $\| B_K[g] - g \|_{\infty} \in O(K^{-\frac{1}{2}})$ and if $g \in \C^2([0, 1])$, then $\| B_K[g] - g \|_{\infty} \in O(K^{-1})$.
        \end{enumerate}
    \end{lemma}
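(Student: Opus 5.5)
All five items are classical. I would prove them in order, using the binomial theorem for the algebraic facts and a probabilistic representation of the Bernstein operator for the approximation facts.

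\emph{Items 1--3.} For the partition of unity I would apply the binomial theorem: $\sum_{n=0}^K b_{n,K}(s) = (s + (1-s))^K = 1$. For the integral I would use the Beta integral $\int_0^1 s^n(1-s)^{K-n}\d s = \frac{n!\,(K-n)!}{(K+1)!}$. Multiplying by $\binom{K}{n}$ gives exactly $\frac{1}{K+1}$.

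For item 2 I would use the coefficient identities $\frac{K-n}{K}\binom{K}{n} = \binom{K-1}{n}$ and $\frac{n+1}{K}\binom{K}{n+1} = \binom{K-1}{n}$. These give
\[
\tfrac{K-n}{K} b_{n,K} + \tfrac{n+1}{K} b_{n+1,K} = \binom{K-1}{n} u^n (1-u)^{K-1-n}\bigl((1-u) + u\bigr) = b_{n,K-1}.
\]
For item 3, the integral computed in item 1 shows that $(K+1) b_{n,K}$ is nonnegative with unit mass. It is proportional to $u^{(n+1)-1}(1-u)^{(K-n+1)-1}$, so it is the $\Beta(n+1, K-n+1)$ density. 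I would then substitute $a = n+1$ and $b = K-n+1$, so that $a+b = K+2$, into the standard Beta formulas: mean $\frac{a}{a+b}$ and variance $\frac{ab}{(a+b)^2(a+b+1)}$. This yields the stated mean and variance.

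\emph{Items 4--5.} The key step is to write $B_K[g](u) = \E\bigl[g(S_u/K)\bigr]$ with $S_u \sim \Bin(K,u)$. Then $\E[S_u/K] = u$ and $\Var(S_u/K) = \frac{u(1-u)}{K} \le \frac{1}{4K}$.

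\emph{Lipschitz case.} If $g$ is $L$-Lipschitz, Jensen's inequality gives
\[
|B_K[g](u) - g(u)| \le L\,\E|S_u/K - u| \le L\sqrt{\Var(S_u/K)} \le \frac{L}{2\sqrt K}
\]
uniformly in $u$. This is the $O(K^{-1/2})$ rate.

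\emph{$\C^2$ case.} Taylor expand $g$ around $u$. The first-order term has zero expectation because $\E[S_u/K] = u$. The remainder is bounded by $\frac{\|g''\|_\infty}{2}\,\E(S_u/K - u)^2 \le \frac{\|g''\|_\infty}{8K}$, which is the $O(K^{-1})$ rate.

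\emph{Continuous case (item 4).} This is the only step needing a little care, since no modulus of continuity is given. I would use uniform continuity of $g$ on the compact interval $[0,1]$. Given $\varepsilon>0$, choose $\delta$ such that $|x - y| < \delta$ implies $|g(x) - g(y)| < \varepsilon$. Then split the expectation according to whether $|S_u/K - u| < \delta$. The near part contributes at most $\varepsilon$. The far part is at most $2\|g\|_\infty\,\PP\bigl(|S_u/K - u| \ge \delta\bigr) \le \frac{2\|g\|_\infty}{4K\delta^2}$ by Chebyshev's inequality. This bound is uniform in $u$ and tends to $0$ as $K \to \infty$, which proves uniform convergence.
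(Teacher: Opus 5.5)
Your proof is correct and complete for all five items. The paper gives no argument of its own and simply cites Lorentz's \emph{Bernstein Polynomials} (Chapter~1), where these facts are obtained by essentially the classical route you take: the binomial theorem and Beta integrals for the identities, Bernstein's binomial-expectation representation with Chebyshev's inequality for uniform convergence, and second-moment/Taylor estimates for the rates. Your proposal therefore writes out the standard proof the paper defers to, with explicit constants $L/(2\sqrt{K})$ and $\|g''\|_\infty/(8K)$. The only cosmetic point is in item~4, where you should state the conclusion as $\limsup_{K\to\infty}\|B_K[g]-g\|_\infty\le\varepsilon$ for every $\varepsilon>0$, rather than saying the whole bound tends to $0$.
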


    \begin{proof}
        See \citep[Chp.~1]{L1968}.
    \end{proof}

\begin{lemma}[Data processing inequality for discrete $f$-divergences] \label{lemma:discrete_DP}
    For finite sets $X$ and $Y$ and a matrix $W \in [0, 1]^{\# Y \times \# X}$ fulfilling $W \1_{\# Y} = \1_{\# X}$ and $P,Q \in \P(X)$ we have
    \[
      \D_f(W^{\tT} P \mid W^{\tT} Q)
      \le \D_f(P \mid Q).
    \]
\end{lemma}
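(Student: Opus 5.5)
We prove the inequality by Jensen's inequality in the form of joint convexity of the perspective of $f$. The key tool is the subadditivity of a positively homogeneous convex function. We read $W$ as a Markov kernel: write $k(y\mid x) \ge 0$ for its entries, so that $\sum_{y\in Y} k(y\mid x) = 1$ for every $x\in X$ (this is the meaning of the normalization $W\1 = \1$). Then $(W^{\tT}P)(y) = \sum_{x} k(y\mid x)P(x)$, and this is a probability vector on $Y$; likewise for $Q$.

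\emph{Step 1 (perspective).} Define $g\colon[0,\infty)^2\to\R\cup\{+\infty\}$ by
\[
g(p,q) \coloneqq q\,f(p/q) \ \text{ for } q>0,\qquad g(0,0)\coloneqq 0,\qquad g(p,0)\coloneqq p\,f'_\infty \ \text{ for } p>0,
\]
where $f'_\infty \coloneqq \lim_{t\to\infty} f(t)/t \in (0,\infty]$. With these conventions, $\D_f(P\Vert Q) = \sum_{x\in X} g(P(x),Q(x))$; in particular the value is $+\infty$ when $P\not\ll Q$ and $f'_\infty=\infty$.

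\emph{Step 2 (properties of $g$).} We show that $g$ is jointly convex and positively $1$-homogeneous, $g(\lambda p,\lambda q) = \lambda g(p,q)$ for $\lambda\ge 0$. On $\{q>0\}$ convexity follows from the standard identity
\[
g\bigl(\theta(p_1,q_1)+(1-\theta)(p_2,q_2)\bigr) = (\theta q_1 + (1-\theta)q_2)\, f\!\left(\tfrac{\theta q_1}{\theta q_1+(1-\theta)q_2}\tfrac{p_1}{q_1} + \tfrac{(1-\theta)q_2}{\theta q_1+(1-\theta)q_2}\tfrac{p_2}{q_2}\right)
\]
combined with convexity of $f$. The boundary $\{q=0\}$ is handled by checking that $g$ is the lower semicontinuous extension of its restriction to $\{q>0\}$: as $q\downarrow 0$ with $p>0$ fixed, $q f(p/q) \to p f'_\infty$. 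A lower semicontinuous extension of a convex function stays convex. Convexity and $1$-homogeneity together give subadditivity,
\[
g\Bigl(\sum_i (p_i,q_i)\Bigr) \le \sum_i g(p_i,q_i).
\]

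\emph{Step 3 (conclusion).} Using homogeneity and $\sum_y k(y\mid x)=1$, and then exchanging the finite sums,
\[
\D_f(P\Vert Q) = \sum_{x} g(P(x),Q(x)) = \sum_{x}\sum_{y} g\bigl(k(y\mid x)P(x),\,k(y\mid x)Q(x)\bigr) = \sum_{y}\sum_{x} g\bigl(k(y\mid x)P(x),\,k(y\mid x)Q(x)\bigr).
\]
Applying subadditivity in $x$ for each fixed $y$ bounds the right-hand side below by
\[
\sum_{y} g\bigl((W^{\tT}P)(y),(W^{\tT}Q)(y)\bigr) = \D_f(W^{\tT}P\Vert W^{\tT}Q),
\]
which is the claim.

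The main obstacle is Step 2 at the boundary. We must ensure that the extended-valued conventions (namely $f(0)$ possibly $+\infty$, $q=0$, and $f'_\infty=\infty$) are consistent with convexity, so that no $\infty-\infty$ appears and the inequalities persist. We handle this by working in $\R\cup\{+\infty\}$ with only additions of nonnegative weights. Where needed, we first prove the inequality for $Q$ with full support and then pass to the limit, replacing $Q$ by $(1-\varepsilon)Q+\varepsilon U$ for a uniform $U$. As $\varepsilon\downarrow 0$, the left-hand side is handled by lower semicontinuity of $g$ and the right-hand side by convexity of $g$.
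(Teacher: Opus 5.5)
Your proof is correct. It is the standard argument behind the reference the paper simply cites for this lemma \citep[Subsec.~7.2]{PW2025}: homogeneity of the perspective $g$ rewrites $\D_f(P\Vert Q)$ as the divergence between the joint laws $k(y\mid x)P(x)$ and $k(y\mid x)Q(x)$ on $X\times Y$, subadditivity of $g$ (i.e.\ Jensen) shows that marginalizing onto $Y$ can only decrease it, and your $f'_\infty$ convention plays the role of the singular part in the measure-theoretic version. Since the closure argument in Step~2 already covers the boundary, drop the closing $\varepsilon$-regularization remark: it fails when $f(0)=+\infty$ (e.g.\ Jeffreys, $f(t)=(t-1)\log t$) and $P(x)=Q(x)=0$ for some $x$, because then $\D_f\bigl(P\Vert(1-\varepsilon)Q+\varepsilon U\bigr)=+\infty$ for every $\varepsilon>0$.
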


\begin{proof}
    See \citep[Subsec.~7.2]{PW2025}.
\end{proof}

\section{Proofs of theorems} \label{sec:proofs}

First, we repeat some important properties of the rank statistics \citep[App.~A]{dFVOM2024} \citep[Thm.~4.1]{dFVOM2025}.

\subsection{Properties of the rank statistic} \label{appendix:rank-representation}

\begin{lemma}[Properties of the rank statistic] \label{lemma:properties_rank_statistics}
    Let $\mu, \nu \in \P(\R)$ with $\mu \ll \nu$ and $K \in \N_{> 0}$, and let $(b_{n, K})_{n \in [K]}$ be the Bernstein polynomials.
    Suppose that $\nu$ is atomless.
    \begin{enumerate}
        \item 
        We have
        \begin{equation} \label{eq:Q^K_expression}
            Q_{\mu \mid \nu}^{(K)}(n)
            = \int_{\R} b_{n, K}(R_{\nu}(y)) \d{\mu}(y)
            = \int_{0}^{1} b_{n, K}(s) \left(\frac{\d \mu}{\d \nu} \circ Q_{\nu}\right)(s) \d{s}, \qquad \forall n \in [K],
        \end{equation}
        where $R_{\nu}$ and $Q_{\nu}$ are the cumulative distribution function (CDF) and the quantile function of $\nu$, respectively.

        \item 
        Furthermore, we have $\mu = \nu$ if and only if for all $K \in \N$ we have $A_{\mu \mid \nu}^{(K)} \sim U_K$, i.e., $Q_{\mu \mid \nu}^{(K)}(n) = \frac{1}{K + 1}$ for all $n \in [K]$.
    \end{enumerate}    
\end{lemma}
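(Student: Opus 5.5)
Part 1 is a conditioning argument followed by a change of variables. Condition on $Y=y$. The events $\{\tilde Y_j\le y\}$, $j=1,\dots,K$, are i.i.d.\ Bernoulli with success probability $\nu((-\infty,y])=R_\nu(y)$. Hence the conditional law of $A^{(K)}_{\mu\mid\nu}$ given $Y=y$ is $\Bin(K,R_\nu(y))$, and
\[
  \PP\bigl(A^{(K)}_{\mu\mid\nu}=n\mid Y=y\bigr)=b_{n,K}(R_\nu(y)).
\]
Integrating over $y\sim\mu$, which is allowed by independence of $Y$ and $(\tilde Y_j)$ together with Fubini, gives the first equality in \eqref{eq:Q^K_expression}.

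For the second equality, use $\mu\ll\nu$ to write the first integral as $\int b_{n,K}(R_\nu(y))\,\frac{\d\mu}{\d\nu}(y)\,\d\nu(y)$. Then invoke the probability integral transform: since $\nu$ is atomless, $U\coloneqq R_\nu(Y')$ with $Y'\sim\nu$ is uniform on $[0,1]$, and $Q_\nu(R_\nu(Y'))=Y'$ holds $\nu$-a.s. Equivalently, $Y'\stackrel{d}{=}Q_\nu(U)$ and $R_\nu(Q_\nu(u))=u$ for $u\in(0,1)$ by continuity of $R_\nu$. Substituting $y=Q_\nu(u)$ yields $\int_0^1 b_{n,K}(u)\bigl(\frac{\d\mu}{\d\nu}\circ Q_\nu\bigr)(u)\,\d u$. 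The one step needing care is the a.s.\ identity $Q_\nu\circ R_\nu=\mathrm{id}$, which can fail on flat stretches of $R_\nu$; these are $\nu$-null, and by absolute continuity also $\mu$-null, so the identity suffices.

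For part 2, the direction ``$\mu=\nu$ implies uniform'' follows from part 1 with $r\equiv1$ and $\int_0^1 b_{n,K}=\frac1{K+1}$ from \cref{lemma:Bernstein_Facts}. For the converse, set $r\coloneqq\frac{\d\mu}{\d\nu}\circ Q_\nu\in L^1([0,1])$. Uniformity for every $K$ gives
\[
  \int_0^1 b_{n,K}(u)\,(r(u)-1)\,\d u=0\qquad\text{for all }K\in\N,\ n\in[K].
\]
The Bernstein polynomials $\{b_{n,K}\}_{n\in[K]}$ span all polynomials of degree at most $K$, so every moment of the signed measure $(r-1)\,\d u$ on $[0,1]$ vanishes. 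By Weierstrass approximation, $(r-1)\,\d u$ integrates every continuous function to zero. A finite signed measure on $[0,1]$ with this property is zero, so $r=1$ a.e.

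The main obstacle is transferring this conclusion back to $\R$ to obtain $\frac{\d\mu}{\d\nu}=1$ $\nu$-a.e. For this I would use the same change of variables in reverse: pushing Lebesgue measure forward by $Q_\nu$ gives $\nu$, so $\nu(\{\frac{\d\mu}{\d\nu}\ne1\})=\lambda(\{r\ne1\})=0$. Therefore $\mu=\nu$.
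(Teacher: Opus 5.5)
Your proof is correct. Part 1 and the forward direction of part 2 match the paper's proof: the conditional law is $\Bin(K,R_\nu(y))$, the law of total probability gives the first equality, the pushforward change of variables gives the second, and $\int_0^1 b_{n,K}=\frac1{K+1}$ gives uniformity when $\mu=\nu$.

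The difference is in the converse. The paper gives no argument there and only refers to \citep[Thm.~3]{dFVOM2024}, remarking that the density assumption made in that source is unnecessary. You instead give a self-contained proof: you reduce to the Hausdorff moment problem on $[0,1]$ and transfer $r=1$ a.e.\ back to $\R$ via $\nu=(Q_\nu)_\#\lambda_{(0,1)}$. This removes the lemma's dependence on the reference.

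Two simplifications are available.
\begin{enumerate}
\item Since $b_{K,K}(u)=u^K$, the identities $Q^{(K)}_{\mu\mid\nu}(K)=\frac1{K+1}$ for all $K$ already give every moment, so you do not need the fact that the Bernstein basis spans the polynomials.
\item In part 1, the route through $\nu=(Q_\nu)_\#\lambda_{(0,1)}$ together with $R_\nu\circ Q_\nu=\mathrm{id}$ on $(0,1)$ makes the discussion of $Q_\nu\circ R_\nu$ on flat stretches of $R_\nu$ unnecessary.
\end{enumerate}

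You can also gain generality. Run the moment argument on the law of $R_\nu(Y)$ with $Y\sim\mu$, rather than on its density $r$. This shows $(R_\nu)_\#\mu$ is uniform. That forces $R_\mu=R_\nu$ directly, because $\mu$ then charges no level set of $R_\nu$. This version of the converse does not use $\mu\ll\nu$, so it is slightly stronger than the stated lemma.
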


\begin{proof}
    \begin{enumerate}
        \item 
        Given $y \sim \mu$, the random variable $A_{\mu \mid \nu}^{(K)} \mid y$ follows a $\Bin(K, R_{\nu}(y))$ distribution, whose probability mass function is $[K] \ni n \mapsto b_{n, K}(R_{\nu}(y))$.
        By the law of total probability, we thus have
        \begin{equation*}
            Q_{\mu \mid \nu}^{(K)}(n)
            = \PP(A_{\mu \mid \nu}^{(K)} = n)
            = \int_{\R} \PP\left(A_{\mu \mid \nu}^{(K)} = n \mid y\right) \d{\mu}(y)
            = \int_{\R} (b_{n, K} \circ R_{\nu})(y) \d{\mu}(y)
        \end{equation*}
        as in \citep[Eq.~(B.2)]{EMD2021}. 
        The second equation follows from the change of variables formula for the pushforward measure.

        \item 
        If $\mu = \nu$ and $K \in \N$, then 
        \begin{equation*}
            Q_{\mu \mid \nu}^{(K)}(n)
            = \int_{0}^{1} b_{n, K}(s) \left(\frac{\d \mu}{\d \nu} \circ Q_{\nu}\right)(s) \d{s}
            = \int_{0}^{1} b_{n, K}(s) \d{s}
            = \frac{1}{K + 1}, \qquad \forall n \in [K].
        \end{equation*}
        The converse direction follows as in the proof of \citep[Thm.~3]{dFVOM2024}, the assumption that $\mu$ and $\nu$ admit densities is not needed.\qedhere
    \end{enumerate}
\end{proof}

\begin{figure}[H]
    \centering
    \includegraphics[width=0.33\linewidth,page=1]{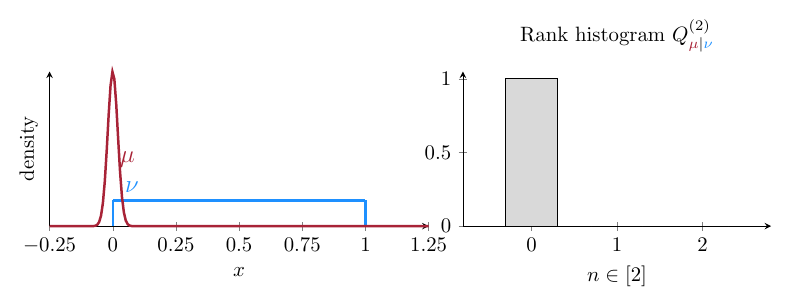}%
    \includegraphics[width=0.33\linewidth,page=3]{Bernstein_plot.pdf}%
    \includegraphics[width=0.33\linewidth,page=5]{Bernstein_plot.pdf}%
    \caption{Illustration of the rank histogram $Q_{\mu \mid \nu}^{(K)}$ for $K = 2$, $\nu \sim U([0, 1])$ and $\mu$ being a Gaussian with varying mean.}
    \label{fig:Bernstein_illustration}
\end{figure}

In the next remark, we illustrate why measuring the deviation of the rank histogram $Q_{\mu \mid \nu}^{(K)}$ using an $f$-divergence is meaningful.

\begin{remark}[Suitability of $f$-divergences] \label{remark:f-div_pushforward_reformulation}
    For $\mu, \nu \in \P(\R)$ with $\mu \ll \nu$, we have 
    \begin{equation*} \label{eq:uniform_reformulation}
        D_{f, \nu}(\mu)
        = \int_{\R} f\left(\frac{\d \mu}{\d \nu}(x)\right) \d{[Q_{\nu} \# \lambda_{(0, 1)}]}(x)
        = \int_{0}^{1} f\left(\left(\frac{\d \mu}{\d \nu} \circ Q_{\nu}\right)(s)\right) \d{s}
        = D_{f, \lambda_{(0, 1)}}\left( \frac{\d \mu}{\d \nu} \circ Q_{\nu} \cdot \lambda_{(0, 1)}\right),
    \end{equation*}
    where $\lambda_{(0, 1)}$ is the Lebesgue measure on $(0, 1)$.
    Hence, the $f$-divergence between $\mu$ and $\nu$ can be rewritten as the $f$-divergence between two densities on the (bounded) unit interval.
    The same holds for the $\alpha$-Rényi divergences, and we leave the exploration of rank-statistic approximations of Rényi divergences for future work.

    This clean reformulation is not possible for other discrepancies, like integral probability metrics or Wasserstein distances.
\end{remark}

\subsection{Proof of \cref{thm:monotone-K}} \label{subsec:Proof_of_Monotonicity}

\MonotoneK*

\begin{proof}
    \begin{enumerate}
        \item 
        First, we show that $D_{f, \nu}^{(K)} \le D_{f, \nu}$.
    
        Let $U_{n,K}\sim \mathrm{Beta}(n+1,K{-}n+1)$ and $r \coloneqq \frac{\d \mu}{\d \nu} \circ Q_{\nu}$.
        Then,
        \[
            c^{(K)}_n
            \coloneqq
            (K+1)\,Q^{(K)}_{\mu \mid\nu}(n)
            = \int_0^1 (K+1)b_{n,K}(u)\,r(u) \d{u}
            = \mathbb{E}\big[r(U_{n,K})\big],
        \]
        By Jensen's inequality, we have
        \begin{align*}
        D^{(K)}_f(\mu \mid \nu)
        & =\frac{1}{K+1}\sum_{n=0}^K f\!\big(c^{(K)}_n\big)
        \le \frac{1}{K+1}\sum_{n=0}^K \mathbb{E}\big[f(r(U_{n,K}))\big] \\
        & = \frac{1}{K + 1} \sum_{n = 0}^{K} \int_{0}^{1} f(r(u)) (K + 1) b_{n, K}(u) \d{u}
        =\int_0^1 f(r(u))\d{u},
        \end{align*}
        where in the last step we used
        $\frac{1}{K+1}\sum_{n=0}^K (K+1)b_{n,K}\equiv 1$.

        \item 
        Now, we prove the monotonicity with respect to $K$.

        We want to use the data-processing inequality for discrete $f$-divergences (\cref{lemma:discrete_DP}) with $X \coloneqq [K + 1]$, $Y \coloneqq [K]$, $P \coloneqq Q^{(K+1)}_{\mu\mid\nu}$, and $Q \coloneqq \U([K+1])$ and construct $W$ such that $W^{\tT} P = Q^{(K+1)}_{\mu\mid\nu}$ and $W^{\tT} Q = U_K$.
        Then,
        \begin{align*}
          D^{(K)}_{f,\nu}(\mu)
          &= \D_f\big(Q^{(K)}_{\mu\mid\nu} \mid U_K\big)
           = \D_f\big(Q^{(K+1)}_{\mu\mid\nu} W \mid U_{K+1} W\big)
          \\
          &\le \D_f\big(Q^{(K+1)}_{\mu\mid\nu} \,\Vert\, U_{K+1}\big)
           = D^{(K+1)}_{f,\nu}(\mu).
        \end{align*}
        We set
        \begin{equation*}
            W_{n, m}
            \coloneqq
            \begin{cases}
                \frac{m}{K+1}, & \text{if } n = m-1,\\
                \frac{K+1-m}{K+1}, & \text{if } n = m,\\
                0, & \text{otherwise.}
            \end{cases}
            \qquad n \in [K], \ m \in [K + 1].
        \end{equation*}
        Then,
        \begin{equation*}
           W \1_{K + 1}
           = \left(\sum_{n = 0}^{K} W_{n, m} \right)_{m \in [K + 1]}
           = \left(W_{m - 1, m} + W_{m, m} \right)_{m \in [K + 1]}
           = \left(\frac{m}{K + 1} + \frac{K + 1 - m}{K + 1}\right)_{m \in [K + 1]}
           = \1_{K + 2}
        \end{equation*}
        and
        \begin{align*}
            W^{\tT} Q
            & = W^{\tT} \U([K + 1])
            = \left(\sum_{m = 0}^{K} W_{n, m} \frac{1}{K + 2}\right)_{n \in [K]}
            = \left(\frac{1}{K + 2} \left( W_{n, n} + W_{n, n + 1}\right)\right)_{n \in [K]} \\
            & = \frac{1}{K + 2} \left( \frac{K + 1 - n}{K + 1} + \frac{n + 1}{K + 1}\right)_{n \in [K]}
            = \frac{1}{K + 1} \1_{K + 1}
            = U_K.
        \end{align*}
        Next,
        \begin{align*}
            W^{\tT} P
            & = W^{\tT} Q_{\mu \mid \nu}^{(K + 1)}
            = \left(W_{n, n} Q_{\mu \mid \nu}^{(K + 1)}(n) + W_{n, n + 1} Q_{\mu \mid \nu}^{(K + 1)}(n + 1) \right)_{n \in [K]} \\
            & = \left(\frac{K + 1 - n}{K + 1} \int_{\R} b_{n, K + 1} \circ R_{\nu} \d{\mu}
            + \frac{n + 1}{K + 1} \int_{\R} b_{n, K + 1} \circ R_{\nu} \d{\mu}\right)_{n \in [N]} \\
            & = \left(\int_{\R} b_{n, K} \circ R_{\nu} \d{\mu}\right)_{n \in [K]}
            = Q_{\mu \mid \nu}^{(K)}.
        \end{align*}
        The identity about Bernstein polynomials we use in the last line expresses that starting from a count $n$ of ``successes'' among $K+1$
        trials ($n$ of the $K+1$ samples that are drawn from $\nu$ are lower than $Y\sim \mu$), we delete one trial uniformly at random. With probability
        $n/(K+1)$ we delete a success and the new count is $n-1$, while
        with probability $(K+1-n)/(K+1)$ we delete a failure and the new
        count remains $n$.
        
        \item 
        By \eqref{eq:Q^K_expression}, $Q_{\mu \mid \nu}^{(K)}(n)$ is linear in $\mu$.
        Since $f$ is convex, the functional $D_{f, \nu}^{(K)}$ is convex as well.

        \item 
        If $Q_{\nu}$ is continuous, then by \eqref{eq:Q^K_expression}, $Q_{\mu \mid \nu}^{(K)}$ is weakly continuous, since the integrand is continuous.
        Hence, $D_{f, \nu}^{(K)}$ is weakly lower semicontinuous by the lower semicontinuity of $f$.
        \qedhere
    \end{enumerate}
\end{proof}

\subsection{Proof of \cref{thm:convergence_K_to_Infty}}

First, we prove the convergence if $r \in \C([0, 1])$.

\begin{proof} \label{proof:convergenceKtoInfty}
    The only point not proved in the proof sketch in the main text is the uniform convergence of $r_K$ to $r$ for $K \to \infty$.

    Indeed, for $u \in \left[\frac{n}{K+1},\frac{n+1}{K+1}\right)$ we have
    \begin{equation*}
        \left| r_K(u) - r(u) \right|
        = \left| c_n^{(K)} - r(u) \right|
        \le \left| c_n^{(K)} - r\left(\frac{n}{K}\right)\right| + \left| r\left(\frac{n}{K}\right) - r(u) \right|.
    \end{equation*}
    The second summand is bounded above as follows:
    \begin{equation*}
        \left| r\left(\frac{n}{K}\right) - r(u) \right|
        \le \sup\left\{ | r(x) - r(y) |: | x - y | \le \frac{1}{K + 1}\right\},
        \eqqcolon \omega_r\left(\frac{1}{K + 1}\right),
    \end{equation*}
    which is the \textit{modulus of continuity} of $r$ at $\frac{1}{K + 1}$.
    Hence,
    \begin{equation} \label{eq:binning}
        \| r_K - r \|_{\infty}
        \le \max_{n \in [K]} \left| c_n^{(K)} - r\left(\frac{n}{K}\right)\right|
        + \omega_r\left(\frac{1}{K + 1}\right).
    \end{equation}
    Since $r$ is uniformly continuous, the second summand vanishes for $K \to \infty$.
    
    We now upper bound the first summand.
    Let $U_{n, K} \sim \Beta(n + 1, K + 1 - n)$.
    The density of $U_{n, K}$ is $(K + 1) b_{n, K}$ and its mean is $m_{n, K} \coloneqq \frac{n + 1}{K + 2}$.
    For $n \in [K]$, we have
    \begin{equation*}
        \left| \E[r(U_{n, K})] - r\left(\frac{n}{K}\right) \right|
        \le \E\left[\left| r(U_{n, K}) - r(m_{n, K}) \right|\right] + \left| r(m_{n, K}) - r\left(\frac{n}{K}\right)\right|.
    \end{equation*}
    Furthermore,
    \begin{equation} \label{eq:deviation_of_mean}
        \left| m_{n, K} - \frac{n}{K} \right|
        = \left| \frac{K - 2 n}{K(K + 2)} \right|
        \le \frac{1}{K + 2}
        < \frac{1}{K} \to 0, \qquad K \to \infty,
    \end{equation}
    We have
    \begin{equation} \label{eq:V_U_nK}
        \V[U_{n, K}]
        = \frac{(n + 1)(K - n + 1)}{(K + 2)^2 (K + 3)}
        \le \frac{1}{4(K + 3)}
        < \frac{1}{K}.
    \end{equation}
    If the modulus of continuity $\omega_r$ is concave, then by Jensen's inequality, we have
        \begin{align*}
            \left| \E[r(U_{n, K})] - r\left(\frac{n}{K}\right)\right|
            & \le \E\left[ \left| r(U_{n, K}) - r\left(\frac{n}{K}\right) \right|\right]
            \le \E\left[ \omega_r\left(U_{n, K} - \frac{n}{K}\right) \right] \\
            & \le \omega_r\left(\E\left[\left|U_{n, K} - \frac{n}{K}\right|\right] \right)
            \le \omega_r\left(\frac{1}{K + 2} + \frac{1}{2\sqrt{K+3}} \right),
        \end{align*}
        where we use \cref{eq:deviation_of_mean,eq:V_U_nK} in the last inequality.
        Hence, by \eqref{eq:binning},
        \begin{equation*}
            \| r_K - r \|_{\infty}
            \le \omega_r\left(\frac{1}{K + 2} + \frac{1}{2\sqrt{K+3}} \right)
            + \omega_r\left(\frac{1}{K + 1}\right)
            \xrightarrow{K \to \infty} 0.
        \end{equation*}
\end{proof}

Now, let us prove the convergence rates.

\begin{lemma}
    If $f$ is Lipschitz on $\ran(r)$, then
    \begin{equation*}
        D_{f, \nu}(\mu) - D_{f, \nu}^{(K)}
        \in \begin{cases}
            O(K^{-\frac{\alpha}{2}}), & \text{if } r \in C^{0, \alpha}([0, 1]), \\
            O(K^{-1}), & \text{if } r \in \C^2([0, 1]), \text{or if } r \text{ is Lipschitz}, f \in \C^2([0, \infty)).
        \end{cases}
    \end{equation*}
\end{lemma}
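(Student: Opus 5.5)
The plan is to sharpen the two-term estimate \eqref{eq:binning} from the proof of uniform convergence, using the representation $c_n^{(K)} = \E[r(U_{n,K})]$ with $U_{n,K}\sim\Beta(n+1,K-n+1)$. The facts I will use are $|m_{n,K}-\frac nK|\le \frac1{K+2}$ from \eqref{eq:deviation_of_mean} and $\V[U_{n,K}]\le \frac1{4(K+3)}$ from \eqref{eq:V_U_nK}. In the first two cases I bound $\|r_K-r\|_\infty$ and conclude with $D_{f,\nu}(\mu)-D^{(K)}_{f,\nu}(\mu)\le L_f\|r_K-r\|_\infty$ as in the main-text proof. The third case needs a different, integrated argument.

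\emph{Hölder case.} For $r\in\C^{0,\alpha}$ with constant $H$, the binning term in \eqref{eq:binning} is $\omega_r(\frac1{K+1})\le H(K+1)^{-\alpha}$. For the other term I use Jensen's inequality for the concave map $t\mapsto t^\alpha$:
\[
\Bigl|\E[r(U_{n,K})]-r\bigl(\tfrac nK\bigr)\Bigr|\le H\,\E\Bigl[\bigl|U_{n,K}-\tfrac nK\bigr|^\alpha\Bigr]\le H\Bigl(\tfrac1{K+2}+\tfrac1{2\sqrt{K+3}}\Bigr)^{\alpha}.
\]
This gives $\|r_K-r\|_\infty=O(K^{-\alpha/2})$. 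It is exactly the concave-modulus argument already in the proof above, with $\omega_r(t)=Ht^\alpha$.

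\emph{Case $r\in\C^2([0,1])$.} Here $r$ is Lipschitz, so the binning term is $O(K^{-1})$. I split the remaining term through the mean $m_{n,K}$. A second-order Taylor expansion of $r$ around $m_{n,K}$ kills the linear term after taking expectations, so
\[
\bigl|\E[r(U_{n,K})]-r(m_{n,K})\bigr|\le \tfrac12\|r''\|_\infty\,\V[U_{n,K}]\le \frac{\|r''\|_\infty}{8(K+3)}.
\]
In addition, $|r(m_{n,K})-r(\frac nK)|\le \|r'\|_\infty/(K+2)$. Hence $\|r_K-r\|_\infty=O(K^{-1})$.

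\emph{Case $r$ Lipschitz and $f\in\C^2$.} This is the step I expect to be the main obstacle. For merely Lipschitz $r$ the sup-norm route only yields $\E|U_{n,K}-\frac nK|\sim K^{-1/2}$, so I abandon the uniform bound and use the curvature of $f$ instead. As in the first part of the proof of \cref{thm:monotone-K}, the identity $\frac1{K+1}\sum_n (K+1)b_{n,K}\equiv1$ lets me write
\[
D_{f,\nu}(\mu)-D^{(K)}_{f,\nu}(\mu)=\frac1{K+1}\sum_{n=0}^K\Bigl(\E\bigl[f(r(U_{n,K}))\bigr]-f\bigl(\E[r(U_{n,K})]\bigr)\Bigr).
\]
Each summand is a Jensen gap, which is nonnegative. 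Since $\ran(r)$ is compact, $f''$ is bounded there by some $C$, and expanding $f$ to second order around $\E[r(U_{n,K})]$ bounds the gap by $\frac C2\V[r(U_{n,K})]$. Using $\V[r(U)]\le \E[(r(U)-r(m_{n,K}))^2]\le \mathrm{Lip}(r)^2\,\V[U_{n,K}]\le \frac{\mathrm{Lip}(r)^2}{4(K+3)}$, I get
\[
D_{f,\nu}(\mu)-D^{(K)}_{f,\nu}(\mu)\le \frac{C\,\mathrm{Lip}(r)^2}{8(K+3)}=O(K^{-1}).
\]
The only care needed is that the Taylor remainder is evaluated on the convex hull of $\ran(r)$, which is a compact interval containing every $c_n^{(K)}$, so the bound by $C$ applies.
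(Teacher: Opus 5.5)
Your proof is correct and follows the paper's argument essentially step for step: the concave-modulus Jensen bound with $\omega_r(t)=Ht^\alpha$ in the H\"older case, a second-order Taylor expansion of $r$ in the $\C^2$ case, and the Jensen-gap bound $\frac{C}{2}\V[r(U_{n,K})]\le \frac{C}{2}\mathrm{Lip}(r)^2\,\V[U_{n,K}]$ when $r$ is Lipschitz and $f\in\C^2$. The differences are minor refinements. You expand $r$ around the Beta mean $m_{n,K}$ instead of $n/K$, which kills the linear term. You also bound $f''$ only on the compact interval $\ran(r)$ rather than assuming a global bound $M_f$ as the paper does, which is slightly more careful and a bit more general.
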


\begin{proof}
    \begin{enumerate}

        \item 
        If $r$ is $H_r$-Hölder continuous with exponent $\alpha \in (0, 1]$, then $\omega_r(\delta) = H_r \delta^{\alpha}$, so the bound becomes
        \begin{equation*}
            D_{f, \nu}(\mu) - D_{f, \nu}^{(K)}(\mu)
            \le L_f H_r \left( \left(\frac{1}{K + 2} + \frac{1}{2\sqrt{K+1}} \right)^{\alpha}
            + \left(\frac{1}{K + 1}\right)^{\alpha}\right).
        \end{equation*}

        \item 
        If $r \in \C^2([0, 1])$.
        By Taylor's theorem, there exists a $\xi$ between $U_{n, K}$ and $\frac{n}{K}$ such that
        \begin{equation*}
            r(U_{n, K})
            = r\left(\frac{n}{K}\right) + r'\left(\frac{n}{K}\right) \left( U_{n, K} -\frac{n}{K}\right)
            + \frac{1}{2} r''(\xi) \left(U_{n, K} - \frac{n}{K}\right)^2,
        \end{equation*}
        so that
        \begin{equation*}
            \left| c_K(n) - r\left(\frac{n}{K}\right) \right|
            \le \| r' \|_{\infty} \left| m_{n, K} - \frac{n}{K}\right| + \frac{1}{2} \| r'' \|_{\infty} \E\left[\left(U_{n, K} -\frac{n}{K}\right)^2\right],
        \end{equation*}
        so by the estimates from \cref{eq:deviation_of_mean,eq:V_U_nK} we obtain again
        \begin{equation*}
            \max_{n \in [K]} \left| c_K(n) - r\left(\frac{n}{K}\right) \right|
            \le \frac{\| r' \|_{\infty}}{K + 2}
            + \frac{\| r'' \|_{\infty}}{8(K + 3)} + \frac{\| r'' \|_{\infty}}{2 (K + 2)^2}.
        \end{equation*}
        Combined with $\omega_r\left(\frac{1}{K + 1}\right) \le \frac{\| r' \|_{\infty}}{K + 1}$, we obtain $\| r_K - r \|_{\infty} \in O(K^{-1})$.

        \item 
        Now assume that $r$ is $L_r$–Lipschitz and $f\in C^2$ with $\|f''\|_\infty\le M_f$.
        For brevity, set
        \[
          Z_{n,K} \coloneqq r(U_{n,K}), \qquad m_{n,K} \coloneqq \E[Z_{n,K}].
        \]
        By Taylor's theorem with remainder, for every $x\in\R$ there exists
        $\xi_x$ on the line segment between $x$ and $m_{n,K}$ such that
        \[
          f(x) = f(m_{n,K}) + f'(m_{n,K})(x-m_{n,K})
                 + \frac{1}{2} f''(\xi_x)(x-m_{n,K})^2.
        \]
        Hence,
        \[
          \big| f(x) - f(m_{n,K}) - f'(m_{n,K})(x-m_{n,K}) \big|
          \le \frac{M_f}{2} (x-m_{n,K})^2.
        \]
        Applying this Taylor expansion with $x = Z_{n,K}$, and using that
        $m_{n,K} = \E[Z_{n,K}]$ together with the bound $\lvert f''\rvert \le M_f$ on the remainder, we obtain
        \begin{align*}
          J_{n,K}
          &= \E\big[f(Z_{n,K})\big] - f\big(\E[Z_{n,K}]\big)
           = \E\big[f(Z_{n,K}) - f(m_{n,K})\big] \\
          &= \E\Big[
                f(Z_{n,K}) - f(m_{n,K})
                - f'(m_{n,K})(Z_{n,K}-m_{n,K})
                + f'(m_{n,K})(Z_{n,K}-m_{n,K})
              \Big] \\
          &= \E\big[
                f(Z_{n,K}) - f(m_{n,K})
                - f'(m_{n,K})(Z_{n,K}-m_{n,K})
              \big]
             \;+\; f'(m_{n,K})\,\E[Z_{n,K}-m_{n,K}] \\
          &= \E\big[
                f(Z_{n,K}) - f(m_{n,K})
                - f'(m_{n,K})(Z_{n,K}-m_{n,K})
              \big]\\
          &\le \E\Big[
                \big|f(Z_{n,K}) - f(m_{n,K}) - f'(m_{n,K})(Z_{n,K}-m_{n,K})\big|
              \Big] \\
          &\le \frac{M_f}{2}\,\E\big[(Z_{n,K}-m_{n,K})^2\big]
           = \frac{M_f}{2}\,\Var(Z_{n,K}).
        \end{align*}
        Using the characterization
        \[
          \V(Z_{n,K}) = \inf_{c\in\R} \E\big[(Z_{n,K}-c)^2\big],
        \]
        and taking $c=r(\E[U_{n,K}])$, we get
        \begin{align*}
          \V(Z_{n,K})
          &= \inf_{c\in\R} \E\big[(r(U_{n,K})-c)^2\big] \\
          &\le \E\big[(r(U_{n,K})-r(\E[U_{n,K}]))^2\big]
           \le L_r^2\,\E\big[(U_{n,K}-\E[U_{n,K}])^2\big]
           = L_r^2\,\V(U_{n,K}).
        \end{align*}
        Combining the two bounds yields
        \[
          J_{n,K} \le \frac{M_f}{2}\,L_r^2\,\V(U_{n,K}).
        \]
        As in \eqref{eq:V_U_nK}, for $U_{n,K}\sim\mathrm{Beta}(n+1,K-n+1)$ we have $\V(U_{n,K}) < \frac{1}{K}$. Hence
        \[
          J_{n,K} \le \frac{M_f}{2}\,L_r^2\,\frac{1}{K}
          = \frac{M_f L_r^2}{2K}.
        \]
        Finally, averaging over $n$,
        \begin{align*}
          0
          &\le D_{f, \nu}(\mu)-D^{(K)}_{f, \nu}(\mu)
           = \frac{1}{K+1}\sum_{n=0}^K J_{n,K}
           \le \frac{1}{K+1}\sum_{n=0}^K \frac{M_f L_r^2}{2K}
           = \frac{M_f L_r^2}{2K}.\qedhere
        \end{align*}
    \end{enumerate}
\end{proof}

\subsection{Proof of \cref{thm:sliced-rank-f-convergence}} \label{subsec:proof_sliced-rank-f-convergence}

\SlicedRankFConvergence*

\begin{proof}
    The convexity follows as in the proof of \cref{thm:monotone-K}, because the pushforward is a linear operation.
    First, note that the absolute continuity $\mu \ll \nu$ implies that the projected measures satisfy $\mu_s \ll \nu_s$ for all $s \in \mathbb{S}^{d - 1}$. By \cref{thm:convergence_K_to_Infty} we have the direction-wise convergence $\lim_{K \to \infty} D^{(K)}_{f, \nu_s}(\mu_s) = D_{f, \nu_s}(\mu_s)$.
    By \cref{thm:monotone-K} and by applying the data–processing inequality for $f$–divergences to the measurable map
    $x\mapsto s^\top x$, we have $D^{(K)}_{f, \nu_s}(\mu_s)\le D_{f, \nu_s}(\mu_s) \le D_{f, \nu}(\mu)$.
    Integrating over the sphere yields \eqref{eq:SR_inequalities}.
    Furthermore, we can thus apply the dominated convergence theorem and obtain
    \[
    \lim_{K\to\infty} \int_{\mathbb S^{d-1}} D^{(K)}_{f, \nu_s}(\mu_s) \d \sigma(s)
    = \int_{\mathbb S^{d-1}} \lim_{K\to\infty} D^{(K)}_{f, \nu_s}(\mu_s) \d \sigma(s)
    = \int_{\mathbb S^{d-1}} D_{f, \nu_s}(\mu_s) \d \sigma(s).\qedhere
    \]
\end{proof}

\subsection{Proof of \Cref{thm:finite_sample_univariate}}

\begin{lemma}[Sampling $\mu$ only]
\label{lem:mu-sampling}
Fix $\nu\in\P(\R)$ and $K\in\N$.
Let $\mu\in\P(\R)$ and let $\hat\mu_N$ be the empirical measure based on
$N$ i.i.d.\ samples from $\mu$.
Then, denoting $\mathbf Q_{\mu \mid \nu}^{(K)} \coloneqq (Q_{\mu \mid \nu}^{(K)}(n))_{n \in [K]}$, we have
\[
  \E\Bigl[
    \bigl\|\mathbf Q_{\hat\mu_N\mid\nu}^{(K)}
            - \mathbf Q_{\mu\mid\nu}^{(K)}\bigr\|_1
  \Bigr]
  \le \frac{K+1}{2\sqrt N}.
\]
\end{lemma}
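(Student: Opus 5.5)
By \eqref{eq:Q^K_expression}, applied with $\hat\mu_N$ in place of $\mu$, each coordinate of the empirical rank histogram is a sample mean:
\[
  Q^{(K)}_{\hat\mu_N\mid\nu}(n) = \frac{1}{N}\sum_{i=1}^N b_{n,K}(R_\nu(X_i)), \qquad n\in[K].
\]
The first representation in \eqref{eq:Q^K_expression} is just the law of total probability over the draw $Y\sim\mu$. It does not use $\mu\ll\nu$, so it holds for the atomic measure $\hat\mu_N$ as well. If $\nu$ has atoms, I replace $b_{n,K}(R_\nu(y))$ by the conditional probability $p_n(y)\coloneqq\PP(\#\{j:\tilde Y_j\le y\}=n)$. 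This is a binomial PMF in $n$ with success probability $R_\nu(y)$, so the argument below is unchanged. The population quantity is $Q^{(K)}_{\mu\mid\nu}(n)=\E[p_n(X_1)]$. Hence each coordinate difference $\Delta_n\coloneqq Q^{(K)}_{\hat\mu_N\mid\nu}(n)-Q^{(K)}_{\mu\mid\nu}(n)$ is a centered average of $N$ i.i.d.\ random variables $p_n(X_i)\in[0,1]$.

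Next I bound the $\ell^1$ norm coordinatewise. By Jensen's inequality, $\E|\Delta_n|\le\sqrt{\E[\Delta_n^2]}=\sqrt{\Var(p_n(X_1))/N}$. A $[0,1]$-valued variable has variance at most $1/4$ (Popoviciu), so $\E|\Delta_n|\le \frac{1}{2\sqrt N}$. Summing over the $K+1$ indices $n\in[K]$ gives
\[
  \E\bigl\|\mathbf Q^{(K)}_{\hat\mu_N\mid\nu}-\mathbf Q^{(K)}_{\mu\mid\nu}\bigr\|_1\le\frac{K+1}{2\sqrt N},
\]
which is exactly the claim. A sharper route applies Cauchy--Schwarz to the whole vector and uses $\sum_n p_n\le 1$, which gives the better bound $\sqrt{(K+1)/N}$. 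The stated bound does not need it.

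I expect no real obstacle. The only point needing care is that the Bernstein representation must be justified for the empirical measure $\hat\mu_N$, where absolute continuity fails. This is handled by using the conditional binomial law directly rather than the second (quantile-domain) form of \eqref{eq:Q^K_expression}.
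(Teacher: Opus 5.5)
Your proof is correct and is essentially the paper's argument. Both write each coordinate as a centered mean of $N$ i.i.d.\ $[0,1]$-valued terms $p_n(X_i)$ (the paper's $\phi_n$), bound the variance by $1/(4N)$, apply $\E|\Delta_n|\le\sqrt{\E[\Delta_n^2]}$, and sum over the $K+1$ bins. Your side remark that $\sum_n \Var(p_n(X_1))\le 1$ yields $\sqrt{(K+1)/N}$ is also valid; note that it improves on the stated bound only when $K\ge 3$.
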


\begin{proof}
Fix $n\in[K]$ and define, for $x\in\R$,
\begin{equation} \label{eq:phi_n}
  \phi_n(x)
  \coloneqq
  \PP_{\tilde Y_1,\dots,\tilde Y_K\sim\nu}
  \Bigl(\#\{j \in \{1, \ldots, K \}: \tilde Y_j \le x\} = n\Bigr).
\end{equation}
Then, $0\le \phi_n(x)\le 1$ for all $x$.
By \cref{eq:Q^K_expression},
\[
  Q^{(K)}_{\mu\mid\nu}(n)
  = \PP\bigl(A^{(K)}_{\mu\mid\nu}=n\bigr)
  = \E_{\mu}[\phi_n]
\qquad
\text{and}
\qquad 
  Q^{(K)}_{\hat\mu_N\mid\nu}(n)
  = \E_{\hat\mu_N}[\phi_n]
  = \int_{\R} \phi_n(x)\d\hat\mu_N(x)
  = \frac{1}{N}\sum_{i=1}^N \phi_n(X_i).
\]
Thus, for each $n$,
\[
  Q^{(K)}_{\hat\mu_N\mid\nu}(n) - Q^{(K)}_{\mu\mid\nu}(n)
  = \frac{1}{N}\sum_{i=1}^N
      \bigl(\phi_n(X_i)-\E_\mu[\phi_n]\bigr).
\]
Since $X_1,\dots,X_N \stackrel{\text{i.i.d.}}{\sim} \mu$ and
$\phi_n \colon \R\to[0,1]$ is a fixed deterministic function, the random variables
$\phi_n(X_1),\dots,\phi_n(X_N)$ are also i.i.d.\ and take values in $[0,1]$ with $\Var(\phi_n(X_1)) \le \frac{1}{4}$.
Set
\[
  Z_i \coloneqq \phi_n(X_i)-\E_\mu[\phi_n], \qquad i \in \{ 1,\dots,N \}.
\]
Then $(Z_i)_{i=1}^N$ are i.i.d.\ and centered, so by independence
\begin{align*}
  \Var\bigl(Q^{(K)}_{\hat\mu_N\mid\nu}(n)
            - Q^{(K)}_{\mu\mid\nu}(n)\bigr)
  = \Var\!\left(\frac{1}{N}\sum_{i=1}^N Z_i\right) 
  = \frac{1}{N^2}\sum_{i=1}^N \Var(Z_i) 
  = \frac{1}{N}\Var(Z_1)
    = \frac{1}{N}\Var(\phi_n(X_1)) 
  \le \frac{1}{4N}.
\end{align*}
Applying Cauchy–Schwarz yields
\[
  \E\Bigl[
    \bigl\|\mathbf Q_{\hat\mu_N\mid\nu}^{(K)}
            - \mathbf Q_{\mu\mid\nu}^{(K)}\bigr\|_1
  \Bigr]
  \le \sum_{n=0}^K
         \E\bigl[
           \bigl|Q^{(K)}_{\hat\mu_N\mid\nu}(n)
                 - Q^{(K)}_{\mu\mid\nu}(n)\bigr|
         \bigr]
    \le \sum_{n = 0}^{K} \sqrt{\Var\bigl(Q^{(K)}_{\hat\mu_N\mid\nu}(n)
                     - Q^{(K)}_{\mu\mid\nu}(n)\bigr)}
  \le \frac{K+1}{2\sqrt N}.\qedhere
\]
\end{proof}

\begin{lemma}[Sampling $\nu$ only]
\label{lem:nu-sampling}
Fix $K\in\N$.
Let $\nu\in\P(\R)$ and let $\hat\nu_M$ be the empirical measure based on
$M$ i.i.d.\ samples from $\nu$.
Then for any $\mu \in \P(\R)$,
\[
  \E\Bigl[
    \bigl\|\mathbf Q_{\mu\mid\hat\nu_M}
            - \mathbf Q_{\mu\mid\nu}\bigr\|_1
  \Bigr]
  \le K\sqrt{\frac{2\pi}{M}},
\]
where $\mathbf Q$ is defined in \cref{lem:mu-sampling}.
In particular, the bound holds uniformly in $\mu$.
\end{lemma}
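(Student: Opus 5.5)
Fix $\mu$ and write $Q^{(K)}_{\mu\mid\nu}(n)=\int_{\R}\psi_n(R_\nu(y))\,\d\mu(y)$, where $\psi_n\coloneqq b_{n,K}$ is the binomial PMF of $\Bin(K,u)$ evaluated at $n$. The same conditioning argument as in \cref{lemma:properties_rank_statistics} works for any reference measure, atoms included, because only the count of $\tilde Y_j\le y$ enters. For the empirical reference $\hat\nu_M$ this gives $Q^{(K)}_{\mu\mid\hat\nu_M}(n)=\int b_{n,K}(\hat R_M(y))\,\d\mu(y)$, where $\hat R_M\coloneqq R_{\hat\nu_M}$ is the empirical CDF. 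Hence
\[
  \bigl\|\mathbf Q_{\mu\mid\hat\nu_M}-\mathbf Q_{\mu\mid\nu}\bigr\|_1
  \le \int_{\R}\sum_{n=0}^K\bigl|b_{n,K}(\hat R_M(y))-b_{n,K}(R_\nu(y))\bigr|\,\d\mu(y).
\]
This reduces the problem to bounding the $\ell^1$-distance between two binomial laws $\Bin(K,p)$ and $\Bin(K,q)$ in terms of $|p-q|$, followed by a uniform bound on $|\hat R_M-R_\nu|$.

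\textbf{Step 1 (binomial comparison).} I will show $\sum_n|b_{n,K}(p)-b_{n,K}(q)|\le 2K|p-q|$. This follows by coupling: realize $\Bin(K,p)$ and $\Bin(K,q)$ as sums of $K$ Bernoulli variables coupled coordinatewise through a common uniform. The two counts then differ only if some coordinate differs, which happens with probability at most $K|p-q|$ by a union bound, so $2\,\TV\le 2K|p-q|$. Alternatively, one can differentiate in $u$: $\frac{\d}{\d u}b_{n,K}=K(b_{n-1,K-1}-b_{n,K-1})$, so $\sum_n|b_{n,K}'|\le 2K$. Either route gives
\[
  \bigl\|\mathbf Q_{\mu\mid\hat\nu_M}-\mathbf Q_{\mu\mid\nu}\bigr\|_1\le 2K\,\|\hat R_M-R_\nu\|_\infty .
\]

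\textbf{Step 2 (DKW).} Take expectations and use the Dvoretzky--Kiefer--Wolfowitz inequality with Massart's constant \citep{massart1990tight}, $\PP(\|\hat R_M-R_\nu\|_\infty>t)\le 2e^{-2Mt^2}$, valid for arbitrary $\nu$. Integrating the tail gives
\[
  \E\|\hat R_M-R_\nu\|_\infty\le\int_0^\infty 2e^{-2Mt^2}\,\d t=\sqrt{\frac{\pi}{2M}} .
\]
Multiplying by $2K$ yields $2K\sqrt{\pi/(2M)}=K\sqrt{2\pi/M}$, exactly the claimed constant. The bound depends on $\mu$ only through the pointwise inequality integrated against a probability measure, so it holds uniformly in $\mu$.

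\textbf{Main obstacle.} The delicate point is the representation formula when $\nu$ (or $\hat\nu_M$) has atoms, in particular the convention $\tilde Y_j\le y$ versus $<$. I will verify directly from \cref{def:rank-stat} that, conditionally on $Y=y$, the count is $\Bin(K,\PP(\tilde Y\le y))=\Bin(K,R(y))$ with the right-continuous CDF. This is exactly what makes DKW, which is stated for right-continuous CDFs, applicable without assuming atomlessness. Measurability of the sup over $y$ is standard, since $\hat R_M$ has finitely many jumps.
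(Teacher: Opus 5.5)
Your proposal is correct and follows the paper's proof essentially step for step. Both use the representation $Q^{(K)}_{\mu\mid\hat\nu_M}(n)=\int_{\R} b_{n,K}(R_{\hat\nu_M}(y))\d\mu(y)$, the coupling bound $\sum_{n=0}^K|b_{n,K}(t)-b_{n,K}(s)|\le 2K|s-t|$, the reduction to $2K\sup_{x}|R_{\hat\nu_M}(x)-R_\nu(x)|$, and the integrated Massart--DKW tail, giving $2K\sqrt{\pi/(2M)}=K\sqrt{2\pi/M}$. Your explicit check that the conditional-binomial representation holds for reference measures with atoms (via the right-continuous CDF) is a useful addition. The paper applies \eqref{eq:Q^K_expression} to $\hat\nu_M$ even though \cref{lemma:properties_rank_statistics} is stated only for atomless $\nu$.
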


\begin{proof}
Fix $\mu\in\P(\R)$ and $K\in\N$.
Let $R_\nu$ and $R_{\hat\nu_M}$ denote the CDFs of $\nu$ and $\hat\nu_M$.
By \eqref{eq:Q^K_expression},
\[
  Q^{(K)}_{\mu\mid\nu}(n)
  = \E_\mu\bigl[b_{n, K}(R_\nu)\bigr],
  \qquad
  Q^{(K)}_{\mu\mid\hat\nu_M}(n)
  = \E_\mu\bigl[b_{n, K}(R_{\hat\nu_M})\bigr].
\]

By a simple coupling argument (alternatively combine \citep[Eq.~(3)]{K2026} with \citep[Eq.~(4)]{K2025}),
\[
  \sum_{n=0}^K |b_{n, K}(t)-b_{n, K}(s)|
  = 2\,d_{\TV}\bigl(\mathrm{Bin}(K,s),\mathrm{Bin}(K,t)\bigr)
  \le 2 K\,|s-t|.
\]
Now fix a realization of $\hat\nu_M$.
For any $x\in\R$ we have
\[
  \sum_{n=0}^K
    \bigl|b_{n, K}(R_{\hat\nu_M}(x)) - b_{n, K}(R_\nu(x))\bigr|
  \le 2K\,\bigl|R_{\hat\nu_M}(x) - R_\nu(x)\bigr|.
\]
Hence,
\begin{align*}
  \bigl\|\mathbf Q_{\mu\mid\hat\nu_M}
           - \mathbf Q_{\mu\mid\nu}\bigr\|_1
  &= \sum_{n=0}^K
       \Bigl|
         \E_\mu\bigl[b_{n, K}(R_{\hat\nu_M})\bigr]
         - \E_\mu\bigl[b_{n, K}(R_\nu)\bigr]
       \Bigr|
 \le
  \E_\mu\left[
    \sum_{n=0}^K
      \bigl|b_{n, K}(R_{\hat\nu_M})
            - b_{n, K}(R_\nu)\bigr|
  \right] \\
  & \le 2K\,\E_\mu\bigl[
         \bigl|R_{\hat\nu_M} - R_\nu\bigr|
       \bigr]
  \le 2K\,
       \sup_{x\in\R}
         \bigl|R_{\hat\nu_M}(x) - R_\nu(x)\bigr|.
\end{align*}
Taking expectations over $\hat\nu_M$ and applying the Dvoretzky–Kiefer–Wolfowitz (DKW) inequality \citep{massart1990tight},
\[
  \PP\Bigl(
    \sup_{x\in\R} |R_{\hat\nu_M}(x) - R_\nu(x)| > t
  \Bigr)
  \le 2 e^{-2Mt^2},\qquad t>0,
\]
we obtain
\begin{align*}
  \E\Bigl[
    \sup_{x\in\R} |R_{\hat\nu_M}(x) - R_\nu(x)|
  \Bigr]
  &= \int_0^\infty
       \PP\Bigl(
         \sup_{x\in\R} |R_{\hat\nu_M}(x) - R_\nu(x)| > t
       \Bigr) \d t
  \le \int_0^\infty 2e^{-2Mt^2} \d t
   = \sqrt{\frac{\pi}{2M}}.
\end{align*}
Combining the two displays yields
\[
  \E\Bigl[
    \bigl\|\mathbf Q_{\mu\mid\hat\nu_M}
            - \mathbf Q_{\mu\mid\nu}\bigr\|_1
  \Bigr]
  \le 2K\,\sqrt{\frac{\pi}{2M}}.
\]
The bound is uniform in $\mu$ because $\mu$ does not appear on the
right-hand side.
\end{proof}

\subsection{Proof of the univariate finite sample complexity bound \cref{thm:finite_sample_univariate}} \label{subsec:proof_univ_finite_sample_complexity_bound}
\FiniteSampleUnivariate*

\begin{proof}[Proof of \Cref{thm:finite_sample_univariate}]
For $K\in\N$ we have
\begin{equation} \label{eq:lipschitz-in-Q}
\begin{aligned}
  \bigl|D^{(K)}_{f, \hat{\nu}_M}(\hat\mu_N)
        -D^{(K)}_{f, \nu}(\mu)\bigr| 
  & \le
    \frac{1}{K+1}\sum_{n=0}^K
      \left|
        f\Bigl((K+1)Q^{(K)}_{\hat\mu_N\mid\hat\nu_M}(n)\Bigr)
        - f\Bigl((K+1)Q^{(K)}_{\mu\mid\nu}(n)\Bigr)
      \right| \\
      & \le
  \frac{1}{K+1}\sum_{n=0}^K
    L_f (K+1)\,
    \bigl|
      Q^{(K)}_{\hat\mu_N\mid\hat\nu_M}(n)
      - Q^{(K)}_{\mu\mid\nu}(n)
    \bigr| 
    =
  L_f \sum_{n=0}^K
    \bigl|
      Q^{(K)}_{\hat\mu_N\mid\hat\nu_M}(n)
      - Q^{(K)}_{\mu\mid\nu}(n)
    \bigr|,
\end{aligned}
\end{equation}
since for each $n \in [K]$, the quantities
$(K+1)Q^{(K)}_{\hat\mu_N\mid\hat\nu_M}(n)$ and
$(K+1)Q^{(K)}_{\mu\mid\nu}(n)$ lie in the interval $[0,K+1]$, and $f$ is $L_f$–Lipschitz on $[0,K+1]$.
Taking expectations, we obtain
\begin{equation}
  \label{eq:FS_main_LipQ_long2}
  \E\Bigl[
    \bigl|D^{(K)}_{f}(\hat\mu_N \mid \hat\nu_M)
          -D^{(K)}_{f}(\mu \mid \nu)\bigr|
  \Bigr]
  \le
  L_f\,\E\Bigl[
    \sum_{n=0}^K
      \bigl|
        Q^{(K)}_{\hat\mu_N\mid\hat\nu_M}(n)
        - Q^{(K)}_{\mu\mid\nu}(n)
      \bigr|
  \Bigr].
\end{equation}
We now decompose the rank-PMF error into two contributions:
the error due to sampling $\mu$ and the error due to sampling $\nu$.
By the triangle inequality,
\begin{equation} \label{eq:FS_triangle_long2}
  \sum_{n=0}^K
    \bigl|
      Q^{(K)}_{\hat\mu_N\mid\hat\nu_M}(n)
      - Q^{(K)}_{\mu\mid\nu}(n)
    \bigr|
  \le
  \sum_{n=0}^K
    \bigl|
      Q^{(K)}_{\hat\mu_N\mid\hat\nu_M}(n)
      - Q^{(K)}_{\hat\mu_N\mid\nu}(n)
    \bigr|
  +
  \sum_{n=0}^K
    \bigl|
      Q^{(K)}_{\hat\mu_N\mid\nu}(n)
      - Q^{(K)}_{\mu\mid\nu}(n)
    \bigr|. \nonumber
\end{equation}
Taking expectations and applying
\Cref{lem:nu-sampling} (which holds uniformly with respect to the argument of $D_{f, \nu}^{(K)}$, so it
can be used with the random $\hat\mu_N$) and \Cref{lem:mu-sampling}, we obtain
\begin{equation} \label{eq:FS_bound_Q_simple_long2}
\begin{aligned}
  \E\Bigl[
    \sum_{n=0}^K
      \bigl|
        Q^{(K)}_{\hat\mu_N\mid\hat\nu_M}(n)
        - Q^{(K)}_{\mu\mid\nu}(n)
      \bigr|
  \Bigr] 
  & \le
  \E\Bigl[
    \sum_{n=0}^K
      \bigl|
        Q^{(K)}_{\hat\mu_N\mid\hat\nu_M}(n)
        - Q^{(K)}_{\hat\mu_N\mid\nu}(n)
      \bigr|
  \Bigr]
  +
  \E\Bigl[
    \sum_{n=0}^K
      \bigl|
        Q^{(K)}_{\hat\mu_N\mid\nu}(n)
        - Q^{(K)}_{\mu\mid\nu}(n)
      \bigr|
  \Bigr] \\
  &\qquad\le
  2K\,\sqrt{\frac{\pi}{2M}}
  + \frac{K+1}{2\sqrt N}
  \le
  (K+1)\sqrt{2\pi}
  \left(
    \frac{1}{\sqrt N} + \frac{1}{\sqrt M}
  \right),
\end{aligned}
\end{equation}
where we used $\frac{1}{2} \le \sqrt{2\pi}$ to simplify the constants.
\end{proof}

\subsection{Proof of the concentration bound} \label{subsec:proof_concentration_bound}

\begin{proof}
Let $K \in \N$ and define the functional
\[
  F(X_1,\dots,X_N,Y_1,\dots,Y_M)
  \coloneqq D^{(K)}_{f, \hat{\nu}_M}(\hat\mu_N),
\]
where the empirical measures $\hat{\mu}_N$ and $\hat{\nu}_M$ are given by \eqref{eq:empirical_measure}.
We will apply \textit{McDiarmid's bounded differences inequality} (see \citep[Sec~6]{boucheron2013concentration}) to $F$.
We first quantify how much $F$ can change when we replace one observation $X_i$ in the sample from $\mu$, while keeping all other data points fixed. Consider two datasets that differ only in the $i$-th sample from $\mu$:
\[
  (X_1,\dots,X_i,\dots,X_N,Y_1,\dots,Y_M),
  \quad
  (X_1,\dots,X'_i,\dots,X_N,Y_1,\dots,Y_M),
\]
and fix $Y_1,\dots,Y_M$.
Thus $\hat\nu_M$ is the same in both cases, while the empirical measure of
$\mu$ changes from $\hat\mu_N$ to
\[
  \hat\mu_N'
  = \hat\mu_N - \frac{1}{N}\delta_{X_i} + \frac{1}{N}\delta_{X'_i}.
\]
Fix $\hat\nu_M$ and $K\in\N$.
For each $n\in[K]$ define $\phi_n$ by \eqref{eq:phi_n}.
Again,
\[
  Q^{(K)}_{\hat\mu_N\mid\hat\nu_M}(n)
  = \frac{1}{N}\sum_{k=1}^N \phi_n(X_k),
  \qquad
  Q^{(K)}_{\hat\mu_N'\mid\hat\nu_M}(n)
  = \frac{1}{N}\sum_{k=1}^N \phi_n(X'_k),
\]
where $X'_k = X_k$ for $k\neq i$ and $X'_i = X'_i$.
Hence
\[
  Q^{(K)}_{\hat\mu_N'\mid\hat\nu_M}(n)
  - Q^{(K)}_{\hat\mu_N\mid\hat\nu_M}(n)
  = \frac{1}{N} \bigl(\phi_n(X_i') - \phi_n(X_i)\bigr).
\]
Since $0\le \phi_n \le 1$, we obtain
\[
  \bigl\|
    Q^{(K)}_{\hat\mu_N'\mid\hat\nu_M}
    - Q^{(K)}_{\hat\mu_N\mid\hat\nu_M}
  \bigr\|_1
  \le \frac{K+1}{N}.
\]

Applying \Cref{eq:lipschitz-in-Q} with $(\hat\mu_N' \mid \hat\nu_M)$ instead of $(\mu \mid \nu)$,
we obtain
\begin{align*}
  \bigl|
    F(X_1,&\dots,X_i',\dots,X_N,Y_1,\dots,Y_M)
    - F(X_1,\dots,X_i,\dots,X_N,Y_1,\dots,Y_M)
  \bigr|
   \\&=\bigl|
       D^{(K)}_{f, \hat{\nu}_M}(\hat\mu_N')
       - D^{(K)}_{f, \hat{\nu}_M}(\hat\mu_N)
     \bigr| \le
     L_f\,
     \bigl\|
       Q^{(K)}_{\hat\mu_N'\mid\hat\nu_M}
       - Q^{(K)}_{\hat\mu_N\mid\hat\nu_M}
     \bigr\|_1 
     \le L_f\,\frac{K+1}{N}.
\end{align*}

Now, consider two datasets that differ only in $Y_j$:
\[
  (X_1,\dots,X_N,Y_1,\dots,Y_j,\dots,Y_M),
  \quad
  (X_1,\dots,X_N,Y_1,\dots,Y'_j,\dots,Y_M),
\]
so that $\hat\mu_N$ is fixed, while the empirical measure of $\nu$ changes from
$\hat\nu_M$ to $\hat\nu_M'
  = \hat\nu_M - \frac{1}{M}\delta_{Y_j} + \frac{1}{M}\delta_{Y'_j}$.
Changing a single atom in an empirical measure of size $M$ changes the CDF by at
most $1/M$, that is,
\[
  \sup_{x\in\R} |R_{\hat\nu_M'}(x) - R_{\hat\nu_M}(x)|
  \le \frac{1}{M}.
\]
As in the proof of \Cref{lem:nu-sampling},
we obtain 
\begin{align*}
  \bigl\|
    Q^{(K)}_{\hat\mu_N\mid\hat\nu_M'}
    - Q^{(K)}_{\hat\mu_N\mid\hat\nu_M}
  \bigr\|_1
  &= \sum_{n=0}^K
       \Bigl|
         \E_{\hat\mu_N}\bigl[b_{n, K}(R_{\hat\nu_M'})\bigr]
         - \E_{\hat\mu_N}\bigl[b_{n, K}(R_{\hat\nu_M})\bigr]
       \Bigr|
    \le \E_{\hat\mu_N}\left[
         \sum_{n=0}^K
           \bigl|
             b_{n, K}(R_{\hat\nu_M'})
             - b_{n, K}(R_{\hat\nu_M})
           \bigr|
       \right] \\
  &\le 2K\,\E_{\hat\mu_N}\bigl[
         |R_{\hat\nu_M'} - R_{\hat\nu_M}|
       \bigr]
    \le 2K\,
       \sup_{x\in\R}
         |R_{\hat\nu_M'}(x) - R_{\hat\nu_M}(x)|
   \le  \frac{2K}{M}.
\end{align*}
Applying \Cref{eq:lipschitz-in-Q} with $(\hat\mu_N,\hat\nu_M')$ instead of $(\mu \mid \nu)$, we obtain
\begin{align*}
  &\bigl|
    F(X_1,\dots,X_N,Y_1,\dots,Y_j',\dots,Y_M)
    - F(X_1,\dots,X_N,Y_1,\dots,Y_j,\dots,Y_M)
  \bigr| \\
  &\qquad=
  \bigl|
    D^{(K)}_{f, \hat{\nu}_M'}(\hat\mu_N)
    - D^{(K)}_{f, \hat{\nu}_M}(\hat\mu_N)
  \bigr|
  \le
  L_f\,
  \bigl\|
    Q^{(K)}_{\hat\mu_N\mid\hat\nu_M'}
    - Q^{(K)}_{\hat\mu_N\mid\hat\nu_M}
  \bigr\|_1 
  \le L_f\,\frac{2K}{M}.
\end{align*}
We have shown that $F$ satisfies the bounded differences condition with
\[
  c_i = L_f\frac{K+1}{N} \quad (i=1,\dots,N),
  \qquad
  c_{N+j} = L_f\frac{2K}{M} \quad (j=1,\dots,M).
\]
Hence, using $K^2\le (K+1)^2$,
\begin{align*}
  \sum_{i=1}^{N+M} c_i^2
  &= \sum_{i=1}^N \Bigl(L_f\frac{K+1}{N}\Bigr)^2
     + \sum_{j=1}^M \Bigl(L_f\frac{2K}{M}\Bigr)^2
   \le 
   4L_f^2 (K+1)^2\Bigl(\frac{1}{N} + \frac{1}{M}\Bigr),
\end{align*}

By McDiarmid's inequality (Theorem~6.2 in \citet{boucheron2013concentration}),
for any $t>0$,
\[
  \PP\Bigl(
    |F - \E[F]| \ge t
  \Bigr)
  \le
  2\exp\!\left(
    -\frac{2t^2}{\sum_{i=1}^{N+M} c_i^2}
  \right)
  \le
  2\exp\!\left(
    -\frac{t^2}{2L_f^2 (K+1)^2\left(\frac{1}{N}+\frac{1}{M}\right)}
  \right).
\]
Let $\delta>0$ and choose
\[
  t
  = L_f (K+1)
    \sqrt{
      2\log(2/\delta)
      \left(\frac{1}{N}+\frac{1}{M}\right)
    }.
\]
Then
\[
  \PP\Bigl(
    |F - \E[F]| \ge t
  \Bigr)
  \le \delta.
\]
Since $F =
D^{(K)}_{f, \hat{\nu}_M}(\hat\mu_N)$, this is exactly the desired bound:
with probability at least $1-\delta$,
\[
  \bigl|
    D^{(K)}_{f, \hat{\nu}_M}(\hat{\mu}_N)
    - \E\bigl[D^{(K)}_{f, \hat{\nu}_M}(\hat{\mu}_N)\bigr]
  \bigr|
  \le
  L_f (K+1)
  \sqrt{
    2\log(2/\delta)
    \left(\frac{1}{N}+\frac{1}{M}\right)
  }.\qedhere
\]
\end{proof}

\subsection{Proof of the asymptotic normality \cref{thm:asymptotic_normality_sliced}} \label{subsec:proof_asymptotic_normality}
\AsymptoticNormalitySliced*

\begin{proof}
\begin{enumerate}
\item
For each direction $s \in \S^{d-1}$ and $n \in [K]$, define
$\phi_{n,s} : \R^d \to [0,1]$ as the probability that a sample $x$ has rank $n$
among $K$ draws $\tilde Y_1,\dots,\tilde Y_K \stackrel{\text{i.i.d.}}{\sim} \nu$ when both are projected along $s$, i.e.,
\[
  \phi_{n,s}(x)
  \coloneqq
  \PP\Bigl(
    \#\{j \in \{1, \ldots, K \}: s^\top \tilde Y_j \le s^\top x\} = n
  \Bigr).
\]
As in the univariate case (compare \eqref{eq:Q^K_expression}) one checks that for any $\mu \in \P(\R^d)$
and $s\in\S^{d-1}$,
\begin{equation} \label{eq:identities}
  Q^{(K)}_{\mu_s\mid\nu_s}(n)
  = \E_{\mu}[\phi_{n,s}]
  \qquad
  \text{and} \qquad Q^{(K)}_{(\hat\mu_{N})_{s}\mid\nu_s}(n)
  = \frac{1}{N}\sum_{i=1}^N \phi_{n,s}(X_i),
\end{equation}
where the empirical version $\hat\mu_N$ is defined in \eqref{eq:empirical_measure}.
Define the Hilbert space $\H
  \coloneqq L^2\bigl(\S^{d-1}\times[K],\sigma\otimes U_K\bigr)$ with inner product
\[
  \langle h,g\rangle_{\H}
  \coloneqq
  \int_{\S^{d-1}}
  \frac{1}{K+1}\sum_{n=0}^K h(s,n)\,g(s,n) \d\sigma(s), \qquad g, h \in \H.
\]
For $x\in\R^d$, define the random element $\Phi(x)\in\H$ by $\Phi(x)(s,n) \coloneqq \phi_{n,s}(x)$.
Then $\|\Phi(x)\|_{\H} \le 1$ because $0\le\phi_{n,s}(x)\le 1$.
Let
\[
  T \coloneqq \E_\mu[\Phi] \in \H,
  \qquad
  T_N \coloneqq \frac{1}{N}\sum_{i=1}^N \Phi(X_i) \in \H.
\]
By the identities \eqref{eq:identities}, we have
\[
  T(s,n)
  = \E_\mu[\phi_{n,s}]
  = Q^{(K)}_{\mu_s\mid\nu_s}(n),
  \qquad
  T_N(s,n)
  = Q^{(K)}_{(\hat\mu_{N})_{s}\mid\nu_s}(n).
\]
The random elements $\left(\Phi(X_k)\right)_{k \in \N}$ are i.i.d.\ in $\H$
with $\E_{\mu}[\|\Phi\|_{\H}^2] \le 1$.
Hence, the standard central limit theorem in separable Hilbert spaces
applies, and we obtain
\[
  \sqrt{N}\bigl(T_N - T\bigr)
  \;\xrightarrow{d}\;
  \mathcal G
  \quad\text{in }\H,
\]
where $\mathcal G$ is a mean-zero Gaussian element in $\H$ with
covariance operator determined by $\Phi$.

\item 
We now write the rank-statistic $f$-divergence and its approximation by samples as $\Psi(T)$ and $\Psi(T_N)$ for some $\Psi$ and use the delta-method.

Indeed, for
\[
  \Psi \colon \H\to\R, \qquad
  h \mapsto
  \int_{\S^{d-1}}
    \frac{1}{K+1}\sum_{n=0}^K
      f\bigl((K+1)h(s,n)\bigr) \d\sigma(s)
\]
we have
\[
  D^{(K)}_{f,\nu}(\mu)
  = \Psi(T)
  = \int_{\S^{d-1}}
      \frac{1}{K+1}\sum_{n=0}^K
        f\Bigl((K+1)\,Q^{(K)}_{\mu_s\mid\nu_s}(n)\Bigr)
     \d\sigma(s)
     \qquad \text{and} \qquad
     D_{f, \nu}^{(K)}(\hat\mu_N) = \Psi(T_N).
\]
We now show that $\Psi$ is differentiable.
Since 
$f \in \C^1([0,K+1])$ and $f'$ is Lipschitz,
$\Psi$ is Fréchet differentiable at $T$ with Lipschitz continuous derivative
\[
  D\Psi(T) \colon \H\to\R, \qquad
  h \mapsto \int_{\S^{d-1}}
      \sum_{n=0}^K
        f'\bigl((K+1)T(s,n)\bigr)\,h(s,n) \d\sigma(s).
\]
In particular, at $T(s,n) = Q^{(K)}_{\mu_s\mid\nu_s}(n)$ this becomes
\[
  D\Psi(T)[h]
  = \int_{\S^{d-1}}
      \sum_{n=0}^K
        f'\bigl((K+1)Q^{(K)}_{\mu_s\mid\nu_s}(n)\bigr)\,
        h(s,n) \d\sigma(s),
\]
which defines a bounded linear functional on $\H$.

By the Hilbert-space delta method \citep[Thm.~3.9.4]{vandervaart_wellner_1996}, the combination of the CLT for $T_N$ and the Fréchet
differentiability of $\Psi$ at $T$ implies
\[
  \sqrt{N}\bigl(\Psi(T_N) - \Psi(T)\bigr)
  \xrightarrow{d}
  \mathcal N\bigl(0,\tau_{K}^2\bigr),
\]
with asymptotic variance
\[
  \tau_{K}^2
  = \Var\bigl(D\Psi(T)[\Phi(X_1)-T]\bigr)
  \ge 0.
\]
We have $\tau_K^2 > 0$ if $\int_{\S^{d-1}} \sum_{n=0}^K f'\bigl((K+1)Q^{(K)}_{\mu_s\mid\nu_s}(n)\bigr) b_{n, K}(R_{\nu_s}(s^{\top} x)) \d\sigma(s)$ is not constant on $\supp(\mu)$.

Recalling that $\Psi(T) = D_{f, \nu}^{(K)}(\mu)$ and $\Psi(T_N) =  D_{f, \nu}^{(K)}(\hat\mu_N)$, we obtain
the desired conclusion.\qedhere
\end{enumerate}
\end{proof}

\subsection{Proof of \cref{thm:asymptotic-choice-K}} \label{subsec:proof_asymptotic_choice_K}

The following theorem shows that when choosing $K$ in dependence of $N$, it should grow faster than $\sqrt N$ to remove the Bernstein approximation bias, but slower than $N$ so that the nonlinear plug-in remainder remains negligible
\AsymptoticChoiceK*

\begin{proof}
We decompose
\[
  \widetilde D_N^{(K_N)} - D_{f,\nu}(\mu)
  =
  \Bigl(\widetilde D_N^{(K_N)} - D_{f,\nu}^{(K_N)}(\mu)\Bigr)
  +
  \Bigl(D_{f,\nu}^{(K_N)}(\mu)-D_{f,\nu}(\mu)\Bigr).
\]
By the $C^2$ case of \cref{thm:convergence_K_to_Infty},
\[
  \left|D_{f,\nu}^{(K)}(\mu)-D_{f,\nu}(\mu)\right|
  = O(K^{-1}),
\]
so the second term is $o(N^{-1/2})$ whenever $\sqrt N/K_N\to0$.

It remains to identify the sampling fluctuation. For $n\in[K]$, set, as before,
\[
  q_n^{(K)} \coloneqq Q_{\mu\mid\nu}^{(K)}(n),
  \qquad
  c_n^{(K)} \coloneqq (K+1)q_n^{(K)},
  \qquad
  \hat q_{N,n}^{(K)}
  \coloneqq
  \frac1N\sum_{i=1}^N b_{n,K}(R_\nu(X_i)).
\]
The same Taylor expansion as in the proof of \cref{thm:asymptotic_normality_sliced}, now in the finite-dimensional vector $(q_0,\ldots,q_K)$, gives
\[
  D^{(K)}_{f,\nu}(\hat\mu_N)-D_{f,\nu}^{(K)}(\mu)
  =
  \sum_{n=0}^K
  f'(c_n^{(K)})\bigl(\hat q_{N,n}^{(K)}-q_n^{(K)}\bigr)
  + R_{N,K}.
\]
Equivalently, the linear term is
\[
  \frac1N\sum_{i=1}^N
  \Bigl(
    h_K(R_\nu(X_i))
    - \E_\mu[h_K(R_\nu(X))]
  \Bigr),
  \qquad
  h_K(u) \coloneqq \sum_{n=0}^K f'(c_n^{(K)}) b_{n,K}(u).
\]
Since $f''$ is bounded, the Taylor remainder satisfies
\[
  |R_{N,K}|
  \le
  C(K+1)\sum_{n=0}^K
  \bigl(\hat q_{N,n}^{(K)}-q_n^{(K)}\bigr)^2.
\]
Moreover, using that $U=R_\nu(X)$ has density $r$ on $[0,1]$ and that $r$ is bounded,
\[
  \E\sum_{n=0}^K
  \bigl(\hat q_{N,n}^{(K)}-q_n^{(K)}\bigr)^2
  =
  \frac1N\sum_{n=0}^K
  \Var_\mu\!\left(b_{n,K}(R_\nu(X))\right)
  \le
  \frac{\|r\|_\infty}{N}
  \int_0^1\sum_{n=0}^K b_{n,K}(u)^2\,\d u
  = O\left(\frac1{N\sqrt K}\right).
\]
Thus
\[
  \sqrt N\,R_{N,K_N}
  = O_p\!\left(\sqrt{\frac{K_N}{N}}\right)
  = o_p(1)
\]
whenever $K_N/N\to0$.

Finally, the $C^2$ regularity of $r$ and $f$ implies
\[
  h_K \longrightarrow f'\circ r
  \qquad\text{uniformly on }[0,1].
\]
Hence the triangular-array central limit theorem yields
\[
  \frac1{\sqrt N}\sum_{i=1}^N
  \Bigl(
    h_{K_N}(R_\nu(X_i))
    - \E_\mu[h_{K_N}(R_\nu(X))]
  \Bigr)
  \xrightarrow[N\to\infty]{d}
  \mathcal N\left(
    0,
    \Var_\mu\!\left(f'(r(R_\nu(X)))\right)
  \right).\qedhere
\]
\end{proof}

\section{Experiments} \label{sec:experiments_Appendix}
\subsection{Neural vs.\ rank-statistic divergence estimation across dimensions} \label{app:Neural vs. rank-statistic divergence estimation across dimensions}

This appendix provides the full experimental details for the benchmark in
\cref{sec:experiments} comparing the proposed rank-statistic estimator to the neural
KL-divergence estimator of \citet{sreekumar2021non} on the truncated-Gaussian vs.\ uniform setup.

\vspace{0.25em}
\paragraph{Distributions and supports.}
For each dimension $d\in\{2,5,10\}$, the target distribution $\mu$ is the standard Gaussian
$\mathcal N(0,I_d)$ truncated (and renormalized) to an axis-aligned box $X$, and the reference
distribution is the uniform measure $\nu=\mathrm{Unif}(X)$. Concretely,
\[
X_{2}=[0.1,2]\times[-1,0],
\qquad
X_{5}=[0.1,2]\times[-1,0]\times[2,3]\times[-2,-1.5]\times[-1,1],
\]
and for $d=10$ we use the product support $X_{10}=X_{5}\times X_{5}$.
Sampling from $\nu$ is done by drawing each coordinate independently and uniformly over the
corresponding interval. Sampling from $\mu$ is done by accept-reject: draw from
$\mathcal N(0,I_d)$ until the sample falls in $X$.

\vspace{0.25em}
\paragraph{Sample sizes and repetitions.}
For each $n \in 10^{4}\cdot\{1,2,4,8,16,32,64,128,256,512\}$ we draw $n$ i.i.d.\ samples from $\mu$ and $n$ i.i.d.\ samples from $\nu$, and repeat the whole
procedure over $R=10$ independent random seeds. Figure~\ref{fig:kl_vs_n_d2_d5_d10} reports the
mean and $\pm1$ standard deviation bands over these runs (for both estimators).

\vspace{0.25em}
\paragraph{Analytic KL reference.}
In this benchmark, the ``ground-truth'' $\mathrm{KL}(\mu\|\nu)$ shown as a dashed line in
Figure~\ref{fig:kl_vs_n_d2_d5_d10} is computed analytically (no additional Monte Carlo layer).
Let $\tilde\mu=\mathcal N(0,I_d)$ denote the untruncated Gaussian density and let $X=\prod_{j=1}^d[a_j,b_j]$
be the truncation box. The truncated density is
$p_\mu(x)=\tilde p(x)\mathbf 1\{x\in X\}/Z$, where $Z=\int_X \tilde p(x)\,dx$ is the truncation mass, and
$p_\nu(x)=1/\mathrm{Vol}(X)$ for $x\in X$. Using
$\mathrm{KL}(\mu\|\nu)=\mathbb E_\mu[\log p_\mu(X)-\log p_\nu(X)]$ and separability of the standard Gaussian,
$Z$ factorizes as $Z=\prod_{j=1}^d(\Phi(b_j)-\Phi(a_j))$, where $\Phi$ denotes the CDF of the standard normal
distribution (and $\phi$ its density).
The remaining expectation reduces to the sum of one-dimensional truncated moments $\mathbb E_\mu[X_j^2]$
(available in closed form via $\phi$ and $\Phi$). This yields an exact value for each $d$ and box $X$; the
implementation follows these standard identities.

\vspace{0.25em}
\paragraph{Neural baseline (full protocol).}
We compare against the neural KL-divergence estimator of \citet{sreekumar2021non} and follow their protocol
exactly. For each sample size $n$, the network width is set to $k=\lceil n^{1/5}\rceil$ and the model is
trained for $200$ epochs with Adam, using learning rate $10^{-2}$ and a single decay to $10^{-3}$ after
$100$ epochs. Minibatches have size $10^{-3}n$. Results are averaged over the same $R=10$ seeds used for the
rank-based estimator. (All remaining hyperparameters and architectural details are as in
\citet{sreekumar2021non}.)

\vspace{0.25em}
\paragraph{Rank-statistic estimator settings.}

 In contrast, the rank-statistic estimator requires no iterative optimization: once the samples are fixed, the estimate is fully determined by the rank resolution $K$. In this benchmark, we exploit the product structure of the distributions and use the axis-corrected estimator described in the main text: we compute the one-dimensional degree-$K$ rank-statistic KL terms along each coordinate axis and sum them over coordinates. Thus, no random projection parameter $L$ is used for this experiment. Unless stated otherwise, all randomness in the rank estimator comes solely from the sampled data.

\subsection{Univariate empirical convergence and the influence of resolution $K$}\label{app:One-Dimensional Approximation Accuracy}

This appendix provides additional implementation details for the one-dimensional benchmarks reported in
\cref{subsec:onedim-exp}. For each configuration we draw $n_\mu$ i.i.d.\
samples from $\mu$ and $n_\nu$ i.i.d.\ samples from $\nu$, construct the Bernstein rank histogram
$Q^{(K)}_{\mu\mid\nu}$, and compute the corresponding discrete $f$-divergence
$D^{(K)}_{f,\nu}(\mu)$. We repeat each setting for $R=10$ seeds and report mean$\pm$std.

\vspace{0.25em}
\paragraph{Distributions and supports.}
We study four representative mismatch families that collectively capture shifts in location and scale, departures from unimodality, and tail mismatch:
\begin{itemize}
  \item Location shift (Gaussian mean).
  We set $\mu=\mathcal N(0,1)$ and $\nu=\mathcal N(\Delta,1)$ with $\Delta\in\{0,0.5,1,2\}$, and report \emph{JS}, \emph{KL}, and \emph{TV}.

  \item Scale change (Gaussian variance).
  We take $\mu=\mathcal N(0,1)$ and $\nu=\mathcal N(0,\sigma)$ for $\sigma\in\{1,1.2,1.5,2\}$, and report \emph{KL}, squared Hellinger (\emph{Hell$^2$}), and \emph{TV}.

  \item Multimodality (mixture vs.\ unimodal).
  To probe sensitivity to multiple modes, we compare the symmetric mixture
  $\mu=\tfrac12\mathcal N(-\Delta,1)+\tfrac12\mathcal N(+\Delta,1)$ against $\nu=\mathcal N(0,1)$ over the same $\Delta$ values, and report \emph{JS}, \emph{KL}, and \emph{TV}.

  \item Tail mismatch (heavy-tailed vs.\ Gaussian).
  Finally, we compare $\mu=\mathrm{Laplace}(0,1)$ to $\nu=\mathcal N(0,1)$ and report \emph{JS}, \emph{KL}, and \emph{TV}.
\end{itemize}
Unless stated otherwise we use $n_\mu=n_\nu=10{,}000$, and we evaluate $K\in\{32,64,128,256,512\}$.

\vspace{0.25em}
\paragraph{Reference.}
Whenever a closed form is available, we use it as ground truth (in particular, Gaussian-Gaussian KL, squared Hellinger, and TV for Gaussian mean/scale changes).
For Jensen-Shannon (JS), we compute a high-accuracy reference from
\[
\mathrm{JS}(\mu,\nu)
=\tfrac12\,\E_{X\sim \mu}\!\Big[\log\frac{2\,p_\mu(X)}{p_\mu(X)+p_\nu(X)}\Big]
+\tfrac12\,\E_{Y\sim \nu}\!\Big[\log\frac{2\,p_\nu(Y)}{p_\mu(Y)+p_\nu(Y)}\Big],
\]
and evaluate the resulting one-dimensional expectations by numerical quadrature with a tight tolerance.
All evaluations are carried out in the log domain to avoid numerical issues in the tails.
In the mixture-vs-Gaussian setting, the mixture density $p_\mu$ is computed exactly as the average of its two Gaussian components inside the same procedure.

For settings where our reference is not available in closed form in our implementation, namely, mixture-vs-Gaussian KL and TV, as well as Laplace$(0,1)$ vs.\ $\mathcal N(0,1)$ for JS/KL/TV, we compute a single high-sample Monte Carlo reference once and reuse it across all $R$ runs.
Specifically, we draw $n_{\mathrm{ref}}=10^{7}$ i.i.d.\ samples from each distribution and plug them into the corresponding expectation identity (JS as above, and analogously for KL/TV).

For each configuration and each $K$, we report the estimate $\widehat D^{(K)}_{f,\nu}(\mu)$ (mean$\pm$std over $R$ seeds)
and the ratio $\widehat D^{(K)}_{f,\nu}(\mu)/D_{f,\nu}(\mu)$ to summarize the finite-$K$ approximation gap.

\vspace{0.25em}
\paragraph{Results}

Taken together, Table~\ref{tab:1d_results_refined} and Figures~\ref{fig:all_convergence}-\ref{fig:convergence_shared_legend} show that increasing the rank resolution $K$ systematically closes the finite-$K$ gap: the estimate/reference ratio moves toward $1$ across all mismatch families and all reported divergences (JS, KL, Hell$^2$, and the added TV), with near-unbiased behavior already for moderate $K$ in the smooth Gaussian cases (mean/scale shifts), while multimodality and tail mismatch require larger $K$ to reduce bias and typically exhibit larger finite-sample variability. In particular, TV is already accurate at moderate $K$ in the Gaussian settings (after the $\tfrac12$ conversion from the $\ell_1$ form returned by our implementation), whereas KL under heavy tails remains substantially underestimated even at the largest $K$ considered, reflecting the increased difficulty of capturing tail contributions with finite rank resolution. At fixed $K=256$, the estimator remains stable across a range of shift magnitudes and scales, as summarized in Figure~\ref{fig:robustness_combined}. The $K$-sweeps in Figure~\ref{fig:all_convergence} and, especially, the log-log plot in Figure~\ref{fig:convergence_shared_legend} further clarify the convergence hierarchy by visualizing the absolute ratio error $|1-\text{Ratio}|$ versus $K$: the Gaussian scale-change case decays the fastest (tracking a near-optimal $\mathcal{O}(K^{-1})$ slope), the Gaussian mean-shift case converges more slowly (consistent with reduced tail regularity), and the Laplace-vs-Gaussian heavy-tail mismatch improves the slowest, remaining closest to the shallowest guide slope. This separation aligns with the qualitative ordering suggested by the regularity-based rate discussion in the main text and motivates the larger-$K$ settings used in the tail-mismatch experiments.

\sisetup{
  separate-uncertainty = true,
  uncertainty-separator = {\,\pm\,},
  table-format = 1.3(3), 
  detect-all
}

\begin{table}[!tbhp]
\centering
\footnotesize
\renewcommand{\arraystretch}{1.1}
\setlength{\tabcolsep}{4pt}

\begin{tabular}{@{} l l l S S S S S @{}}
\toprule
& & & \multicolumn{5}{c}{Ratio $D_{f,\nu}^{(K)}(\mu)$ for $K=$} \\
\cmidrule(lr){4-8}
Family & Scen. & Param. & {32} & {64} & {128} & {256} & {512} \\
\midrule

\multirow{9}{*}{Mean shift}
  & JS & $\Delta=0.5$ & 0.933(40) & 0.968(41) & 0.989(42) & 1.003(42) & 1.013(42) \\
  & JS & $\Delta=1.0$ & 0.928(33) & 0.961(34) & 0.981(35) & 0.992(35) & 0.999(35) \\
  & JS & $\Delta=2.0$ & 0.930(08) & 0.962(08) & 0.981(09) & 0.991(09) & 0.997(09) \\
\addlinespace[4pt]
  & KL & $\Delta=0.5$ & 0.946(60) & 0.987(63) & 1.013(65) & 1.030(66) & 1.044(68) \\
  & KL & $\Delta=1.0$ & 0.880(24) & 0.924(25) & 0.952(26) & 0.969(27) & 0.980(27) \\
  & KL & $\Delta=2.0$ & 0.775(10) & 0.844(12) & 0.895(13) & 0.933(15) & 0.959(16) \\
\addlinespace[4pt]
  & TV & $\Delta=0.5$ & 0.979(26) & 0.991(27) & 0.998(27) & 1.001(27) & 1.003(27) \\
  & TV & $\Delta=1.0$ & 0.974(11) & 0.985(11) & 0.991(11) & 0.994(11) & 0.996(11) \\
  & TV & $\Delta=2.0$ & 0.977(02) & 0.989(02) & 0.996(02) & 0.999(03) & 1.001(03) \\

\midrule

\multirow{9}{*}{Scale change}
  & KL & $\sigma=1.2$ & 0.743(63) & 0.841(70) & 0.908(72) & 0.954(72) & 0.991(72) \\
  & KL & $\sigma=1.5$ & 0.779(27) & 0.872(29) & 0.927(30) & 0.958(31) & 0.977(31) \\
  & KL & $\sigma=2.0$ & 0.803(18) & 0.898(20) & 0.953(21) & 0.982(22) & 0.998(22) \\
\addlinespace[4pt]
  & Hell$^2$ & $\sigma=1.2$ & 0.741(77) & 0.853(89) & 0.931(98) & 0.986(106) & 1.029(111) \\
  & Hell$^2$ & $\sigma=1.5$ & 0.735(35) & 0.842(39) & 0.908(41) & 0.948(42) & 0.973(42) \\
  & Hell$^2$ & $\sigma=2.0$ & 0.744(14) & 0.858(14) & 0.926(14) & 0.965(13) & 0.987(12) \\
\addlinespace[4pt]
  & TV & $\sigma=1.2$ & 0.934(33) & 0.970(36) & 0.990(39) & 1.001(40) & 1.008(41) \\
  & TV & $\sigma=1.5$ & 0.907(14) & 0.948(15) & 0.970(17) & 0.982(18) & 0.989(18) \\
  & TV & $\sigma=2.0$ & 0.898(09) & 0.947(10) & 0.974(10) & 0.988(10) & 0.995(10) \\

\midrule

\multirow{9}{*}{Multimodal}
  & JS & $\Delta=0.5$ & 0.746(157) & 0.849(176) & 0.926(189) & 0.994(196) & 1.068(199) \\
  & JS & $\Delta=1.0$ & 0.769(38) & 0.849(40) & 0.898(41) & 0.929(41) & 0.948(42) \\
  & JS & $\Delta=2.0$ & 0.846(19) & 0.912(20) & 0.950(21) & 0.972(21) & 0.985(21) \\
\addlinespace[4pt]
  & KL & $\Delta=0.5$ & 0.742(125) & 0.853(147) & 0.936(165) & 1.000(178) & 1.054(187) \\
  & KL & $\Delta=1.0$ & 0.766(32) & 0.864(36) & 0.930(39) & 0.971(41) & 0.998(41) \\
  & KL & $\Delta=2.0$ & 0.669(09) & 0.765(10) & 0.837(12) & 0.889(13) & 0.926(13) \\
\addlinespace[4pt]
  & TV & $\Delta=0.5$ & 0.935(66) & 0.969(70) & 0.990(74) & 1.005(76) & 1.017(78) \\
  & TV & $\Delta=1.0$ & 0.947(23) & 0.975(23) & 0.990(23) & 0.998(23) & 1.003(24) \\
  & TV & $\Delta=2.0$ & 0.942(05) & 0.969(05) & 0.982(05) & 0.989(06) & 0.993(06) \\

\midrule

\multirow{3}{*}{Heavy tails}
  & JS & -- & 0.488(28) & 0.651(36) & 0.778(41) & 0.869(44) & 0.933(46) \\
  & KL & -- & 0.210(12) & 0.299(17) & 0.383(22) & 0.458(25) & 0.524(28) \\
  & TV & -- & 0.824(14) & 0.907(17) & 0.955(20) & 0.981(23) & 0.996(25) \\
\bottomrule
\end{tabular}
\vspace{0.5mm}
\caption{\small 1D divergence estimation benchmarks (10 runs). We report the ratio estimate/reference (mean $\pm$ std) for various $K$ values.}
\label{tab:1d_results_refined}
\end{table}

\begin{figure*}[!tbhp]
\centering

\definecolor{niceblue}{RGB}{31, 119, 180}
\definecolor{nicegray}{RGB}{80, 80, 80}

\pgfplotstableread[col sep=space]{
K    mean      std
32   0.010691  0.000619
64   0.014269  0.000787
128  0.017049  0.000896
256  0.019057  0.000967
512  0.020464  0.001019
}\DataLaplace

\pgfplotstableread[col sep=space]{
K    mean      std
32   1.549813  0.019456
64   1.687851  0.023195
128  1.790635  0.026537
256  1.865320  0.029412
512  1.918886  0.031819
}\DataMean

\pgfplotstableread[col sep=space]{
K    mean      std
32   0.255448  0.005768
64   0.285834  0.006438
128  0.303039  0.006787
256  0.312404  0.006937
512  0.317541  0.006995
}\DataScale

\pgfplotsset{
    myaxisstyle/.style={
        width=\linewidth, 
        height=5.0cm,
        axis line style={-}, 
        axis x line=bottom, 
        axis y line=left,
        grid=major, 
        grid style={dashed, gray!30},
        label style={font=\footnotesize}, 
        tick label style={font=\scriptsize},
        xmin=0, xmax=550, 
        xtick={32, 128, 256, 512},
        xlabel={$K$ (Basis Functions)},
    }
}

\begin{subfigure}[t]{0.32\linewidth}
\centering
\begin{tikzpicture}
\begin{axis}[
  myaxisstyle,
  ymin=0.0, ymax=0.03,
  ylabel={Estimated Divergence},
  legend columns=-1,
  legend entries={True Reference, Rank-$f$ Estimate},
  legend to name=CommonLegend,
  legend style={draw=none, fill=none, font=\footnotesize, /tikz/every even column/.append style={column sep=0.5cm}}
]
\addplot[nicegray, thick, dashed, domain=0:550] {0.021869};
\addplot[name path=U1, draw=none, forget plot] table[x=K, y expr=\thisrow{mean}+\thisrow{std}] {\DataLaplace};
\addplot[name path=L1, draw=none, forget plot] table[x=K, y expr=\thisrow{mean}-\thisrow{std}] {\DataLaplace};
\addplot[niceblue, fill opacity=0.2, forget plot] fill between[of=U1 and L1];
\addplot[niceblue, thick, mark=*, mark size=1.5pt, mark options={fill=white, draw=niceblue}] table[x=K, y=mean] {\DataLaplace};
\end{axis}
\end{tikzpicture}
\caption{JS: Laplace$(0,1)$ vs.\ $\mathcal{N}(0,1)$}
\label{subfig:js_laplace}
\end{subfigure}\hfill%
\begin{subfigure}[t]{0.32\linewidth}
\centering
\begin{tikzpicture}
\begin{axis}[
  myaxisstyle,
  ymin=1.4, ymax=2.1,
]
\addplot[nicegray, thick, dashed, domain=0:550] {2.0};
\addplot[name path=U2, draw=none, forget plot] table[x=K, y expr=\thisrow{mean}+\thisrow{std}] {\DataMean};
\addplot[name path=L2, draw=none, forget plot] table[x=K, y expr=\thisrow{mean}-\thisrow{std}] {\DataMean};
\addplot[niceblue, fill opacity=0.2] fill between[of=U2 and L2];
\addplot[niceblue, thick, mark=*, mark size=1.5pt, mark options={fill=white, draw=niceblue}] table[x=K, y=mean] {\DataMean};
\end{axis}
\end{tikzpicture}
\caption{KL: Mean Shift ($\Delta=2$)}
\label{subfig:kl_mean}
\end{subfigure}\hfill%
\begin{subfigure}[t]{0.32\linewidth}
\centering
\begin{tikzpicture}
\begin{axis}[
  myaxisstyle,
  ymin=0.24, ymax=0.34,
]
\addplot[nicegray, thick, dashed, domain=0:550] {0.318147};
\addplot[name path=U3, draw=none, forget plot] table[x=K, y expr=\thisrow{mean}+\thisrow{std}] {\DataScale};
\addplot[name path=L3, draw=none, forget plot] table[x=K, y expr=\thisrow{mean}-\thisrow{std}] {\DataScale};
\addplot[niceblue, fill opacity=0.2] fill between[of=U3 and L3];
\addplot[niceblue, thick, mark=*, mark size=1.5pt, mark options={fill=white, draw=niceblue}] table[x=K, y=mean] {\DataScale};
\end{axis}
\end{tikzpicture}
\caption{KL: Scale Change ($\sigma=2$)}
\label{subfig:kl_scale}
\end{subfigure}

\vspace{0.1cm}
\ref{CommonLegend}

\caption{Convergence of the Rank-$f$ estimator as the number of basis functions $K$ increases. The estimator (blue) consistently converges to the true analytic or Monte Carlo reference (dashed gray) across different divergence types and scenarios.}
\label{fig:all_convergence}
\end{figure*}

\begin{figure*}[!htbp]
\centering

\definecolor{niceblue}{RGB}{31, 119, 180}
\definecolor{niceorange}{RGB}{255, 127, 14}
\definecolor{nicegreen}{RGB}{44, 160, 44}
\definecolor{nicegray}{RGB}{80, 80, 80}

\pgfplotstableread[col sep=space]{
x    ratio   err
0.5  1.003   0.042
1.0  0.992   0.035
2.0  0.991   0.009
}\DataMeanJS

\pgfplotstableread[col sep=space]{
x    ratio   err
0.5  1.030   0.066
1.0  0.969   0.027
2.0  0.933   0.015
}\DataMeanKL

\pgfplotstableread[col sep=space]{
x    ratio   err
1.2  0.954   0.072
1.5  0.958   0.031
2.0  0.982   0.022
}\DataScaleKL

\pgfplotstableread[col sep=space]{
x    ratio   err
1.2  0.986   0.106
1.5  0.948   0.042
2.0  0.965   0.013
}\DataScaleHell

\pgfplotsset{
    ratioaxis/.style={
        width=\linewidth,
        height=5.5cm,
        axis line style={-},
        axis x line=bottom,
        axis y line=left,
        grid=major,
        grid style={dashed, gray!30},
        label style={font=\small},
        tick label style={font=\small},
        ymin=0.85, ymax=1.15,
        ylabel={Ratio (Estimate / Truth)},
        ytick={0.9, 0.95, 1.0, 1.05, 1.1},
        yticklabels={0.9, 0.95, 1.0, 1.05, 1.1},
        legend style={draw=none, fill=none, font=\small},
    }
}

\begin{subfigure}[t]{0.48\linewidth}
\centering
\begin{tikzpicture}
\begin{axis}[
    ratioaxis,
    xmin=0.4, xmax=2.1,
    xtick={0.5, 1.0, 2.0},
    xlabel={Mean Shift $\Delta$},
]

\addplot[nicegray, thick, dashed, domain=0:3] {1.0};
\addlegendentry{Ideal (1.0)}

\addplot[niceblue, thick, mark=*, mark options={fill=white, draw=niceblue, line width=1pt}, error bars/.cd, y dir=both, y explicit] 
table[x=x, y=ratio, y error=err] {\DataMeanJS};
\addlegendentry{Jensen–Shannon}

\addplot[niceorange, thick, mark=square*, mark options={fill=white, draw=niceorange, line width=1pt}, error bars/.cd, y dir=both, y explicit] 
table[x=x, y=ratio, y error=err] {\DataMeanKL};
\addlegendentry{KL Divergence}

\end{axis}
\end{tikzpicture}
\caption{Accuracy vs.\ Mean Shift $\Delta$ ($\mathcal{N}(0,1)$ vs $\mathcal{N}(\Delta,1)$)}
\label{fig:robustness_mean}
\end{subfigure}\hfill%
\begin{subfigure}[t]{0.48\linewidth}
\centering
\begin{tikzpicture}
\begin{axis}[
    ratioaxis,
    xmin=1.1, xmax=2.1,
    xtick={1.2, 1.5, 2.0},
    xlabel={Standard Deviation $\sigma$},
]

\addplot[nicegray, thick, dashed, domain=0:3] {1.0};
\addlegendentry{Ideal (1.0)}

\addplot[niceorange, thick, mark=square*, mark options={fill=white, draw=niceorange, line width=1pt}, error bars/.cd, y dir=both, y explicit] 
table[x=x, y=ratio, y error=err] {\DataScaleKL};
\addlegendentry{KL Divergence}

\addplot[nicegreen, thick, mark=triangle*, mark size=3pt, mark options={fill=white, draw=nicegreen, line width=1pt}, error bars/.cd, y dir=both, y explicit] 
table[x=x, y=ratio, y error=err] {\DataScaleHell};
\addlegendentry{Hellinger$^2$}

\end{axis}
\end{tikzpicture}
\caption{Accuracy vs.\ Scale $\sigma$ ($\mathcal{N}(0,1)$ vs $\mathcal{N}(0,\sigma)$)}
\label{fig:robustness_scale}
\end{subfigure}

\caption{Robustness of the Rank-$f$ estimator at fixed $K=256$. The plots show the ratio of the estimated divergence to the ground truth (closer to 1.0 is better) as the shift ($\Delta$) or scale ($\sigma$) increases.}
\label{fig:robustness_combined}
\end{figure*}

\begin{figure*}[!htbp]
\centering

\definecolor{niceblue}{RGB}{31, 119, 180}
\definecolor{niceorange}{RGB}{255, 127, 14}
\definecolor{nicegreen}{RGB}{44, 160, 44}
\definecolor{nicegray}{RGB}{80, 80, 80}

\pgfplotstableread[col sep=space]{
K    ratio
8    0.441
16   0.648
32   0.803
64   0.898
128  0.953
256  0.982
512  0.990
1024 1.008
}\DataConvScale

\pgfplotstableread[col sep=space]{
K    ratio
8    0.714
16   0.816
32   0.880
64   0.924
128  0.952
256  0.969
512  0.980
1024 0.988
}\DataConvMean

\pgfplotstableread[col sep=space]{
K    ratio
8    0.137
16   0.303
32   0.488
64   0.651
128  0.778
256  0.869
512  0.933
1024 0.979
}\DataConvHeavy

\begin{tikzpicture}
\begin{axis}[
    width=0.8\linewidth, 
    height=7.0cm,
    xmode=log,
    ymode=log,
    log basis x={2},
    log basis y={10},
    axis line style={-},
    axis x line=bottom,
    axis y line=left,
    grid=major,
    grid style={dashed, gray!30},
    label style={font=\small},
    tick label style={font=\footnotesize},
    xtick={8, 16, 32, 64, 128, 256, 512, 1024},
    xticklabels={8, 16, 32, 64, 128, 256, 512, 1024},
    xlabel={Number of Rank Bins ($K$)},
    ylabel={Estimation Error $|1 - \text{Ratio}|$},
    ymin=0.005, ymax=2.0,
    xmin=6, xmax=1600,
    legend style={
        at={(0.5,-0.25)},      
        anchor=north,          
        legend columns=3,      
        draw=none,             
        fill=none,             
        font=\footnotesize,
        column sep=1em,        
        /tikz/every even column/.append style={column sep=0.5cm} 
    },
]


\addplot[nicegray, thick, dashed, domain=8:1024] {10 * x^(-1)};
\addlegendentry{Theory $\mathcal{O}(K^{-1})$}

\addplot[nicegray, thick, dashdotted, domain=8:1024] {1.2 * x^(-0.65)};
\addlegendentry{Theory $\approx \mathcal{O}(K^{-0.65})$}

\addplot[nicegray, thick, dotted, domain=8:1024] {2.5 * x^(-0.5)};
\addlegendentry{Theory $\mathcal{O}(K^{-0.5})$}


\addplot[nicegreen, thick, mark=triangle*, mark size=2.5pt, mark options={fill=white, draw=nicegreen, line width=1pt}] 
    table[x=K, y expr={abs(1-\thisrow{ratio})}] {\DataConvScale};
\addlegendentry{Scale change ($\sigma=2$)}

\addplot[niceorange, thick, mark=square*, mark options={fill=white, draw=niceorange, line width=1pt}] 
    table[x=K, y expr={abs(1-\thisrow{ratio})}] {\DataConvMean};
\addlegendentry{Mean shift ($\Delta=1$)}

\addplot[niceblue, thick, mark=*, mark size=2.5pt, mark options={fill=white, draw=niceblue, line width=1pt}] 
    table[x=K, y expr={abs(1-\thisrow{ratio})}] {\DataConvHeavy};
\addlegendentry{Heavy tails (Laplace)}

\draw[->, gray, thin] (axis cs: 64, 0.06) -- (axis cs: 64, 0.09);

\end{axis}
\end{tikzpicture}

\caption{Convergence rate analysis across different tail behaviors. We compare the empirical error against theoretical slopes. The Scale Change (Green) is bounded and achieves the optimal $\mathcal{O}(K^{-1})$ rate. The Mean Shift (Orange) suffers from an unbounded density ratio at the tail, degrading convergence to $\approx \mathcal{O}(K^{-0.65})$. The Heavy Tail (Blue) case is the most difficult, bounded by the Hölder continuity limit of $\mathcal{O}(K^{-0.5})$.}
\label{fig:convergence_shared_legend}
\end{figure*}

\subsection{Sliced rank-statistic $f$-divergences: empirical convergence} \label{app:Multivariate benchmarks via random projections}

This appendix provides the full experimental protocol underlying \cref{sec:exp_sliced_nd}.
For each configuration (dimension $d$ and distribution pair $(\mu,\nu)$), we draw
$n_\mu$ i.i.d.\ samples from $\mu$ and $n_\nu$ i.i.d.\ samples from $\nu$, compute the sliced
rank-$f$ estimate using $L$ random directions and rank order $K$, and repeat the procedure for
$R$ independent runs, reporting mean$\pm$std. Unless stated otherwise we use
$K=64$, $L=128$, $n_\mu=n_\nu=10{,}000$, and $R=10$. Random directions are sampled uniformly on
$\mathbb{S}^{d-1}$.

\paragraph{Distribution families.}
We consider the following multivariate mismatch families (in dimension $d$):
(i) Gaussian mean shifts, $\mu=\mathcal N(0,I_d)$ and $\nu=\mathcal N(\Delta e_1,I_d)$ with
$\Delta\in\{0,0.5,1.0\}$; (ii) Gaussian scale changes, $\mu=\mathcal N(0,I_d)$ and
$\nu=\mathcal N(0,\sigma^2 I_d)$ with $\sigma\in\{1.0,1.2,1.5,2.0\}$; (iii) anisotropic Gaussian
covariance mismatch, $\mu=\mathcal N(0,I_d)$ and $\nu=\mathcal N(0,\mathrm{diag}(1,\dots,2))$;
and (iv) non-Gaussian comparisons, including factorized Laplace vs.\ Gaussian for JS,
Student-$t$ vs.\ Gaussian for KL, and a symmetric two-component Gaussian mixture vs.\ Gaussian for JS.

\paragraph{Scaling and reported ratios.}
Alongside the sliced estimate $D^{(K)}_{f,\nu}(\mu)$, we also report the simple normalization
$d\times D^{(K)}_{f,\nu}(\mu)$ and summarize accuracy via the ratio
$(d\times\text{sliced})/\text{true}$. This scaling is not intended to be exact in general; it is
a lightweight calibration that keeps ratios on a comparable scale across dimensions.

\paragraph{Reference (``ground-truth'') divergences.}
Reference divergences are computed as follows, depending on whether closed forms are available:
\begin{itemize}
  \item \textbf{Gaussian-Gaussian (analytic references).}
  For $\mu=\mathcal N(\mu_0,\Sigma_0)$ and $\nu=\mathcal N(\mu_1,\Sigma_1)$, the reference
  $\mathrm{KL}(\mu\|\nu)$ and squared Hellinger $\mathrm{H}^2(\mu,\nu)$ are evaluated in closed form.
  This avoids an additional numerical approximation layer, so discrepancies can be attributed to the
  rank-statistic estimator rather than to the reference computation.

  \item \textbf{Gaussian Jensen-Shannon (deterministic proxy).}
  The multivariate Jensen-Shannon divergence between two Gaussians does not admit a simple closed form
  because the mixture $\tfrac12\mu+\tfrac12\nu$ is not Gaussian. To keep the reference deterministic
  (and avoid injecting extra Monte Carlo variance), we approximate the mixture by a single Gaussian
  with matched mean and covariance (moment matching), and define the reference as
  $\tfrac12\mathrm{KL}(\mu\|M)+\tfrac12\mathrm{KL}(\nu\|M)$ for that matched Gaussian $M$.

  \item \textbf{Non-Gaussian pairs (Monte Carlo from known log-densities).}
  When at least one distribution is non-Gaussian (e.g., Laplace vs.\ Gaussian, Student-$t$ vs.\ Gaussian,
  or a Gaussian mixture vs.\ Gaussian), a closed-form multivariate reference is typically unavailable.
  In these cases, the reference divergence is computed by Monte Carlo from its expectation form using
  the \emph{known} log-densities (e.g., $\mathrm{KL}(\mu\|\nu)=\mathbb E_{\mu}[\log p_\mu(X)-\log p_\nu(X)]$,
  and similarly for $\mathrm{JS}$ via expectations under $\mu$ and $\nu$).
  All evaluations are performed in the log domain using log-sum-exp to ensure numerical stability.
\end{itemize}

Figure~\ref{fig:three_plots_wide} and Table~\ref{tab:final_cleaned_table} provide complementary views of the same phenomenon.
Figure~\ref{fig:three_plots_wide} isolates the Gaussian mean-shift setting and shows that the normalized quantity
$(d\times\text{sliced})/\text{true}$ remains close to the ideal value $1$ across dimensions, with moderate,
dimension-dependent deviations that are consistent with a mismatch between the sliced functional and the full
multivariate divergence (and with the crudeness of the $d\times(\cdot)$ calibration). Table~\ref{tab:final_cleaned_table}
summarizes this behavior across a broader set of distribution pairs: for Gaussian-Gaussian benchmarks (KL and
Hellinger$^2$) the ratios typically stay near $1$, while JS experiments that rely on the Gaussian-proxy reference and
non-Gaussian misspecification cases exhibit larger and more variable departures, especially as $d$ grows, highlighting
that the sliced estimator is best interpreted as a stable, sample-based surrogate whose absolute scale can drift from
the multivariate reference in challenging regimes. Finally, Figure~\ref{fig:side_by_side_convergence} illustrates how
increasing the rank resolution $K$ systematically reduces the finite-$K$ approximation gap in representative cases,
with ratios approaching the ideal baseline as $K$ increases.


\begin{table*}[!htbp]
\centering
\small 
\sisetup{
    separate-uncertainty = true,
    uncertainty-separator = {\,\pm\,},
    bracket-numbers = false, 
    detect-all,
    table-format = 1.3(3)
}
\renewcommand{\arraystretch}{1.2}
\setlength{\tabcolsep}{3pt} 

\begin{tabular}{ll c SSSSS}
\toprule
& & & \multicolumn{5}{c}{\textbf{Dimension} $d$} \\
\cmidrule(lr){4-8}
\textbf{Setting} & \textbf{$f$-divergence} & \textbf{Parameter} & {$2$} & {$5$} & {$10$} & {$20$} & {$50$} \\
\midrule

\multirow{6}{*}{Mean Shift} 
    & KL & $\Delta=0.5$ & 1.015(67) & 1.098(52) & 1.188(72) & 0.977(60) & 1.304(44) \\
    & KL & $\Delta=1.0$ & 0.991(30) & 1.087(32) & 1.170(31) & 0.899(28) & 1.113(42) \\
    \cmidrule(lr){2-8}
    & $\text{Hellinger}^{2}$ & $\Delta=0.5$ & 1.005(62) & 0.935(48) & 0.973(49) & 1.002(37) & 1.035(60) \\
    & $\text{Hellinger}^{2}$ & $\Delta=1.0$ & 0.972(33) & 0.931(22) & 0.965(48) & 0.943(42) & 0.851(21) \\
    \cmidrule(lr){2-8}
    & JS (Gaussian) & $\Delta=0.5$ & 1.007(76) & 0.899(40) & 0.887(31) & 1.006(60) & 1.234(38) \\
    & JS (Gaussian) & $\Delta=1.0$ & 0.979(43) & 0.892(13) & 0.895(35) & 0.951(41) & 1.144(31) \\

\midrule

\multirow{4}{*}{\shortstack[l]{Scale and \\ Covariance}} 
    & JS (Scale) & $\sigma=1.2$ & 0.891(52) & 0.936(48) & 0.913(30) & 0.851(16) & 0.846(16) \\
    & JS (Scale) & $\sigma=1.5$ & 0.856(27) & 0.856(15) & 0.858(13) & 0.797(07) & 0.803(06) \\
    & JS (Scale) & $\sigma=2.0$ & 0.776(14) & 0.775(10) & 0.785(05) & 0.734(05) & 0.734(03) \\
    \cmidrule(lr){2-8}
    & JS (Anisotropic) & {---} & 0.837(37) & 0.761(40) & 0.791(32) & 0.726(15) & 0.736(11) \\

\midrule

\multirow{3}{*}{\shortstack[l]{Model \\ Misspecification}} 
    & JS (Laplace vs. Gaussian) & {---} & 0.767(15) & 0.861(16) & 1.052(17) & 1.227(15) & 1.968(14) \\
    & KL ($t$-dist vs. Gaussian) & $df=3$ & 0.164(23) & 0.218(17) & 0.267(15) & 0.268(10) & 0.318(12) \\
    & JS (GMM vs. Gaussian) & $\Delta=1.0$ & 0.923(06) & 0.759(09) & 0.753(07) & 1.012(09) & 2.247(16) \\

\bottomrule
\end{tabular}
\caption{Ratio summary across dimensions ($d$). Values report mean $\pm$ standard deviation over 10 runs.}
\label{tab:final_cleaned_table}
\end{table*}

\begin{figure*}[!htbp]
\centering

\definecolor{niceblue}{RGB}{31, 119, 180}
\definecolor{niceorange}{RGB}{255, 127, 14}
\definecolor{nicegreen}{RGB}{44, 160, 44}
\definecolor{nicegray}{RGB}{80, 80, 80}


\pgfplotstableread[col sep=comma]{
K,      ratio
8,      0.743
16,     0.844
32,     0.913
128,    0.982
256,    0.998
512,    1.008
}\DataMeanShiftTwo

\pgfplotstableread[col sep=comma]{
K,      ratio
8,      0.338
16,     0.514
32,     0.646
128,    0.777
256,    0.802
512,    0.815
}\DataScaleTwo

\pgfplotstableread[col sep=comma]{
K,      ratio
8,      0.193
16,     0.343
32,     0.503
128,    0.749
256,    0.826
512,    0.878
}\DataLaplaceTwo

\pgfplotstableread[col sep=comma]{
K,      ratio
8,      0.832
16,     0.941
32,     1.013
64,     1.059
128,    1.087
256,    1.106
512,    1.120
}\DataMeanShiftFive

\pgfplotstableread[col sep=comma]{
K,      ratio
8,      0.337
16,     0.512
32,     0.645
64,     0.728
128,    0.775
256,    0.801
512,    0.814
}\DataScaleFive

\pgfplotstableread[col sep=comma]{
K,      ratio
8,      0.316
16,     0.479
32,     0.634
64,     0.763
128,    0.861
256,    0.932
512,    0.982
}\DataLaplaceFive


\begin{subfigure}[b]{0.48\textwidth}
    \centering
    \begin{tikzpicture}
    \begin{axis}[
        width=\linewidth,
        height=6.0cm,
        xmode=log,
        log basis x={2},
        xtick={8, 16, 32, 64, 128, 256, 512},
        xticklabels={8, 16, 32, 64, 128, 256, 512},
        xlabel={Number of Rank Bins ($K$)},
        ylabel={Ratio Estimate ($\hat{D}/D$)},
        ymin=0.1, ymax=1.25,
        grid=major,
        grid style={dashed, gray!30},
        title style={font=\bfseries},
        label style={font=\small},
        tick label style={font=\footnotesize},
        legend to name=CommonLegend,
        legend style={
            legend columns=3,
            draw=none,
            fill=none,
            font=\footnotesize,
            column sep=1.5em,
        },
    ]
    \addplot[nicegray, dashed, thick, domain=6:600] {1.0};
    
    \addplot[niceblue, thick, mark=*, mark options={fill=white, line width=0.8pt}] table[x=K, y=ratio] {\DataMeanShiftTwo};
    \addlegendentry{Mean Shift (KL, $\Delta=1$)}
    
    \addplot[niceorange, thick, mark=square*, mark options={fill=white, line width=0.8pt}] table[x=K, y=ratio] {\DataScaleTwo};
    \addlegendentry{Scale Change (JS, $\sigma=2$)}
    
    \addplot[nicegreen, thick, mark=triangle*, mark size=3pt, mark options={fill=white, line width=0.8pt}] table[x=K, y=ratio] {\DataLaplaceTwo};
    \addlegendentry{Laplace (JS, Heavy Tail)}

    \end{axis}
    \end{tikzpicture}
    \caption{Dimension $d=2$}
    \label{fig:d2}
\end{subfigure}
\hfill
\begin{subfigure}[b]{0.48\textwidth}
    \centering
    \begin{tikzpicture}
    \begin{axis}[
        width=\linewidth,
        height=6.0cm,
        xmode=log,
        log basis x={2},
        xtick={8, 16, 32, 64, 128, 256, 512},
        xticklabels={8, 16, 32, 64, 128, 256, 512},
        xlabel={Number of Rank Bins ($K$)},
        ymin=0.1, ymax=1.25,
        grid=major,
        grid style={dashed, gray!30},
        label style={font=\small},
        tick label style={font=\footnotesize},
    ]
    \addplot[nicegray, dashed, thick, domain=6:600] {1.0};
    
    \addplot[niceblue, thick, mark=*, mark options={fill=white, line width=0.8pt}] table[x=K, y=ratio] {\DataMeanShiftFive};
    \addplot[niceorange, thick, mark=square*, mark options={fill=white, line width=0.8pt}] table[x=K, y=ratio] {\DataScaleFive};
    \addplot[nicegreen, thick, mark=triangle*, mark size=3pt, mark options={fill=white, line width=0.8pt}] table[x=K, y=ratio] {\DataLaplaceFive};
    

    \end{axis}
    \end{tikzpicture}
    \caption{Dimension $d=5$}
    \label{fig:d5}
\end{subfigure}

\vspace{0.5em}
\ref*{CommonLegend}

\caption{\small Convergence of ratio estimates ($\hat{D}/D$) versus number of rank bins ($K$) for dimensions $d=2$ (left) and $d=5$ (right).}
\label{fig:side_by_side_convergence}
\end{figure*}

\clearpage

\subsection{Generative transport dynamics for rank-statistic $f$-divergences} \label{app:Generative Transport Dynamics for Rank $f$-divergence}

We next provide pseudocode for the sliced rank-proximal transport update used in
\cref{subsec:flows}.

\begin{algorithm}[!htbp]
\begin{algorithmic}
  \STATE {\bfseries Input:} particles $(x_i)_{i=1}^{N}\subset\R^d$, reference samples $(y_j)_{j=1}^{M}\subset\R^d$,
  slices $L$, rank order $K$, temperature $\tau$, trust $\eta$, step size $\varepsilon$, $f$-generator $f(\cdot)$.
  \STATE {\bfseries Output:} updated particles $(x_i)_{i=1}^{N}$.

  \STATE Draw directions $s_1,\dots,s_L\in\S^{d-1}$ (optionally include antithetic pairs $\pm s_\ell$)
  \STATE Initialize $\Delta x_i \gets 0\in\R^d$ for all $i=1,\dots,N$

  \FOR{$\ell=1$ {\bfseries to} $L$}
    \STATE Project: $x_i^{(\ell)}\gets\langle x_i,s_\ell\rangle$, $y_j^{(\ell)}\gets\langle y_j,s_\ell\rangle$
    \STATE Soft ranks: $v^{(\ell)}_{0,i}\gets \widehat R_{\nu^{(\ell)},\tau}(x_i^{(\ell)})\in(0,1)$ for $i=1,\dots,N$; write $\mathbf v^{(\ell)}_0=(v^{(\ell)}_{0,1},\dots,v^{(\ell)}_{0,N})$
    \STATE Prox in rank space (approx.\ by \textsc{SGD}/\textsc{ULA}/\textsc{MALA}):
    \[
      \mathbf v^{(\ell)}_{1}
      \approx
      \argmin_{\mathbf v\in(0,1)^N}
      \Big\{
      D_f(\widehat Q^{(K)}(\mathbf v)\Vert U_K)
      +
      \tfrac{1}{2\eta}\|\mathbf v-\mathbf v^{(\ell)}_{0}\|_2^2
      \Big\}
    \]
    \STATE Quantile match: $z_i^{(\ell)}\gets Q_{\widehat\nu^{(\ell)}}(v^{(\ell)}_{1,i})$, \;
           $\delta_i^{(\ell)}\gets z_i^{(\ell)}-x_i^{(\ell)}$
    \STATE Accumulate: $\Delta x_i \gets \Delta x_i + \delta_i^{(\ell)} s_\ell$
  \ENDFOR

  \STATE Update: $x_i \gets x_i + \varepsilon\,\frac{d}{L}\,\Delta x_i$ \quad for all $i$
  \STATE {\bfseries return} $(x_i)_{i=1}^N$
\end{algorithmic}
  \caption{Sliced rank-proximal transport (one outer step)}
  \label{alg:rank-prox-transport-compact}
\end{algorithm}

\subsection{Generative experiments on image datasets} \label{app:CIFAR10}

In this appendix, we provide implementation details for the image-generation
experiments. Section~\ref{app:cifar10-corpt} gives the full experimental
setup for the CO-RPT experiment discussed in
Section~\ref{subsubsec:celeba}. Section~\ref{app:cifar10-feature-rank-tv}
reports an additional feature-space rank-TV training experiment on CIFAR-10,
which is included as supplementary evidence that rank-based objectives can
provide a useful training signal for neural generators.

\subsubsection{CO-RPT on CIFAR-10} \label{app:cifar10-corpt}

\paragraph{Center-outward rank-proximal transport (CO-RPT)}

Algorithm~\ref{alg:co-rpt} implements a slice-free variant of rank-proximal transport based on a
center-outward decomposition. Starting from particles $X=\{x_i\}_{i=1}^N$ and reference samples
$Y=\{y_j\}_{j=1}^M$, we first recenter the configuration using the target mean
$\bar y=\frac1M\sum_{j=1}^M y_j$, i.e., $\tilde x_i=x_i-\bar y$ and $\tilde y_j=y_j-\bar y$.
Optionally, we apply a whitening transform (e.g.\ ZCA fitted on $\tilde Y$) so that the target is
approximately isotropic; this reduces anisotropy and makes the radial/angular decomposition more
stable, and the inverse transform is applied at the end.

We then decompose each point into its radius and direction:
$r_i^x=\|\tilde x_i\|$, $u_i^x=\tilde x_i/r_i^x$ and $r_j^y=\|\tilde y_j\|$, $u_j^y=\tilde y_j/r_j^y$. The transport step is built by updating radii through
a one-dimensional rank-proximal refinement, and updating directions through a simple matching on
the unit sphere. Concretely, we compute soft radial ranks
$v_{0,i}\approx \widehat R_{r^y,\tau}(r_i^x)\in[0,1]$ using a smoothed empirical CDF of the target radii,
where the temperature $\tau$ controls how sharp the rank assignment is. We refine these ranks by
approximately solving the proximal objective
\[
\mathbf v_{1} \approx \argmin_{\mathbf v\in(0,1)^N}\Big\{
D_f(\widehat Q^{(K)}(\mathbf v)\Vert U_K)
+
\tfrac{1}{2\eta}\|\mathbf v-\mathbf v_{0}\|_2^2
\Big\},
\]
where $\widehat Q^{(K)}(\mathbf v)$ is the Bernstein-smoothed histogram over $[K]$ induced by $\mathbf v$,
$D_f(\cdot\Vert U_K)$ measures deviation from uniformity, and $\eta>0$ acts as a trust region that
prevents overly aggressive rank updates. In practice, we use only a few inner iterations (e.g., \ SGD
or a simple extragradient update). The refined ranks are mapped back to the radial axis via the
empirical quantile of the target radii, $r_i^\star=Q_{\widehat r^y}(v_{1,i})$, ensuring that
uniform ranks would reproduce the target radial distribution.

To align the angular structure, we match each particle direction $u_i^x$ to a nearby target direction
using cosine similarity, i.e.\ $j^\star(i)\in\argmax_j\langle u_i^x,u_j^y\rangle$, and blend toward
it with weight $\beta\in[0,1]$:
$u_i^\star=\mathrm{normalize}\big((1-\beta)u_i^x+\beta u_{j^\star(i)}^y\big)$.
Setting $\beta=0$ yields a purely radial update, while larger $\beta$ accelerates angular
adaptation. Combining the transported radius and direction gives the center-outward proposal
$\tilde x_i^\star=r_i^\star u_i^\star$, and we take an outer step
$\tilde x_i^+=\tilde x_i+\varepsilon(\tilde x_i^\star-\tilde x_i)$ with step size $\varepsilon>0$.
Optionally, we clip the increment to a maximum norm $c$ to avoid rare large jumps. Finally, we
undo the optional whitening and add back the center $\bar y$ to obtain the updated particles
$X^+=\{x_i^+\}_{i=1}^N$.

Overall, CO-RPT replaces the multi-slice quantile matching of
\eqref{eq:particle_transport_update} by a single univariate rank-prox update on radii together with
a lightweight spherical matching for directions. This yields a geometrically interpretable,
fully sample-based update that explicitly controls the radial marginal through the rank objective,
while encouraging directional alignment through nearest-neighbor coupling on $\mathbb S^{d-1}$.

\begin{algorithm}[!htbp]
  \caption{Center-outward rank-proximal transport (CO-RPT) update}
  \label{alg:co-rpt}
\begin{algorithmic}
  \STATE {\bfseries Input:} particles $X=\{x_i\}_{i=1}^N$, targets $Y=\{y_j\}_{j=1}^M$,
  $K,f,\tau,\eta$, step $\varepsilon$, angular blend $\beta$ (optional cap $c$).
  \STATE {\bfseries Output:} updated particles $X^+$

  \STATE (Optional) center/whiten: $\tilde x_i,\tilde y_j$.
  \STATE Compute radii/directions: $r^x_i=\|\tilde x_i\|,\ u^x_i=\tilde x_i/r^x_i$ and $r^y_j=\|\tilde y_j\|,\ u^y_j=\tilde y_j/r^y_j$.
  \STATE Soft radial ranks: $v_{0,i} \approx \widehat R_{r^y,\tau}(r^x_i)\in(0,1)$ for $i=1,\dots,N$; write $\mathbf v_0=(v_{0,1},\dots,v_{0,N})$.
  \STATE Prox in rank space: $\mathbf v_1 \approx \argmin_{\mathbf v\in(0,1)^N} \Big\{ D_f(\widehat Q^{(K)}(\mathbf v)\Vert U_K)+\tfrac{1}{2\eta}\|\mathbf v-\mathbf v_0\|_2^2 \Big\}$.
  \STATE Target radii: $r_i^\star \gets Q_{\widehat r^y}(v_{1,i})$ for $i=1,\dots,N$.
  \STATE Angular match: $j^\star(i)\in\argmax_j \langle u^x_i,u^y_j\rangle$, \;\;
        $u^\star_i\gets \mathrm{normalize}\!\big((1-\beta)u^x_i+\beta u^y_{j^\star(i)}\big)$.
  \STATE Proposal: $\tilde x_i^\star \gets r_i^\star u_i^\star$, \;\; $\Delta_i \gets \tilde x_i^\star-\tilde x_i$ \;\; (optional: $\Delta_i \gets \Delta_i\cdot\min\{1,c/\|\Delta_i\|\}$).
  \STATE Update: $\tilde x_i^+ \gets \tilde x_i + \varepsilon\,\Delta_i$ and uncenter/unwhiten to get $x_i^+$.
  \STATE {\bfseries return} $X^+=\{x_i^+\}_{i=1}^N$
\end{algorithmic}
\end{algorithm}

\paragraph{Experimental setup and evaluation.}

We ran the proposed center-outward rank-proximal transport (CO-RPT) directly in pixel space on CIFAR-10.
We randomly selected $M=1000$ images from the CIFAR-10 training set, upsampled them to $64\times 64$ using bicubic interpolation with antialiasing, and mapped pixel intensities to $[0,1]$; each image was then flattened to $\mathbb{R}^{3HW}$ with $H=W=64$.
We used the Jensen--Shannon generator ($f=\mathrm{JS}$) with trust-region parameter $\eta=0.5$ and $3$ inner extragradient steps per outer iteration.
We initialized $N=M$ particles from a Gaussian matched to the mean and marginal standard deviation of the whitened target features, and iterated CO-RPT for $T=20000$ outer steps.
We linearly annealed the rank resolution from $K=64$ to $K=160$, the rank-smoothing temperature from $\tau=0.30$ to $\tau=0.07$, and the outer step size from $\varepsilon=0.16$ to $\varepsilon=0.10$, while clipping per-particle updates to a maximum norm of $c=0.30$.
We also used a small angular blending parameter $\beta_{\mathrm{angle}}=0.01$.

\subsubsection{Feature-space rank-TV training on CIFAR-10} \label{app:cifar10-feature-rank-tv}

We also evaluate the rank-based total variation objective as a feature-space
training signal on CIFAR-10. Since CIFAR-10 images are natively
$32\times32$, we resize them to $64\times64$ using bicubic interpolation
with antialiasing and normalize pixel values to $[-1,1]$. We use a
DCGAN-style generator and a spectrally normalized convolutional
discriminator. The discriminator is trained with the standard hinge loss,
while the generator is trained to match real and generated samples in the
feature space of the discriminator using the sliced rank-TV objective. We
also consider a hybrid variant with a small adversarial generator loss. An
exponential moving average of the generator weights is used for sampling and
evaluation.

The goal of this experiment is not to obtain state-of-the-art CIFAR-10
generation results, but to test whether the proposed rank-based discrepancy
provides a usable training signal in a nontrivial image setting. We observe a
steady improvement in FID during training: in our longest run, the FID
decreases from roughly $380$ at early iterations to about $41$ after
$245{,}000$ generator updates. This suggests that the feature-space rank-TV
objective can progressively improve image quality and distributional
matching.

The method can be applied after any fixed encoder, such as Inception, VGG, DINOv2, or a task-specific representation, yielding a rank divergence between feature pushforwards. Hence, the discrepancy is representation-dependent, as with FID or precision–recall metrics in learned feature spaces. Unlike FID, the rank statistic does not impose a Gaussian approximation in feature space; however, slicing may still miss informative directions when only finitely many projections are used.

\subsection{Addressing mode collapse on MNIST} \label{subsec:MNIST_mode_collapse}

Building on the pretraining approach of \citet{dFVOM2024b}, who employ a rank-based total-variation divergence to reduce mode collapse in GANs, we study whether the same strategy carries over to a wider class of rank-statistic $f$-divergences. Concretely, we incorporate the sliced rank-statistic $f$-divergences objective as a pretraining signal for the DCGAN generator \citep{radford2015unsupervised} and evaluate the resulting pipeline on MNIST. To measure both realism and coverage, we follow \citet{sajjadi2018assessing} and report precision (as a proxy for fidelity) and recall (as a proxy for diversity). All models are trained for $40$ epochs with batch size $128$. For the pretrained variants, we first run $20$ epochs under the sliced rank-statistic $f$-divergences objective and then continue with $40$ additional epochs of standard DCGAN training.

In Table \ref{table:model colapse}, we compare \emph{rank-statistic $f$-divergence} pretraining variants (TV, KL, JS, and $\mathrm{Hell}^2$) with standard DCGAN training, \emph{rank-statistic $f$-divergence}$+$DCGAN fine-tuning, and stronger multi-discriminator baselines \citep{durugkar2016generative, choi2022mcl}. Focusing on precision and recall, the standalone \emph{rank-statistic $f$-divergence} models already achieve strong recall on MNIST: JS and $\mathrm{Hell}^2$ are the most recall-oriented, reaching $97.0\%$ and $98.7\%$ recall for $m{=}50$, respectively, while TV and KL yield a more balanced behavior (e.g., $95.0\%$ and $91.1\%$ recall for $m{=}50$). When we pretrain with a \emph{rank-statistic $f$-divergence} and then fine-tune with the adversarial loss, precision increases substantially relative to vanilla DCGAN: TV$+$DCGAN and KL$+$DCGAN reach $95.0\%$ and $96.2\%$ precision, respectively (vs.\ $93.85\%$ for DCGAN), while maintaining competitive recall (around $92.8\%$ and $90.5\%$). Overall, these results highlight a clear trade-off between the choice of $f$ (more recall-oriented for JS/$\mathrm{Hell}^2$) and the benefit of adversarial fine-tuning for improving precision without collapsing recall.

Our estimator targets scalar $f$-divergences rather than precision–recall curves. However, precision–recall divergences form a family of $f$-divergence-type trade-off quantities \cite{SWR2019,SWSR2023,VNPC2023}; choosing a corresponding generator yields a rank-histogram approximation of the associated scalar PR divergence, while varying the PR parameter would produce a discretized PR trade-off curve. This differs from \cite{SWR2019}, who estimate the PR curve directly.

\begin{table}[ht]
\centering
\small
\setlength{\tabcolsep}{6pt} 
\renewcommand{\arraystretch}{1.1} 

\begin{tabular}{ll *{4}{S[table-format=2.2, table-figures-uncertainty=1]}}
\toprule
\textbf{Dataset} & \textbf{Method} & \multicolumn{2}{c}{\textbf{F-score}} & \multicolumn{2}{c}{\textbf{P\&R}} \\
\cmidrule(lr){3-4} \cmidrule(lr){5-6}
& & {$F_{1/8} \uparrow$} & {$F_8 \uparrow$} & {Precision $\uparrow$} & {Recall $\uparrow$} \\
\midrule
\textbf{MNIST} 
& TV (m=20) & 88.09 \pm 0.32 & 93.91 \pm 0.72 & 88.01 \pm 0.52 & 94.25 \pm 0.91 \\
& TV (m=50) & 88.89 \pm 0.31 & 94.90 \pm 0.71 & 88.80 \pm 0.50 & 95.08 \pm 0.94 \\
\addlinespace[8pt] 

& KL (m=20) & 90.50 \pm 0.43 & 90.21 \pm 0.62 & 90.50 \pm 0.47 & 90.18 \pm 0.91 \\
& KL (m=50) & 91.59 \pm 0.46 & 91.11 \pm 0.68 & 91.62 \pm 0.47 & 91.11 \pm 0.88 \\
\addlinespace[8pt]

& JS (m=20) & 86.64 \pm 0.56 & 96.13 \pm 0.76 & 86.48 \pm 0.40 & 96.32 \pm 0.92 \\
& JS (m=50) & 87.73 \pm 0.32 & 96.84 \pm 0.75 & 87.60 \pm 0.49 & 97.09 \pm 0.91 \\
\addlinespace[8pt]

& $\text{Hell}^{2}$ (m=20) & 83.00 \pm 0.35 & 97.92 \pm 0.74 & 82.81 \pm 0.50 & 98.23 \pm 0.87 \\
& $\text{Hell}^{2}$ (m=50) & 83.83 \pm 0.35 & 98.43 \pm 0.69 & 83.62 \pm 0.50 & 98.71 \pm 0.90 \\
\midrule
\addlinespace[4pt]
& DCGAN & 93.54 \pm 0.64 & 75.66 \pm 1.46 & 93.85 \pm 1.45 & 75.43 \pm 2.56 \\
& TV + DCGAN & 94.97 \pm 0.35 & 92.83 \pm 0.72 & 95.00 \pm 0.53 & 92.80 \pm 0.85 \\
\addlinespace[4pt]
& KL + DCGAN & 96.11 \pm 0.36 & 90.58 \pm 0.70 & 96.20 \pm 0.46 & 90.50 \pm 0.83 \\
& JS + DCGAN & 94.71 \pm 0.42 & 95.19 \pm 0.73 & 94.67 \pm 0.42 & 95.21 \pm 0.87 \\
& $\text{Hell}^{2}$ + DCGAN & 93.84 \pm 0.35 & 96.75 \pm 0.75 & 93.82 \pm 0.45 & 96.81 \pm 0.85 \\
\midrule
\addlinespace[4pt]
& GMAN & 97.78 \pm 0.40 & 96.52 \pm 0.57 & 97.80 \pm 0.71 & 96.53 \pm 0.89 \\
& \textbf{MCL-GAN} & \bfseries 98.20 \pm 0.19 & \bfseries 98.00 \pm 0.25 & \bfseries 98.20 \pm 0.30 & \bfseries 98.00 \pm 0.40 \\
\bottomrule
\end{tabular}
\caption{\small Quantitative results on MNIST ($28\times 28$), reporting $F_{1/8}$ and $F_8$ (the $\beta$-weighted harmonic means of precision and recall), precision, and recall (mean$\pm$std, \%). Results are grouped by divergence type for enhanced scannability.} \label{table:model colapse}
\end{table}

\subsection{Two-sample testing with rank chi-square statistics}
\label{app:two_sample_rank_chi}

We include a small two-sample testing experiment to illustrate that the proposed rank-statistic construction also yields a natural nonparametric test. Given two independent samples
\[
X_1,\ldots,X_N \sim \mu, 
\qquad 
Y_1,\ldots,Y_M \sim \nu,
\]
we test
\[
H_0:\mu=\nu
\qquad\text{against}\qquad
H_1:\mu\neq\nu .
\]
For a fixed rank resolution \(K\), we compute the rank histogram of the \(X\)-sample with respect to the \(Y\)-sample and compare it to the discrete uniform distribution on \(\{0,\ldots,K\}\). More precisely, denoting by \(\widehat Q^{(K)}_{N,M}\) the empirical rank histogram, we use the Pearson/Cressie-Read-type statistic \cite{cressie1984multinomial}
\[
T_K
=
\sum_{k=0}^{K}
\frac{
\left(
\widehat Q^{(K)}_{N,M}(k)-\frac{1}{K+1}
\right)^2
}{
\frac{1}{K+1}
}.
\]
Equivalently, up to a deterministic multiplicative factor, this is the empirical rank-statistic \(f\)-divergence associated with the chi-square generator
\[
f_{\chi^2}(t)=\frac12(t-1)^2.
\]
Thus, in the univariate case, the rank chi-square test is exactly the proposed rank-statistic divergence estimator specialized to the \(\chi^2\) entropy.

For higher-dimensional data, we use the corresponding sliced version based on random projections. We draw \(L\) random projection directions \(s_1,\ldots,s_L\in\mathbb S^{d-1}\) and compute the univariate rank chi-square statistic on each projected pair
\[
\{s_\ell^\top X_i\}_{i=1}^N,
\qquad
\{s_\ell^\top Y_j\}_{j=1}^M .
\]
The final statistic is obtained by averaging the projected discrepancies,
\[
T_{K,L}
=
\frac{1}{L}\sum_{\ell=1}^L T_K^{(\ell)}.
\]
When \(d=1\), there is only one possible projection, and the random-subspace statistic reduces to the univariate rank-statistic chi-square divergence described above.

Since the null distribution of the statistic may depend on the sample sizes and on the finite-sample rank construction, we calibrate the test by permutation. Specifically, we repeatedly permute the pooled sample
\[
\{X_1,\ldots,X_N,Y_1,\ldots,Y_M\},
\]
split it into two groups of sizes \(N\) and \(M\), recompute the statistic, and use the resulting empirical null distribution to obtain a \(p\)-value. We reject \(H_0\) at level \(\alpha=0.05\) whenever this permutation \(p\)-value is smaller than \(\alpha\).

We evaluate the resulting test on a collection of controlled synthetic two-sample benchmarks designed to probe different types of distributional mismatch. The benchmark families include Gaussian location shifts, Gaussian scale changes, Laplace and Student-\(t\) shape changes, Gaussian mixture alternatives, and dependence alternatives generated from \(t\)-, Clayton-, and Gumbel-copula models. These settings cover mean, scale, tail, multimodal, and dependence changes. We compare the sliced rank \(\chi^2\) test against standard two-sample baselines, namely Hotelling's \(T^2\), sliced optimal transport, MMD, and a tuned MMD variant.

We first assess calibration under the null. Figure~\ref{fig:rankchi_subspaces_calibration} reports the empirical rejection probability at nominal level \(\alpha=0.05\) across dimensions \(d\in\{2,4,10\}\). The results show that the permutation-calibrated sliced rank \(\chi^2\) statistic remains close to the target type-I error across the considered families, with deviations compatible with the Monte Carlo variability of the experiment.

We then evaluate power under controlled alternatives. Figures~\ref{fig:power_runtime_grouped_d4_scale_shape} and~\ref{fig:power_runtime_grouped_d4_dependence} report empirical power as a function of average runtime per repetition in dimension \(d=4\). The sliced rank \(\chi^2\) statistic provides a competitive efficiency--power tradeoff on scale and shape alternatives, reaching high power at substantially lower runtime than the more computationally expensive MMD and sliced optimal-transport baselines in these settings. For dependence alternatives, its behavior is more problem-dependent: it performs strongly on the Gumbel-copula alternative, while the Clayton-copula alternative is less favorable for this particular random-projection configuration.

Overall, these experiments indicate that the permutation-calibrated rank-statistic construction behaves as a calibrated two-sample test in the considered settings, and that, in the one-dimensional case, the rank chi-square subspace statistic coincides with our proposed estimator for the chi-square \(f\)-divergence.

\begin{figure}[t]
\centering


\pgfplotstableread[col sep=space]{
x pi se
1 0.020000 0.014000
2 0.040000 0.019596
3 0.060000 0.023749
4 0.070000 0.025515
5 0.040000 0.019596
6 0.060000 0.023749
7 0.060000 0.023749
8 0.050000 0.021794
}\RankCalibDtwo

\pgfplotstableread[col sep=space]{
x pi se
1 0.050000 0.021794
2 0.100000 0.030000
3 0.090000 0.028618
4 0.050000 0.021794
5 0.060000 0.023749
6 0.050000 0.021794
7 0.090000 0.028618
8 0.070000 0.025515
}\RankCalibDfour

\pgfplotstableread[col sep=space]{
x pi se
1 0.070000 0.025515
2 0.060000 0.023749
3 0.060000 0.023749
4 0.070000 0.025515
5 0.050000 0.021794
6 0.040000 0.019596
7 0.030000 0.017059
8 0.040000 0.019596
}\RankCalibDten


\newcommand{\rankcalibpanel}[3]{%
\begin{subfigure}[t]{0.32\linewidth}
\centering
\begin{tikzpicture}
\begin{axis}[
  width=\linewidth,
  height=0.82\linewidth,
  grid=major,
  major grid style={dashed, gray!30},
  xmin=0.5, xmax=8.5,
  ymin=0, ymax=0.16,
  tick align=outside,
  xtick={1,2,3,4,5,6,7,8},
  xticklabels={
    loc.,
    scale,
    Lap.,
    Stud.,
    mix.,
    t-cop.,
    Clay.,
    Gum.
  },
  x tick label style={rotate=40, anchor=east, font=\scriptsize},
  yticklabel style={font=\small},
  ylabel={#2},
  title={#1},
  title style={font=\small},
]

\addplot[
  gray,
  dashed,
  thick,
  domain=0.5:8.5,
  samples=2,
  forget plot
] {0.05}
node[pos=0.06, above, font=\tiny] {$\alpha=0.05$};

\addplot+[
  only marks,
  mark=*,
  mark size=2pt,
  thick,
  color=cRankSub,
  error bars/.cd,
    y dir=both,
    y explicit,
] table[
  x=x,
  y=pi,
  y error=se
] {#3};

\end{axis}
\end{tikzpicture}
\end{subfigure}
}


\rankcalibpanel{$d=2$}{Empirical size $\hat{\pi}$}{\RankCalibDtwo}
\hfill
\rankcalibpanel{$d=4$}{}{\RankCalibDfour}
\hfill
\rankcalibpanel{$d=10$}{}{\RankCalibDten}

\caption{\small Calibration of the rank-statistic \(\chi^2\)-divergence two-sample test under the null. The statistic corresponds to the sliced rank \(f\)-divergence with the chi-square generator \(f_{\chi^2}(t)=\frac12(t-1)^2\), calibrated by permutation. We report empirical rejection probabilities at nominal level \(\alpha=0.05\) across distributional families and dimensions. Error bars denote \(\hat{\pi}\pm\mathrm{SE}\) over \(R=100\) repetitions. All experiments use \(N=M=2000\), \(L=64\), \(K=4\), and \(B=500\) permutations. The dashed horizontal line indicates the nominal level.}
\label{fig:rankchi_subspaces_calibration}
\end{figure}



\definecolor{rfdRankSub}{RGB}{31,119,180}
\definecolor{rfdHotelling}{RGB}{255,127,14}
\definecolor{rfdSOT}{RGB}{44,160,44}
\definecolor{rfdMMD}{RGB}{214,39,40}
\definecolor{rfdMMDTuned}{RGB}{148,103,189}

\newcommand{\rfdmethodplotlegend}[4]{%
  \addplot[forget plot, draw=none, name path=#1_lo] table[x=ms,y=lo]{#4};
  \addplot[forget plot, draw=none, name path=#1_hi] table[x=ms,y=hi]{#4};
  \addplot[forget plot, fill=#2, fill opacity=0.10] fill between[of=#1_lo and #1_hi];
  \addplot[
    color=#2,
    thick,
    mark=*,
    mark size=1.6pt,
    mark options={fill=#2, opacity=0.85}
  ] table[x=ms,y=pi]{#4};
  \addlegendentry{#3}%
}

\newcommand{\rfdmethodplotnolegend}[4]{%
  \addplot[forget plot, draw=none, name path=#1_lo] table[x=ms,y=lo]{#4};
  \addplot[forget plot, draw=none, name path=#1_hi] table[x=ms,y=hi]{#4};
  \addplot[forget plot, fill=#2, fill opacity=0.10] fill between[of=#1_lo and #1_hi];
  \addplot[
    color=#2,
    thick,
    mark=*,
    mark size=1.6pt,
    mark options={fill=#2, opacity=0.85}
  ] table[x=ms,y=pi]{#4};
}

\newcommand{\rfdmethodplotlegendlight}[4]{%
  \addplot[
    color=#2,
    thick,
    mark=*,
    mark size=1.5pt,
    mark options={fill=#2, opacity=0.85}
  ] table[x=ms,y=pi]{#4};
  \addlegendentry{#3}%
}

\newcommand{\rfdmethodplotnolegendlight}[4]{%
  \addplot[
    color=#2,
    thick,
    mark=*,
    mark size=1.5pt,
    mark options={fill=#2, opacity=0.85}
  ] table[x=ms,y=pi]{#4};
}


\begin{figure}[t]
\centering


\pgfplotstableread[col sep=space]{
ms       pi       lo        hi
91.86    0.220000 0.178575  0.261425
132.04   0.500000 0.450000  0.550000
115.73   0.920000 0.892871  0.947129
109.24   0.980000 0.966000  0.994000
109.22   1.000000 1.000000  1.000000
140.65   1.000000 1.000000  1.000000
205.91   1.000000 1.000000  1.000000
}\RFDScaleRankSubDfour

\pgfplotstableread[col sep=space]{
ms       pi       lo        hi
17.66    0.060    0.036251  0.083749
18.94    0.020    0.006000  0.034000
26.57    0.060    0.036251  0.083749
47.36    0.030    0.012941  0.047059
51.83    0.070    0.044485  0.095515
123.88   0.050    0.028206  0.071794
}\RFDScaleHotellingDfour

\pgfplotstableread[col sep=space]{
ms       pi       lo        hi
802.58   0.340    0.292629  0.387371
837.70   1.000    1.000000  1.000000
1209.34  0.680    0.633352  0.726648
1349.08  1.000    1.000000  1.000000
1452.44  1.000    1.000000  1.000000
3143.23  1.000    1.000000  1.000000
}\RFDScaleSOTDfour

\pgfplotstableread[col sep=space]{
ms        pi       lo        hi
2055.99   0.530    0.480090  0.579910
4511.22   0.830    0.792437  0.867563
7831.55   1.000    1.000000  1.000000
28704.34  1.000    1.000000  1.000000
82422.26  1.000    1.000000  1.000000
}\RFDScaleMMDDfour

\pgfplotstableread[col sep=space]{
ms        pi        lo        hi
8652.95   0.180     0.141581  0.218419
17898.13  0.290     0.244624  0.335376
35100.97  0.910     0.881382  0.938618
80856.48  0.986111  0.972319  0.999903
}\RFDScaleMMDTunedDfour


\pgfplotstableread[col sep=space]{
ms       pi       lo        hi
117.49   0.280000 0.235100  0.324900
138.90   0.600000 0.551010  0.648990
121.23   0.960000 0.940404  0.979596
125.29   1.000000 1.000000  1.000000
125.56   1.000000 1.000000  1.000000
169.39   1.000000 1.000000  1.000000
260.83   1.000000 1.000000  1.000000
}\RFDLapRankSubDfour

\pgfplotstableread[col sep=space]{
ms       pi       lo        hi
14.88    0.020    0.006000  0.034000
15.72    0.060    0.036251  0.083749
20.40    0.030    0.012941  0.047059
29.67    0.070    0.044485  0.095515
67.38    0.060    0.036251  0.083749
113.70   0.020    0.006000  0.034000
}\RFDLapHotellingDfour

\pgfplotstableread[col sep=space]{
ms       pi       lo        hi
333.43   0.030    0.012941  0.047059
401.97   0.090    0.061382  0.118618
1251.83  0.490    0.440010  0.539990
1280.46  0.960    0.940404  0.979596
1630.10  1.000    1.000000  1.000000
3098.44  1.000    1.000000  1.000000
}\RFDLapSOTDfour

\pgfplotstableread[col sep=space]{
ms        pi       lo        hi
1363.36   0.200    0.160000  0.240000
4148.61   0.420    0.370644  0.469356
10747.47  0.970    0.952941  0.987059
24212.32  1.000    1.000000  1.000000
82206.26  1.000    1.000000  1.000000
}\RFDLapMMDDfour

\pgfplotstableread[col sep=space]{
ms        pi        lo        hi
8450.12   0.160     0.123339  0.196661
18745.97  0.170     0.132437  0.207563
40048.10  0.450     0.400251  0.499749
80283.02  0.775281  0.731037  0.819525
}\RFDLapMMDTunedDfour


\begin{subfigure}[t]{0.49\linewidth}
\centering
\begin{tikzpicture}
\begin{axis}[
  width=\linewidth,
  height=0.72\linewidth,
  xmode=log,
  log basis x=10,
  grid=major,
  major grid style={dashed, gray!30},
  xmin=12, xmax=1e5,
  ymin=0, ymax=1.05,
  tick align=outside,
  xlabel={Avg runtime per repetition (ms, log scale)},
  ylabel={Power $\hat{\pi}$},
  ytick={0,0.25,0.5,0.75,1.0},
  yticklabels={0,0.25,0.50,0.75,1.00},
  yticklabel style={font=\scriptsize},
  x tick label style={font=\scriptsize},
  legend to name=RFDGlobalLegendPowerDfour,
  legend columns=3,
  legend cell align=left,
  legend style={
    draw=gray!40,
    fill=white,
    rounded corners=2pt,
    font=\scriptsize,
    /tikz/every even column/.append style={column sep=7pt}
  },
]

\addplot[gray, dashed, thick, domain=12:1e5, samples=2, forget plot] {0.05}
  node[pos=0.03, above, font=\tiny] {$\alpha=0.05$};

\rfdmethodplotlegend{rfd_sg_sub}{rfdRankSub}{Sliced rank $\chi^2$}{\RFDScaleRankSubDfour}
\rfdmethodplotlegend{rfd_sg_hot}{rfdHotelling}{Hotelling $T^2$}{\RFDScaleHotellingDfour}
\rfdmethodplotlegend{rfd_sg_sot}{rfdSOT}{Sliced OT}{\RFDScaleSOTDfour}
\rfdmethodplotlegend{rfd_sg_mmd}{rfdMMD}{MMD}{\RFDScaleMMDDfour}
\rfdmethodplotlegend{rfd_sg_mmdt}{rfdMMDTuned}{MMD tuned}{\RFDScaleMMDTunedDfour}

\end{axis}
\end{tikzpicture}
\caption{scale-gauss ($d=4$)}
\label{fig:power_runtime_d4_scale}
\end{subfigure}
\hfill
\begin{subfigure}[t]{0.49\linewidth}
\centering
\begin{tikzpicture}
\begin{axis}[
  width=\linewidth,
  height=0.72\linewidth,
  xmode=log,
  log basis x=10,
  grid=major,
  major grid style={dashed, gray!30},
  xmin=12, xmax=1e5,
  ymin=0, ymax=1.05,
  tick align=outside,
  xlabel={Avg runtime per repetition (ms, log scale)},
  ylabel={},
  ytick={0,0.25,0.5,0.75,1.0},
  yticklabels={0,0.25,0.50,0.75,1.00},
  yticklabel style={font=\scriptsize},
  x tick label style={font=\scriptsize},
]

\addplot[gray, dashed, thick, domain=12:1e5, samples=2, forget plot] {0.05}
  node[pos=0.03, above, font=\tiny] {$\alpha=0.05$};

\rfdmethodplotnolegend{rfd_sl_sub}{rfdRankSub}{Sliced rank $\chi^2$}{\RFDLapRankSubDfour}
\rfdmethodplotnolegend{rfd_sl_hot}{rfdHotelling}{Hotelling $T^2$}{\RFDLapHotellingDfour}
\rfdmethodplotnolegend{rfd_sl_sot}{rfdSOT}{Sliced OT}{\RFDLapSOTDfour}
\rfdmethodplotnolegend{rfd_sl_mmd}{rfdMMD}{MMD}{\RFDLapMMDDfour}
\rfdmethodplotnolegend{rfd_sl_mmdt}{rfdMMDTuned}{MMD tuned}{\RFDLapMMDTunedDfour}

\end{axis}
\end{tikzpicture}
\caption{shape-laplace ($d=4$)}
\label{fig:power_runtime_d4_laplace}
\end{subfigure}

\vspace{0.4em}

\begin{tikzpicture}
\node[anchor=north]{\pgfplotslegendfromname{RFDGlobalLegendPowerDfour}};
\end{tikzpicture}

\caption{\small Efficiency--power tradeoff for scale and shape alternatives in dimension \(d=4\). We compare the sliced rank-statistic \(\chi^2\)-divergence test against Hotelling's \(T^2\), sliced optimal transport, MMD, and tuned MMD. Curves show empirical power as a function of average runtime per repetition, and shaded regions denote \(\hat{\pi}\pm\mathrm{SE}\). All tests are calibrated at nominal level \(\alpha=0.05\).}
\label{fig:power_runtime_grouped_d4_scale_shape}
\end{figure}


\begin{figure}[t]
\centering


\pgfplotstableread[col sep=space]{
ms       pi
72.01    0.050000
84.65    0.070000
92.77    0.050000
94.30    0.030000
109.03   0.050000
137.71   0.030000
185.17   0.040000
}\RFDClayRankSubDfour

\pgfplotstableread[col sep=space]{
ms       pi
12.59    0.030
16.57    0.020
21.27    0.020
32.00    0.040
62.58    0.110
168.88   0.030
}\RFDClayHotellingDfour

\pgfplotstableread[col sep=space]{
ms       pi
323.84   0.040
417.67   0.070
707.33   0.150
1389.69  0.430
2974.62  0.910
4573.74  1.000
}\RFDClaySOTDfour

\pgfplotstableread[col sep=space]{
ms       pi
941.06   0.060
2202.13  0.120
11754.93 0.350
37483.62 0.769231
86458.52 1.000
}\RFDClayMMDDfour

\pgfplotstableread[col sep=space]{
ms       pi
8119.00  0.020
17803.20 0.100
41826.79 0.070
87285.59 0.206522
}\RFDClayMMDTunedDfour


\pgfplotstableread[col sep=space]{
ms       pi
124.77   0.650000
101.75   0.910000
96.42    1.000000
111.34   1.000000
107.43   1.000000
162.72   1.000000
238.69   1.000000
}\RFDGumRankSubDfour

\pgfplotstableread[col sep=space]{
ms       pi
27.85    0.930
28.75    0.600
29.43    1.000
38.94    1.000
57.06    1.000
160.41   1.000
}\RFDGumHotellingDfour

\pgfplotstableread[col sep=space]{
ms       pi
805.92   0.970
816.57   0.760
865.19   1.000
1093.70  1.000
1510.81  1.000
3460.79  1.000
}\RFDGumSOTDfour

\pgfplotstableread[col sep=space]{
ms       pi
1620.55  0.570
4040.57  0.910
7891.34  1.000
25048.39 1.000
83922.40 1.000
}\RFDGumMMDDfour

\pgfplotstableread[col sep=space]{
ms       pi
7562.81  0.250
18557.22 0.390
31422.17 0.750
76820.53 0.990
}\RFDGumMMDTunedDfour


\begin{subfigure}[t]{0.49\linewidth}
\centering
\begin{tikzpicture}
\begin{axis}[
  width=\linewidth,
  height=0.72\linewidth,
  xmode=log,
  log basis x=10,
  grid=major,
  major grid style={dashed, gray!30},
  xmin=12, xmax=1e5,
  ymin=0, ymax=1.05,
  tick align=outside,
  xlabel={Avg runtime per repetition (ms, log scale)},
  ylabel={Power $\hat{\pi}$},
  ytick={0,0.25,0.5,0.75,1.0},
  yticklabels={0,0.25,0.50,0.75,1.00},
  yticklabel style={font=\scriptsize},
  x tick label style={font=\scriptsize},
  legend to name=RFDGlobalLegendPowerDepDfour,
  legend columns=3,
  legend cell align=left,
  legend style={
    draw=gray!40,
    fill=white,
    rounded corners=2pt,
    font=\scriptsize,
    /tikz/every even column/.append style={column sep=7pt}
  },
]

\addplot[gray, dashed, thick, domain=12:1e5, samples=2, forget plot] {0.05}
  node[pos=0.03, above, font=\tiny] {$\alpha=0.05$};

\rfdmethodplotlegendlight{rfd_cl_sub}{rfdRankSub}{Sliced rank $\chi^2$}{\RFDClayRankSubDfour}
\rfdmethodplotlegendlight{rfd_cl_hot}{rfdHotelling}{Hotelling $T^2$}{\RFDClayHotellingDfour}
\rfdmethodplotlegendlight{rfd_cl_sot}{rfdSOT}{Sliced OT}{\RFDClaySOTDfour}
\rfdmethodplotlegendlight{rfd_cl_mmd}{rfdMMD}{MMD}{\RFDClayMMDDfour}
\rfdmethodplotlegendlight{rfd_cl_mmdt}{rfdMMDTuned}{MMD tuned}{\RFDClayMMDTunedDfour}

\end{axis}
\end{tikzpicture}
\caption{dependence Clayton ($d=4$)}
\label{fig:power_runtime_d4_clayton}
\end{subfigure}
\hfill
\begin{subfigure}[t]{0.49\linewidth}
\centering
\begin{tikzpicture}
\begin{axis}[
  width=\linewidth,
  height=0.72\linewidth,
  xmode=log,
  log basis x=10,
  grid=major,
  major grid style={dashed, gray!30},
  xmin=12, xmax=1e5,
  ymin=0, ymax=1.05,
  tick align=outside,
  xlabel={Avg runtime per repetition (ms, log scale)},
  ylabel={},
  ytick={0,0.25,0.5,0.75,1.0},
  yticklabels={0,0.25,0.50,0.75,1.00},
  yticklabel style={font=\scriptsize},
  x tick label style={font=\scriptsize},
]

\addplot[gray, dashed, thick, domain=12:1e5, samples=2, forget plot] {0.05}
  node[pos=0.03, above, font=\tiny] {$\alpha=0.05$};

\rfdmethodplotnolegendlight{rfd_gu_sub}{rfdRankSub}{Sliced rank $\chi^2$}{\RFDGumRankSubDfour}
\rfdmethodplotnolegendlight{rfd_gu_hot}{rfdHotelling}{Hotelling $T^2$}{\RFDGumHotellingDfour}
\rfdmethodplotnolegendlight{rfd_gu_sot}{rfdSOT}{Sliced OT}{\RFDGumSOTDfour}
\rfdmethodplotnolegendlight{rfd_gu_mmd}{rfdMMD}{MMD}{\RFDGumMMDDfour}
\rfdmethodplotnolegendlight{rfd_gu_mmdt}{rfdMMDTuned}{MMD tuned}{\RFDGumMMDTunedDfour}

\end{axis}
\end{tikzpicture}
\caption{dependence Gumbel ($d=4$)}
\label{fig:power_runtime_d4_gumbel}
\end{subfigure}

\vspace{0.4em}

\begin{tikzpicture}
\node[anchor=north]{\pgfplotslegendfromname{RFDGlobalLegendPowerDepDfour}};
\end{tikzpicture}

\caption{\small Efficiency--power tradeoff for dependence alternatives in dimension \(d=4\). We compare the sliced rank-statistic \(\chi^2\)-divergence test against Hotelling's \(T^2\), sliced optimal transport, MMD, and tuned MMD. Curves show empirical power as a function of average runtime per repetition. All tests are calibrated at nominal level \(\alpha=0.05\).}
\label{fig:power_runtime_grouped_d4_dependence}
\end{figure}

\subsection{CIFAR-10-C benchmark}
\label{app:cifar10c_rank_fdiv}

We also evaluate the same sliced rank \(f\)-divergence statistic on a high-dimensional image benchmark. We use clean CIFAR-10 test images as the reference distribution \(\nu\), and CIFAR-10-C \citep{hendrycks2019benchmarking} images as corrupted alternatives \(\mu_{c,s}\), indexed by corruption type \(c\) and severity level \(s\). Images are flattened into pixel-space vectors in \([0,1]^{3072}\).

For each trial, we draw samples from \(\nu\) and \(\mu_{c,s}\), sample \(L\) random one-dimensional projection directions, and compute the univariate rank \(f\)-divergence on each projected pair. As in the synthetic experiments, we use the chi-square generator \(f_{\chi^2}(t)=\frac12(t-1)^2\), and average the resulting one-dimensional statistics over projections. The statistic is calibrated by the same permutation procedure described above, and empirical power is estimated from the rejection frequency over repeated trials.

We compare against sliced optimal transport (SOT), ridge-regularized Hotelling's \(T^2\) \cite{hotelling1931generalization}, Gaussian-kernel maximum mean discrepancy (MMD) \cite{gretton2012kernel}, and a classifier two-sample test (C2ST) \cite{lopezpaz2017revisiting}. This experiment tests whether the proposed sliced rank \(f\)-divergence detects realistic high-dimensional distribution shifts induced by image corruptions.

The results are summarized in Figure~\ref{fig:three-across-power-runtime-severity}. The proposed rank-based statistic gains power as the matched sample size \(N=M\) increases and achieves high empirical power with substantially lower runtime than the classifier-based baseline. The runtime comparison highlights the favorable power--runtime trade-off of the rank statistic in this pixel-space benchmark. The right panel reports the updated C2ST run at $N=M=100$, showing that snow corruptions are detected more reliably at high severities, whereas JPEG compression and defocus blur are less clearly separated at this sample size.

\definecolor{colJoint}{HTML}{3776B6}
\definecolor{colSOT}{HTML}{D64B40}
\definecolor{colMMD}{HTML}{5AA04E}
\definecolor{colC2ST}{HTML}{9A73C4}

\definecolor{colGauss}{HTML}{3776B6}
\definecolor{colJPEG}{HTML}{D64B40}
\definecolor{colBlur}{HTML}{5AA04E}
\definecolor{colSnow}{HTML}{9A73C4}

\pgfplotsset{
  jointstyle/.style={
    thick, color=colJoint,
    mark=*,
    mark size=2.2pt,
    mark options={fill=colJoint}
  },
  sotstyle/.style={
    thick, color=colSOT, dashed,
    mark=square*,
    mark size=2.0pt,
    mark options={fill=colSOT}
  },
  mmdstyle/.style={
    thick, color=colMMD, dash dot,
    mark=diamond*,
    mark size=2.3pt,
    mark options={fill=colMMD}
  },
  c2ststyle/.style={
    thick, color=colC2ST,
    mark=o,
    mark size=2.2pt,
    mark options={draw=colC2ST, fill=white}
  },
  gaussstyle/.style={
    thick, color=colGauss,
    mark=*,
    mark size=2.2pt,
    mark options={fill=colGauss}
  },
  jpegstyle/.style={
    thick, color=colJPEG, dashed,
    mark=square*,
    mark size=2.0pt,
    mark options={fill=colJPEG}
  },
  blurstyle/.style={
    thick, color=colBlur, dotted,
    mark=triangle*,
    mark size=2.4pt,
    mark options={fill=colBlur}
  },
  snowstyle/.style={
    thick, color=colSnow, dash dot,
    mark=diamond*,
    mark size=2.2pt,
    mark options={fill=colSnow}
  }
}

\begin{figure*}[t]
\centering

\begin{subfigure}[t]{0.315\textwidth}
\centering
\begin{tikzpicture}
\begin{axis}[
  width=\linewidth,
  height=5.2cm,
  xmode=log,
  xmin=8, xmax=1200,
  ymin=0, ymax=1.08,
  xtick={10,50,100,500,1000},
  xticklabel style={/pgf/number format/fixed},
  ytick={0,0.25,0.5,0.75,1.0},
  xlabel={Samples $N$},
  ylabel={Average test power},
  ymajorgrids=true,
  xmajorgrids=true,
  grid style={densely dotted},
  tick style={black},
  tick label style={font=\scriptsize},
  label style={font=\scriptsize},
  title style={font=\scriptsize},
  legend=false,
  clip marker paths=true
]

\addplot+[jointstyle]
  coordinates {(10,0.000) (20,0.100) (50,0.050) (100,0.400) (200,0.700) (500,0.950)};
\addplot[name path=jointhighA, draw=none, forget plot]
  coordinates {(10,0.000) (20,0.231) (50,0.146) (100,0.615) (200,0.901) (500,1.000)};
\addplot[name path=jointlowA, draw=none, forget plot]
  coordinates {(10,0.000) (20,0.000) (50,0.000) (100,0.185) (200,0.499) (500,0.854)};
\addplot[fill=colJoint, fill opacity=0.15, forget plot]
  fill between[of=jointhighA and jointlowA];

\addplot+[sotstyle]
  coordinates {(10,0.000) (20,0.050) (50,0.400) (100,0.900) (200,1.000) (500,1.000)};
\addplot[name path=sothighA, draw=none, forget plot]
  coordinates {(10,0.000) (20,0.146) (50,0.615) (100,1.000) (200,1.000) (500,1.000)};
\addplot[name path=sotlowA, draw=none, forget plot]
  coordinates {(10,0.000) (20,0.000) (50,0.185) (100,0.769) (200,1.000) (500,1.000)};
\addplot[fill=colSOT, fill opacity=0.15, forget plot]
  fill between[of=sothighA and sotlowA];

\addplot+[mmdstyle]
  coordinates {(10,0.000) (20,0.150) (50,0.800) (100,1.000) (200,1.000) (500,1.000)};
\addplot[name path=mmdhighA, draw=none, forget plot]
  coordinates {(10,0.000) (20,0.306) (50,0.975) (100,1.000) (200,1.000) (500,1.000)};
\addplot[name path=mmdlowA, draw=none, forget plot]
  coordinates {(10,0.000) (20,0.000) (50,0.625) (100,1.000) (200,1.000) (500,1.000)};
\addplot[fill=colMMD, fill opacity=0.15, forget plot]
  fill between[of=mmdhighA and mmdlowA];

\addplot+[c2ststyle]
  coordinates {(100,0.670) (200,1.000) (500,1.000) (1000,1.000)};
\addplot[name path=c2sthighA, draw=none, forget plot]
  coordinates {(100,0.735) (200,1.000) (500,1.000) (1000,1.000)};
\addplot[name path=c2stlowA, draw=none, forget plot]
  coordinates {(100,0.605) (200,1.000) (500,1.000) (1000,1.000)};
\addplot[fill=colC2ST, fill opacity=0.15, forget plot]
  fill between[of=c2sthighA and c2stlowA];

\end{axis}
\end{tikzpicture}
\caption*{\scriptsize Mean power vs.\ samples (averaged over corruptions and severities)}
\end{subfigure}\hfill
\begin{subfigure}[t]{0.315\textwidth}
\centering
\begin{tikzpicture}
\begin{axis}[
  width=\linewidth,
  height=5.2cm,
  xmode=log,
  xmin=0.07, xmax=700,
  xtick={0.1,1,10,100,500},
  xticklabel style={/pgf/number format/fixed},
  ymin=0, ymax=1.08,
  ytick={0,0.25,0.5,0.75,1.0},
  xlabel={Avg.\ runtime (s)},
  ylabel={Average test power},
  ymajorgrids=true,
  xmajorgrids=true,
  grid style={densely dotted},
  tick style={black},
  tick label style={font=\scriptsize},
  label style={font=\scriptsize},
  title style={font=\scriptsize},
  legend=false,
  clip marker paths=true
]

\addplot+[jointstyle]
  coordinates {(0.349,0.000) (0.411,0.100) (0.440,0.050) (0.498,0.400) (0.750,0.700) (1.215,0.950)};
\addplot[name path=jointhighB, draw=none, forget plot]
  coordinates {(0.349,0.000) (0.411,0.231) (0.440,0.146) (0.498,0.615) (0.750,0.901) (1.215,1.000)};
\addplot[name path=jointlowB, draw=none, forget plot]
  coordinates {(0.349,0.000) (0.411,0.000) (0.440,0.000) (0.498,0.185) (0.750,0.499) (1.215,0.854)};
\addplot[fill=colJoint, fill opacity=0.15, forget plot]
  fill between[of=jointhighB and jointlowB];

\addplot+[sotstyle]
  coordinates {(0.953,0.000) (1.561,0.050) (3.909,0.400) (4.467,0.900) (4.880,1.000) (5.783,1.000)};
\addplot[name path=sothighB, draw=none, forget plot]
  coordinates {(0.953,0.000) (1.561,0.146) (3.909,0.615) (4.467,1.000) (4.880,1.000) (5.783,1.000)};
\addplot[name path=sotlowB, draw=none, forget plot]
  coordinates {(0.953,0.000) (1.561,0.000) (3.909,0.185) (4.467,0.769) (4.880,1.000) (5.783,1.000)};
\addplot[fill=colSOT, fill opacity=0.15, forget plot]
  fill between[of=sothighB and sotlowB];

\addplot+[mmdstyle]
  coordinates {(0.081,0.000) (0.348,0.150) (1.728,0.800) (3.759,1.000) (8.790,1.000) (34.078,1.000)};
\addplot[name path=mmdhighB, draw=none, forget plot]
  coordinates {(0.081,0.000) (0.348,0.306) (1.728,0.975) (3.759,1.000) (8.790,1.000) (34.078,1.000)};
\addplot[name path=mmdlowB, draw=none, forget plot]
  coordinates {(0.081,0.000) (0.348,0.000) (1.728,0.625) (3.759,1.000) (8.790,1.000) (34.078,1.000)};
\addplot[fill=colMMD, fill opacity=0.15, forget plot]
  fill between[of=mmdhighB and mmdlowB];

\addplot+[c2ststyle]
  coordinates {(27.757,0.670) (64.066,1.000) (196.152,1.000) (510.164,1.000)};
\addplot[name path=c2sthighB, draw=none, forget plot]
  coordinates {(27.757,0.735) (64.066,1.000) (196.152,1.000) (510.164,1.000)};
\addplot[name path=c2stlowB, draw=none, forget plot]
  coordinates {(27.757,0.605) (64.066,1.000) (196.152,1.000) (510.164,1.000)};
\addplot[fill=colC2ST, fill opacity=0.15, forget plot]
  fill between[of=c2sthighB and c2stlowB];

\end{axis}
\end{tikzpicture}
\caption*{\scriptsize Mean power vs.\ runtime (averaged over corruptions and severities)}
\end{subfigure}\hfill
\begin{subfigure}[t]{0.315\textwidth}
\centering
\begin{tikzpicture}
\begin{axis}[
  width=\linewidth,
  height=5.2cm,
  xmin=0.8, xmax=5.2,
  ymin=0, ymax=1.08,
  xtick={1,2,3,4,5},
  ytick={0,0.25,0.5,0.75,1.0},
  xlabel={Severity},
  ylabel={Power},
  ymajorgrids=true,
  xmajorgrids=true,
  grid style={densely dotted},
  tick style={black},
  tick label style={font=\scriptsize},
  label style={font=\scriptsize},
  title style={font=\scriptsize},
  legend=false,
  clip marker paths=true
]

\addplot+[gaussstyle]
  coordinates {(1,0.700) (2,0.600) (3,0.600) (4,0.700) (5,0.700)};
\addplot[name path=gausshigh, draw=none, forget plot]
  coordinates {(1,0.984) (2,0.904) (3,0.904) (4,0.984) (5,0.984)};
\addplot[name path=gausslow, draw=none, forget plot]
  coordinates {(1,0.416) (2,0.296) (3,0.296) (4,0.416) (5,0.416)};
\addplot[fill=colGauss, fill opacity=0.12, forget plot]
  fill between[of=gausshigh and gausslow];

\addplot+[jpegstyle]
  coordinates {(1,0.600) (2,0.600) (3,0.600) (4,0.600) (5,0.600)};
\addplot[name path=jpeghigh, draw=none, forget plot]
  coordinates {(1,0.904) (2,0.904) (3,0.904) (4,0.904) (5,0.904)};
\addplot[name path=jpeglow, draw=none, forget plot]
  coordinates {(1,0.296) (2,0.296) (3,0.296) (4,0.296) (5,0.296)};
\addplot[fill=colJPEG, fill opacity=0.12, forget plot]
  fill between[of=jpeghigh and jpeglow];

\addplot+[blurstyle]
  coordinates {(1,0.600) (2,0.500) (3,0.500) (4,0.500) (5,0.700)};
\addplot[name path=blurhigh, draw=none, forget plot]
  coordinates {(1,0.904) (2,0.810) (3,0.810) (4,0.810) (5,0.984)};
\addplot[name path=blurlow, draw=none, forget plot]
  coordinates {(1,0.296) (2,0.190) (3,0.190) (4,0.190) (5,0.416)};
\addplot[fill=colBlur, fill opacity=0.12, forget plot]
  fill between[of=blurhigh and blurlow];

\addplot+[snowstyle]
  coordinates {(1,0.600) (2,0.900) (3,0.900) (4,0.900) (5,1.000)};
\addplot[name path=snowhigh, draw=none, forget plot]
  coordinates {(1,0.904) (2,1.000) (3,1.000) (4,1.000) (5,1.000)};
\addplot[name path=snowlow, draw=none, forget plot]
  coordinates {(1,0.296) (2,0.714) (3,0.714) (4,0.714) (5,1.000)};
\addplot[fill=colSnow, fill opacity=0.12, forget plot]
  fill between[of=snowhigh and snowlow];

\end{axis}
\end{tikzpicture}
\caption*{\scriptsize C2ST power vs.\ severity ($N=M=100$)}
\end{subfigure}

\vspace{0.15em}

\begin{tikzpicture}
\begin{axis}[
  hide axis,
  xmin=0, xmax=1,
  ymin=0, ymax=1,
  legend columns=4,
  legend style={
    /tikz/every even column/.style={column sep=6pt},
    draw=none,
    fill=none,
    font=\scriptsize
  }
]
  \addlegendimage{jointstyle} \addlegendentry{Joint ranks}
  \addlegendimage{sotstyle}   \addlegendentry{SOT}
  \addlegendimage{mmdstyle}   \addlegendentry{MMD}
  \addlegendimage{c2ststyle}  \addlegendentry{C2ST}
\end{axis}
\end{tikzpicture}

\vspace{-0.35em}

\begin{tikzpicture}
\begin{axis}[
  hide axis,
  xmin=0, xmax=1,
  ymin=0, ymax=1,
  legend columns=4,
  legend style={
    /tikz/every even column/.style={column sep=6pt},
    draw=none,
    fill=none,
    font=\scriptsize
  }
]
  \addlegendimage{gaussstyle} \addlegendentry{Gaussian noise}
  \addlegendimage{jpegstyle}  \addlegendentry{JPEG compression}
  \addlegendimage{blurstyle}  \addlegendentry{Defocus blur}
  \addlegendimage{snowstyle}  \addlegendentry{Snow}
\end{axis}
\end{tikzpicture}

\caption{\small
Empirical power for two-sample testing between CIFAR-10 and CIFAR-10-C at level $\alpha=0.01$.
Left: average power versus matched sample size $N=M$.
Middle: average power versus average runtime.
Right: C2ST power versus corruption severity for $N=M=100$.
Averages are computed over the considered corruption types and severities when applicable, and shaded bands show $95\%$ binomial confidence intervals. The C2ST results are based on 10 independent trials for each corruption--severity pair.
}
\label{fig:three-across-power-runtime-severity}
\end{figure*}

\section{Limitations and future work} \label{app:limitations_future_work}

The proposed rank-statistic construction reduces divergence estimation to operations on ranks and histograms, yielding a fully sample-based surrogate that avoids explicit density-ratio modeling. At the same time, several limitations remain, many of which are shared by other projection-averaging objectives. In particular, the multivariate variant is defined by averaging a univariate discrepancy over random one-dimensional projections. While projection families can characterize distributions in the limit, any finite number of directions $L$ can miss informative orientations, especially when the discrepancy is concentrated in a low-dimensional subspace, encoded in rare but important directions, or dominated by higher-order dependence patterns. Similar phenomena are documented for sliced objectives in optimal transport and generative modeling \citep{kolouri2019gsw}. A practical implication is that performance can depend non-trivially on $L$ and on how directions are sampled, and diagnosing ``missed directions'' may be difficult without problem-specific insight.

A second limitation is that the surrogate is inherently discretized through the resolution parameter $K$. Although the theoretical analysis establishes monotonicity and consistency as $K\to\infty$, the choice of a finite parameter $K$ induces approximation error and may distort the geometry of the objective. This is particularly relevant when the divergence is used as a learning signal: the discretization can alter local sensitivity and may emphasize coarse distributional differences over fine structure. Developing principled, data-dependent rules for selecting $K$ (and, in the sliced case, jointly selecting $(K,L)$) remains an open problem. Promising directions include selection via held-out calibration, adaptive schedules that increase $K$ over training, and criteria based on stability of estimates across nearby resolutions.

Third, the sliced rank-statistic construction inherits an accuracy-compute trade-off from Monte Carlo integration on the sphere. In high dimensions, each projection reduces the problem to a 1D rank histogram, but capturing direction-dependent mismatch may require many directions $L$. Increasing $L$ improves coverage of informative orientations and typically reduces Monte Carlo variability through averaging, yet the overall cost grows roughly linearly in $L$ (and also increases with the rank resolution $K$ through the histogram/Bernstein evaluation). At realistic scales, this can make the sliced surrogate expensive, whereas using too few directions risks missing informative projections and yielding overly optimistic (or misleading) discrepancy estimates--a limitation shared by other sliced objectives \citep{kolouri2019gsw}.

Several extensions could increase the information per projection beyond i.i.d.\ random directions. One option is to replace purely random directions with structured ensembles that reduce redundancy (e.g., near-orthogonal directions) or with low-discrepancy point sets on the sphere, which can lower projection variance at fixed $L$ compared to standard Monte Carlo \citep{sobol1967distribution,dick2010digital}. Another direction is data-dependent slicing: rather than sampling $s$ uniformly, directions can be biased toward projections with the largest (rank-based) discrepancy, connecting to max-sliced and projection-pursuit ideas \citep{deshpande2019max,paty2019subspace}. More generally, one could adapt learned slicing to the rank-statistic setting by choosing projection directions in a learned feature space, or by learning a small set of directions jointly with the generator so that each slice is maximally informative \citep{kolouri2019gsw}. Recent analyses of sliced distances and direction sampling also suggest that carefully designed projection sets can achieve greater statistical efficiency and stronger practical guarantees \citep{nietert2022statistical}.

Fourth, it would be valuable to place the sliced rank-statistic $f$-divergence in a more formal ``flow'' framework, in the same spirit as sliced-Wasserstein flows \citep{liutkus2019sliced}. Concretely, one can view the sliced rank objective as defining a functional on probability measures whose descent induces transport dynamics: at each time step, projected one-dimensional rank corrections define a drift field that moves particles toward the data distribution, while optional entropy/noise terms control dispersion and prevent collapse. A theory along these lines could clarify the accuracy-compute trade-off introduced by slicing (finite $L$ directions) and discretization (finite $K$), and could enable finite-time guarantees for particle discretizations that explicitly track how the error depends on $(m,K)$ and step sizes, mirroring the role of Monte Carlo slicing and time discretization in flow-based analyses \citep{liutkus2019sliced}.

In parallel, these transport dynamics suggest an amortized alternative: instead of running particle updates at test time, a generator could be trained to emulate one (or a few) steps of rank-proximal transport, or to directly map base noise to samples that minimize the sliced rank divergence. Such amortization could dramatically reduce the per-iteration cost at image scale.

Finally, empirical evaluation can be broadened along several axes. The current experiments emphasize synthetic settings and representative implicit learning tasks, but more diverse benchmarks (including larger-scale image generation, text/sequence data, and domain adaptation scenarios, and time-series forecasting) would better delineate when rank-statistic divergences outperform classical and neural alternatives. In addition, ablations that isolate the effects of $K$, $L$, projection sampling, and batching would help translate theoretical guarantees into robust practitioner guidance. Extensions to conditional divergences, two-sample testing, and settings with nuisance variables (e.g., covariate shift) are also natural, since rank constructions are compatible with sample-based pipelines and may be combined with representation learning.

\section{Computational Cost and Memory Footprint}
\label{app:computational-cost}

We briefly discuss the computational cost of the sliced rank-statistic
$f$-divergence estimators. For each projection direction, both the
rank-statistic estimator and sliced Wasserstein first project the samples
$X=\{x_i\}_{i=1}^N \subset \mathbb{R}^d$ and
$Y=\{y_j\}_{j=1}^M \subset \mathbb{R}^d$ onto one dimension, which costs
\[
    \mathcal{O}((N+M)d).
\]
Thus, for a fixed number of samples and slices, both methods have the same
explicit linear dependence on the ambient dimension $d$.

For evaluation and two-sample testing, we use a non-differentiable hard-rank
implementation. In each slice, we sort the projected reference samples and
evaluate the empirical CDF of the reference distribution at the projected
samples from $\mu$:
\[
    \widehat F_\nu(x_i)
    =
    \frac{1}{M}\#\{j : y_j \le x_i\}.
\]
The resulting empirical CDF values are used to construct the degree-$K$ rank
histogram and the corresponding discrete $f$-divergence.
This gives the per-slice cost
\[
    \mathcal{O}\big((N+M)d + M\log M + N\log M + NK\big),
\]
and therefore
\[
    \mathcal{O}\big(L((N+M)d + M\log M + N\log M + NK)\big)
\]
over $L$ projections. The corresponding memory footprint is mild: apart from
the input samples, it stores the projected samples, the rank/CDF values, and
the rank histogram, giving approximately
\[
    \mathcal{O}\big((N+M)d + N + M + K\big).
\]

For differentiable training, we use a soft-rank variant in which the empirical
CDF is smoothed by logistic comparisons and the rank histogram is represented
with a Bernstein basis of degree $K$. This yields a per-slice cost
\[
    \mathcal{O}\big((N+M)d + NM + NK\big),
\]
where the $NM$ term comes from the pairwise soft-CDF computation. Its peak
memory is approximately
\[
    \mathcal{O}\big((N+M)d + NM + NK\big),
\]
since the implementation stores a pairwise comparison matrix of size
$N\times M$ and a Bernstein basis matrix of size $N\times(K+1)$. When slices
are processed sequentially, increasing $L$ mainly increases runtime rather
than peak memory.

For comparison, a standard sliced Wasserstein estimator has per-slice cost
\[
    \mathcal{O}\big((N+M)d + N\log N + M\log M\big),
\]
due to sorting the one-dimensional projected samples. Hence, both sliced
Wasserstein and the hard-rank rank-statistic estimator have the same
asymptotic dependence on the ambient dimension $d$, namely linear scaling
through the projection step. The hard-rank estimator adds an extra
one-dimensional term $NK$ for evaluating the rank-statistic $f$-divergence at
resolution $K$. The soft-rank variant is more expensive because it pays the
pairwise $NM$ cost, but this overhead is independent of the ambient dimension.

\paragraph{Empirical time-to-quality comparison.}
We also report an empirical time-to-quality comparison on a synthetic
generative modeling benchmark in dimensions \(d \in \{2,4,10\}\). For
\(d=2\), the target distribution is a noisy two-moons distribution. For
larger \(d\), we keep the same two-dimensional nonlinear backbone and augment
it with additional smooth nonlinear coordinates, followed by coordinate-wise
standardization.

All methods train the same fully connected generator
\(G_\theta:\mathbb{R}^{16}\to\mathbb{R}^d\), with three hidden layers of
width \(128\) and SiLU activations. We use Adam with learning rate
\(2\cdot 10^{-3}\) and gradient clipping at \(5\). Sliced rank $f$-divergence (Rank), sliced Wasserstein (SWD),
and maximum mean discrepancy (MMD) use mini-batches of size \(256\), while the full assignment optimal transport (OT)
baseline uses mini-batches of size \(96\), due to the cubic cost of the
Hungarian solver. The rank objective uses \(L=64\) random projections,
Bernstein degree \(K=32\), the JS generator \(f\), and soft-rank temperature
scale \(0.05\). Sliced Wasserstein  uses \(L=64\) projections and \(p=2\).
MMD uses a multi-scale RBF kernel with bandwidths given by fixed multiples of
the median heuristic. Evaluation uses \(n_{\rm eval}=2048\) real and
generated samples; rank-JS and SWD evaluations use \(128\) random projections.

As an external stopping criterion, we use a classifier-based estimate of the
Jensen--Shannon divergence. At each evaluation checkpoint, a post-hoc binary
classifier with two hidden layers of width \(128\) is trained for \(300\)
steps to distinguish generated from real samples, using a \(70/30\)
train/validation split. This classifier is used only for evaluation and early
stopping, not for training the generators.

Because a fixed classifier-JS threshold becomes more stringent as dimension
increases, we use dimension-adapted targets:
\[
    \widehat{\mathrm{JS}}_{\mathrm{clf}} \le 0.05
    \quad (d=2), \qquad
    \widehat{\mathrm{JS}}_{\mathrm{clf}} \le 0.15
    \quad (d=4), \qquad
    \widehat{\mathrm{JS}}_{\mathrm{clf}} \le 0.30
    \quad (d=10).
\]
The resulting time-to-target comparison is summarized in Table~\ref{tab:clf-js-time-to-target} and visualized in Figures~\ref{fig:time-to-target-all} and~\ref{fig:final-clf-js}. Table~\ref{tab:clf-js-time-to-target} reports, for each method and dimension, whether the prescribed classifier-JS target is reached, together with the corresponding optimization step, wall-clock time, and final value of \(\widehat{\mathrm{JS}}_{\mathrm{clf}}\). Figure~\ref{fig:time-to-target-all} provides a direct visual comparison of the runtime required to reach the dimension-adapted target, while Figure~\ref{fig:final-clf-js} shows the final classifier-based JS estimate attained by each method.

\begin{table}[h]
\begin{tabular}{llrrrr}
\toprule
Dimension & Method & Reached target & Step & Time (s) &
Final \(\widehat{\mathrm{JS}}_{\mathrm{clf}}\) \\
\midrule
\(d=2\)  & Rank & yes & \(600\)  & \(91.3\)  & \(0.0487\) \\
         & SWD  & yes & \(800\)  & \(147.0\) & \(0.0444\) \\
         & MMD  & no  & --       & --        & \(0.1415\) \\
         & OT   & yes & \(2500\) & \(266.9\) & \(0.0465\) \\
\midrule
\(d=4\)  & Rank & yes & \(1100\) & \(131.6\) & \(0.1410\) \\
         & SWD  & yes & \(1500\) & \(117.6\) & \(0.1101\) \\
         & MMD  & no  & --       & --        & \(0.3942\) \\
         & OT   & no  & --       & --        & \(0.2845\) \\
\midrule
\(d=10\) & Rank & yes & \(600\)  & \(59.5\)  & \(0.2941\) \\
         & SWD  & yes & \(1700\) & \(120.5\) & \(0.3000\) \\
         & MMD  & no  & --       & --        & \(0.4280\) \\
         & OT   & no  & --       & --        & \(0.4856\) \\
\bottomrule
\end{tabular}
\centering
\caption{Time-to-target comparison using the external classifier-based JS
criterion. A dash indicates that the method did not reach the target within
the \(3000\)-step budget.}
\label{tab:clf-js-time-to-target}
\end{table}

Overall, these results show that the rank objective is competitive in
wall-clock time under an external classifier-based stopping criterion. In
\(d=2\), rank reaches the target faster than all baselines. In \(d=4\), both
rank and sliced Wasserstein reach the adapted target; rank requires fewer
optimization steps, while sliced Wasserstein is slightly faster in wall-clock
time and attains a lower final classifier-JS value. In \(d=10\), rank again
reaches the target substantially faster than sliced Wasserstein, whereas MMD
and full assignment OT do not reach the target within the training budget.
These results suggest that the finite-projection rank objective provides a
competitive time-to-quality tradeoff, while also motivating the use of
complementary evaluation metrics such as classifier-JS, sliced Wasserstein,
MMD, and rank-JS.

\begin{figure}[H]
\centering
\begin{tikzpicture}
\begin{axis}[
    ybar,
    width=0.85\linewidth,
    height=5.6cm,
    bar width=7pt,
    ylabel={Time to target (s)},
    symbolic x coords={$d=2$,$d=4$,$d=10$},
    xtick=data,
    ymin=0,
    ymax=480,
    enlarge x limits=0.22,
    legend style={
        at={(0.5,-0.18)},
        anchor=north,
        legend columns=4,
        draw=none
    },
    nodes near coords,
    nodes near coords style={font=\scriptsize, rotate=90, anchor=west},
    every axis plot/.append style={fill opacity=0.75},
    tick label style={font=\small},
    label style={font=\small},
]
\addplot coordinates {
    ($d=2$,91.3)
    ($d=4$,131.6)
    ($d=10$,59.5)
};

\addplot coordinates {
    ($d=2$,147.0)
    ($d=4$,117.6)
    ($d=10$,120.5)
};

\addplot coordinates {
    ($d=2$,452.7)
    ($d=4$,298.0)
    ($d=10$,310.3)
};

\addplot coordinates {
    ($d=2$,266.9)
    ($d=4$,205.2)
    ($d=10$,215.2)
};

\legend{Rank, SWD, MMD, OT}
\end{axis}
\end{tikzpicture}
\caption{Time-to-target comparison under the external classifier-based JS
stopping criterion. The targets are dimension-adapted:
\(\widehat{\mathrm{JS}}_{\mathrm{clf}}\le 0.05\) for \(d=2\),
\(\le 0.15\) for \(d=4\), and \(\le 0.30\) for \(d=10\).
For methods that do not reach the target, the reported value corresponds to
the total runtime under the \(3000\)-step budget. In particular, MMD does not
reach the target in any dimension, and OT does not reach it in \(d=4,10\).}
\label{fig:time-to-target-all}
\end{figure}

\begin{figure}[H]
\centering
\begin{tikzpicture}
\begin{axis}[
    ybar,
    width=0.85\linewidth,
    height=5.6cm,
    bar width=7pt,
    ylabel={Final $\widehat{\mathrm{JS}}_{\mathrm{clf}}$},
    symbolic x coords={$d=2$,$d=4$,$d=10$},
    xtick=data,
    ymin=0,
    ymax=0.55,
    enlarge x limits=0.22,
    legend style={
        at={(0.5,-0.18)},
        anchor=north,
        legend columns=4,
        draw=none
    },
    nodes near coords,
    nodes near coords style={font=\scriptsize, rotate=90, anchor=west},
    every axis plot/.append style={fill opacity=0.75},
    tick label style={font=\small},
    label style={font=\small},
]
\addplot coordinates {
    ($d=2$,0.0487)
    ($d=4$,0.1410)
    ($d=10$,0.2941)
};

\addplot coordinates {
    ($d=2$,0.0444)
    ($d=4$,0.1101)
    ($d=10$,0.3000)
};

\addplot coordinates {
    ($d=2$,0.1415)
    ($d=4$,0.3942)
    ($d=10$,0.4280)
};

\addplot coordinates {
    ($d=2$,0.0465)
    ($d=4$,0.2845)
    ($d=10$,0.4856)
};

\legend{Rank, SWD, MMD, OT}
\end{axis}
\end{tikzpicture}
\caption{Final classifier-based JS estimate for each method and dimension.
The dimension-adapted stopping thresholds are
\(\widehat{\mathrm{JS}}_{\mathrm{clf}}\le 0.05\) for \(d=2\),
\(\le 0.15\) for \(d=4\), and \(\le 0.30\) for \(d=10\). Lower is better.}
\label{fig:final-clf-js}
\end{figure}

\section{Experimental Setup}\label{Experimental Setup}
All experiments were performed on a MacBook Pro running macOS 13.2.1, equipped with an Apple M1 Pro CPU and 16 GB of RAM. When GPU acceleration was required, we used a single NVIDIA TITAN Xp with 12 GB of VRAM. Detailed hyperparameter settings for each experiment are provided in the corresponding sections. The code is available at \url{https://github.com/josemanuel22/rsfdiv}.

\end{document}